\newcommand{\calA}{{\mathcal{A}}}
\newcommand{\calC}{{\mathcal{C}}}
\newcommand{\calV}{{\mathcal{V}}}
\newcommand{\calX}{{\mathcal{X}}}
\newcommand{\calS}{{\mathcal{S}}}
\newcommand{\calF}{{\mathcal{F}}}
\newcommand{\calG}{\mathcal{G}}
\newcommand{\calI}{{\mathcal{I}}}
\newcommand{\calK}{{\mathcal{K}}}
\newcommand{\calL}{{\mathcal{L}}}
\newcommand{\calD}{{\mathcal{D}}}
\newcommand{\calE}{{\mathcal{E}}}
\newcommand{\calR}{{\mathcal{R}}}
\newcommand{\calT}{{\mathcal{T}}}
\newcommand{\calU}{{\mathcal{U}}}
\newcommand{\calM}{{\mathcal{M}}}
\newcommand{\calN}{{\mathcal{N}}}
\newcommand{\one}{\boldsymbol{1}}
\DeclareMathOperator*{\argmin}{argmin}
\newcommand{\eat}[1]{}
\newcommand{\rbr}[1]{\left(#1\right)}
\newcommand{\sbr}[1]{\left[#1\right]}
\newcommand{\cbr}[1]{\left\{#1\right\}}
\newcommand{\abr}[1]{\left|#1\right|}
\newcommand{\bigO}[1]{\order\left( #1 \right)}
\newcommand{\tilO}[1]{\otil\left( #1 \right)}
\newcommand{\lowO}[1]{\lorder\left( #1 \right)}
\newcommand{\bigo}[1]{\order( #1 )}
\newcommand{\tilo}[1]{\otil( #1 )}
\newcommand{\lowo}[1]{\lorder( #1 )}
\newcommand{\frA}{\mathfrak{A}}
\newcommand{\dev}{\textsc{Dev}}
\newcommand{\var}{\textsc{Var}}
\renewcommand{\P}{\bar{P}}
\newcommand{\optV}{V^{\star}}
\newcommand{\optQ}{Q^{\star}}
\newcommand{\sumi}{\sum_{i=1}^{I_k}}
\newcommand{\hatQ}{\widehat{Q}}
\newcommand{\hatV}{\widehat{V}}
\newcommand{\sumk}{\sum_{k=1}^K}
\newcommand{\barpi}{\bar{\pi}}
\newcommand{\tilP}{\widetilde{P}}
\newcommand{\tilpi}{{\widetilde{\pi}}}
\newcommand{\N}{\mathbf{N}} 
\renewcommand{\ln}{\log}
\newcommand{\true}{\textsc{True}\xspace}
\newcommand{\false}{\textsc{False}\xspace}
\newcommand{\pc}{\textsc{PolicyConsolidation}\xspace}
\newcommand{\algop}{\algo{}\textsuperscript{+}\xspace}
\newcommand{\algo}{\textsc{LASD}\xspace}
\newcommand{\algopc}{\textsc{PC}\xspace}
\newcommand{\ucbexplore}{\textsc{UcbExplore}\xspace}
\newcommand{\disco}{\textsc{DisCo}\xspace}
\newcommand{\valae}{\textsc{VALAE}\xspace}
\newcommand{\algoax}{\textsc{LAE}\xspace}
\newcommand{\Snext}{\calS_{\text{next}}}
\newcommand{\tset}{\calK} 
\newcommand{\rS}[2]{\calT_{#2}(#1)}
\newcommand{\fillc}{\textsc{Explore}\xspace}
\newcommand{\nmin}{n_{\min}}
\newcommand{\rtest}{\textsc{RTest}\xspace}
\newcommand{\tilp}{\widetilde{p}}
\newcommand{\frakN}{\mathfrak{N}}
\newcommand{\bV}{\bar{V}}
\newcommand{\bpi}{\bar{\pi}}
\newcommand{\calW}{\mathcal{W}}
\newcommand{\tot}{\text{tot}}
\newcommand{\sumkp}{\sum_{k=1}^{K'}}
\newcommand{\VI}{\text{VI}\xspace}
\newcommand{\VISGO}{\textsc{VISGO}\xspace}
\newcommand{\calO}{\mathcal{O}}
\newcommand{\acalO}{\calO^{\rightarrow}}
\newcommand{\aS}{S^{\rightarrow}}
\newcommand{\bcalU}{\bar{\calU}}
\newcommand{\dV}{V_{\dagger}}
\newcommand{\calUstar}{\calU^{\star}}
\newcommand{\AX}{\text{AX}}
\newcommand{\reset}{\textsc{RESET}}
\newcommand{\acalS}{\calS^{\rightarrow}}
\newcommand{\gstar}{g^{\star}}
\newcommand{\hattau}{\widehat{\tau}}
\newcommand{\calKstar}{\calK^{\star}}
\newcommand{\hatpi}{\widehat{\pi}}
\newcommand{\field}[1]{\mathbb{#1}}
\newcommand{\fR}{\field{R}}
\newcommand{\fN}{\field{N}}
\newcommand{\E}{\field{E}}
\newcommand{\fV}{\field{V}}
\newcommand{\Ind}{\field{I}}
\newcommand{\norm}[1]{\left\|{#1}\right\|}
\newtheorem{lemma}{Lemma}
\newtheorem{theorem}{Theorem}
\newtheorem{corollary}[theorem]{Corollary}
\newtheorem{remark}{Remark}
\newtheorem{definition}{Definition}
\newtheorem{assumption}{Assumption}
\newcommand{\order}{\ensuremath{\mathcal{O}}}
\newcommand{\lorder}{\ensuremath{\Omega}}
\newcommand{\otil}{\ensuremath{\tilde{\mathcal{O}}}}
\newcommand{\pref}[1]{\prettyref{#1}}
\newcommand{\pfref}[1]{Proof of \prettyref{#1}}
\newcommand{\savehyperref}[2]{\texorpdfstring{\hyperref[#1]{#2}}{#2}}
\definecolor{darkgreen}{rgb}{0.0, 0.6, 0.0}
\definecolor{lightgray}{rgb}{0.83, 0.83, 0.83}
\icmltitlerunning{Layered State Discovery for Incremental Autonomous Exploration}
\begin{document}

\twocolumn[
\icmltitle{Layered State Discovery for Incremental Autonomous Exploration}



\icmlsetsymbol{equal}{*}

\begin{icmlauthorlist}
\icmlauthor{Liyu Chen}{usc}
\icmlauthor{Andrea Tirinzoni}{meta}
\icmlauthor{Alessandro Lazaric}{meta}
\icmlauthor{Matteo Pirotta}{meta}
\end{icmlauthorlist}

\icmlaffiliation{usc}{University of Southern California}
\icmlaffiliation{meta}{Meta}

\icmlcorrespondingauthor{Liyu Chen}{liyuc@usc.edu}

\icmlkeywords{Machine Learning, ICML}

\vskip 0.3in
]



\printAffiliationsAndNotice{}  


\begin{abstract}
    We study the autonomous exploration (AX) problem proposed by~\citet{lim2012autonomous}. In this setting, the objective is to discover a set of $\epsilon$-optimal policies reaching a set $\mathcal{S}^{\rightarrow}_L$ of incrementally $L$-controllable states. We introduce a novel layered decomposition of the set of incrementally $L$-controllable states that is based on the iterative application of a state-expansion operator. 
    We leverage these results to design Layered Autonomous Exploration (LAE), a novel algorithm for AX that attains a sample complexity of $\tilde{\mathcal{O}}(LS^{\rightarrow}_{L(1+\epsilon)}\Gamma_{L(1+\epsilon)} A \log^{12}(S^{\rightarrow}_{L(1+\epsilon)})/\epsilon^2)$, where $S^{\rightarrow}_{L(1+\epsilon)}$ is the number of states that are incrementally $L(1+\epsilon)$-controllable, $A$ is the number of actions, and $\Gamma_{L(1+\epsilon)}$ is the branching factor of the transitions over such states.
    \textsc{LAE} improves over the algorithm of~\citet{tarbouriech2020improved} by a factor of $L^2$ and it is the first algorithm for AX that works in a countably-infinite state space.
    Moreover, we show that, under a certain identifiability assumption, LAE achieves minimax-optimal sample complexity of $\tilde{\mathcal{O}}(LS^{\rightarrow}_{L}A\log^{12}(S^{\rightarrow}_{L})/\epsilon^2)$, outperforming existing algorithms and matching for the first time the lower bound proved by~\citet{cai2022near} up to logarithmic factors.
\end{abstract}

\begin{table*}[t]
    \renewcommand{\arraystretch}{1.6}
    \centering
    \caption{\small Comparison between this work and previous work.
        Here, $L$ is the exploration radius, $S$ is the number of states, $\aS_{L(1+\epsilon)}$ is the number of incrementally $L(1+\epsilon)$-controllable states, $\Gamma_{L(1+\epsilon)}$ is the branching factor of transition over such states, $A$ is the number of actions, and $\epsilon$ is the target accuracy.
        The $\AX$ objectives are defined in~\pref{def:ax} and are such that $\AX^+ \Rightarrow \AX^\star \Rightarrow \AX_L$. We only display the dominating term in $1/\epsilon$. Note that $\aS_{2L}$ may be much larger (even exponentially) than $\aS_{L(1+\epsilon)}$ in certain MDPs (\pref{lem:example 2L}).
    }
    \label{tab:summary}
    \resizebox{\textwidth}{!}{
    \begin{tabular}{|c|c||c|c|c|}
    \hline
    \multicolumn{2}{|c||}{Algorithm} & Sample Complexity & Objective & $S$ dependency \\
    \hline
    \hline
    UcbExplore & \citep{lim2012autonomous} & $\tilO{L^3\aS_{L(1+\epsilon)}\Gamma_{L(1+\epsilon)}A/\epsilon^3}$ & $\AX_L$ & $\ln S$\\
    \hline
    DisCo & \citep{tarbouriech2020improved} & $\tilO{L^3\aS_{L(1+\epsilon)}\Gamma_{L(1+\epsilon)}A/\epsilon^2}$ & $\AX^{\star}$ & $\ln S$\\
    \hline
    \valae & \citep{cai2022near} & $\tilO{L\aS_{2L}A/\epsilon^2}$ & $\AX^{\star}$ & $\ln S$ \\
    \hline
    \hline
    \cellcolor{lightgray}
    \algoax (\pref{alg:ax.plus}) & 
    \cellcolor{lightgray}
    Ours &
    \cellcolor{lightgray}
    $\tilO{L\aS_{L(1+\epsilon)}\Gamma_{L(1+\epsilon)} A/\epsilon^2}$ & 
    \cellcolor{lightgray}
    $\AX^+$ &
    \cellcolor{lightgray}
    $\ln\aS_{L(1+\epsilon)}$ \\
    \hline
    \cellcolor{lightgray}
    \algoax with~\pref{assum:id} &
    \cellcolor{lightgray}
    Ours &
    \cellcolor{lightgray}
    $\tilO{L\aS_{L}A/\epsilon^2}$ &
    \cellcolor{lightgray}
    $\AX^+$ &
    \cellcolor{lightgray}
    $\ln\aS_{L}$ \\
    \hline
    \hline
    \makecell{Lower Bound\\ ($\aS_L = \aS_{L(1+\epsilon)}$ by construction)} & \citep{cai2022near} & $\lowO{L\aS_{L}A/\epsilon^2}$ & $\AX_L$ & - \\
    \hline
    \end{tabular}
    }
\end{table*}

\section{Introduction}
A distinctive feature of intelligent beings is the ability to explore an unknown environment without any supervision or extrinsic reward while learning skills that solve tasks (e.g., reaching goal states) of increasing difficulty. \citet{lim2012autonomous} first proposed a formal framework of \textit{autonomous exploration} in reinforcement learning (RL) as the process of progressively discovering states within a certain distance from an initial state $s_0$ at the same time as learning near-optimal policies to reach them. \citet{lim2012autonomous} also devised the first sample efficient exploration algorithm (\ucbexplore) for this setting, while its sample complexity and optimality guarantees were later improved by \disco~\citep{tarbouriech2020improved} and \valae~\citep{cai2022near}.

In this paper, we make several contributions to this problem:
\begin{itemize}[topsep=0pt,parsep=0pt,partopsep=0pt,leftmargin=10pt]
    \item Given an initial state $s_0$, the autonomous exploration objective is built upon the concept of incrementally $L$-controllable states, i.e., states that can be reached within $L$ steps from $s_0$ by only traversing incrementally $L$-controllable states\footnote{We say that a state $s$ is $L$-controllable if there exists a policy that reaches $s$ from $s_0$ in less than $L$ steps on average. In general an $L$-controllable state may be reached by policies traversing states that are not $L$-controllable themselves.}. While the original definition of the set of incrementally $L$-controllable states $\calS_L^{\rightarrow}$ involves considering all possible partial orders of states in the environment, we derive an equivalent constructive definition that reveals the \emph{layered} structure of $\calS_L^{\rightarrow}$, where each layer can be obtained as the set of states that can be reached in $L$ steps by only traversing states in the previous layers (see~\pref{sec:sL.constructive}). 
    \item We then leverage the layered structure of $\calS_L^{\rightarrow}$ to design Layered Autonomous Exploration (\algoax), a novel algorithm that keeps exploring the environment to learn policies to reach newly discovered states until a new layer can be consolidated and a new step of discovery and learning is started. We prove that the sample complexity of \algoax is bounded as $\tilo{L\aS_{L(1+\epsilon)}\Gamma_{L(1+\epsilon)}A/\epsilon^2}$, where $L$ is the exploration radius, $\aS_{L(1+\epsilon)}$ is the number of states that are incrementally controllable from the initial state within $L(1+\epsilon)$ steps, $\Gamma_{L(1+\epsilon)}$ is the branching factor of the transition function over such states, $A$ is the number actions, and $\epsilon$ is target accuracy. As illustrated in Table~\ref{tab:summary}, this improves the sample complexity of \disco by a factor of $L^2$ and it avoids the scaling with $\aS_{2L}$ of \valae, which in some MDPs may be much larger than $\aS_{L(1+\epsilon)}$, thus making the bound of \algoax preferable. Indeed, in~\pref{lem:example 2L} in appendix we show that $\aS_{2L}$ may be even exponentially larger than $\aS_{L(1+\epsilon)}$.
    \item Under a certain layer identifiability condition (see ~\pref{assum:id}), we further improve the sample complexity of \algoax to $\tilo{L\aS_{L}A/\epsilon^2}$, which improves w.r.t.\ \valae and matches the lower bound in~\citep{cai2022near}.
    \item Similar to existing algorithms, the sample complexity of \algoax still depends on the logarithm of the total number of states $S$. Since in autonomous exploration the state space is unknown and possibly unbounded, such dependency is highly undesirable. We then design an alternative version of \algoax, which preserves its original sample complexity but replaces the dependency on $\ln S$ with $\ln\aS_{L(1+\epsilon)}$, without requiring any prior knowledge of $\aS_{L(1+\epsilon)}$ (see~\pref{sec:log.adaptivity}).
    \item \algoax also leverages a novel procedure, \pc, that takes a set of states $\calK$ as input and returns goal-conditioned policies reaching each state in $\calK$ with \emph{multiplicative} $\epsilon$-optimality guarantees, which is stronger than previous algorithms and better suited to the autonomous exploration setting (see~\pref{sec:pc}).
\end{itemize}

\paragraph{Related Work}
In reinforcement learning (RL), several approaches to \emph{unsupervised exploration} have been proposed often grounded in concepts such as curiosity~\citep{schmidhuber1991possibility}, intrinsic motivation~\citep{singh2004intrinsically,oudeyer2009intrinsically,bellemare2016unifying,colas2020intrinsically} and with the objective of learning skills in an unsupervised fashion~\citep{gregor2016variational,eysenbach2018diversity,pong2019skew,bagaria2021skill,kamienny2021direct}. On the other hand, a rigorous formalization and theoretical understanding of unsupervised exploration has been rather sparse until recently. \citet{tarbouriech2020active} studied unsupervised exploration for model estimation, \citet{hazan2019provably} formalized the maximum entropy exploration objective, while reward-free RL~\citep[e.g.,][]{jin2020reward,kaufmann2021adaptive,menard2021fast,zhang2021near,tarbouriech2021provably,tarbouriech2022adaptive} studies how to efficiently explore an environment to solve any downstream task near-optimally. As autonomous exploration seeks to learn goal-conditioned policies, it also carries strong technical and algorithmic connections with exploration in the stochastic shortest path problem~\citep[e.g.][]{bertsekas2013stochastic,tarbouriech2020no,tarbouriech2021stochastic,chen2021finding,chen2022near}.

\section{Preliminaries}
We consider a reward-free Markov Decision Process $\calM=(\calS, \calA, s_0, P)$, where $\calS$ is a countable state space, 
$\calA$ is a finite action space, $s_0$ is the initial state, and $P=\{P_{s, a}\}_{(s, a)\in\calS\times\calA}$ with $P_{s,a}\in\Delta_{\calS}$ is the transition function, where $\Delta_{\calS}$ is the simplex over $\calS$.
In a general MDP, the learner may get stuck in undesirable states and be unable to return to $s_0$.
To avoid this issue, we make the following assumption.
\begin{assumption}
    The action space contains a $\reset$ action such that $P_{s, \reset}(s_0)=1$ for all $s\in\calS$.
\end{assumption}

A deterministic stationary policy $\pi\in\calA^{\calS}$ is a mapping that assigns an action $\pi(s)$ to each state $s$, and we define $\Pi=\calA^{\calS}$ as the set of all policies.
To explicitly characterize the behavior of a policy, we say a policy $\pi$ is \emph{restricted} on $\calX\subseteq\calS$ if $\pi(s)=\reset$ for any $s\notin\calX$, and we denote by $\Pi(\calX)$ the set of policies restricted on $\calX$.

We measure the performance of a policy in navigating the MDP as follows.
For any policy $\pi \in \Pi$ and a pair of states $(s, g) \in \mathcal{S}^2$, let $V^{\pi}_g(s) \in [0, + \infty]$ be the expected number of steps it takes to reach $g$ (that is, the \emph{hitting time} of $g$) starting from $s$ when executing policy $\pi$, that is,
\begin{align*}
    V^{\pi}_g(s) &\triangleq \mathbb{E}^\pi\sbr{\left. \omega_g \right|s_1=s}, \\
    \omega_g &\triangleq \inf \cbr{ i \geq 0: s_{i+1} = g }.
\end{align*}
Note that $V^{\pi}_g(s) = +\infty$ if $g$ is unreachable by playing $\pi$ starting from $s$.
For any subset $\calX \subseteq \mathcal{S}$
and any goal state $g$, define $V^{\star}_{\calX, g}(s) = \min_{\pi \in \Pi(\calX)} V^{\pi}_g(s)$ as the minimum hitting time of $g$ following a policy restricted on $\calX$. Note that, if $\calX\subseteq\calX'$, then $\optV_{\calX',g}(s)\leq \optV_{\calX,g}(s)$ for any $s,g\in\calS$.
The objective of the learner is to efficiently navigate in the vicinity of $s_0$.
A state $s$ is \textit{$L$-controllable} if there exists a policy $\pi$ such that $V^{\pi}_s(s_0)\leq L$.
While discovering all $L$-controllable states may be a reasonable objective for exploring the vicinity of $s_0$~\citep{tarbouriech2022adaptive}, \citet{lim2012autonomous} showed that this may still require the learner to explore the whole state space, since reaching a $L$-controllable state may require navigating through non-$L$-controllable states.
To this end, \citet{lim2012autonomous} propose to only focus on navigating among \textit{incrementally $L$-controllable states}: states that are $L$-controllable by policies restricted on other incrementally controllable states.

\begin{definition}[Incrementally $L$-controllable states $\mathcal{S}_L^{\rightarrow}$]\label{def:sl.original}
Given a partial order $\prec$ on $\calS$, we define $\calS_L^{\prec}$ recursively as 1) $s_0\in\mathcal{S}_L^{\prec}$ and 2) if there exists a policy $\pi \in \Pi\big(\{ s' \in \mathcal{S}_L^{\prec}: s' \prec s \}\big)$ with $V^{\pi}_s(s_0) \leq L$, then $s \in \mathcal{S}_L^{\prec}$. The set $\acalS_L$ of incrementally $L$-controllable states is defined as $\acalS_L \triangleq \cup_{\prec} \mathcal{S}_L^{\prec}$, where the union is over all partial orders.
\end{definition}

Instead of exploring the potentially infinite state space, the objective of the learner is to discover the \emph{finite} set $\acalS_L$~\citep[][Prop. 6]{lim2012autonomous} and learn a corresponding set of policies that reliably reach each state in $\acalS_L$. We introduce three different formulations of the objective.

\begin{definition}[AX sample complexity]\label{def:ax}
	For any given length $L\geq 1$, error threshold $\epsilon>0$, and confidence level $\delta\in(0,1)$, the sample complexities $\calC(\frA, L,\epsilon,\delta)$, $\calC^{\star}(\frA, L,\epsilon,\delta)$, and $\calC^{+}(\frA, L,\epsilon,\delta)$ are defined as the number of steps required by a learning algorithm $\frA$ to identify a set of states $\calK$ and a set of policies $\{\pi_s\}_{s\in\calK}$ such that, with probability at least $1-\delta$, we have $\acalS_L\subseteq\calK$ and

	
	
	\quad ($\AX_L$)\quad $\forall s\in\acalS_L$, $V^{\pi_s}_s(s_0)\leq L(1+\epsilon)$,
	
	\quad ($\AX^{\star}$)\quad $\forall s\in\acalS_L$, $V^{\pi_s}_s(s_0)\leq \optV_{\acalS_L,s}(s_0) + L\epsilon$,
	
	\quad ($\AX^+$)\quad $\forall s\in\acalS_L$, $V^{\pi_s}_s(s_0)\leq \optV_{\acalS_L,s}(s_0)(1 + \epsilon)$.
\end{definition}
Note that the three formulations above are increasingly more demanding.
$\AX_L$ only requires to reach each state in $\acalS_L$ within $L(1+\epsilon)$ steps, which could correspond to a quite poor performance for a state $s$ with $\optV_{\acalS_L,s}(s_0) \ll L$.
$\AX^{\star}$ requires to learn a near-optimal policy for reaching each state in $\acalS_L$.
However, the allowed error threshold (i.e., $L\epsilon$) is uniform across all goal states, which again could correspond to a bad performance for a state $s$ with $\optV_{\acalS_L,s}(s_0) \ll L$. $\AX^+$ solves this issue by requiring a \emph{multiplicative} threshold. This implies that  the allowed error for reaching state $s$ (i.e., $\optV_{\acalS_L,s}(s_0)\epsilon$) scales with the optimal value $\optV_{\acalS_L,s}(s_0)$ itself, hence making this formulation adaptive to the hardness of reaching each goal state. 
No existing algorithm is able to achieve $\AX^+$ guarantees, see \pref{tab:summary}.

Note that these conditions cannot be checked at algorithmic time since $\acalS_L$ is unknown to the algorithm. Existing algorithms verify these conditions directly on the computed set $\calK$. 
Since they guarantee that $\acalS_L \subseteq \calK$, $ \optV_{\calK,g}(s_0) \leq  \optV_{\acalS_L,g}(s_0)$ for any $g \in \acalS_L$ and thus they satisfy the performance in~\pref{def:ax}.

\textbf{Other notation} Let $S=|\calS|$ and $A=|\calA|$.
For any $L\geq 1$, define $\aS_L=|\acalS_L|$, $\calN^{s, a}_L=\{s'\in\acalS_L: P_{s, a}(s')>0\}$, $\Gamma^{s, a}_L=|\calN^{s, a}_L|$ and $\Gamma_L=\max_{s\in\acalS_L, a}\Gamma^{s, a}_L$.
For simplicity, we often write $a=\bigo{b}$ as $a\lesssim b$.
For $n\in\fN_+$, define $[n]=\{1,\ldots,n\}$.

\subsection{A Constructive Definition of $\acalS_L$}\label{sec:sL.constructive}
While~\citet[][Proposition 6]{lim2012autonomous} showed that there exists a partial order $\prec$ such that $\acalS_L = \calS_L^{\prec}$, no explicit characterization of such partial order is provided. In the following, we develop an alternative definition of $\acalS_L$ that leads to an explicit constructive procedure to build the set.
This alternative definition is the main inspiration for the design of our algorithms.

We introduce an operator $\calT_L$ which, given a set $\calX\subseteq \calS$, selects all the states that are reachable in $L$ steps by a policy restricted on $\calX$ and show its connection with $\acalS_L$.
\begin{lemma}\label{lem:SL.operator}
	Let $\mathsf{P}(\mathcal{S})$ be the set of all subsets of $\mathcal{S}$.
	For any $L\geq 1$, define the operator $\mathcal{T}_L : \mathsf{P}(\mathcal{S}) \rightarrow \mathsf{P}(\mathcal{S})$ as follows: for any $\calX \subseteq \calS$, $\mathcal{T}_L(\calX) = \{ s \in \calS : V_{\calX,s}^\star(s_0) \leq L \}$. Then, 
	\begin{enumerate}
		\item $\acalS_L$ is the fixed-point of $\calT_L$ of smallest cardinality, i.e., $\acalS_L\subseteq\calX$ if $\calX=\calT_L(\calX)$.
	\end{enumerate}
	Let us denote by $\{\calKstar_j\}_{j\in\mathbb{N}}$ the unique sequence such that $\calKstar_1=\{s_0\}$, $\calKstar_j = \calT_L(\calKstar_{j-1})$. Then,
	\begin{enumerate}
            \setcounter{enumi}{1}
		\item For any $j\geq 1$, $\calKstar_j\subseteq\calKstar_{j+1} \subseteq \acalS_L$;
		\item There exists $J\leq \aS_L$ such that $\calKstar_j=\acalS_L$ for all $j\geq J$ (i.e., $\calT^{J}_L(\calKstar_{1}) = \lim_{j\rightarrow\infty} \calT^j_L(\calKstar_{1}) = \acalS_L$).
	\end{enumerate}
\end{lemma}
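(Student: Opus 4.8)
The plan is to prove the three parts in an order that lets the later ones reuse the earlier ones, relying throughout on two elementary properties of the operator $\calT_L$: \emph{monotonicity} (if $\calX\subseteq\calX'$ then $\calT_L(\calX)\subseteq\calT_L(\calX')$, which is immediate from $\optV_{\calX',s}(s_0)\le\optV_{\calX,s}(s_0)$ together with $\Pi(\calX)\subseteq\Pi(\calX')$), and the fact that $s_0\in\calT_L(\calX)$ for \emph{every} $\calX$ (since $\optV_{\calX,s_0}(s_0)=0\le L$). Recall also that $\acalS_L$ is finite by \citet[Prop.~6]{lim2012autonomous}, which I will use for part~3.

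The first thing I would settle is that $\acalS_L$ is a fixed point, $\calT_L(\acalS_L)=\acalS_L$, since both the ``smallest fixed point'' claim and parts~2--3 hinge on it. The inclusion $\acalS_L\subseteq\calT_L(\acalS_L)$ is easy: by \pref{def:sl.original} every $s\in\acalS_L$ admits a policy restricted on a subset of $\acalS_L$ that reaches it in at most $L$ steps, so $\optV_{\acalS_L,s}(s_0)\le L$, i.e.\ $s\in\calT_L(\acalS_L)$. For the reverse inclusion I would take $s$ with $\optV_{\acalS_L,s}(s_0)\le L$, witnessed by some $\pi\in\Pi(\acalS_L)$, and exhibit a partial order making $s$ incrementally controllable. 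Fixing the order $\prec_0$ of \citet[Prop.~6]{lim2012autonomous} with $\acalS_L=\calS_L^{\prec_0}$, I define $\prec$ by keeping $\prec_0$ on $\acalS_L$ and declaring $s$ a new maximal element above all of $\acalS_L$. Since $\prec\,\supseteq\,\prec_0$ and enlarging the order can only enlarge the generated set (a monotonicity property of the least-fixed-point recursion in \pref{def:sl.original}, as larger predecessor sets make the entry condition easier), we get $\acalS_L=\calS_L^{\prec_0}\subseteq\calS_L^{\prec}$. The $\prec$-predecessors of $s$ are exactly $\acalS_L$, so $\{s'\in\calS_L^\prec: s'\prec s\}=\acalS_L$ and $\pi\in\Pi(\acalS_L)$ witnesses $s\in\calS_L^\prec\subseteq\acalS_L$, completing $\calT_L(\acalS_L)\subseteq\acalS_L$.

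For part~1 (smallest fixed point) I would let $\calX$ be any fixed point and show $\calS_L^\prec\subseteq\calX$ for every partial order $\prec$, whence $\acalS_L=\cup_\prec\calS_L^\prec\subseteq\calX$. This holds because $\calX=\calT_L(\calX)$ satisfies the two closure rules defining $\calS_L^\prec$: it contains $s_0$, and whenever some $\pi\in\Pi(\{s'\in\calX: s'\prec s\})\subseteq\Pi(\calX)$ reaches $s$ within $L$ steps we have $s\in\calT_L(\calX)=\calX$; since $\calS_L^\prec$ is the smallest set closed under these rules, $\calS_L^\prec\subseteq\calX$. Parts~2 and~3 are then a monotone-convergence argument. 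For part~2, $\calKstar_1=\{s_0\}\subseteq\calT_L(\{s_0\})=\calKstar_2$ because $s_0$ is always in the image, and monotonicity propagates $\calKstar_j\subseteq\calKstar_{j+1}$ by induction; likewise $\calKstar_1\subseteq\acalS_L$ and $\calKstar_{j+1}=\calT_L(\calKstar_j)\subseteq\calT_L(\acalS_L)=\acalS_L$ give $\calKstar_j\subseteq\acalS_L$. For part~3, the $\calKstar_j$ form a nondecreasing chain of subsets of the finite set $\acalS_L$, so each strict increase adds at least one state and $|\calKstar_j|\ge j$ until the chain stabilizes; hence there is $J\le\aS_L$ with $\calKstar_J=\calKstar_{J+1}=\calT_L(\calKstar_J)$, making $\calKstar_J$ a fixed point. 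By part~1, $\acalS_L\subseteq\calKstar_J$, and by part~2, $\calKstar_J\subseteq\acalS_L$, so $\calKstar_J=\acalS_L$ and the chain is constant from $J$ onward.

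The main obstacle is the reverse inclusion $\calT_L(\acalS_L)\subseteq\acalS_L$ in the fixed-point step: it is the only place where one must translate back from the analytic statement ``reachable within $L$ steps by a policy restricted on $\acalS_L$'' to the combinatorial \pref{def:sl.original}, and it relies on both the existence of a single generating order $\prec_0$ with $\acalS_L=\calS_L^{\prec_0}$ and the monotonicity of the recursion under enlarging $\prec$. Everything else is bookkeeping around monotonicity and the finiteness of $\acalS_L$.
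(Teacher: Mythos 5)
Your proof is correct, but it is organized quite differently from the paper's. The paper never isolates the identity $\calT_L(\acalS_L)=\acalS_L$ as a separate step; instead it proves point 1 by taking the generating order $\prec^\star$ from \citet[Prop.~6]{lim2012autonomous}, looking at the first state of $\calS_L^{\prec^\star}$ missing from a putative fixed point $\calX$, and observing that this state would lie in $\calT_L(\calX)\setminus\calX$; it proves the inclusion $\calKstar_j\subseteq\acalS_L$ in point 2 by re-running an (informal) ordering-construction inside the induction; and it proves point 3 by enumerating $\acalS_L$ along $\prec^\star$ and showing $s_j\in\calKstar_{j+1}$ directly. You instead (i) prove the fixed-point identity once and for all via an explicit order extension (append $s$ as a maximal element above a generating order for $\acalS_L$), (ii) obtain point 1 by a Knaster--Tarski-style induction: any fixed point of $\calT_L$ is closed under the rules defining $\calS_L^{\prec}$ for \emph{every} $\prec$, hence contains each $\calS_L^{\prec}$ and therefore their union, and (iii) obtain point 3 by stabilization/pigeonhole: the nondecreasing chain inside the finite set $\acalS_L$ must become stationary by step $\aS_L$, its limit is a fixed point, and points 1--2 sandwich that limit to equal $\acalS_L$. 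Your decomposition is more modular and makes the lattice-theoretic structure (monotone operator, least fixed point, prefixed-point induction) explicit, and your point 1 does not need Prop.~6 at all; the paper's enumeration in point 3 yields the marginally finer fact that the $j$-th state of $\acalS_L$ along $\prec^\star$ already belongs to $\calKstar_{j+1}$, but both routes give the same bound $J\le\aS_L$.

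One step to tighten: your $\prec$ is not literally a superset of $\prec_0$ as a relation on $\calS$ --- you keep only $\prec_0$ restricted to $\acalS_L$ (as you must, since $\prec_0$ could already relate $s$ to states of $\acalS_L$, and then placing $s$ above all of $\acalS_L$ would break antisymmetry). So ``enlarging the order enlarges the generated set'' does not apply verbatim. The fix is short: the recursive generation of $\calS_L^{\prec_0}=\acalS_L$ only ever consults pairs of states inside $\acalS_L$ (every intermediate set and every added state lies in $\acalS_L$), hence $\calS_L^{\prec_0\mid_{\acalS_L}}=\acalS_L$ as well, and your $\prec$ genuinely extends $\prec_0\mid_{\acalS_L}$, after which your monotonicity-in-$\prec$ argument applies. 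The paper's own point-2 argument (``we found an ordering of the states'') glosses over exactly the same construction, so this is a presentational refinement rather than a gap.
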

\begin{proof}
    Note that there exists a partial ordering $\prec^\star$ such that $\acalS_L=\calS^{\prec^\star}_L$~\citep[][Proposition 6]{lim2012autonomous}. 

    Let $\calX$ be s.t.\ $\acalS_L \not\subseteq \calX$. If $\acalS_L \cap \calX = \emptyset$, then $s_0 \notin \calX$, which implies that $\calT_L(\calX) = \{s_0\}$ since $V^\star_{\calX,s_0}(s_0) = 0 \leq L$ and $V^\star_{\calX,g}(s_0) = \infty$ for all $g\neq s_0$. Thus, $\calX$ cannot be a fixed point of $\calT_L$. Then, assume that $\acalS_L \cap \calX \neq \emptyset$. Order the states in $\calX \cap \acalS_L$ according to the ordering $\prec^\star$. Let $s_i \in \calS_L^{\prec^\star}$ be the first state s.t.\ $s\notin\calX$ (it exists since $\acalS_L \not\subseteq \calX$). By definition of $\prec^\star$ and $\acalS_L$, $V_{\{s_0,\ldots, s_{i-1}\},s_i}^\star(s_0) \leq L$, which implies that $s_i \in \calT_L(\calX)$. As a consequence, $\calX \neq \calT_L(\calX)$. Thus, if $\calX = \calT_L(\calX)$, we must have $\acalS_L \subseteq \calX$. This proves the first point.

	Let us prove that $\calKstar_j\subseteq\calKstar_{j+1}$ for all $j\geq 1$. Clearly, $\calKstar_2 = \calT_L(\calKstar_1) = \{s\in\calS : V_{\{s_0\},s}(s_0)\leq L\} \supseteq \{s_0\} = \calKstar_1$. Then, suppose that $\calKstar_{j-1}\subseteq\calKstar_{j}$ for some $j \geq 2$. By definition, for all $ s \in \calKstar_j$, $\optV_{\calKstar_{j-1},s}(s_0) \leq L$, which implies that $\optV_{\calKstar_{j},s}(s_0) \leq L$ by the inductive hypothesis. Then, $\calKstar_{j+1} = \calT_L(\calKstar_j) = \{s\in\calS : V_{\calKstar_j,s}(s_0)\leq L\} \supseteq \calKstar_j$.

	Now let us prove that $\calKstar_j \subseteq \acalS_L$ for all $j\geq 1$. Clearly, $ \calKstar_1 \subseteq \acalS_L$. Suppose that $\calKstar_j \subseteq \acalS_L$ for some $j\geq 1$. Then, if $s\in\calKstar_{j+1}$ for some $s \notin \acalS_L$, it must be that $V_{\calKstar_j,s}(s_0)\leq L$. By the inductive hypothesis, this implies that we found an ordering of the states in which $s$ is reachable in $L$ steps by a policy restricted on states of $\acalS_L$. Hence, $s\in\acalS_L$, which is a contradiction. This proves point 2.

	Let us enumerate over $\acalS_L=\{s_0,\ldots,s_{\aS_L-1}\}$ in a way that obeys $\prec^\star$. We prove by induction that $s_{j}\in\calKstar_{j+1}$ for any $0 \leq j < \aS_L$. Given point 2, this implies point $3$.
    Clearly, $s_0\in\calKstar_1$.
    Now suppose that $\{s_0,\ldots,s_j\}\in\calKstar_{j+1}$ for $0 \leq j \leq \aS_L - 2$.
    Then, we clearly have $s_{j+1}\in\calKstar_{j+2}$ by the definition of $\calKstar_{j+2}$ and the fact that $s_{j+1}$ is $L$-controllable by a policy restricted on $\{s_0,\ldots,s_j\}$.
\end{proof}
This lemma shows that $\acalS_L$ is a fixed-point solution of $\calT_L$. Most importantly, it provides an iterative procedure to construct $\acalS_L$. 
Starting from $\{s_0\}$ or $\emptyset$, $\calT_L$ acts as an expansive operator over sets (i.e., $T^{j}(\{s_0\}) \subset T^{j+1}(\{s_0\})$) until the set $\acalS_L$ is built. From this point, $\calT_L$ acts as an identity map since $\acalS_L$ is a fixed point. In other words, this procedure builds $\acalS_L$ iteratevely starting from $\calKstar_1$, expanding it to $\calKstar_2 = \calT_L(\calKstar_1)$, and so on until reaching $\acalS_L$. For this reason, we shall refer to the sets $(\calKstar_j)_j$ as \emph{layers}.
This process is learnable since it evolves only through subsets of $\acalS_L$ and it is at the core of the design of our algorithm. 

It is worth noticing that not all the fixed-point solutions of $\calT_L$ are learnable. In fact, Proposition 4 of~\citet{lim2012autonomous} implies that there exist MDPs with fixed points $\calX = \calT_L(\calX) \neq \acalS_L$ which may require an exponential number of samples to be learned. For example, there exist MDPs where the whole set of states $\calS$ is itself a fixed point of $\calT_L$ (that is, all states are $L$-controllable) but $\calS$ is exponentially larger than $\acalS_L$. This reveals an interesting connection between the existence of a \emph{unique} iterative process to reach the fixed-point corresponding to $\acalS_L$ and its learnability.

\section{$\AX_L$ through Layer Discovery}

\begin{algorithm2e*}[t]
    \caption{Layer-Aware State Discovery (\algo)}
    \label{alg:LOGSSD}
    \DontPrintSemicolon
    \LinesNumbered
    \small
    \KwIn{$L\geq1$, $\epsilon\in(0, 1]$, $\delta\in(0, 1)$.}
    Let $\frakN=\{2^j\}_{j\geq 0}$, $\calK\leftarrow \varnothing, \calU \leftarrow \varnothing$, $\calK'\leftarrow \{s_0\}, \Pi_{\calK} = \{\tilpi_{s_0}\text{ a random policy}\}$, $\N(\cdot, \cdot)\leftarrow 0, \N(\cdot,\cdot,\cdot) \leftarrow 0$.\;
    \For{round $r=1,\ldots$}{\label{line:round.easy}
        $\epsilon_{\VI}\leftarrow 1/\max\{16, \sum_{s,a}\N(s,a)\}$.\;
        \tcc{Policy optimisation and goal selection}
        Let $\gstar=\argmin_{g\in\calU}\big\{V_{\calK,g}(s_0)\big\}$ where $(Q_{\calK,g}, V_{\calK,g}, \pi_g)=\VISGO(\calK, g, \epsilon_{\VI}, \N, \frac{\delta}{4r^2S^2})$ (see \pref{alg:VISGO}).\label{line:compute V.easy}\;
        \eIf{$\gstar$ does not exist or $V_{\calK,\gstar}(s_0)>L$}{\label{line:goal condition.easy}
            \tcc{Expand or Terminate}
            \lIf{$\calK'=\varnothing$}{\textbf{return} $\calK$ and $\Pi_{\calK}$.}\label{line:terminate.easy}
            Set $\calK\leftarrow\calK\cup\calK'$, $\calK'=\varnothing, \calU=\varnothing$. \label{line:update.K.easy}\;
            $(\_,\calU)\leftarrow\fillc(\calK,\Pi_{\calK}, 0, 2L\log(4SALr^2/\delta))$ (see~\pref{alg:fillc}). \label{line:compute calU'.easy}\;
            Set $\nmin \leftarrow N_0(\calK,\frac{\delta}{4r^2S^2}) \lesssim L^2|\calK|\ln(Sr/\delta)$ (defined in~\pref{lem:bounded error}).\;
            $(\N,\_) \leftarrow \fillc(\calK,\Pi_{\calK},\N,\nmin)$. \label{line:fill N.easy}
        }{
            \tcc{Policy evaluation}
            Let $\hattau\leftarrow 0$, $\lambda\leftarrow N_{\dev}(32L, \frac{\epsilon}{256}, \frac{\delta}{4 r^2}) \lesssim \frac{1}{\epsilon^2} \ln^4\Big(\frac{Lr}{\epsilon \delta}\Big)$ (defined in \pref{lem:V pi mean}).\label{line:PE.easy}\;
            \For{$j=1,\ldots,\lambda$}{\label{line:episode.easy}
                $k\overset{+}{\leftarrow}1$, $i\leftarrow 1$, and reset to $s^k_1\leftarrow s_0$ by taking action $\reset$.\;
                \While{$s^k_i\neq \gstar$}{
                    Take $a^k_i=\pi_{\gstar}(s^k_i)$, and transits to $s^k_{i+1}$.
                    Increase $\N(s^k_i, a^k_i)$, $\N(s^k_i, a^k_i, s^k_{i+1})$, and $i$ by $1$.\;
                    \lIf{ $\sum_{s,a}\N(s,a)\in\frakN$ or ($s^k_i\in\calK$ and $\N(s^k_i, a^k_i)\in\frakN$)}{ return to \pref{line:round.easy} (skip round).\label{line:skip.easy}}
                    Set $\hattau\overset{+}{\leftarrow} \frac{c(s^k_i, a^k_i)}{\lambda}$.
                }
                \lIf{$\hattau>V_{\calK,\gstar}(s_0) + \epsilon L/2$}{ return to \pref{line:round.easy} (failure round).  \label{line:failure.easy}}
            }
            $\calK'\leftarrow\calK'\cup\{\gstar\}$, $\calU\leftarrow\calU\setminus\{\gstar\}$, $\Pi_{\calK}=\Pi_{\calK} \cup \{\tilpi_{\gstar}:=\pi_{\gstar}\}$ 
            (success round).
        }
    }
\end{algorithm2e*}

\pref{alg:LOGSSD} illustrates Layer-Aware State Discovery (\algo), a novel algorithm for $\AX_L$ based on the iterative construction of $\acalS_L$ introduced in \pref{lem:SL.operator}.
In \pref{sec:pc}, we then introduce a policy consolidation procedure that achieves $\AX^+$ when combined with \algo, leading to the $\algoax$ algorithm.
\algo maintains a set $\calK$ of ``known'' states, i.e., states for which a policy $\tilpi_s \in \Pi(\calK)$ with $V^{\tilpi_s}_s(s_0) \leq L(1+\epsilon)$ has been learned. These policies are stored in $\Pi_{\calK}$. The set $\calK$ is updated only when the algorithm is confident enough to have identified a new layer. To this purpose, $\calK'$ is used as a buffer for the new layer, i.e., for states that have been found to be $L$-controllable by policies restricted on $\calK$ and that are waiting to be merged with $\calK$. Finally, any other state discovered over time (and potential candidate to be in $\acalS_L$) is stored in $\calU$.

At each round, \algo first uses the samples collected so far to compute an optimistic policy for each state in $\calU$ through \VISGO (\pref{alg:VISGO}), a slight variant of the state-of-the-art algorithm for exploration-exploitation in stochastic shortest paths~\citep{tarbouriech2021stochastic}, and it selects the state that is optimistically closer to $s_0$ as candidate goal $g^\star$. 

If the optimistic distance of $g^\star$ from $s_0$ is larger than $L$, then no additional state can be confidently added to the current layer $\calK'$ and a \emph{set expansion} round is triggered. \algo updates the set of known states by adding the new layer $\calK'$ ($\calK = \calK \cup \calK'$) and starts a discovery process where policies in $\Pi_{\calK}$ are used to reach all states in $\calK$, then it executes all possible actions in these states, and it adds newly observed states to $\calU$. Notice that the samples obtained during this process are not included in the policy improvement of \VISGO to avoid statistical dependencies.
The sequence of expansion rounds is designed to approximate the sequence $\{\calKstar_j\}_j$. With high probability, every update of $\calK$ is not smaller than the application of $\calT_L$, i.e., if, for some $j$, $\calKstar_j\subseteq \calK \not\supseteq \calKstar_{j+1}$ before an update (this holds for $\calKstar_1 = \{s_0\}$ at the first round), then $\calKstar_{j+1} = \calT_{L}(\calKstar_{j})\subseteq \calK$ after the update. Thus, $\calK'$ is the increment to $\calK$ to include the next layer. At the end of the expansion round \algo executes an additional exploration step to ensure that a minimum number of samples is available for each $(s,a) \in \calK \times \calA$ (see~\pref{line:fill N.easy}).

On the other hand, if the optimistic distance of $g^\star$ is smaller than $L$, \algo performs a \emph{policy evaluation} round by running $\pi_{\gstar}$ to estimate whether the current policy is indeed able to reach $g^\star$ in less than $L$ steps. If the number of visits to some state-action pair is doubled within the current round, then the current round is classified as a \emph{skip round}. If the test on the policy performance fails, then the current round is classified as a \emph{failure round}. In both cases, a new round is started.
Otherwise, the current round is classified as a success round and $\gstar$ is added to the new layer $\calK'$. The samples collected in policy evaluation rounds are stored and used in all estimation and planning steps of the algorithm.

\algo terminates whenever the candidate goal $g^\star$ has an optimistic distance larger than $L$ and the new layer is empty, indicating that previous policy evaluation rounds could not identify any good policy and, thus, all states in $\acalS_L$ have been identified with high probability. 

We prove that \algo achieves the following guarantee, the proof can be found in~\pref{app:sd and id}.
\begin{theorem}
    \label{thm:sd}
    Suppose $\calS$ is finite. For any $L \geq 1$, $\epsilon \in (0,1]$ and $\delta \in (0,1)$, with probability at least $1-\delta$, \algo (\pref{alg:LOGSSD}) outputs a set $\calK$ such that $\acalS_L\subseteq\calK\subseteq\acalS_{L(1+\epsilon)}$ 
    and $\Pi_{\calK}$ such that $V^{\pi_g}_g(s_0)\leq L(1+\epsilon)$ for any $\pi_g \in \Pi_{\calK}$, with sample complexity bounded by
    \[
        \resizebox{\columnwidth}{!}{%
        $
        \bigO{\frac{\aS_{L(1+\epsilon)}\Gamma_{L(1+\epsilon)}AL}{\epsilon^2}\iota  + \frac{{\aS_{L(1+\epsilon)}}^2AL}{\epsilon}\iota + L^3 {\aS_{L(1+\epsilon)}}^2A\iota}
        $}
    \]
    where $\iota =\ln^8\left( \frac{SAL}{\epsilon\delta} \right)$.
\end{theorem}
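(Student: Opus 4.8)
The plan is to condition on a single high-probability event $\calE$ on which every statistical estimate used by \algo is simultaneously valid, and then argue both correctness and the sample-complexity bound deterministically on $\calE$. The event $\calE$ bundles three families of guarantees, combined by a union bound over rounds $r$ and over state-action(-state) triples, which is exactly what the $\tfrac{1}{r^2}$ and $\tfrac{1}{S^2}$ factors in the confidence levels pay for: (i) every call to \VISGO returns estimates $V_{\calK,g}$ that are valid optimistic lower bounds on $\optV_{\calK,g}(s_0)$ and become accurate once enough samples have been collected; (ii) after $\lambda = N_{\dev}(\cdots)$ episodes the empirical mean hitting time $\hattau$ concentrates to within $\epsilon L/2$ of the true hitting time $V^{\pi_{\gstar}}_{\gstar}(s_0)$ of the \emph{fixed} policy $\pi_{\gstar}$ used in that round (the role of \pref{lem:V pi mean}); and (iii) every \fillc call launched from $\Pi_{\calK}$ indeed reaches all of $\calK$ and gathers at least $\nmin$ samples per pair, so the minimum-visitation precondition of \VISGO in \pref{lem:bounded error} holds.

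Correctness I would prove by induction over the layers $\{\calKstar_j\}_j$ of \pref{lem:SL.operator}, showing that the successive values of $\calK$ dominate this sequence. Assume $\calKstar_j\subseteq\calK$ just before an expansion. For any $s\in\calKstar_{j+1}=\calT_L(\calKstar_j)$ we have $\optV_{\calKstar_j,s}(s_0)\le L$, hence $\optV_{\calK,s}(s_0)\le L$ by monotonicity, and by the optimism in (i) also $V_{\calK,s}(s_0)\le L$, so $s$ qualifies as a candidate goal; since its optimistic policy truly reaches $s$ within $L(1+\epsilon)$ steps, by (ii) its evaluation eventually succeeds and $s$ enters $\calK'$, hence $\calK$ at the next expansion. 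This gives $\calKstar_{j+1}\subseteq\calK$, and since the layers reach $\acalS_L$ in finitely many steps (point 3 of \pref{lem:SL.operator}) and the algorithm terminates only when no candidate remains, we obtain the lower inclusion $\acalS_L\subseteq\calK$. The upper inclusion is an induction in the opposite direction: any $g$ added to $\calK'$ passed the test $\hattau\le V_{\calK,\gstar}(s_0)+\epsilon L/2\le L(1+\epsilon/2)$, so by (ii) its policy $\pi_g\in\Pi(\calK)$ satisfies $V^{\pi_g}_g(s_0)\le L(1+\epsilon)$; as $\calK\subseteq\acalS_{L(1+\epsilon)}$ by hypothesis, incremental controllability yields $g\in\acalS_{L(1+\epsilon)}$, establishing $\calK\subseteq\acalS_{L(1+\epsilon)}$ and the stated policy guarantee at once.

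For the sample complexity I would partition the steps by round type. Expansion rounds are at most the number of layers, $J\le\aS_{L(1+\epsilon)}$; each runs two \fillc calls whose cost is $\nmin\lesssim L^2|\calK|\iota$ times the $O(L)$ cost of reaching each of the $|\calK|\le\aS_{L(1+\epsilon)}$ known states and trying all $A$ actions, producing the $\epsilon$-independent term $L^3{\aS_{L(1+\epsilon)}}^2 A\iota$. Success rounds number at most $|\calK|\le\aS_{L(1+\epsilon)}$ (one per added state), and skip rounds are charged to count doublings: the global counter and each of the $O(\aS_{L(1+\epsilon)}A)$ pairs in $\calK\times\calA$ double only $\tilO{1}$ times, so there are $\tilO{\aS_{L(1+\epsilon)}A}$ of them; each such round costs at most $\lambda$ episodes of expected length $O(L)$, i.e.\ $\tilO{L/\epsilon^2}$ steps.

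The main obstacle, and the step I would treat most carefully, is bounding the \textbf{failure rounds}, which is what generates the dominant term $\tilO{\aS_{L(1+\epsilon)}\Gamma_{L(1+\epsilon)}AL/\epsilon^2}$. A failure can occur only while \VISGO is still over-optimistic about $\gstar$, i.e.\ $V_{\calK,\gstar}(s_0)$ underestimates the true hitting time of its own policy by more than $\epsilon L/2$. Because the samples collected during every evaluation episode are retained and fed back into \VISGO, each failure provably shrinks this optimism gap; by the Bernstein-type estimation guarantee of \VISGO the gap drops below $\epsilon L/2$ once the number of samples at each relevant pair exceeds $\tilO{\Gamma_{L(1+\epsilon)}/\epsilon^2}$, the $\Gamma$ entering through the estimation of the $\Gamma$-supported transition $P_{s,a}$. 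Summing this budget over the $\tilO{\aS_{L(1+\epsilon)}A}$ pairs and paying the $O(L)$ reaching cost per collected sample yields the dominant term. The two delicate points here are (a) checking that the skip mechanism keeps the visitation counts essentially frozen within any non-skip round, so that the fixed-policy concentration in (ii) is legitimate, and (b) folding the per-round optimism decrease into a single global potential, so that failures are counted once against the total \VISGO sample budget rather than once per layer.
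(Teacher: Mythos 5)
Your correctness argument (induction over the layers $\{\calKstar_j\}_j$ via restricted optimism and monotonicity, plus the failure test keeping states outside $\acalS_{L(1+\epsilon)}$ from entering $\calK$) is essentially the paper's \pref{lem:calK.easy}--\pref{lem:update calK.easy}, and your accounting of expansion, success and skip rounds also matches \pref{thm:sd.easy}; the only looseness there is that ``its evaluation eventually succeeds'' should be replaced by the contrapositive the paper uses (expansion \emph{cannot trigger} while a state of $\calKstar_{j+1}$ sits in $\calU$ with optimistic value $\leq L$). The genuine gap is in the step you yourself flag as delicate: bounding the \textbf{failure rounds}. Your mechanism --- failures stop once every pair holds $\tilO{\Gamma_{L(1+\epsilon)}/\epsilon^2}$ samples --- does not work. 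First, the threshold is wrong: to force the gap $V^{\pi_{\gstar}}_{\gstar}(s_0) - V_{\calK,\gstar}(s_0)$ below $\epsilon L/2$ from a uniform per-pair count $n$, you must sum Bernstein errors of order $\sqrt{\Gamma_{L(1+\epsilon)}\fV(P_i,V)/n}$ along a trajectory of expected length $\bigO{L}$ with total variance $\bigO{L^2}$; Cauchy--Schwarz then forces $n \gtrsim \Gamma_{L(1+\epsilon)} L/\epsilon^2$, and combined with your ``$O(L)$ reaching cost per sample'' your accounting yields $\aS_{L(1+\epsilon)}\Gamma_{L(1+\epsilon)}AL^2/\epsilon^2$ --- exactly the extra $L$ factor of \disco-style uniform-accuracy arguments that this paper is designed to avoid. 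Second, and more fundamentally, the budget accounting is circular: a failure round need not spend its steps at under-sampled pairs. A round can fail because of small errors accumulated across many moderately-sampled pairs, and can spend nearly all of its $\tilO{L/\epsilon^2}$ steps at pairs already exceeding any fixed budget; nothing in your argument charges those steps to a decreasing quantity. The ``single global potential'' of your point (b) is precisely the missing ingredient, and it is the hard part, not a technicality.

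The paper closes this gap by a different technique: a reduction to SSP regret. It defines $R_{K} = \sum_{k}(I_k - V_k(s_0))\one_k$ over concatenated evaluation episodes and proves the sublinear bound $R_{K} \lesssim L\sqrt{\aS_{L(1+\epsilon)}\Gamma_{L(1+\epsilon)}AK} + L{\aS_{L(1+\epsilon)}}^2A$ up to logs (\pref{lem:regret.easy}), where the term $(P-\P)V_k$ for value functions restricted on the \emph{random} set $\calK$ is handled by the dedicated concentration result \pref{lem:dPV} (this is where $\Gamma_{L(1+\epsilon)}$ enters). The potential you were looking for is the regret itself: each failure round certifies a regret \emph{lower} bound of $\lambda\epsilon L/2 = \lowO{L/\epsilon}$ (since $\hattau > V_k(s_0)+\epsilon L/2$ over $\lambda \approx 1/\epsilon^2$ episodes), while success/skip rounds restricted to the indicator set $\calW$ contribute at least $-\tilO{L/\epsilon}$ each (\pref{lem:bound r.easy}). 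Comparing the lower and upper bounds with $K \lesssim r_{\tot}/\epsilon^2$ gives $r_f \lesssim \aS_{L(1+\epsilon)}\Gamma_{L(1+\epsilon)}A + {\aS_{L(1+\epsilon)}}^2A\epsilon$, and then $C_K \lesssim LK + L{\aS_{L(1+\epsilon)}}^2A$ produces the two $\epsilon$-dependent terms of the theorem. Without reconstructing this regret decomposition (or an equivalent non-circular potential argument), the dominant term of the statement is not established by your proposal.
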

Compared to the lower bound (see~\pref{tab:summary}), \algo still suffers from an extra $\Gamma_{L(1+\epsilon)}$ dependence.
This is because in the analysis we use a Bernstein-like concentration inequality to control the deviation $(P- \P)V$, where $\P$ are the estimated transitions, for any value function $V$ restricted on $\calK$ (i.e., $V$ is constant on all states outside $\calK$). Unfortunately, we cannot leverage refined concentration inequalities since $\calK$ is random and can take an exponentially large amount of values throughout the execution of \algo.

However, by inspecting the proof of \cite{cai2022near}, we note that the construction of the lower bound leverages a certain separation condition defined as follows.
\begin{assumption}[identifiability of $\{\calKstar_j\}_j$]
    \label{assum:id}
    We say $\{\calKstar_j\}_j$ is $\epsilon$-identifiable, if for any $j \geq 2, g\notin\calKstar_j$, we have $\optV_{\calKstar_{j-1}, g}(s_0)>L(1+\epsilon)$.
\end{assumption}
This means that each layer $\calKstar_j$ can be identified exactly by an algorithm run with accuracy $\epsilon$
since states that do not belong to the immediate next layer are clearly separated, i.e., they are more than $L(1+\epsilon)$-steps away. This leads to following remark.
\begin{remark}
    \label{rem:id}
    \pref{assum:id} implies that $\acalS_L=\acalS_{L(1+\epsilon)}$.
\end{remark}
The fact that states $g \notin \calKstar_j$ are not reachable in $L(1+\epsilon)$ steps from $\calKstar_{j-1}$ allows \algo to uniquely identify the layers. Indeed, under~\pref{assum:id}, \algo behaves as the operator $\calT_L$ and, after each expansion, we have that $\calK = \calKstar_j$ for some $j \in [\acalS_L]$.
Thanks to this property, we can show that \algo is minimax optimal.\footnote{Minimax optimality holds for $\epsilon \leq \min\{1/\aS_L, 1/L\}$, which makes the first term in \pref{thm:sd id} dominant \citep{cai2022near}.}
\begin{theorem}
    \label{thm:sd id}
    Suppose that $\calS$ is finite. 
    For any $L \geq 1$, $\epsilon \in (0,1]$ and $\delta \in (0,1)$, if~\pref{assum:id} holds, with probability at least $1-\delta$, \algo (\pref{alg:LOGSSD}) outputs $\calK=\acalS_{L(1+\epsilon)} = \acalS_{L}$ and $\Pi_{\calK}$ such that $V^{\pi_g}_g(s_0)\leq L(1+\epsilon)$ for any $\pi_g \in \Pi_{\calK}$, with sample complexity bounded by
    \[
        \bigO{\frac{\aS_{L}AL}{\epsilon^2}\iota  + \frac{{\aS_{L}}^2AL}{\epsilon}\iota + L^3 {\aS_{L}}^2A\iota},
    \]
    where $\iota =\ln^8\left( \frac{SAL}{\epsilon\delta} \right)$.
\end{theorem}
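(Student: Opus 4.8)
The plan is to exploit the identifiability assumption (\pref{assum:id}) to turn the random set $\calK$ maintained by \algo into a \emph{deterministic} object, which simultaneously pins down the output exactly and unlocks a sharper concentration argument that removes the $\Gamma_L$ factor. To begin, \pref{rem:id} gives $\acalS_L=\acalS_{L(1+\epsilon)}$ for free, so $\aS_{L(1+\epsilon)}=\aS_L$ and $\Gamma_{L(1+\epsilon)}=\Gamma_L$; it therefore suffices to re-run the accounting of \pref{thm:sd} with the subscript $L(1+\epsilon)$ replaced by $L$ and to argue that, under identifiability, the dominant term sheds its $\Gamma_L$ dependence.

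The core structural claim I would establish is \textbf{exact layer recovery}: on the high-probability event on which all \VISGO estimates are valid, after the $j$-th set-expansion round of \algo one has $\calK=\calKstar_{j}$, where $\{\calKstar_j\}_j$ is the canonical layer sequence of \pref{lem:SL.operator}. I would prove this by induction on $j$, with base case $\calK=\calKstar_1=\{s_0\}$. For the inductive step, suppose $\calK=\calKstar_{j-1}$ at the start of a discovery phase. Any genuine next-layer state $g\in\calKstar_j\setminus\calKstar_{j-1}$ satisfies $\optV_{\calKstar_{j-1},g}(s_0)\le L$, so once $\nmin$ samples have been collected on $\calK$ (\pref{line:fill N.easy}) the optimistic estimate obeys $V_{\calK,g}(s_0)\le L$ and $g$ passes the policy-evaluation test, entering $\calK'$. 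Conversely, any $g\notin\calKstar_j$ satisfies $\optV_{\calKstar_{j-1},g}(s_0)>L(1+\epsilon)$ by \pref{assum:id}; since the estimation error is smaller than the separation gap $L\epsilon$, its optimistic distance exceeds $L$ and it is rejected. Hence $\calK'=\calKstar_j\setminus\calKstar_{j-1}$ and after the merge $\calK=\calKstar_j$. Summing over the (at most $J\le\aS_L$) layers yields $\calK=\acalS_L=\acalS_{L(1+\epsilon)}$ at termination, with each stored policy satisfying the $L(1+\epsilon)$ guarantee exactly as in \pref{thm:sd}.

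With exact recovery in hand, the value functions appearing in the deviation term $(P-\P)V$ are now restricted to one of the finitely many deterministic sets $\calKstar_1,\dots,\calKstar_J$, rather than to an adversarially chosen random $\calK$ taking exponentially many values—precisely the obstruction noted after \pref{thm:sd}. I would therefore replace the crude $\ell_1/\ell_\infty$ bound (which costs $\Gamma_L$) by a variance-aware argument: for each \emph{fixed} layer $\calKstar_j$ I build an $\epsilon_{\VI}$-net over value functions supported on $\calKstar_j$ (log-covering number $\tilo{|\calKstar_j|}$, \emph{independent} of $\Gamma_L$), apply a Freedman/Bernstein inequality to each net element, and take a union bound over the $J\le\aS_L$ layers and over the net. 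The law of total variance then telescopes the per-step conditional variances along each trajectory reaching $\gstar$, so the accumulated deviation scales as $\sqrt{\aS_L L\,(\text{episodes})}$ rather than $\sqrt{\aS_L \Gamma_L L\,(\text{episodes})}$. Feeding this sharper bound into the same per-goal sample-complexity decomposition used for \pref{thm:sd} produces the stated bound $\tilo{\aS_L A L/\epsilon^2 + \aS_L^2 A L/\epsilon + L^3 \aS_L^2 A}$.

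The main obstacle is making the exact-recovery induction airtight while ensuring the sharper concentration holds on the \emph{same} good event. Two couplings demand care. First, the classification step needs the estimation error on $\calK$ to be provably below the separation gap $L\epsilon$; this forces a careful calibration of $\nmin$ (via \pref{lem:bounded error}) and of the evaluation length $\lambda$ (via \pref{lem:V pi mean}), and one must verify that the union bound over all rounds $r$—using the $\delta/(4r^2S^2)$ schedule—does not erode these guarantees. Second, the covering argument must be carried out \emph{per deterministic layer} and only then union-bounded over the $J$ layers; this is exactly what keeps the covering number free of $\Gamma_L$, and I would check that the optimism of \VISGO is preserved under net discretization so that the recovery and regret arguments remain mutually consistent.
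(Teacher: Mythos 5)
Your layer-recovery induction is essentially the paper's own argument (\pref{lem:calK id.easy}), though with one mis-stated mechanism: under \pref{assum:id}, a state $g\notin\calKstar_{j}$ is \emph{not} rejected because "its optimistic distance exceeds $L$" — \VISGO's estimates are optimistic, i.e.\ $V_{\calK,g}(s_0)\leq\optV_{\calK,g}(s_0)$, so they can easily sit below $L$ even when the true restricted value exceeds $L(1+\epsilon)$ (the available two-sided control, \pref{lem:bounded error}, only gives a factor-$2$ lower bound, not an additive $L\epsilon$ one). In the paper such a $g$ is eliminated at the \emph{policy-evaluation} stage: $\hattau\geq V^{\pi_g}_g(s_0)-L\epsilon/2 > L(1+\epsilon/2)\geq V_{\calK,g}(s_0)+L\epsilon/2$, so the failure test in \pref{line:failure.easy} fires. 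This is fixable within your outline, so it is a secondary issue.

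The genuine gap is in your concentration argument, which is the entire point of \pref{thm:sd id}. You propose an $\epsilon_{\VI}$-net over all value functions supported on a fixed layer $\calKstar_j$, with log-covering number $\tilO{|\calKstar_j|}$, and claim the resulting deviation scales as $\sqrt{\aS_L L\cdot(\text{episodes})}$. It does not: a union bound of Bernstein's inequality over a net of log-cardinality $\Theta(\aS_L)$ puts that factor \emph{inside} the deviation, giving per-step error $\sqrt{\aS_L\,\fV(P^k_i,V_k)\iota/\N^k_i}$; after Cauchy–Schwarz with $\sum_{k,i}1/\N^k_i\lesssim\aS_L A\log C_K$ this yields a leading term of order $\aS_L\sqrt{ALC_K\iota}$, i.e.\ sample complexity ${\aS_L}^2AL/\epsilon^2$ — strictly \emph{worse} than the $\Gamma$-dependent bound of \pref{thm:sd} (recall $\Gamma_{L(1+\epsilon)}\leq\aS_{L(1+\epsilon)}$), and nowhere near the claimed $\aS_L AL/\epsilon^2$. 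The paper's mechanism (\pref{lem:regret-improved.easy}) is different and does not use a net at all: exact layer identification makes the \emph{optimal} value functions $\{\optV_{\calKstar_j,g}\}_{j,g}$ a small, deterministic, finite family, so Bernstein plus a union bound over this family costs only a logarithm, giving
\begin{align*}
    |(P^k_i-\P^k_i)\optV_k| \lesssim \sqrt{\frac{\fV(P^k_i,\optV_k)\iota'}{\N^k_i}} + \frac{L\iota'}{\N^k_i}
\end{align*}
with no dimension factor. The random residual is then handled by the decomposition $(P^k_i-\P^k_i)V_k=(P^k_i-\P^k_i)\optV_k+(P^k_i-\P^k_i)(V_k-\optV_k)$: the second piece does pay the $\aS_{L(1+\epsilon)}$ factor via \pref{lem:dPV}, but it multiplies $\fV(P^k_i,V_k-\optV_k)$, the variance of the \emph{error}, which is second-order small by \pref{lem:sum dV.easy}; plugging that in produces a self-bounding quadratic inequality whose solution keeps the leading regret term at $\sqrt{\aS_L A\sum_{k,i}\fV(P^k_i,V_k)\iota}$. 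Without this decomposition-plus-second-order-variance step, your proof cannot reach the stated bound.
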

The trick to remove the $\Gamma_{L(1+\epsilon)}$ from \pref{thm:sd} is that, since layers are uniquely identified by the algorithm, we only need to concentrate the term $(P- \P)V$ for any value function in the set $\{V^\star_{\calKstar_j}\}_{j \in [\aS_{L}]}$.

\subsection{Proof Sketch}
Here we report a sketch of the proof, while the detailed one can be found in~\pref{app:logs}. All the statements we report here are to be considered to hold with high probability.

The first step of the proof (see~\pref{lem:calK.easy}) is to show by induction that, at each round, $\calK \subseteq \acalS_{L(1+\epsilon)}$. Thanks to the fact that $\tilo{L^2|\calK|}$ samples are always available for each $(s,a) \in \calK \times \calA$ (\pref{line:fill N.easy}) and the properties of \VISGO, it is possible to show that, for the goal $\gstar$ selected at the current round, $\|V^{\pi_{\gstar}}_{\gstar}\| \leq 2\|V^{\pi_{\gstar}}_{\calK,\gstar}\|\leq 4L$ if \pref{line:goal condition.easy} is passed. 
Combining this with the properties of policy evaluation and the inductive hypothesis, we have that $\hattau\geq L(1+\epsilon/2) \geq V^{\pi_{\gstar}}_{\calK,\gstar}(s_0) - L\epsilon/2$ if $\gstar \in \calU \setminus \acalS_{L(1+\epsilon)}$. Thus a failure test is triggered and $\gstar$ is never added to $\calK$. This shows that states outside $\acalS_{L(1+\epsilon)}$ are not added to $\calK$. 
By the same reasoning, we can show that if a goal $\gstar$ is added to $\calK'$, the corresponding policy has bounded value function (important prerequisite for policy consolidation) and satisfies $\AX_L$.
Furthermore, by properly selecting the number of rollouts in the expansion phase (\pref{line:compute calU'.easy}), we can show that $\calU$ always contains at least those states that are reachable in $L$ steps from $\calK$ (see~\pref{lem:calU.easy}), i.e., $\calT_L(\calK) \setminus \calK \subseteq \calU$.

Combining these results with optimism restricted on $\calKstar_j$ (see~\pref{lem:V calK.easy}), we are able to show (see~\pref{lem:update calK.easy}) that $\calK$ always expands by at least one layer at each update. Formally, if $\calKstar_j \subseteq \calK$ at a certain update, then $\calK \cup \calK' \supseteq \calKstar_{j+1}$ at the next update in~\pref{line:update.K.easy} (i.e., $\calKstar_{j+1} = \calT_L(\calKstar_j) \subseteq \calK$), see~\pref{lem:calK}. 
If \pref{assum:id} holds, thanks to the identifiability of the layers, we show that $\calK = \calT_L(\calKstar_{j}) = \calKstar_{j+1}$, i.e., the algorithm replicates the $\calT_L$ operator (see~\pref{lem:calK id}). In this case, $\calK'$ is exactly the set of states needed to move from $\calKstar_j$ to $\calKstar_{j+1}$. By induction, we conclude that $\acalS_L \subseteq \calK$ when the algorithm stops, $\calK = \acalS_L$ with~\pref{assum:id}.

These results provide $\AX_L$ guarantees when the algorithm stops. For computing the sample complexity we use a reduction to a regret analysis of a stochastic shortest path problem (SSP). We define the SSP regret as $R=\sumk(I_k - V_k(s_0))$ where $K$ is the total number of episodes done in policy evaluation, $I_k$ is the length of episode $k$, and $V_k$ is the optimistic value function of the goal selected at episode $k$.
Then, $C_K = \sumk I_k$ is the sample complexity of policy evaluation. Through the SSP regret analysis we can show that $R \lesssim c_1\sqrt{K} +c_2$ and $C_K \lesssim LK$, where $c_1 = L\sqrt{\Gamma_{L(1+\epsilon)}\aS_{L(1+\epsilon)}A}$ (resp. $c_1 = L\sqrt{\aS_{L(1+\epsilon)}A}$ under~\pref{assum:id}) and $c_2 = L {\aS_{L(1+\epsilon)}}^2A$, see~\pref{lem:regret.easy} and~\pref{lem:regret-improved.easy}. To conclude the analysis of the sample complexity we need to bound $K$. 
We note that $K = r_{\tot}\lambda\lesssim r_{\tot}/\epsilon^2$ where $r_{\tot}$ is the total number of rounds and $\lambda$ is the maximum number of episodes per round.
Moreover, $r_{\tot} \lesssim \frac{c_1^2}{L^2} + \frac{c_2\epsilon}{L}$ can be controlled since the regret is sublinear (see~\pref{lem:bound r.easy}).

In the expansion phases we execute policies that reach any state $s \in \calK$ almost surely since, as mentioned above, $\|V^{\pi_{s}}_{s}\| \leq 4L$. By~\citep[][Lemma 6]{rosenberg2020adversarial} we can bound the number of steps required to reach the goal by $8L$. Then, considering the number of samples that needs to be collected and that there are $\bigo{\aS_{L(1+\epsilon)}}$ of such phases, the total sample complexity of the expansion phases is $\tilo{L^3 {\aS_{L(1+\epsilon)}}^2A}$. Summing everything together concludes the proof (see~\pref{thm:sd.easy}).

\section{Improved Algorithms}
In this section, we present two improvements to \algo that allow to \emph{i)} replace the $\ln(S)$ dependence with a much milder $\ln(\acalS_{L(1+\epsilon)})$; \emph{ii)} move from $\AX_L$ to $\AX^+$.

\subsection{Log-Adaptivity to $\acalS_{L(1+\epsilon)}$}\label{sec:log.adaptivity}
Inspired by intrinsically motivated learning agents, \citet{lim2012autonomous} originally focused on a learning scenario where the environment is possibly infinite or at least no prior knowledge about it is available. Unfortunately, all the existing algorithms fail in dealing with this scenario since they require prior  knowledge of the cardinality of the  state space $\calS$. While the sample complexity only depends logarithmically on $S$, this shows that inability of the algorithms to exclusively focus on the portion of environment discovered and consolidated over time and it thus prevents from dealing with arbitrarily large or infinite environments. 

In this section, we carefully identify all the aspects of the algorithm causing  this problem in \algo, and propose an improved algorithm \algop (\pref{alg:SD} in \pref{app:logsa}) that replaces the $\ln(S)$ dependency by $\ln(\aS_{L(1+\epsilon)})$. This is a much favorable dependency since $\aS_{L(1+\epsilon)}$ is finite even when $\calS$ is countably infinite~\citep[][Prop. 6]{lim2012autonomous}.
Below we list each source of $\ln(S)$ dependency and the corresponding modification to fix it.

\paragraph{A) Limiting the set of candidate goals.} In the expansion phase, \algo uses all the newly discovered states to build the set $\calU$ of candidates states for $\acalS_L$. This phase could potentially discover any state $s\in\calS$ as long as the transition probability to $s$ from $\calK$ is non-zero. This means that any $s \in \calS$ can be considered in the goal selection step (\pref{line:compute V.easy}), requiring a union bound over $\calS$ when analyzing the concentration of the estimated value functions. To overcome this issue, \algop performs a step of state filtering in the construction of $\calU$ (\pref{alg:SD}-\pref{line:filter calU'.improved}).\footnote{A similar filter is used in \disco to reduce computational complexity, but as it does not use fresh samples, it still requires a union bound over $\calS$ to deal with statistical dependencies.} The idea is to include in $\calU$ only goal states with estimated hitting time upper bounded by $L$. To break statistical dependencies we  estimate the hitting time of each candidate goal state using fresh samples (i.e., samples that are discarded after this step). It can be showed (see \pref{lem:bcalU}) that using this filtering scheme, $\calU$ only includes states that are $\bigo{L}$-controllable by policies restricted on $\calK$, which is a much smaller candidate set of order $\aS_{L(1+\epsilon)}$.

\paragraph{B) Scaling the confidence bounds.} 
While the state filtering step allows to consider only states in $\acalS_{L(1+\epsilon)}$ rather than $\calS$, the knowledge of $\aS_{L(1+\epsilon)}$ is required to properly set the confidence level when computing the estimated value functions (\pref{alg:SD}-\pref{line:compute V.improved}). We thus maintain an estimate $z$ of $\aS_{L(1+\epsilon)}$. Each attempt on a specific value of $z$ is a trial indexed by $\tau$ (\pref{alg:SD}-\pref{line:size.improved}) that ends when the total number of ``known'' states ($|\calK \cup \calK'|$) exceeds the estimated dimension $z$ (\pref{alg:SD}-\pref{line:z.improved}). In this case, we double the value of $z$. We can show (see~\pref{lem:bound z}) that the total number of trials is bounded $\tau \lesssim \log_2(\aS_{L(1+\epsilon)})$ and $z\lesssim \aS_{L(1+\epsilon)}$.

\paragraph{C) Controlling the policy quality.}
An important step in \algo is to gather a minimum number of samples for each ``known'' state (\pref{line:fill N.easy}) to ensure a reasonable performance of the policy being evaluated. The right number of samples also depends on $\aS_{L(1+\epsilon)}$. Unfortunately, we cannot leverage $z$ to compute this threshold since $z$ is likely to be smaller than $\aS_{L(1+\epsilon)}$ throughout the execution of the algorithm.
Using $z$ will invalidate the properties of policy evaluation that may lead to halt prematurely, without satisfying the $\AX$ properties (e.g., $\acalS_L \subseteq \calK$). This failure mode is not captured by the condition used in~\pref{alg:SD}-\pref{line:z.improved} to increase $z$.
We thus introduce a Monte-Carlo reachability test (\pref{alg:SD}-\pref{line:rtest.improved}) before policy evaluation. Intuitively, if the test fails \algop gathers new samples to improve the estimate of the MDP, otherwise the test guarantees that $\|V^{\pi_{\gstar}}_{\gstar}\|_{\infty} \lesssim L$ (see~\pref{lem:rtest}).

Combining these three changes, we are able to obtain the following sample complexity guarantee (see~\pref{app:sd id improved}), which is $S$-independent.
\begin{theorem} 
    \label{thm:sd id improved}
    For any $L \geq 1$, $\epsilon \in (0,1]$ and $\delta \in (0,1)$, with probability at least $1-\delta$, \algop (\pref{alg:SD}) outputs $\acalS_L\subseteq\calK\subseteq\acalS_{L(1+\epsilon)}$ and $\Pi_{\calK}$ such that $V^{\pi_g}_g(s_0)\leq L(1+\epsilon)$ for any $\pi_g \in \Pi_{\calK}$, with sample complexity bounded by
    \[
        \bigO{\frac{LMA\iota}{\epsilon^2}  +\frac{L\aS_{L(1+\epsilon)}A\iota}{\epsilon}+ L^3{\aS_{L(1+\epsilon)}}^3A\iota},
    \]
    where $\iota =\ln^{12}(\frac{\aS_{L(1+\epsilon)}AL}{\epsilon\delta})$ and $M = \Gamma_{L(1+\epsilon)}\aS_{L(1+\epsilon)}$. If~\pref{assum:id} holds, then $M = \aS_{L}$ and $\aS_{L(1+\epsilon)} = \aS_L$.
\end{theorem}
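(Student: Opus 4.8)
The plan is to mirror the two-phase structure of the proof of \pref{thm:sd}—first the correctness guarantees (the set inclusions and the per-policy hitting-time bound), then the sample-complexity bound via a reduction to stochastic-shortest-path (SSP) regret—while threading through the three modifications A), B), C) that remove the dependence on $\ln S$. Throughout, I would condition on a high-probability event whose construction now requires only a union bound over the \emph{filtered} candidate set rather than over all of $\calS$; this is exactly what replaces every $\ln S$ in $\iota$ by $\ln\aS_{L(1+\epsilon)}$.

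First I would reestablish the invariant $\acalS_L\subseteq\calK\subseteq\acalS_{L(1+\epsilon)}$ by induction on the rounds, following the argument behind \pref{lem:calK.easy} and \pref{lem:update calK.easy}. The one new ingredient is the state-filtering step of modification A). Using \pref{lem:bcalU}, I would argue that (with the fresh samples dedicated to filtering) $\calU$ admits only states that are $\bigo{L}$-controllable by policies restricted on $\calK\subseteq\acalS_{L(1+\epsilon)}$, so $|\calU|=\tilo{\aS_{L(1+\epsilon)}}$, while the filter never discards a genuinely reachable state, preserving $\calT_L(\calK)\setminus\calK\subseteq\calU$. With this, the goal-selection and policy-evaluation arguments go through verbatim, every union bound over $\calS$ being replaced by one over a set of size $\tilo{\aS_{L(1+\epsilon)}}$. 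Under \pref{assum:id} the same reasoning yields $\calK=\calKstar_j$ after each expansion, hence $\calK=\acalS_L=\acalS_{L(1+\epsilon)}$ at termination; here I would also invoke the refined concentration of $(P-\P)V$ over the finite set $\{V^\star_{\calKstar_j}\}_{j\in[\aS_L]}$, as in \pref{thm:sd id}, to replace $\Gamma_{L(1+\epsilon)}$ by $1$ inside $M$.

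The subtle points are the remaining two modifications. The \VISGO confidence level needs the size of the candidate set, which is unknown; I would handle this with the doubling estimate $z$ of modification B), invoking \pref{lem:bound z} to guarantee that the number of trials is $\tilo{\log\aS_{L(1+\epsilon)}}$ and $z\lesssim\aS_{L(1+\epsilon)}$, so every per-trial confidence bound is valid while contributing only logarithmically. The genuinely delicate step is the failure mode flagged in modification C): since $z$ may be strictly smaller than $\aS_{L(1+\epsilon)}$, the minimum-sample threshold that \algo used to certify $\|V^{\pi_{\gstar}}_{\gstar}\|_\infty\lesssim L$ can no longer be computed, and naively using $z$ risks a premature halt that violates $\acalS_L\subseteq\calK$. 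I would replace that threshold by the Monte-Carlo reachability test and invoke \pref{lem:rtest} to certify $\|V^{\pi_{\gstar}}_{\gstar}\|_\infty\lesssim L$ \emph{directly}, without any knowledge of $\aS_{L(1+\epsilon)}$. The main obstacle is to show that a failed test only triggers additional sample collection and never a premature termination, i.e., that the test's guarantee is compatible with the expansion/termination logic across all trials; this is the crux of making the correctness argument $S$-independent.

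Finally, for the sample complexity I would reuse the SSP-regret reduction of the proof sketch. Defining $R=\sumk(I_k-V_k(s_0))$ over the $K$ policy-evaluation episodes, I would establish $R\lesssim c_1\sqrt{K}+c_2$ and $C_K=\sumk I_k\lesssim LK$ with $c_1=L\sqrt{MA}$, where $M=\Gamma_{L(1+\epsilon)}\aS_{L(1+\epsilon)}$ in general and $M=\aS_L$ under \pref{assum:id}. Sublinearity of $R$ bounds the total number of rounds $r_{\tot}$, and since $K=r_{\tot}\lambda\lesssim r_{\tot}/\epsilon^2$ this produces the leading $1/\epsilon^2$ and the $1/\epsilon$ terms. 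The $\epsilon$-independent term, however, grows from $L^3{\aS_{L(1+\epsilon)}}^2A$ in \pref{thm:sd} to $L^3{\aS_{L(1+\epsilon)}}^3A$: this extra $\aS_{L(1+\epsilon)}$ factor is the combined cost of re-running the expansion and sample-collection phases across the $\tilo{\log\aS_{L(1+\epsilon)}}$ doubling trials together with the Monte-Carlo reachability tests. The last bookkeeping step is to verify that this overhead is confined to the $\epsilon$-independent term and does not inflate the leading $1/\epsilon^2$ contribution; summing the three parts then gives the stated bound.
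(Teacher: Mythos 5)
Your plan tracks the paper's own proof closely: correctness via the \algop analogues of the layer-invariant lemmas (\pref{lem:calK}, \pref{lem:update calK}, \pref{lem:output}) together with \pref{lem:bcalU}, \pref{lem:bound z} and \pref{lem:rtest}, and the sample complexity via the regret reduction (\pref{lem:regret}, \pref{lem:regret-improved}, \pref{lem:bound r}) with $c_1 = L\sqrt{MA}$. Up to the last step this is the same route the paper takes, and it produces the $1/\epsilon^2$ and $1/\epsilon$ terms correctly, including the replacement of $\Gamma_{L(1+\epsilon)}$ by $1$ under \pref{assum:id}.

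The genuine gap is in your final bookkeeping, where you explain the jump of the $\epsilon$-independent term from $L^3{\aS_{L(1+\epsilon)}}^2A$ (in \pref{thm:sd}) to $L^3{\aS_{L(1+\epsilon)}}^3A$ as ``the combined cost of re-running the expansion and sample-collection phases across the $\tilO{\log\aS_{L(1+\epsilon)}}$ doubling trials together with the Monte-Carlo reachability tests.'' Neither source can produce a polynomial factor: restarting across trials multiplies the per-trial cost by only $\bigO{\log \aS_{L(1+\epsilon)}}$ (absorbed into $\iota$), and each \rtest call costs only $\tilO{L\aS_{L(1+\epsilon)}}$ samples, so its aggregate contribution is of order $L{\aS_{L(1+\epsilon)}}^2\Gamma_{L(1+\epsilon)}A$, lower order in $L$ than the cubic term. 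The actual source is the very device you invoke for modification A: to keep the filtering step statistically independent, the expansion procedure draws \emph{fresh} samples --- roughly $N_1 \lesssim L^2|\calK|\iota$ per pair $(s,a)\in\calK\times\calA$, each requiring $\bigO{L}$ steps to collect (\pref{lem:calU each}, \pref{lem:bounded error fresh}) --- and then discards them. Unlike in \algo, where the fill-up samples accumulate in the persistent counter $\N$ so that all expansion phases together cost $\tilO{L^3{\aS_{L(1+\epsilon)}}^2A}$, in \algop \emph{each} of the up-to-$\aS_{L(1+\epsilon)}$ expansion rounds of a trial independently pays $\tilO{L^3{\aS_{L(1+\epsilon)}}^2A}$, giving the $L^3{\aS_{L(1+\epsilon)}}^3A$ term per trial. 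Any attempt to amortize these samples across rounds (which your accounting implicitly assumes) would reintroduce exactly the statistical dependence that the fresh-sampling scheme of \pref{lem:bounded error fresh} is designed to eliminate; this trade-off --- fresh samples, hence an extra $\aS_{L(1+\epsilon)}$ factor, in exchange for $S$-independence --- is the one piece of accounting your proof would need to carry out explicitly.
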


\begin{algorithm2e}[t]
    \caption{Policy Consolidation (\algopc)}
    \label{alg:PC}
    \DontPrintSemicolon
    \LinesNumbered
    \small

    \KwIn{$L\geq 1$, $\epsilon\in(0,1]$, $\delta\in(0, 1)$, target state space $\tset\subseteq\acalS_{L(1+\epsilon)}$, and initial policies $\Pi'=\{\pi'_g\}_{g\in\tset}$.}
    Set $k\leftarrow 1$, $\frakN=\{2^j\}_{j\geq 0}$, $\calL=\tset$, $\Pi^+_{\calK}= \{\tilpi_{s_0} \text{ a random policy}\}$, $\N(\cdot, \cdot), \N(\cdot,\cdot,\cdot) \leftarrow 0$.\;
    $(\N,\_)\leftarrow\fillc(\calK, \Pi', \N, N_1(|\tset|-1, \frac{\delta}{|\tset|}))$ (see \pref{alg:fillc}; $N_1 \lesssim L^2|\calK|\ln(\frac{|\calK|}{\delta})$ is defined in \pref{lem:bounded error fresh}).\label{line:nu}\;
    \For{$r=1,\ldots$}{\label{line:round mge}
            \lIf{$\calL=\varnothing$}{                
                \textbf{return} $\Pi^+_{\calK}$.
            }
            $\epsilon_{\VI}\leftarrow 1/\max\{16, \sum_{s,a}\N(s,a)\}$.\;            
            Pick $\gstar\in\calL$ arbitrarily and compute $(\hatQ, \hatV, \hatpi)=\VISGO(\tset\setminus\{g\}, g, \epsilon_{\VI}, \N, \frac{\delta}{|\tset|})$. 
            \label{line:pc.goal.selection}\; 
            Let $\lambda\leftarrow N_{\dev}(32L, \frac{\epsilon}{256}, \frac{\delta}{2r^2}) \lesssim \frac{1}{\epsilon^2} \ln^4\left(\frac{Lr}{\epsilon\delta}\right)$ (defined in \pref{lem:V pi mean}) and $\hattau\leftarrow 0$.\label{line:PE PC}\;
            \For{$j=1,\ldots,\lambda$}{\label{line:episode PC}
                $k\overset{+}{\leftarrow}1$, $i\leftarrow 1$, and reset to $s^k_1\leftarrow s_0$ by taking action $\reset$.\;                
                \While{$s^k_i\neq \gstar$}{
                    Take $a^k_i=\hatpi(s^k_i)$, and transits to $s^k_{i+1}$.\;
                    Increase $\N(s^k_i, a^k_i)$, $\N(s^k_i, a^k_i, s^k_{i+1})$, and $i$ by $1$.\;
                    \lIf{$\sum_{s,a}\N(s,a)\in\frakN$ or ($s^k_i\in\calK$ and $\N(s^k_i, a^k_i)\in\frakN$)}{ return to \pref{line:round mge} (skip round).}\label{line:skip PC} 
                    Set $\hattau\overset{+}{\leftarrow} \frac{c(s^k_i, a^k_i)}{\lambda}$.
                }

                \lIf{$\hattau>\hatV(s_0)(1+\epsilon/2)$}{ return to \pref{line:round mge} (failure round).} 
                
            }

            $\calL\leftarrow\calL\setminus\{\gstar\}$, $\Pi^+_{\calK} \leftarrow \Pi^+_{\calK} \cup \{\tilpi_{\gstar}= \hatpi\}$ (success round).

    }
\end{algorithm2e}
\subsection{Policy Consolidation}\label{sec:pc}
Both \algo and \algop discover a set $\calK$ such that $\acalS_L\subseteq\calK\subseteq\acalS_{L(1+\epsilon)}$ and a set of goal-conditioned policies satisfying $\AX_L$. We now introduce a procedure that, given a set $\calK\subseteq\acalS_{L(1+\epsilon)}$ and associated goal-reaching policies $\Pi_{\calK}$ with bounded value function, learns a set of goal-condition policies satisfying the $\AX^+$ condition.

\pc (\pref{alg:PC}) is an algorithm for Multi-Goal Exploration (MGE)~\citep[e.g.,][]{tarbouriech2022adaptive} over $\calK$.
In each round, \pc randomly selects an ``unknown'' goal state from $\calL$ and computes a policy to reach it (\pref{line:pc.goal.selection}).
It then evaluates the performance of this policy by $\tilo{\frac{1}{\epsilon^2}}$ rollouts, and based on the evaluation result, the current round is classified into success, skip, or failure round similar to that in \pref{alg:LOGSSD}.
While it shares a similar structure with \valae, the crucial difference is the condition of success round (\pref{line:round mge}), which has a form similar to $\AX^+$.
Thus, one can consider \pref{alg:PC} as an improved version of \valae.

Its simplicity and high sample efficiency, allow \pc to be integrated with any existing algorithm for $\AX_L$ or $\AX^\star$ at no cost. As showed in the following lemma, the sample complexity of policy consolidation matches the lower-bound for $\AX$, thus providing a ``minor'' contribution to the overall sample complexity.
Details are deferred to \pref{app:consolidation}.
\begin{theorem}
    \label{thm:PC}
    Given a target state space $\tset\subseteq \acalS_{L(1+\epsilon)}$ for some $\epsilon\in(0, 1)$ and a set of initial policies $\Pi'=\{\pi'_g\}_{g\in\calK}$ such that $\norm{V^{\pi'_g}_g}_{\infty}\lesssim L$, with probability at least $1-\delta$, 
    \textsc{PolicyConsolidation} (\pref{alg:PC}) outputs a set of policies $\{\tilpi_g\}_{g\in\tset}$ such that $V^{\tilpi_g}_g(s_0)\leq \optV_{\tset,g}(s_0)(1+\epsilon)$ for all $g\in\tset$, with sample complexity bounded by
    \[
         \tilO{\frac{L\aS_{L(1+\epsilon)}A\iota}{\epsilon^2} + \frac{L{\aS_{L(1+\epsilon)}}^2A\iota}{\epsilon} + L^3{\aS_{L(1+\epsilon)}}^2A\iota},
    \]
    where $\iota=\ln^{10}(\frac{\aS_{L(1+\epsilon)}AL}{\epsilon\delta})$.
\end{theorem}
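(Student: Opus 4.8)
The plan is to establish two claims: \emph{correctness}, that every policy $\tilpi_g$ returned by \pc satisfies the multiplicative bound $V^{\tilpi_g}_g(s_0)\leq\optV_{\tset,g}(s_0)(1+\epsilon)$ and that every goal in $\tset$ is eventually consolidated (so the loop terminates with $\calL=\varnothing$), and the \emph{sample-complexity} bound, obtained through an SSP regret reduction that mirrors the analysis of \pref{thm:sd} but crucially \emph{without} the branching factor $\Gamma_{L(1+\epsilon)}$. First I would fix a high-probability good event $\calE$ on which three facts hold simultaneously. (1) \emph{Optimism}: the value returned by $\VISGO$ on \pref{line:pc.goal.selection} underestimates the true optimal hitting time, $\hatV(s_0)\leq\optV_{\tset\setminus\{g\},g}(s_0)=\optV_{\tset,g}(s_0)$, where the equality holds because removing the goal from the restriction set cannot change its own hitting time. (2) \emph{Bounded value}: the initial fill on \pref{line:nu} collects $N_1\lesssim L^2|\calK|\log(|\calK|/\delta)$ samples for each $(s,a)$ with $s\in\tset$, which by \pref{lem:bounded error fresh} forces the optimistic policy $\hatpi$ to be proper with $\|V^{\hatpi}_g\|_\infty\lesssim L$; this is where the hypotheses $\|V^{\pi'_g}_g\|_\infty\lesssim L$ and $\tset\subseteq\acalS_{L(1+\epsilon)}$ enter, since they make the fill feasible and ensure the restricted SSP admits a proper policy of value $\lesssim L$. (3) \emph{Deviation}: by the choice $\lambda=N_{\dev}(32L,\tfrac{\epsilon}{256},\tfrac{\delta}{2r^2})$ and \pref{lem:V pi mean}, the empirical rollout mean $\hattau$ concentrates \emph{multiplicatively} around $V^{\hatpi}_g(s_0)$ with relative error $\tfrac{\epsilon}{256}$. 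A union bound over rounds (confidence $\delta/2r^2$) and the $|\tset|$ goals (confidence $\delta/|\tset|$) keeps the overall failure probability below $\delta$.

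On $\calE$, correctness of a success round follows by chaining these facts: the test $\hattau\leq\hatV(s_0)(1+\epsilon/2)$ together with the multiplicative deviation gives $V^{\hatpi}_g(s_0)\leq\hattau(1+\tfrac{\epsilon}{128})\leq\hatV(s_0)(1+\tfrac{\epsilon}{2})(1+\tfrac{\epsilon}{128})\leq\optV_{\tset,g}(s_0)(1+\epsilon)$ for $\epsilon\leq1$, which is exactly the $\AX^+$ guarantee; since a goal leaves $\calL$ only on a success round, every output policy is valid. Termination and the round count are handled together: skip rounds are charged to counter doublings, of which there are at most $\tilo{|\tset|A}$ overall, while each failure round has $\hattau>\hatV(s_0)(1+\epsilon/2)$ and so, by the deviation bound, its $\lambda$ episodes contribute at least $\Omega(\lambda\epsilon\,\optV_{\tset,g}(s_0))=\Omega(\lambda\epsilon)$ to the SSP regret $R=\sum_k(I_k-\hatV_k(s_0))$. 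Since $R$ is provably sublinear, the number of failure rounds is finite, and once the model is accurate the optimistic policy is near-optimal and the success test passes, so every $g$ is consolidated.

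For the sample complexity I would reduce to SSP regret as in the sketch of \pref{thm:sd}. Writing $K=r_{\tot}\lambda\lesssim r_{\tot}/\epsilon^2$ for the number of policy-evaluation episodes and $C_K=\sum_k I_k$ for their total length, I would show $R\lesssim c_1\sqrt K+c_2$ with $c_1=L\sqrt{\aS_{L(1+\epsilon)}A}$ and $c_2=L\aS_{L(1+\epsilon)}^2A$; the absence of $\Gamma_{L(1+\epsilon)}$ is the key point and holds because $\tset$ is \emph{fixed}, so the deviation $(P-\P)V$ need only be concentrated over the finite family $\{\optV_{\tset\setminus\{g\},g}\}_{g\in\tset}$ rather than over value functions on a random set. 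Then $C_K=R+\sum_k\hatV_k(s_0)\lesssim c_1\sqrt K+c_2+LK$, and bounding the round count by $r_{\tot}\lesssim c_1^2/L^2+c_2\epsilon/L\lesssim\aS_{L(1+\epsilon)}A+\aS_{L(1+\epsilon)}^2A\epsilon$ (the failure-round argument above, see \pref{lem:bound r.easy}) yields the first two terms $\tilO{L\aS_{L(1+\epsilon)}A/\epsilon^2+L\aS_{L(1+\epsilon)}^2A/\epsilon}$ after substituting $K\lesssim r_{\tot}/\epsilon^2$. Finally, the fill on \pref{line:nu} reaches each of the $|\calK|$ states with a policy of value $\lesssim L$ (hence in $\lesssim L$ steps, by \citep[Lemma 6]{rosenberg2020adversarial}) and gathers $N_1\lesssim L^2|\calK|$ samples per action, contributing the additive $\tilO{L^3\aS_{L(1+\epsilon)}^2A}$ term.

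I expect the main obstacle to be the correctness argument under the \emph{multiplicative} threshold. Unlike the additive guarantees of \valae, both $\hatV(s_0)$ and $\hattau$ must be controlled \emph{relative} to $\optV_{\tset,g}(s_0)$, which may be as small as $O(1)$; this forces the deviation lemma to be genuinely multiplicative and requires the uniform bound $\|V^{\hatpi}_g\|_\infty\lesssim L$ to hold at \emph{every} round so that the rollouts terminate and their empirical mean concentrates. Tying the multiplicative deviation, the optimism, and the uniform value bound together, while simultaneously charging enough regret to each failure round to keep $r_{\tot}$ finite, is the delicate part of the analysis.
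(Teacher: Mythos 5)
Your correctness argument (optimism via \pref{lem:opt}, the factor-$2$ value bound from the initial fill via \pref{lem:bounded error fresh}, the multiplicative chaining of the deviation bound through the success test) matches the paper's \pref{lem:output PC}, \pref{lem:nu}, and \pref{lem:opt PC}, and your accounting of the initial fill as $\tilO{L^3{\aS_{L(1+\epsilon)}}^2A}$ is also the paper's. The gap is in the sample-complexity reduction. You charge each failure round $\Omega(\lambda\epsilon\,\optV_{\tset,g}(s_0))=\Omega(\lambda\epsilon)$ of regret, but then invoke the round-count bound $r_{\tot}\lesssim c_1^2/L^2+c_2\epsilon/L$ from \pref{lem:bound r.easy}. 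That lemma's proof needs each failure round to contribute $\Omega(\lambda\epsilon L)$, which holds for the \emph{additive} failure test $\hattau>V(s_0)+\epsilon L/2$ of \pref{alg:LOGSSD}, but is false for \pc's \emph{multiplicative} test $\hattau>\hatV(s_0)(1+\epsilon/2)$: the per-round regret there is only $\lambda\epsilon\hatV_r(s_0)/2$, and $\hatV_r(s_0)$ can be $\Theta(1)$ for goals near $s_0$. Running your own argument consistently, comparing $r_f\lambda\epsilon$ against your upper bound $c_1\sqrt{K}+c_2$ with $c_1=L\sqrt{\aS_{L(1+\epsilon)}A}$ and $K\lesssim r_{\tot}/\epsilon^2$, yields only $r_{\tot}\lesssim L^2\aS_{L(1+\epsilon)}A+L{\aS_{L(1+\epsilon)}}^2A\epsilon$, and then $C_K\lesssim L r_{\tot}/\epsilon^2$ gives $L^3\aS_{L(1+\epsilon)}A/\epsilon^2+L^2{\aS_{L(1+\epsilon)}}^2A/\epsilon$ --- a factor $L^2$ (resp.\ $L$) worse than the theorem.

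The paper closes exactly this gap by refusing to count rounds at all, which is the innovation it advertises after \pref{thm:PC}. In \pref{lem:reg PC} the regret is bounded by $\sqrt{L\aS_{L(1+\epsilon)}A\sum_k V_k(s_0)\,\iota}+L{\aS_{L(1+\epsilon)}}^2A\iota$ --- in terms of the sum of optimistic values, not $\sqrt{K}$ --- and in \pref{lem:bound C PC} the failure-round lower bound $\Omega(\lambda\epsilon\,\bV_r(s_0))$ is compared against that upper bound to control the \emph{value-weighted} sum $\sum_{r\in\calR_f}\bV_r(s_0)\lesssim L\aS_{L(1+\epsilon)}A+L{\aS_{L(1+\epsilon)}}^2A\epsilon$, rather than the count $r_f$. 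The total number of steps is then bounded as $C_K\approx\sum_k V_k(s_0)\lesssim\bigl(\sum_{r\in\calR_f}\bV_r(s_0)+L\aS_{L(1+\epsilon)}A\bigr)/\epsilon^2$, which recovers the stated $L\aS_{L(1+\epsilon)}A/\epsilon^2$ leading term. So your skeleton (regret reduction without $\Gamma$, via a union bound over the fixed family $\{\optV_{\tset,g}\}_{g\in\tset}$) is the right one, but to make the multiplicative test pay for itself you must carry $V_k(s_0)$ through both sides of the regret comparison instead of collapsing episodes to worst-case length $L$ and rounds to a count.
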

To achieve this result we developed an improved regret-based analysis. Instead of bounding the total number of rounds as in \valae, we directly bound the total number of steps in all rounds, which takes varying length of trajectories in different rounds into consideration.
This enables \textsc{PolicyConsolidation} to achieve a better guarantee on the performance of the learned policies  compared to \valae, preserving the same sample complexity.

\subsection{$AX^+$ through Layer Discovery and Consolidation}
We combine all these improvement into Layered Autonomous Exploration (\algoax) whose pseudo code is reported in~\pref{alg:ax.plus}. Combining the previous results, we can state the following guarantee for $\AX^+$.
\begin{corollary}
    \label{cor:sd id}
    For any $L \geq 1$, $\epsilon \in (0,1]$ and $\delta \in (0,1)$, with probability at least $1-\delta$, \algoax (\pref{alg:ax.plus}) outputs $\acalS_L\subseteq\calK\subseteq\acalS_{L(1+\epsilon)}$ and $\Pi_{\calK}$ such that $V^{\pi_g}_g(s_0)\leq V_{\calK,g}^\star(s_0)(1+\epsilon)$, for any $\pi_g \in \Pi_{\calK}$, with sample complexity
    \[
        \bigO{\frac{LMA\iota}{\epsilon^2}  +\frac{L\aS_{L(1+\epsilon)}A\iota}{\epsilon}+ L^3{\aS_{L(1+\epsilon)}}^3A\iota}
    \]
    where $\iota=\ln^{12}\rbr{\frac{\aS_{L(1+\epsilon)}AL}{\epsilon\delta}}$ and $M = \Gamma_{L(1+\epsilon)}\aS_{L(1+\epsilon)}$. If~\pref{assum:id} holds, then $M = \aS_{L}$ and $\aS_{L(1+\epsilon)} = \aS_{L}$.
\end{corollary}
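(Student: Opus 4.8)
The plan is to realize \algoax as the sequential composition of the two components whose guarantees are already in hand: first run \algop to discover the set $\calK$ together with goal-reaching policies $\Pi_{\calK}$ satisfying the additive $\AX_L$ guarantee, and then pass $\calK$ and $\Pi_{\calK}$ to \pc to upgrade these policies to the multiplicative $\AX^+$ guarantee. Accordingly, I would split the failure budget as $\delta/2$ for each component and combine the two bounds by a union bound.

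First I would invoke \pref{thm:sd id improved}: with probability at least $1-\delta/2$, \algop returns $\calK$ with $\acalS_L\subseteq\calK\subseteq\acalS_{L(1+\epsilon)}$ and policies $\Pi_{\calK}=\{\pi_g\}_{g\in\calK}$ with $V^{\pi_g}_g(s_0)\le L(1+\epsilon)$, at the stated sample cost. To feed these into \pc I need the stronger precondition $\norm{V^{\pi_g}_g}_\infty\lesssim L$ (a sup-norm bound over all states, not merely the value at $s_0$); this is exactly the property established along the way by the analysis of \algop, namely the bound $\norm{V^{\pi_{\gstar}}_{\gstar}}_\infty\le 4L$ that every consolidated goal enjoys (cf.\ the corresponding argument in the proof of \pref{thm:sd}). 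Thus the hypotheses of \pref{thm:PC} are met with $\tset=\calK$ and $\Pi'=\Pi_{\calK}$.

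Next I would apply \pref{thm:PC} with target space $\calK\subseteq\acalS_{L(1+\epsilon)}$: with probability at least $1-\delta/2$ it returns $\{\tilpi_g\}_{g\in\calK}$ with $V^{\tilpi_g}_g(s_0)\le\optV_{\calK,g}(s_0)(1+\epsilon)$ for all $g\in\calK$, which is exactly the claimed $\AX^+$ condition. Since $\acalS_L\subseteq\calK$ implies $\optV_{\calK,g}(s_0)\le\optV_{\acalS_L,g}(s_0)$, this also certifies $\AX^+$ on $\acalS_L$ in the sense of \pref{def:ax}. A union bound yields overall success probability $1-\delta$.

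It then remains to add the two sample complexities. The $1/\epsilon^2$ contributions are $\frac{LMA}{\epsilon^2}$ from \algop and $\frac{L\aS_{L(1+\epsilon)}A}{\epsilon^2}$ from \pc, and since $M=\Gamma_{L(1+\epsilon)}\aS_{L(1+\epsilon)}\ge\aS_{L(1+\epsilon)}$ the former dominates; the $\epsilon$-independent terms are $L^3{\aS_{L(1+\epsilon)}}^3A$ and $L^3{\aS_{L(1+\epsilon)}}^2A$, the former dominating. The only nonroutine step is the $1/\epsilon$ term: \pc contributes $\frac{L{\aS_{L(1+\epsilon)}}^2A}{\epsilon}$, which exceeds both the $\frac{L\aS_{L(1+\epsilon)}A}{\epsilon}$ term of \algop and the middle term claimed in the statement. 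I would absorb it by AM--GM,
\[
\frac{L{\aS_{L(1+\epsilon)}}^2A}{\epsilon}\le\frac12\Big(\frac{L\aS_{L(1+\epsilon)}A}{\epsilon^2}+L{\aS_{L(1+\epsilon)}}^3A\Big),
\]
so that, using $L\ge1$ and $M\ge\aS_{L(1+\epsilon)}$, it is subsumed by the $1/\epsilon^2$ and $\epsilon$-independent terms already present. Retaining the surviving $\frac{L\aS_{L(1+\epsilon)}A}{\epsilon}$ term and taking the larger logarithmic power $\iota=\ln^{12}(\cdot)$ recovers the stated bound; substituting the identities $\calK=\acalS_L=\acalS_{L(1+\epsilon)}$ and $M=\aS_L$ furnished by \pref{thm:sd id improved} under \pref{assum:id} gives the improved version. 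The main obstacle I anticipate is entirely bookkeeping: verifying that \algop indeed delivers the sup-norm guarantee $\norm{V^{\pi_g}_g}_\infty\lesssim L$ required by \pc (rather than only the value at $s_0$), and reconciling \pc's $\frac{L{\aS_{L(1+\epsilon)}}^2A}{\epsilon}$ term with the statement's $\frac{L\aS_{L(1+\epsilon)}A}{\epsilon}$ term through the absorption above.
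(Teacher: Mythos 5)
Your proposal is correct and takes essentially the same route as the paper, whose (implicit) proof of \pref{cor:sd id} is exactly the composition of \pref{thm:sd id improved} with \pref{thm:PC}, including the AM--GM absorption of \pc's $L{\aS_{L(1+\epsilon)}}^2A/\epsilon$ term into the $1/\epsilon^2$ and $\epsilon$-independent terms. The sup-norm precondition you flag is indeed supplied by the paper's analysis of \algop --- specifically \pref{lem:output}, which gives $\norm{V^{\tilpi_g}_g}_{\infty}\leq 32L$ for every $g\in\calK$ (rather than the $4L$ bound from \algo's analysis that you cite) --- so the hypotheses of \pref{thm:PC} are met and the rest of your bookkeeping goes through.
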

This shows that \algoax is the first algorithm able to i) achieve the strongest performance $\AX^+ \Rightarrow \AX^\star \Rightarrow \AX_L$, ii) match the lower-bound under certain settings, and iii) completely remove the dependence on $S$. In particular, the latter was an open problem since the initial work by~\citet{lim2012autonomous}.\footnote{\ucbexplore originally considered a countable, possibly infinite state space; however this leads to a
technical issue in the analysis~\citep[][Footnote 2]{tarbouriech2020improved}.}

\textbf{Comparisons.} 
\algo/\algop shares similarities with both \ucbexplore and \valae. While we leverage the same condition as in \valae for the failure test of policy evaluation, the policy evaluation in \valae is only for learning goal-conditioned policies and not for consolidating states. In fact, they first run \disco for state discovery, and then learn goal-conditioned policies on a potentially much larger set subsuming $\acalS_{2L}$. However, $\acalS_{2L}$ can be exponentially larger than $\aS_{L(1+\epsilon)}$ (see \pref{lem:example 2L}) in general and thus the sample complexity of \valae is incomparable to other algorithms.
Therefore, \valae only improves the sample complexity of policy learning but not that of state discovery.
Similarly to \ucbexplore, we perform state and policy identification simultaneously. Our evaluation phase is much more sample efficient compared to \ucbexplore, which saves a $L^2/\epsilon$ factor in the leading-order term.
Compared to \disco, our algorithm saves a $L^2$ factor by i) adaptively collecting samples to estimate state values instead of prescribing a fixed number of samples to guarantee a uniformly-accurate transition estimate over $\calK$, 
and ii) leveraging variance information.

The tool enabling all these improvements is a new Bernstein-type concentration inequality for restricted value functions (see \pref{lem:dPV}). The key difficulty in our analysis is that the set on which value functions are restricted is random since we learn $\calK$ and $\Pi_{\calK}$ simultaneously. In comparison, in \valae the set $\calK$ is fixed after the initial phase of state discovery, which makes the analysis much simpler.
Specifically, leveraging the fact that the learned goal-conditioned policies are all restricted on $\acalS_{L(1+\epsilon)}$, we are able to make use of the variance information without incurring a polynomial dependency on $S$.

\begin{algorithm2e}[t]
    \caption{Layered Autonomous Exploration (\algoax)}
    \label{alg:ax.plus}
    \DontPrintSemicolon
    \LinesNumbered
    \small
    \KwIn{$L\geq 1$, $\epsilon\in(0,1]$, and $\delta\in(0, 1)$.}
    $(\calK, \Pi_{\calK}^L) = \algop\big(L, \epsilon, \delta \big)$ see \pref{alg:SD} in appendix (or \algo for $\log S$).\tcp*{$\AX_L$}
    $\Pi^+_{\calK} = \algopc\big(L, \epsilon, \delta, \calK, \Pi_{\calK}^L\big)$.\tcp*{$\AX^+$}
    \Return{$\calK$ and $\Pi^+_{\calK}$.}
\end{algorithm2e}

\section{Conclusion}
We introduced a layered decomposition of the set of incrementally $L$-controllable states. We built on this decomposition and showed that our algorithm \algoax attains the strongest performance guarantee $\AX^+$, does not need to know $S$ and thus can be used with a countably-infinite state space, and is minimax-optimal when the layers can be uniquely identified.
The natural future directions include 1) designing an algorithm with minimax sample complexity without~\pref{assum:id}; 2) extending the problem to continuous states and function approximation; 3) identifying benchmarks that can be used to evaluate practical progresses towards the $\AX$ capability.

\clearpage
\bibliography{ref.bib}

\begin{thebibliography}{32}
\providecommand{\natexlab}[1]{#1}
\providecommand{\url}[1]{\texttt{#1}}
\expandafter\ifx\csname urlstyle\endcsname\relax
  \providecommand{\doi}[1]{doi: #1}\else
  \providecommand{\doi}{doi: \begingroup \urlstyle{rm}\Url}\fi

\bibitem[Bagaria et~al.(2021)Bagaria, Senthil, and Konidaris]{bagaria2021skill}
Bagaria, A., Senthil, J.~K., and Konidaris, G.
\newblock Skill discovery for exploration and planning using deep skill graphs.
\newblock In \emph{International Conference on Machine Learning}, pp.\
  521--531. PMLR, 2021.

\bibitem[Bellemare et~al.(2016)Bellemare, Srinivasan, Ostrovski, Schaul,
  Saxton, and Munos]{bellemare2016unifying}
Bellemare, M., Srinivasan, S., Ostrovski, G., Schaul, T., Saxton, D., and
  Munos, R.
\newblock Unifying count-based exploration and intrinsic motivation.
\newblock \emph{Advances in neural information processing systems}, 29, 2016.

\bibitem[Bertsekas \& Yu(2013)Bertsekas and Yu]{bertsekas2013stochastic}
Bertsekas, D.~P. and Yu, H.
\newblock Stochastic shortest path problems under weak conditions.
\newblock \emph{Lab. for Information and Decision Systems Report LIDS-P-2909,
  MIT}, 2013.

\bibitem[Cai et~al.(2022)Cai, Ma, and Du]{cai2022near}
Cai, H., Ma, T., and Du, S.~S.
\newblock Near-optimal algorithms for autonomous exploration and multi-goal
  stochastic shortest path.
\newblock In \emph{{ICML}}, volume 162 of \emph{Proceedings of Machine Learning
  Research}, pp.\  2434--2456. {PMLR}, 2022.

\bibitem[Chen \& Luo(2021)Chen and Luo]{chen2021finding}
Chen, L. and Luo, H.
\newblock Finding the stochastic shortest path with low regret: The adversarial
  cost and unknown transition case.
\newblock In \emph{International Conference on Machine Learning}, 2021.

\bibitem[Chen \& Luo(2022)Chen and Luo]{chen2022near}
Chen, L. and Luo, H.
\newblock Near-optimal goal-oriented reinforcement learning in non-stationary
  environments.
\newblock \emph{arXiv preprint arXiv:2205.13044}, 2022.

\bibitem[Chen et~al.(2021)Chen, Jafarnia-Jahromi, Jain, and
  Luo]{chen2021implicit}
Chen, L., Jafarnia-Jahromi, M., Jain, R., and Luo, H.
\newblock Implicit finite-horizon approximation and efficient optimal
  algorithms for stochastic shortest path.
\newblock \emph{Advances in Neural Information Processing Systems}, 2021.

\bibitem[Chen et~al.(2022{\natexlab{a}})Chen, Jain, and Luo]{chen2021improved}
Chen, L., Jain, R., and Luo, H.
\newblock Improved no-regret algorithms for stochastic shortest path with
  linear {MDP}.
\newblock In \emph{{ICML}}, volume 162 of \emph{Proceedings of Machine Learning
  Research}, pp.\  3204--3245. {PMLR}, 2022{\natexlab{a}}.

\bibitem[Chen et~al.(2022{\natexlab{b}})Chen, Luo, and
  Rosenberg]{chen2022policy}
Chen, L., Luo, H., and Rosenberg, A.
\newblock Policy optimization for stochastic shortest path.
\newblock In \emph{{COLT}}, volume 178 of \emph{Proceedings of Machine Learning
  Research}, pp.\  982--1046. {PMLR}, 2022{\natexlab{b}}.

\bibitem[Chen et~al.(2023)Chen, Tirinzoni, Pirotta, and
  Lazaric]{chen2022reaching}
Chen, L., Tirinzoni, A., Pirotta, M., and Lazaric, A.
\newblock Reaching goals is hard: Settling the sample complexity of the
  stochastic shortest path.
\newblock In \emph{International Conference on Algorithmic Learning Theory},
  2023.

\bibitem[Cohen et~al.(2020)Cohen, Kaplan, Mansour, and
  Rosenberg]{cohen2020near}
Cohen, A., Kaplan, H., Mansour, Y., and Rosenberg, A.
\newblock Near-optimal regret bounds for stochastic shortest path.
\newblock In \emph{Proceedings of the 37th International Conference on Machine
  Learning}, volume 119, pp.\  8210--8219. PMLR, 2020.

\bibitem[Colas et~al.(2020)Colas, Karch, Sigaud, and
  Oudeyer]{colas2020intrinsically}
Colas, C., Karch, T., Sigaud, O., and Oudeyer, P.
\newblock Intrinsically motivated goal-conditioned reinforcement learning: a
  short survey.
\newblock \emph{CoRR}, abs/2012.09830, 2020.

\bibitem[Eysenbach et~al.(2019)Eysenbach, Gupta, Ibarz, and
  Levine]{eysenbach2018diversity}
Eysenbach, B., Gupta, A., Ibarz, J., and Levine, S.
\newblock Diversity is all you need: Learning skills without a reward function.
\newblock In \emph{The International Conference on Learning Representations},
  2019.

\bibitem[Gregor et~al.(2016)Gregor, Rezende, and
  Wierstra]{gregor2016variational}
Gregor, K., Rezende, D.~J., and Wierstra, D.
\newblock Variational intrinsic control.
\newblock \emph{arXiv preprint arXiv:1611.07507}, 2016.

\bibitem[Hazan et~al.(2019)Hazan, Kakade, Singh, and
  Van~Soest]{hazan2019provably}
Hazan, E., Kakade, S., Singh, K., and Van~Soest, A.
\newblock Provably efficient maximum entropy exploration.
\newblock In \emph{International Conference on Machine Learning}, pp.\
  2681--2691, 2019.

\bibitem[Jin et~al.(2020)Jin, Krishnamurthy, Simchowitz, and Yu]{jin2020reward}
Jin, C., Krishnamurthy, A., Simchowitz, M., and Yu, T.
\newblock Reward-free exploration for reinforcement learning.
\newblock In \emph{International Conference on Machine Learning}, pp.\
  4870--4879. PMLR, 2020.

\bibitem[Kamienny et~al.(2022)Kamienny, Tarbouriech, Lamprier, Lazaric, and
  Denoyer]{kamienny2021direct}
Kamienny, P., Tarbouriech, J., Lamprier, S., Lazaric, A., and Denoyer, L.
\newblock Direct then diffuse: Incremental unsupervised skill discovery for
  state covering and goal reaching.
\newblock In \emph{{ICLR}}. OpenReview.net, 2022.

\bibitem[Kaufmann et~al.(2021)Kaufmann, M{\'e}nard, Domingues, Jonsson,
  Leurent, and Valko]{kaufmann2021adaptive}
Kaufmann, E., M{\'e}nard, P., Domingues, O.~D., Jonsson, A., Leurent, E., and
  Valko, M.
\newblock Adaptive reward-free exploration.
\newblock In \emph{Algorithmic Learning Theory}, pp.\  865--891. PMLR, 2021.

\bibitem[Lim \& Auer(2012)Lim and Auer]{lim2012autonomous}
Lim, S.~H. and Auer, P.
\newblock Autonomous exploration for navigating in {MDP}s.
\newblock In \emph{Conference on Learning Theory}, pp.\  40--1. JMLR Workshop
  and Conference Proceedings, 2012.

\bibitem[M{\'e}nard et~al.(2021)M{\'e}nard, Domingues, Jonsson, Kaufmann,
  Leurent, and Valko]{menard2021fast}
M{\'e}nard, P., Domingues, O.~D., Jonsson, A., Kaufmann, E., Leurent, E., and
  Valko, M.
\newblock Fast active learning for pure exploration in reinforcement learning.
\newblock In \emph{International Conference on Machine Learning}, pp.\
  7599--7608. PMLR, 2021.

\bibitem[Oudeyer et~al.(2009)Oudeyer, Baranes, and
  Kaplan]{oudeyer2009intrinsically}
Oudeyer, P.-Y., Baranes, A., and Kaplan, F.
\newblock \emph{Intrinsically Motivated Exploration for Developmental and
  Active Sensorimotor Learning}, volume 264, pp.\  107--146.
\newblock 12 2009.
\newblock ISBN 978-3-642-05180-7.
\newblock \doi{10.1007/978-3-642-05181-4_6}.

\bibitem[Pong et~al.(2020)Pong, Dalal, Lin, Nair, Bahl, and
  Levine]{pong2019skew}
Pong, V., Dalal, M., Lin, S., Nair, A., Bahl, S., and Levine, S.
\newblock Skew-fit: State-covering self-supervised reinforcement learning.
\newblock In \emph{{ICML}}, volume 119 of \emph{Proceedings of Machine Learning
  Research}, pp.\  7783--7792. {PMLR}, 2020.

\bibitem[Rosenberg \& Mansour(2021)Rosenberg and
  Mansour]{rosenberg2020adversarial}
Rosenberg, A. and Mansour, Y.
\newblock Stochastic shortest path with adversarially changing costs.
\newblock In \emph{{IJCAI}}, pp.\  2936--2942. ijcai.org, 2021.

\bibitem[Schmidhuber(1991)]{schmidhuber1991possibility}
Schmidhuber, J.
\newblock A possibility for implementing curiosity and boredom in
  model-building neural controllers.
\newblock In Meyer, J.~A. and Wilson, S.~W. (eds.), \emph{Proc. of the
  International Conference on Simulation of Adaptive Behavior: From Animals to
  Animats}, pp.\  222--227. MIT Press/Bradford Books, 1991.

\bibitem[Singh et~al.(2004)Singh, Barto, and Chentanez]{singh2004intrinsically}
Singh, S., Barto, A.~G., and Chentanez, N.
\newblock Intrinsically motivated reinforcement learning.
\newblock In \emph{{NIPS}}, pp.\  1281--1288, 2004.

\bibitem[Tarbouriech et~al.(2020{\natexlab{a}})Tarbouriech, Garcelon, Valko,
  Pirotta, and Lazaric]{tarbouriech2020no}
Tarbouriech, J., Garcelon, E., Valko, M., Pirotta, M., and Lazaric, A.
\newblock No-regret exploration in goal-oriented reinforcement learning.
\newblock In \emph{International Conference on Machine Learning}, pp.\
  9428--9437. PMLR, 2020{\natexlab{a}}.

\bibitem[Tarbouriech et~al.(2020{\natexlab{b}})Tarbouriech, Pirotta, Valko, and
  Lazaric]{tarbouriech2020improved}
Tarbouriech, J., Pirotta, M., Valko, M., and Lazaric, A.
\newblock Improved sample complexity for incremental autonomous exploration in
  {MDP}s.
\newblock In \emph{Advances in Neural Information Processing Systems},
  volume~33, pp.\  11273--11284. Curran Associates, Inc., 2020{\natexlab{b}}.

\bibitem[Tarbouriech et~al.(2020{\natexlab{c}})Tarbouriech, Shekhar, Pirotta,
  Ghavamzadeh, and Lazaric]{tarbouriech2020active}
Tarbouriech, J., Shekhar, S., Pirotta, M., Ghavamzadeh, M., and Lazaric, A.
\newblock Active model estimation in markov decision processes.
\newblock In \emph{Conference on Uncertainty in Artificial Intelligence}, pp.\
  1019--1028. PMLR, 2020{\natexlab{c}}.

\bibitem[Tarbouriech et~al.(2021{\natexlab{a}})Tarbouriech, Pirotta, Valko, and
  Lazaric]{tarbouriech2021provably}
Tarbouriech, J., Pirotta, M., Valko, M., and Lazaric, A.
\newblock A provably efficient sample collection strategy for reinforcement
  learning.
\newblock In \emph{NeurIPS}, pp.\  7611--7624, 2021{\natexlab{a}}.

\bibitem[Tarbouriech et~al.(2021{\natexlab{b}})Tarbouriech, Zhou, Du, Pirotta,
  Valko, and Lazaric]{tarbouriech2021stochastic}
Tarbouriech, J., Zhou, R., Du, S.~S., Pirotta, M., Valko, M., and Lazaric, A.
\newblock Stochastic shortest path: Minimax, parameter-free and towards
  horizon-free regret.
\newblock In \emph{NeurIPS}, pp.\  6843--6855, 2021{\natexlab{b}}.

\bibitem[Tarbouriech et~al.(2022)Tarbouriech, Domingues, M{\'e}nard, Pirotta,
  Valko, and Lazaric]{tarbouriech2022adaptive}
Tarbouriech, J., Domingues, O.~D., M{\'e}nard, P., Pirotta, M., Valko, M., and
  Lazaric, A.
\newblock Adaptive multi-goal exploration.
\newblock In \emph{International Conference on Artificial Intelligence and
  Statistics}, pp.\  7349--7383. PMLR, 2022.

\bibitem[Zhang et~al.(2021)Zhang, Du, and Ji]{zhang2021near}
Zhang, Z., Du, S., and Ji, X.
\newblock Near optimal reward-free reinforcement learning.
\newblock In \emph{International Conference on Machine Learning}, pp.\
  12402--12412. PMLR, 2021.

\end{thebibliography}
\bibliographystyle{icml2023}

\newpage
\appendix
\onecolumn
\newpage

\section*{Contents}
\input{main.toc}


\section{Notation}\label{app:notation}

Let $(x)_+=\max\{0, x\}$ and $\Ind_s(s')=\Ind\{s'=s\}$.
We say that a value function $V$ is \textbf{restricted} on a subset $\calX \subseteq \calS$, if there exists $v>0$ such that $V(s)=v$ for any $s\notin\calX$.
When value function $V$ takes the same value within a subset of states $y$, we define $V(y)=V(s)$ for any $s\in y$.
For any subset $y\subseteq\calS$ and distribution $P\in\Delta_{\calS}$, define $P(y)=\sum_{s'\in y}P(s')$.

\paragraph{Trial} In \pref{alg:SD}, a trial is indexed by $\tau$, and each trial corresponds to a value of $z$ estimating $\aS_{L(1+\epsilon)}$ (\pref{line:trial}).
In \pref{alg:LOGSSD} and \pref{alg:PC}, we assume the whole learning procedure lies in an artificial trial.

\begin{table*}[h]
    \centering
    \caption{The notation adopted in this paper.}
    \label{tab:notation}
    \resizebox{\textwidth}{!}{
    \begin{tabular}{ll} 
    \toprule
    Symbol & Meaning \\
    \toprule
    $\calS$ & State Space\\
    $\calA$ & Action Space (including the \reset action)\\
    $P$ & Transition function\\
    $\pi : \calS \to \calA$ & A policy\\
    $\Pi(\calX)$ & Policies restricted to $\calX$, \reset{} is taken outside $\calX$\\
    $L$ & Exploration radius\\
    $\acalS_{L}$ & Incrementally $L$-controllable states\\
    $\calN^{s, a}_L=\{s'\in\acalS_L: P_{s, a}(s')>0\}$ & States in $\acalS_L$ reachable from $(s,a)$\\
    $\Gamma^{s, a}_L=|\calN^{s, a}_L|, \Gamma_L=\max_{s\in\acalS_L, a}\Gamma^{s, a}_L$ & Cardinality of $\calN^{s, a}_L$ and maximum value\\
    $\calT_L(\calX)=\{g \in \calS: \optV_{\calX,g}(s_0)\leq L\}$ & Set of $L$ controllable states restricted on $\calX\subseteq\calS$\\
    $\{\calKstar_j\}_j : \calKstar_1 = \{s_0\}, \calKstar_j =\calT_L(\calKstar_{j-1})$ & Layers defining $\acalS_L$\\
    $\acalO_L=(s_1,\ldots,s_n)$ & Ordering of states in $\acalS_L$ defining the layer $\{\calKstar_j\}$\\
    $\calKstar_{z,j}$ & $\calKstar_{z,j}=\calKstar_j$ when $|\calKstar_j|< z$, and $\calKstar_{z,j}=\{s_1,\ldots,s_z\}$ when $|\calKstar_j|\geq z$\\
    $\calKstar_{z,z}=(s_1,\ldots,s_{z})$ & The first $z$ elements of $\acalO_L$ or $\acalS_L$\\
    $\calUstar_z=\rS{\calKstar_{z,z}}{2L}$ & States reachable in $2L$ steps from $\calKstar_{z,z}$\\
    $\calN(\calX, p)=\{s'\notin\calX: P(s'|s,a)\geq p\text{ for some }(s, a)\in\calX\times\calA \}$ & States not in $\calX$ reachable with high probability from $\calX$\\
    $\bcalU=\{s'\in\calS: \exists s\in \acalS_{L(1+\epsilon)}, a\in\calA, P(s'|s, a)\geq \frac{1}{2L}\}$ & States that are reachable from $\acalS_{L(1+\epsilon)}$ with high probability\\
    \toprule
    \multicolumn{2}{c}{Learning Algorithm} \\
    \cmidrule{1-2}
    $r \in \mathbb{N}_+$ & Round\\
    $\tau \in \mathbb{N}_+$ & Trial\\
    $z$ & An estimate of $|\aS_{L(1+\epsilon)}|$.
    The value of $z$ is updated at the beginning of each trial.\\
    $\epsilon$ & accuracy\\
    $\calK$ & Set of ``known'' states, such that $\calKstar_j \subseteq \calK$ for some $j$\\
    $\calU$ & Set of ``unknown'' states\\
    $\calK'$ & Increment to $\calK$ leading to include layer $j+1$\\
    $\N(s,a,s')$ & Number of visits to $(s,a,s')$\\
    $\lambda$ & Number of episodes for policy evaluation\\
    $\hattau$ & Average number of steps to reach the goal by policy $\pi_{\gstar}$\\
    \bottomrule
    \end{tabular}
    }
\end{table*}
\newpage
\section{Analysis of VISGO}\label{app:visgo}

\begin{algorithm2e}[t]
    \DontPrintSemicolon
    \caption{\VISGO}
    \label{alg:VISGO}
    \KwIn{state subset $\calX$, goal state $g\notin\calX$, precision $\epsilon_{\VI}$, counter $n$, and failure probability $\delta$.}

    \textbf{Require:} $\norm{\optV_{\calX,g}}_{\infty}\leq 8L$.

    Let $c_1=3$, $c_2=512$, and $\iota_{s,a}=\ln\left(\frac{2|\calX|An(s,a)}{\delta}\right)$ for all $(s, a)$.
    
    Let $\P_{s,a}(s')=\frac{n(s,a,s')}{n^+(s,a)}$ and $\tilP_{s,a}(s')=\frac{n(s,a)}{n(s,a)+1}\P_{s,a}(s') + \frac{\Ind\{s'=g\}}{n(s,a)+1}$ for all $(s,a,s')$.
    
    \textbf{Initialize:} $V^{(0)}(\cdot)\leftarrow 0$, $i\leftarrow 0$.
    
    \While{$i=0$ or $\norm{V^{(i)}-V^{(i-1)}}_{\infty}>\epsilon_{\VI}$}{
        \nl \lIf{$\norm{V^{(i)}}_{\infty} > 2L$}{\textbf{return} $(\infty, \infty, \pi)$ with $\pi$ being a random policy.}\label{line:bound V}
    
        $i\leftarrow i + 1$.
    
        \For{$s\in\calX$}{
            $b^{(i)}(s,a)\leftarrow \max\cbr{c_1\sqrt{\frac{\fV(\P_{s, a}, V^{(i-1)})\iota_{s,a}}{n^+(s, a)}}, \frac{c_2L\iota_{s,a}}{n^+(s, a)} }$.
        
            $Q^{(i)}(s, a) \leftarrow \max\cbr{0, 1 + \tilP_{s, a}V^{(i-1)} - b^{(i)}(s,a) }$ for $a\in\calA$.

            $V^{(i)}(s) \leftarrow \min_aQ^{(i)}(s, a)$
        }

        $V^{(i)}(s)\leftarrow (1 + V^{(i-1)}(s_0))\Ind\{s\neq g\}$ for $s\notin\calX$.
    }

    \Return{$(Q^{(i)}, V^{(i)}, \pi)$ with $\pi(s)=\argmin_aQ^{(i)}(s, a)$ for $s\in\calX$ and $\pi_g(s)=\reset$ for $s\notin\calX$.}
    
\end{algorithm2e}

The convergence of \VISGO has been proved in \citep[Lemma C.4]{cai2022near}.
We further introduce some properties of the algorithm.

\begin{lemma}[Optimism]
    \label{lem:opt}

    Let $\calX\subseteq\calS$, $g\in\calS\setminus\calX$, $n$ be a counter incrementally collecting samples from transition function $P$, and $\delta\in(0,1)$ be such that $\|V_{\calX,g}^\star\|_{\infty} \leq 8L$. For any precision $\xi > 0$, define $(Q_{\xi}, V_{\xi},\_) = \VISGO(\calX,g,\xi,n,\delta)$ as the output of \pref{alg:VISGO}. Let $\mathbb{P}$ be the probability operator on the process generating the counter $n$ and assume that $\calX$ and $g$ are independent of $n$. Then,
    \begin{align*}
        \mathbb{P}\Big( \forall \xi > 0, s\in\calS,a\in\calA : Q_{\xi}(s, a)\leq\optQ_{\calX,g}(s, a), V_{\xi}(s)\leq\optV_{\calX,g}(s) \Big) \geq 1 - \delta.
    \end{align*}
\end{lemma}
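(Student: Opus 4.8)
The plan is to prove optimism by induction on the value iteration index $i$, showing that each iterate $V^{(i)}$ is pointwise dominated by $\optV_{\calX,g}$ (and similarly for $Q^{(i)}$), on a high-probability event where the exploration bonus $b^{(i)}(s,a)$ correctly compensates for the deviation between the empirical transition $\P_{s,a}$ and the true transition $P_{s,a}$. Since \VISGO uses the perturbed kernel $\tilP_{s,a}$ (which mixes in a small mass toward the goal $g$), I would first observe that the Bellman operator being applied is an optimistic version of the true SSP Bellman operator for the restricted problem $(\calX,g)$, so it suffices to control the one-step error $(\tilP_{s,a}-P_{s,a})\optV_{\calX,g}$ and absorb it into $b^{(i)}$.

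The core of the argument is a concentration step. First I would invoke a Bernstein-type bound (or an empirical-Bernstein bound, matching the form $c_1\sqrt{\fV(\P_{s,a},V^{(i-1)})\iota_{s,a}/n^+(s,a)}$) to show that, uniformly over all $(s,a)$ and all possible values of the iterate, $|(\P_{s,a}-P_{s,a})\optV_{\calX,g}(s)|$ is bounded by the chosen bonus $b^{(i)}(s,a)$. The subtle point is that the bonus uses the \emph{empirical} variance of the \emph{current} iterate $V^{(i-1)}$, not of $\optV_{\calX,g}$; the standard fix is to relate the empirical variance $\fV(\P_{s,a},V^{(i-1)})$ to the population variance of $\optV_{\calX,g}$ using the inductive hypothesis $V^{(i-1)}\leq\optV_{\calX,g}$ together with $\|\optV_{\calX,g}\|_\infty\leq 8L$, paying only lower-order terms that are soaked up by the $c_2 L\iota_{s,a}/n^+(s,a)$ branch of the $\max$. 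Crucially, the independence of $\calX$ and $g$ from the counter $n$ lets me take a union bound over $(s,a)$ (and over the dyadic scales of $n(s,a)$ implicit in $\iota_{s,a}$) to obtain the single event of probability at least $1-\delta$, on which the bound holds simultaneously for all precisions $\xi$.

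Granting this concentration event, the induction itself is routine: the base case $V^{(0)}=0\leq\optV_{\calX,g}$ holds trivially, and for the inductive step I would write
\begin{align*}
Q^{(i)}(s,a) &= \max\{0,\, 1+\tilP_{s,a}V^{(i-1)} - b^{(i)}(s,a)\} \\
&\leq \max\{0,\, 1+\tilP_{s,a}\optV_{\calX,g} - b^{(i)}(s,a)\},
\end{align*}
then bound $\tilP_{s,a}\optV_{\calX,g}\leq P_{s,a}\optV_{\calX,g} + b^{(i)}(s,a)$ using the concentration bound plus the controlled contribution of the goal-perturbation term (which only pushes mass toward $g$ where $\optV_{\calX,g}=0$, hence does not increase the value), yielding $Q^{(i)}(s,a)\leq \max\{0,1+P_{s,a}\optV_{\calX,g}\} = \optQ_{\calX,g}(s,a)$ by the true Bellman optimality equation. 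Taking the minimum over $a$ propagates the bound to $V^{(i)}$, and the values on $\calS\setminus\calX$ are handled directly since $\optV_{\calX,g}=\infty$ there for states from which $g$ is unreachable under policies restricted on $\calX$ (so the assigned finite value trivially satisfies the inequality).

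The main obstacle I anticipate is making the variance-transfer step fully rigorous while keeping the union bound uniform over the \emph{a priori unbounded} number of iterations and over all precisions $\xi$. The clean way to handle the "for all $\xi$" quantifier is to note that the sequence of iterates $\{V^{(i)}\}_i$ produced by \VISGO does not depend on $\xi$ (only the stopping index does), so I really only need the concentration event to hold for the single fixed iterate sequence, and the induction then delivers optimism for every $\xi$ at once. I would also need to double-check the early-return clause in \pref{line:bound V}: when the algorithm returns $(\infty,\infty,\pi)$ because $\|V^{(i)}\|_\infty>2L$, the claimed inequalities $Q_\xi\leq\optQ_{\calX,g}$ and $V_\xi\leq\optV_{\calX,g}$ would be violated, so I must verify that on the concentration event this early return never fires—which follows because the inductive bound already guarantees $\|V^{(i)}\|_\infty\leq\|\optV_{\calX,g}\|_\infty\leq 8L$, contradicting the trigger condition only if the constant is set consistently, a point I would confirm against the convergence guarantee of \citet{cai2022near}.
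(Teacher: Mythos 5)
There is a genuine gap at the heart of your induction step. You keep the bonus $b^{(i)}(s,a)$ fixed (evaluated at $V^{(i-1)}$) and then try to prove $\tilP_{s,a}\optV_{\calX,g} \leq P_{s,a}\optV_{\calX,g} + b^{(i)}(s,a)$, claiming the mismatch between $\fV(\P_{s,a},V^{(i-1)})$ and the variance of $\optV_{\calX,g}$ is lower-order and absorbed by the $c_2L\iota_{s,a}/n^+(s,a)$ branch of the $\max$. That claim is false, and the failure is sharpest exactly where your induction starts: at $i=1$ we have $V^{(0)}=0$, so $\fV(\P_{s,a},V^{(0)})=0$ and $b^{(1)}(s,a)=c_2L\iota_{s,a}/n^+(s,a)$, while the deviation $(\P_{s,a}-P_{s,a})\optV_{\calX,g}$ has standard deviation of order $\sqrt{\fV(P_{s,a},\optV_{\calX,g})/n(s,a)}$, which is $\Theta(L/\sqrt{n(s,a)})$ whenever $\optV_{\calX,g}$ has constant-order variance under $P_{s,a}$. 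For $n(s,a)$ large this exceeds $c_2L\iota_{s,a}/n^+(s,a)$ with constant probability, so the inequality you need cannot hold on a $1-\delta$ event that is uniform over all statuses of the counter. Pointwise domination $V^{(i-1)}\leq\optV_{\calX,g}$ gives no control of $\fV(\cdot,\optV_{\calX,g})$ by $\fV(\cdot,V^{(i-1)})$ --- variance is not monotone under pointwise domination --- and the discrepancy is not lower-order precisely at early iterations.

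The paper's proof avoids this with a structural fact your proposal never invokes: by \pref{lem:mvp} (Lemma C.2 of \citet{cai2022near}), the map $v\mapsto \tilP_{s,a}v - \max\bigl\{c_1\sqrt{\fV(\P_{s,a},v)\iota_{s,a}/n^+(s,a)},\, c_2L\iota_{s,a}/n^+(s,a)\bigr\}$ is non-decreasing in $v$; this monotonicity is exactly why the bonus has the $\max$ form with $c_2\geq 2c_1^2$. It lets one replace $V^{(i-1)}$ by $\optV_{\calX,g}$ \emph{jointly in the mean term and in the bonus}, so the bonus on the right-hand side is re-evaluated at $\optV_{\calX,g}$ itself; then the anytime empirical-Bernstein event (stated with variance $\fV(\P_{s,a},\optV_{\calX,g})$) absorbs the deviation after splitting the $\max$ via $\max\{a,b\}\geq(a+b)/2$. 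Two secondary points: your treatment of $s\notin\calX$ is also wrong, since $\optV_{\calX,g}(s)$ is not $\infty$ there --- a policy restricted on $\calX$ plays $\reset$ at $s$, so $\optV_{\calX,g}(s)=1+\optV_{\calX,g}(s_0)$, finite under $\|\optV_{\calX,g}\|_\infty\leq 8L$, and the bound $V^{(i)}(s)=1+V^{(i-1)}(s_0)\leq 1+\optV_{\calX,g}(s_0)=\optV_{\calX,g}(s)$ requires the inductive hypothesis at $s_0$, not unreachability. On the positive side, your observation that the iterate sequence does not depend on $\xi$, and your flag that the early-return threshold $2L$ in \pref{line:bound V} sits uneasily with the $8L$ assumption, are both legitimate; the paper itself glosses over the latter.
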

\begin{proof}
    First, by \pref{lem:anytime bernstein} and a union bound over $(s,a)\in\calX\times\calA$, we have with probability at least $1-\delta$, for any $(s,a)\in\calX\times\calA$,
    \begin{align}
        \abr{(\P_{s,a}-P_{s,a})\optV_{\calX,g}} &\leq 2\sqrt{\frac{2\fV(\P_{s,a},\optV_{\calK,g})\ln\frac{2|\calX|An(s,a)}{\delta}}{n^+(s,a)}} + \frac{19\cdot 8L\ln\frac{2|\calX|An(s,a)}{\delta}}{n^+(s, a)}\notag\\
        &\leq \frac{c_1}{2}\sqrt{\frac{\fV(\P_{s, a}, \optV_{\calX,g})\iota_{s,a}}{n^+(s, a)}} + \frac{c_2L\iota_{s,a}}{2n^+(s, a)},\label{eq:optV}
    \end{align}
    with $\iota_{s,a}$, $c_1$, and $c_2$ are defined in \pref{alg:VISGO}. We then carry out the proof assuming that such event holds.

    Fix a configuration $(\calX,g,\xi,n,\delta)$ of the inputs of VISGO and let $(Q^{(i)}, V^{(i)})_{i\geq 0}$ be the iterates of the algorithm. It suffices to show that for any $i\geq 0$, $Q^{(i)}(s, a) \leq \optQ_{\calX,g}(s,a)$  for all $(s,a)\in\calX\times\calA$ and $V^{(i)}(s)\leq\optV_{\calX,g}(s)$ for all $s\in\calS$. We prove it by induction.

    Note that $Q^{(0)}(\cdot) = V^{(0)}(\cdot) = 0$, thus the statement clearly holds for the base case $i=0$. Suppose it holds at some iteration $i-1 \geq 0$. Under event of \pref{eq:optV}, for any $i>0$ and $(s,a)\in\calX\times\calA$,
    \begin{align*}
        &1 + \tilP_{s, a}V^{(i-1)} - \max\cbr{c_1\sqrt{\frac{\fV(\P_{s, a}, V^{(i-1)})\iota_{s,a}}{n^+(s, a)}}, \frac{c_2L\iota_{s,a}}{n^+(s, a)} }\\
        &\leq 1 + \tilP_{s, a}\optV_{\calX,g} - \max\cbr{c_1\sqrt{\frac{\fV(\P_{s, a}, \optV_{\calX,g})\iota_{s,a}}{n^+(s, a)}}, \frac{c_2L\iota_{s,a}}{n^+(s, a)} } \tag{induction step and \pref{lem:mvp}}\\
        &\leq 1 + \P_{s,a}\optV_{\calX,g}- \max\cbr{c_1\sqrt{\frac{\fV(\P_{s, a}, \optV_{\calX,g})\iota_{s,a}}{n^+(s, a)}}, \frac{c_2L\iota_{s,a}}{n^+(s, a)} } \tag{definition of $\tilP_{s,a}$}\\
        &\leq 1 + P_{s,a}\optV_{\calX,g} + (\P_{s,a}-P_{s,a})\optV_{\calX,g} - \frac{c_1}{2}\sqrt{\frac{\fV(\P_{s, a}, \optV_{\calX,g})\iota_{s,a}}{n^+(s, a)}} - \frac{c_2L\iota_{s,a}}{2n^+(s, a)} \tag{$\max\{a,b\}\geq\frac{a+b}{2}$}\\
        &\leq \optQ_{\calX,g}(s, a). \tag{\pref{eq:optV}}
    \end{align*}
    This also proves that $V^{(i)}(s)\leq \optV_{\calX,g}(s)$ for all $s\in\calX$. Moreover, for $s\notin\calX, s\neq g$, $V^{(i)}(s) = 1 + V^{(i-1)}(s_0) \leq 1 + \optV_{\calX,g}(s_0) = \optV_{\calX,g}(s)$. Finally, $ V^{(i)}(g) = \optV_{\calX,g}(g) = 0$. This proves that $V^{(i)}(s)\leq \optV_{\calK,g}(s)$ for all $s\in\calS$, thus concluding the proof.
\end{proof}

\begin{lemma}[Bounded Error]
    \label{lem:bounded error}
    There exists a function $N_0(z_0,z'_0,\delta_0,\delta)\lesssim L^2z_0\ln\frac{z'_0}{\delta_0\delta}$ such that, for goal set $\calG$ with $\acalS_{L(1+\epsilon)}\subseteq\calG\subseteq\calS$ and $\delta_0\in(0,1)$, with probability at least $1-\delta$ over the randomness of a counter $n$ incrementally collecting samples from transition function $P$, for any $\calX\subseteq\acalS_{L(1+\epsilon)}$ with $|\calX|\leq z_0$, $g\in\calG\setminus\calX$, precision $\xi\in(0, \frac{1}{8})$, and $\delta'\in[\delta_0,1)$, if $z'_0\geq|\calG|$ and $n(s, a)\geq N_0(z_0,z'_0,\delta_0,\delta)$ for all $(s, a)\in\calX\times\calA$, then $V^{\pi_g}_g(s)\leq 2V(s)$ for all $s\in\calS$, where $(\_, V,\pi_g)=\VISGO(\calX,g,\xi,n,\delta')$ is the output of \pref{alg:VISGO}.
    Also define $N_0(z_0,\delta)=N_0(z_0,S,\delta,\delta)$ and $N^{\rightarrow}_0(\delta)=N_0(\aS_{L(1+\epsilon)},|\bcalU|,\delta,\delta)$ (recall that $|\bcalU|\leq 2LA\aS_{L(1+\epsilon)}$).
\end{lemma}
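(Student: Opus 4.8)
The plan is to prove the (upper) bound $V^{\pi_g}_g \le 2V$ by showing that the rescaled output $W \defeq 2V$ is a \emph{super-solution} of the Bellman equation of the greedy policy $\pi_g$ under the true dynamics $P$, and then invoking the standard SSP drift argument; note that, unlike \pref{lem:opt}, this direction does not use optimism. If \VISGO fires the guard in \pref{line:bound V} and returns $V=\infty$, the claim is vacuous, so I focus on a finite return, in which case every iterate — and hence the output — satisfies $\|V\|_\infty\le 2L$. In that regime I first observe that $s_0\in\calX$: were it not, the update $V^{(i)}(s_0)=1+V^{(i-1)}(s_0)$ would drive $V(s_0)$ above $2L$ and trigger the guard, contradicting finiteness.

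I would then define $N_0 \asymp L^2 z_0\ln\tfrac{z_0'}{\delta_0\delta}$ large enough that, on the concentration event described below, at every $(s,a)\in\calX\times\calA$ each of the three quantities $b(s,a)$, $|(\P_{s,a}-P_{s,a})V|$, and $\tfrac{2L}{n(s,a)+1}$ is at most $\tfrac16$; the bonus is controlled using $\fV(\P_{s,a},V)\le L^2$ (valid since $V\in[0,2L]$), which makes both branches of $b(s,a)$ of order $L\sqrt{\iota/n}$ and $L\iota/n$. Since $b(s,a)<1$, the maximum in \VISGO is always active, so $V(s)>0$ and, for $a=\pi_g(s)$ and using $\tilP_{s,a}V=\tfrac{n(s,a)}{n(s,a)+1}\P_{s,a}V$ (because $V(g)=0$), the converged equation reads $V(s)=1+\tfrac{n(s,a)}{n(s,a)+1}\P_{s,a}V-b(s,a)$. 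Subtracting $P_{s,a}V$ and regrouping gives
\begin{align*}
    V(s)-P_{s,a}V \;\ge\; 1-b(s,a)-|(\P_{s,a}-P_{s,a})V|-\tfrac{2L}{n(s,a)+1} \;\ge\; \tfrac12,
\end{align*}
hence $W(s)=2V(s)\ge 1+P_{s,a}W$ for all $s\in\calX$. For $s\notin\calX$ with $s\neq g$ the policy resets, so $W(s)=2\bigl(1+V(s_0)\bigr)\ge 1+W(s_0)$, while $W(g)=0$; thus $W\ge 0$ is a nonnegative super-solution with per-step drift at least one. The drift bound (e.g.\ \citep[][Lemma 6]{rosenberg2020adversarial}, or iterating the Bellman operator of $\pi_g$) then yields $V^{\pi_g}_g(s)\le W(s)=2V(s)$ on $\calX$, and the reset states inherit it through $s_0\in\calX$ since $V^{\pi_g}_g(s)=1+V^{\pi_g}_g(s_0)\le 1+2V(s_0)\le 2\bigl(1+V(s_0)\bigr)=2V(s)$, proving the bound on all of $\calS$.

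The main obstacle is the concentration of $(\P_{s,a}-P_{s,a})V$: because $V$ is itself a function of the counter $n$ through $\P$, no fixed-function Bernstein bound applies directly. The property I would exploit is that $V$ is \emph{restricted} on $\calX$ — it equals $0$ at $g$ and the constant $1+V(s_0)$ on $\calS\setminus(\calX\cup\{g\})$ — so $(\P_{s,a}-P_{s,a})V$ depends only on the values $\{V(s')\}_{s'\in\calX}$ and on the aggregated mass outside $\calX\cup\{g\}$. I would therefore place an $\ell_\infty$ $\eta$-net over $[0,2L]^{\calX}$, of cardinality $(2L/\eta)^{z_0}$, apply the anytime Bernstein inequality (\pref{lem:anytime bernstein}) to each fixed net function, and transfer to the algorithm's $V$ at the price of a discretization error $O(\eta)$. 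Taking a union bound over the net, over goals $g\in\calG$ (at most $z_0'\ge|\calG|$), and over $(s,a)\in\acalS_{L(1+\epsilon)}\times\calA$, and letting the bound hold simultaneously for all precisions $\xi$ and all $\delta'\in[\delta_0,1)$, the deviation is $\lesssim L\sqrt{z_0\ln(z_0'/(\delta_0\delta))/n(s,a)}$; this is exactly where the factor $z_0=|\calX|$ enters $N_0$, and requiring $n(s,a)\ge N_0$ drives it below $\tfrac16$ while simultaneously making $b(s,a)$ and $\tfrac{2L}{n+1}$ small, closing the argument. The special cases $N_0(z_0,\delta)=N_0(z_0,S,\delta,\delta)$ and $N^{\rightarrow}_0(\delta)=N_0(\aS_{L(1+\epsilon)},|\bcalU|,\delta,\delta)$ follow by instantiating $\calG=\calS$ and $\calG=\bcalU$ respectively.
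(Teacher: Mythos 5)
Your drift half is the paper's own argument: the paper establishes $V(s)\ge \tfrac12+P_{s,\pi_g(s)}V$ for $s\in\calX$ and $V(s)\ge(\tfrac12+V(s_0))\Ind\{s\neq g\}$ off $\calX$ and unrolls (this ending is imported verbatim from the proof of \pref{lem:bounded error fresh}); your $W=2V$ super-solution formulation is the same thing, and the constants only need to also absorb the \VISGO stopping error $\xi\le\tfrac18$ (so the three error terms must be pushed below $\tfrac18$ rather than $\tfrac16$), which is cosmetic. The genuine gap is in the concentration half. The lemma asserts: with probability $1-\delta$ over the counter $n$, the conclusion holds \emph{for every} $\calX\subseteq\acalS_{L(1+\epsilon)}$ with $|\calX|\le z_0$ and every $g\in\calG\setminus\calX$ simultaneously. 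Uniformity over $\calX$ is the whole point of the lemma: in its applications (e.g.\ \pref{lem:calK.easy}) the input set is $\calX=\calK$, which is itself a function of the samples recorded in $n$. Your $\eta$-net lives on $[0,2L]^{\calX}$ for one fixed $\calX$, and your union bound ranges over net points, goals, and $(s,a)$ pairs, but never over the identity of $\calX$; hence the high-probability event you construct does not cover the random set that the algorithm actually feeds to \VISGO, which is precisely the data-dependence you set out to neutralize. (A second, smaller omission: a restricted value function also carries the constant value it takes on $\calS\setminus(\calX\cup\{g\})$, so the net needs one extra coordinate for that constant, or you should first subtract it off, noting that $\P_{s,a}$ and $P_{s,a}$ both have total mass one.)

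The gap is fixable at the price of one more union bound: there are at most $\binom{\aS_{L(1+\epsilon)}}{z_0}$ admissible sets $\calX$, contributing an extra $z_0\ln \aS_{L(1+\epsilon)}\le z_0\ln z_0'$ inside the logarithm (since $\acalS_{L(1+\epsilon)}\subseteq\calG$ gives $\aS_{L(1+\epsilon)}\le|\calG|\le z_0'$), so $N_0\lesssim L^2z_0\ln\frac{z_0'}{\delta_0\delta}$ survives up to the same hidden logarithmic factors the paper itself tolerates. It is worth contrasting this with how the paper obtains the uniformity: its proof simply invokes \pref{lem:dPV}, which avoids nets altogether by decomposing $(\P_{s,a}-P_{s,a})V$ into per-state deviations on $\calX\cap\calN^{s,a}_{L(1+\epsilon)}$ and $g$ plus one aggregated tail set, applying anytime Bernstein to those events (union bounding only over states, goals, and subsets of the small reachable set $\calN^{s,a}_{L(1+\epsilon)}$), and recovering the dependence on the actual values of $V$ through Cauchy--Schwarz; that is where the $\sqrt{\min\{|\calX|,\Gamma^{s,a}_{L(1+\epsilon)}\}}$ factor, your $\sqrt{z_0}$, enters. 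That decomposition is automatically uniform over both the values of $V$ and the choice of $\calX$, which is exactly what your net must purchase with the additional union bound.
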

\begin{proof}
    Note that the statement clearly holds if VISGO returns a value function $V=\infty$. Otherwise, $\norm{V^{(i)}}_{\infty}\leq 2L$ for any $i\leq l$, where $l$ is the index of the last iteration in \pref{alg:VISGO}. By \pref{lem:dPV}, with probability at least $1-\delta$\footnote{this holds under the same good event of \pref{lem:dPV}, which does not depend on the chosen $\calX,g,\delta',\xi$}, for any status of $n$, $(s, a)\in\calX\times\calA$, and $V$ s.t. $\|V\|_\infty \leq 2L$,
    \begin{align*}
        \abr{(P_{s,a} - \tilP_{s,a})V} &\leq \abr{(P_{s,a} - \P_{s,a})V} + \abr{(\P_{s,a}-\tilP_{s,a})V}\\
        &\lesssim L\sqrt{\frac{z_0\iota'}{n(s, a)}} + \frac{Lz_0\iota'}{n(s, a)} + \frac{(\P_{s,a}+\Ind_g)V}{n(s,a)+1},
    \end{align*}
    where $\tilP_{s,a}$ and $\P_{s,a}$ are as defined in \pref{alg:VISGO} with counter $n$ and $\iota'=\tilo{\ln\frac{z'_0}{\delta}}$ by $|\calG|\leq z'_0$. Clearly, there exists $n_1=\tilo{L^2z_0\ln(|\calG|/\delta)}$, such that when $n(s, a)\geq n_1$, we have $|(P_{s,a}-\tilP_{s,a})V|\leq \frac{1}{8}$.
    Moreover, we have
    \begin{align*}
        b^{(l)}(s,a) \lesssim \max\cbr{\sqrt{\frac{\fV(\P_{s, a}, V^{(l-1)})}{n(s, a)}}, \frac{L}{n(s, a)} } \lesssim \frac{L}{\sqrt{n(s, a)}}.
    \end{align*}
    Then there exist $n_2=\tilo{L^2\ln(1/\delta_0)}$ such that when $n(s,a)\geq n_2$, $b^{(l)}(s,a)\leq\frac{1}{8}$.
    Thus when $n(s,a)\geq\max\{n_1,n_2\}$ for all $s\in\calX,a\in\calA$, we can apply the same conclusion as in the proof of \pref{lem:bounded error fresh} as get the desired result.
\end{proof}

\begin{lemma}[Bounded Error with Fresh Samples]
    \label{lem:bounded error fresh}
    There exists a function $N_1(x,\delta_0,\delta)\lesssim L^2x\ln \frac{x}{\delta_0\delta}$ (also define $N_1(x,\delta)=N_1(x,\delta,\delta)$) such that for $\calX\subseteq\calS$, $g\in\calS\setminus\calX$, $\delta_0\in(0, 1)$, $\delta\in(0,1)$, $n$ a counter incrementally collecting samples from transition function $P$, and assume that $\calX,g,\delta_0$ are independent of $n$,
    with probability at least $1-\delta$, for any precision $\xi\in(0, \frac{1}{8})$ and $\delta'\in[\delta_0,1)$, if $n(s, a)\geq N_1(|\calX|,\delta_0,\delta)$ for all $(s, a)\in\calX\times\calA$, then $V^{\pi_g}_g(s)\leq 2V(s)$ for all $s\in\calS$, where $(\_, V,\pi_g)=\VISGO(\calX,g,\xi,n,\delta')$ is the output of \pref{alg:VISGO}.
\end{lemma}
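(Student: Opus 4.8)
The plan is to separate a probabilistic concentration step from a deterministic stochastic-shortest-path (SSP) argument, exactly as in the core shared with \pref{lem:bounded error}. The only role of the fresh-samples hypothesis (that $\calX$, $g$, $\delta_0$ are independent of the counter $n$) is to keep the union bound small: since $\calX$ and $g$ are fixed rather than data-dependent, I only need to control the empirical deviation simultaneously over $(s,a)\in\calX\times\calA$ and over the family of value functions restricted on $\calX$, which costs $\ln(|\calX|A)$ and a covering term of order $|\calX|$, rather than a union over the (possibly infinite) state space or over all candidate subsets. Concretely, I would invoke the Bernstein-type bound of \pref{lem:dPV} (the anytime inequality of \pref{lem:anytime bernstein} combined with an $\eps$-net over the $|\calX|$-dimensional set of value functions $V$ with $\|V\|_\infty\leq 2L$ restricted on $\calX$) to obtain, on an event of probability at least $1-\delta$ holding uniformly over every state of $n$, every $(s,a)\in\calX\times\calA$, and every such $V$, the estimate $\abr{(\P_{s,a}-P_{s,a})V}\lesssim L\sqrt{|\calX|\iota/n(s,a)}+L|\calX|\iota/n(s,a)$ with $\iota\lesssim\ln(|\calX|/\delta)$.

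Next I would calibrate $N_1$. If \VISGO returns $V=\infty$ the claim is vacuous, so assume it returns a finite value function; then \pref{line:bound V} guarantees $\|V\|_\infty\leq 2L$, so $V$ lies in the family covered above. Writing $\tilP_{s,a}V-\P_{s,a}V=-\tfrac{1}{n(s,a)+1}\P_{s,a}V$ (using $V(g)=0$), this smoothing term is at most $2L/(n(s,a)+1)$, so combining with the previous bound there is a threshold $N_1(|\calX|,\delta_0,\delta)\lesssim L^2|\calX|\ln\tfrac{|\calX|}{\delta_0\delta}$ such that $n(s,a)\geq N_1$ forces both $\abr{(\tilP_{s,a}-P_{s,a})V}\leq 1/8$ and, using $b^{(l)}(s,a)\lesssim L/\sqrt{n(s,a)}$ with $\iota_{s,a}\lesssim\ln\tfrac{|\calX|A\,n(s,a)}{\delta_0}$ (valid since $\delta'\geq\delta_0$), also $b^{(l)}(s,a)\leq 1/8$, for all $(s,a)\in\calX\times\calA$. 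The two logarithmic contributions ($\delta$ from concentration, $\delta_0$ from the internal bonus) produce the factor $\ln\tfrac{|\calX|}{\delta_0\delta}$.

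With these two inequalities the rest is deterministic. Let $l$ be the last \VISGO iteration, $V=V^{(l)}$, $\pi_g$ the returned greedy policy, and set $W:=2V$; I claim $W(s)\geq 1+P_{s,\pi_g(s)}W$ for every $s\neq g$. For $s\in\calX$ with $a=\pi_g(s)$, I unfold $V(s)=Q^{(l)}(s,a)$ and use $\|V^{(l)}-V^{(l-1)}\|_\infty\leq\xi$ together with the fact that $\tilP_{s,a}$ is a probability distribution to get $\tilP_{s,a}V^{(l-1)}\geq\tilP_{s,a}V-\xi\geq P_{s,a}V-\xi-\abr{(\tilP_{s,a}-P_{s,a})V}$; hence $1+\tilP_{s,a}V^{(l-1)}-b^{(l)}(s,a)\geq 1+P_{s,a}V-(\xi+\tfrac18+\tfrac18)\geq\tfrac12+P_{s,a}V>0$, so the truncation at $0$ is inactive, $V(s)\geq\tfrac12+P_{s,a}V$, and thus $W(s)\geq 1+P_{s,\pi_g(s)}W$. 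For $s\notin\calX$ we have $\pi_g(s)=\reset$ and $W(s)=2(1+V^{(l-1)}(s_0))$, so (a finite \VISGO output forces $s_0\in\calX$, whence $W(s_0)=2V^{(l)}(s_0)$) the claim reduces to $2(1+V^{(l-1)}(s_0))\geq 1+2V^{(l)}(s_0)$, which follows from $V^{(l)}(s_0)-V^{(l-1)}(s_0)\leq\xi\leq\tfrac12$.

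Finally I would close by the standard SSP telescoping: since $W\geq 0$, $W(g)=0$, and $W(s)\geq 1+P_{s,\pi_g(s)}W$ for all $s\neq g$, unrolling this inequality along trajectories of $\pi_g$ and applying monotone convergence yields $V^{\pi_g}_g(s)=\E^{\pi_g}\sbr{\omega_g\mid s_1=s}\leq W(s)=2V(s)$ for every $s\in\calS$, which is the claim (and simultaneously certifies that $\pi_g$ is proper). I expect the concentration step to be the only real obstacle: because the value function $V$ fed into $(\tilP_{s,a}-P_{s,a})V$ is itself a function of the samples $n$, a fixed-$V$ concentration inequality does not suffice and one must pay for uniformity over value functions; the fresh-samples independence of $(\calX,g,\delta_0)$ is precisely what confines this cost to the dimension $|\calX|$ and the log factor $\ln\tfrac{|\calX|}{\delta_0\delta}$, matching the target $N_1\lesssim L^2|\calX|\ln\tfrac{|\calX|}{\delta_0\delta}$.
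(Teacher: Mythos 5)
Your proof has the same skeleton as the paper's: (i) a concentration event, uniform over all states of the counter $n$ and over all value functions restricted on $\calX\cup\{g\}$ with $\norm{V}_\infty\le 2L$, made cheap by the freshness of $(\calX,g)$; (ii) a threshold forcing $\abr{(P_{s,a}-\tilP_{s,a})V}\le\tfrac18$ and $b^{(l)}(s,a)\le\tfrac18$; (iii) the deterministic unrolling $V(s)\ge\tfrac12+P_{s,\pi_g(s)}V$ on $\calX$, the $\reset$ case outside $\calX$, and the SSP telescoping giving $V^{\pi_g}_g\le 2V$. Steps (ii) and (iii) match the paper essentially line by line (your treatment of the $s_0\in\calX$ edge case, which the paper's inequality for $s\notin\calX$ silently relies on, is in fact more explicit than the paper's).

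The one real difference is step (i), and there is a caveat there. The paper uses no covering argument: it exploits the fact that a restricted $V$ is determined by its per-state values on $\calX$ plus a single constant on $y=\calS\setminus(\calX\cup\{g\})$, so it suffices to posit per-state anytime-Bernstein events for each $s'\in\calX$ (its event $E_1$) together with one event for the aggregate mass $P_{s,a}(y)$ (its event $E_2$); an $\ell_1$--$\ell_\infty$/Cauchy--Schwarz step then yields $\abr{(P_{s,a}-\P_{s,a})V}\lesssim L\sqrt{|\calX|\iota/n}+L|\calX|\iota/n$ simultaneously for all such $V$, with no price for uniformity over the value class. Your $\eps$-net over the restricted value class achieves the same order (the net has log-cardinality $O(|\calX|\ln L)$, absorbed in the $\lesssim$), so your route is valid --- but it is not what \pref{lem:dPV} does, and you cannot invoke \pref{lem:dPV} here as a black box. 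That lemma is stated only for $\calX\subseteq\acalS_{L(1+\epsilon)}$ and $(s,a)\in\acalS_{L(1+\epsilon)}\times\calA$; it pays a union bound over subsets of $\calN^{s,a}_{L(1+\epsilon)}$ precisely because it must hold for \emph{random} $\calX$ (no freshness assumed); and its log factor $\iota'_{s,a}=\bigo{\ln(\aS_{L(1+\epsilon)}A\Gamma^2_{L(1+\epsilon)}|\calG|n/\delta)}$ involves $\aS_{L(1+\epsilon)}$ and a pre-specified goal set $\calG$ --- exactly the dependencies the fresh-samples lemma is designed to remove, and incompatible with the claimed bound $N_1\lesssim L^2|\calX|\ln\frac{|\calX|}{\delta_0\delta}$ for arbitrary $\calX\subseteq\calS$. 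So drop the citation of \pref{lem:dPV} and make your net (or the paper's $E_1$/$E_2$ decomposition) a self-contained step; with that repair, everything else in your argument goes through.
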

\begin{proof}
    Let $y=\calS\setminus(\calX\cup\{g\})$ and $\iota^n_{s,a}=\ln\frac{4|\calX|^2 A n(s,a)}{\delta}$.
    Consider the following events:
    \begin{align*}
        E_1 &:= \left\{ \forall s\in\calX, a\in\calA, s' \in \calX, n(s,a) \geq 1 : |P_{s,a}(s') - \P_{s,a}(s')| \leq 2\sqrt{\frac{2P_{s,a}(s') \iota^n_{s,a}}{n(s,a)}} + \frac{2\iota^n_{s,a}}{n(s,a)} \right\},
        \\ E_2 &:= \left\{\forall s\in\calX, a\in\calA, n(s,a) \geq 1 :  |P_{s,a}(y) - \P_{s,a}(y)| \leq 2\sqrt{\frac{2 P_{s,a}(y)\iota^n_{s,a}}{n(s,a)}} + \frac{2\iota^n_{s,a}}{n(s,a)} \right\}.
    \end{align*}
    By \pref{lem:anytime bernstein} and a union bound, they hold simultaneously with probability at least $1-\delta$. We carry out the proof conditioned on these events holding.
    
    For any $\calX,g,\xi,n,\delta'$, the statement clearly holds if $V=\infty$.
    Otherwise, $\norm{V^{(i)}}_{\infty}\leq 2L$ for any $i\leq l$, where $l$ is the index of the last iteration in \pref{alg:VISGO}. Take any status of counter $n$, precision $\xi \in(0, \frac{1}{8})$, $\delta'\in[\delta_0,1)$. Let $V$ and $\pi_g$ be the output of \pref{alg:VISGO} with these parameters such that $\|V\|_\infty \leq 2L$. Since $V$ is restricted on $\calX\cup\{g\}$, we have $V(s') = 1 + V^{(l-1)}(s_0)$ for any $s' \notin \calX\cup\{g\}$. Then, for any $(s, a)\in\calX\times\calA$,
    \begin{align*}
        &\abr{(P_{s,a} - \tilP_{s,a})V} \leq \abr{(P_{s,a} - \P_{s,a})V} + \abr{(\P_{s,a}-\tilP_{s,a})V}
        \\ &\leq \abr{\sum_{s'\in\calX} (P_{s,a}(s') - \P_{s,a}(s'))V(s')} + \abr{(P_{s,a}(y) - \P_{s,a}(y))(1 + V^{(l-1)}(s_0))} + \abr{(\P_{s,a}-\tilP_{s,a})V}
        \\ &\leq 2L\sum_{s'\in\calX}\abr{ P_{s,a}(s') - \P_{s,a}(s')} + 2L\abr{P_{s,a}(y) - \P_{s,a}(y)} + \abr{(\P_{s,a}-\tilP_{s,a})V}
        \\ &\lesssim\frac{L\sqrt{|\calX| \ln(|\calX|)}}{\sqrt{n(s, a)}} + \frac{L|\calX| \ln(|\calX|)}{n(s, a)} + \frac{(\P_{s,a}+\Ind_g)V}{n(s,a)+1},
    \end{align*}
    where in the last step we applied Cauchy-Schwarz inequality, the good events, the definition of $\tilP_{s,a}$, and removed logarithmic terms and constants.
    Clearly, there exists $n_1=\tilo{L^2|\calX|\ln (|\calX|/\delta)}$, such that when $n(s, a)\geq n_1$, we have $|(P_{s,a}-\tilP_{s,a})V|\leq \frac{1}{8}$.
    Moreover, we have
    \begin{align*}
        b^{(l)}(s,a) \lesssim \max\cbr{\sqrt{\frac{\fV(\P_{s, a}, V^{(l-1)})}{n(s, a)}}, \frac{L}{n(s, a)} } \lesssim \frac{L}{\sqrt{n(s, a)}}.
    \end{align*}
    Then there exist $n_2=\tilo{L^2\ln(1/\delta_0)}$ such that when $n(s,a)\geq n_2$, $b^{(l)}(s,a)\leq\frac{1}{8}$.
    Thus when $n(s,a)\geq\max\{n_1,n_2\}$ for all $s\in\calX,a\in\calA$, for any $s\in\calX$,
    \begin{align*}
        &V(s) = V^{(l)}(s) \geq 1 + \tilP_{s,\pi_g(s)}V^{(l-1)}(s) - b^{(l)}(s, \pi_g(s)) 
        \\ &\geq 1 - \xi + \tilP_{s,\pi_g(s)}V^{(l)} - b^{(l)}(s,\pi_g(s))
        \\ &\geq 1 - \xi + P_{s,\pi_g(s)}V - \abr{(P_{s,\pi_g(s)} - \tilP_{s,\pi_g(s)})V} - b^{(l)}(s,\pi_g(s))
        \geq \frac{1}{2} + P_{s,\pi_g(s)}V(s),
    \end{align*}
    where we used the definition of $V^{(l)}$, the stopping condition of VISGO, and the previously derived bounds.
    For $s\notin\calX$, we have $V(s)=(1 + V^{(l-1)}(s_0))\Ind\{s\neq g\}\geq (\frac{1}{2}+V(s_0))\Ind\{s\neq g\}$.
    Applying this recursively gives $V(s)\geq \frac{1}{2}V^{\pi_g}_g(s)$.
    This completes the proof.
\end{proof}

\begin{lemma}
    \label{lem:subset opt}
    For any subsets $\calX$ and $\calX'$ such that $\calX\subseteq\calX'\subseteq\calS$, any $g\in\calS\setminus\calX'$, $\xi > 0$, counter $n$, and $\delta\in(0, 1)$, we have $V_{\calX'}(s)\leq V_{\calX}(s)$ for any $s\in\calS$, where we define $V_{\calX''}=\VISGO(\calX'',g,\xi,n,\delta)$ (see \pref{alg:VISGO}) for any $\calX''\subseteq\calS$.
\end{lemma}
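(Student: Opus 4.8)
The plan is to prove the pointwise inequality iterate-by-iterate. Write $V_A^{(i)}$ and $V_B^{(i)}$ for the value-iteration iterates produced by $\VISGO(\calX,g,\xi,n,\delta)$ and $\VISGO(\calX',g,\xi,n,\delta)$ respectively, both initialized at $V^{(0)}\equiv 0$ and sharing the same counter $n$, goal $g$, empirical transitions $\P,\tilP$, and bonuses $b^{(i)}$ (all of which are functions of $n$ only). I would show by induction on $i\geq 0$ that $V_B^{(i)}(s)\leq V_A^{(i)}(s)$ for every $s\in\calS$, as long as neither run has overflowed, i.e.\ as long as both previous iterates have $\ell_\infty$-norm at most $2L$ — exactly the condition under which each iterate is actually computed rather than triggering the $(\infty,\infty,\cdot)$ return in \pref{line:bound V}. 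The base case is immediate since $V_A^{(0)}=V_B^{(0)}=0$.

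For the inductive step I split on the location of $s$. If $s\in\calX$ (hence $s\in\calX'$), both runs apply the same penalized backup $Q^{(i)}(s,a)=\max\{0,\,1+\tilP_{s,a}V^{(i-1)}-b^{(i)}(s,a)\}$ followed by a minimization over $a$. Because the hypothesis gives $0\leq V_B^{(i-1)}\leq V_A^{(i-1)}\leq 2L$, the monotonicity of this backup in its value argument — the very property invoked in the proof of \pref{lem:opt} (\pref{lem:mvp}), which is designed to absorb the variance-dependent bonus $b^{(i)}$ — yields $Q_B^{(i)}(s,a)\leq Q_A^{(i)}(s,a)$ for all $a$, and hence $V_B^{(i)}(s)\leq V_A^{(i)}(s)$. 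If $s\notin\calX'$, both runs use the out-of-set assignment $V^{(i)}(s)=(1+V^{(i-1)}(s_0))\Ind\{s\neq g\}$, so the hypothesis at $s_0$ gives the claim directly.

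The only genuinely new case is $s\in\calX'\setminus\calX$, where $B$ applies the Bellman backup while $A$ assigns the reset value $V_A^{(i)}(s)=1+V_A^{(i-1)}(s_0)$ (note $s\neq g$ since $g\notin\calX'$). Here I would exploit the $\reset$ action directly: since $P_{s,\reset}(s_0)=1$, the empirical $\P_{s,\reset}$ is either the point mass at $s_0$ (if sampled) or the zero vector (if not), so in both cases $\tilP_{s,\reset}V_B^{(i-1)}\leq V_B^{(i-1)}(s_0)$, using $V_B^{(i-1)}(g)=0$ and $V_B^{(i-1)}\geq 0$. Combined with $b^{(i)}\geq 0$, this gives $V_B^{(i)}(s)\leq Q_B^{(i)}(s,\reset)\leq 1+\tilP_{s,\reset}V_B^{(i-1)}\leq 1+V_B^{(i-1)}(s_0)\leq 1+V_A^{(i-1)}(s_0)=V_A^{(i)}(s)$, the last step by the hypothesis at $s_0$. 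This closes the induction.

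It remains to transfer the iterate-wise inequality to the returned objects, and this is where I expect the main difficulty, for two reasons. First, the $(\infty,\infty,\cdot)$ return: if $A$ returns $\infty$ the claim is vacuous, so the only danger is that the larger set $\calX'$ overflows while $\calX$ does not. I would rule this out by continuing $A$'s recursion past its stopping time: since $A$ never overflows, its iterates are non-decreasing and bounded by $2L$, hence converge to a limit $V_A^\star\leq 2L$, and the continued iterates stay $\leq 2L$; running the induction against this bounded continuation forces every $V_B^{(i)}\leq 2L$ as well, contradicting an overflow of $B$. Second, the finite precision $\xi$: the two monotone bounded sequences converge to limits with $V_B^\star\leq V_A^\star$, but the algorithm returns $\xi$-accurate iterates stopped at possibly different times, so one must argue the inequality survives at the returned functions. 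I would resolve this by appealing to the common fixed-point characterization of \VISGO (the convergence result of \citet{cai2022near} cited above), comparing the returned value functions at their shared-tolerance limits. Reconciling the differing per-run stopping times under a single tolerance $\xi$ is the most delicate part; the pure Bellman monotonicity of the first three paragraphs is otherwise routine.
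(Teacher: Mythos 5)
Your induction over the \VISGO iterates---the monotone backup via \pref{lem:mvp} for $s\in\calX$, the bound $Q^{(i)}_{\calX'}(s,\reset)\leq 1+\tilP_{s,\reset}V^{(i-1)}_{\calX'}\leq 1+V^{(i-1)}_{\calX'}(s_0)$ for $s\in\calX'\setminus\calX$, and the out-of-set assignment for $s\notin\calX'$---is exactly the paper's proof, case for case. The overflow and stopping-time subtleties you flag in your final paragraph are not resolved in the paper either: its proof consists precisely of the claim that the iterate-wise inequality $V^{(i)}_{\calX'}(s)\leq V^{(i)}_{\calX}(s)$ for all $i\geq 0$ suffices, so your core argument matches it in full.
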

\begin{proof}
    For any $\calX''\subseteq\calS$,
    denote by $Q^{(i)}_{\calX''}$ and $V^{(i)}_{\calX''}$ the values of $Q^{(i)}$ and $V^{(i)}$ in \pref{alg:VISGO} respectively when computing $V_{\calX''}$.
    It suffices to prove that $V^{(i)}_{\calX'}(s)\leq V^{(i)}_{\calX}(s)$ for any $s\in\calS$ and $i\geq 0$ by induction.
    The base case $i=0$ is clearly true by initialization.
    When $i>0$, we consider three disjoint cases:
    1) if $s\in\calX$, by the induction step and \pref{lem:mvp}, for any $a\in\calA$,
    \begin{align*}
        &1 + \tilP_{s, a}V^{(i-1)}_{\calX'} - \max\cbr{c_1\sqrt{\frac{\fV(\P_{s, a}, V^{(i-1)}_{\calX'})\iota_{s,a}}{n^+(s, a)}}, \frac{c_2L\iota_{s,a}}{n^+(s, a)} }\\
        &\leq 1 + \tilP_{s, a}V^{(i-1)}_{\calX} - \max\cbr{c_1\sqrt{\frac{\fV(\P_{s, a}, V^{(i-1)}_{\calX})\iota_{s,a}}{n^+(s, a)}}, \frac{c_2L\iota_{s,a}}{n^+(s, a)} }.
    \end{align*}
    This implies that $V^{(i)}_{\calX'}(s)\leq V^{(i)}_{\calX}(s)$ for $s\in\calX$.
    2) if $s\in\calX'\setminus\calX$, we have:
    $V^{(i)}_{\calX'}(s) \leq Q^{(i)}_{\calX'}(s, \reset) \leq 1 + \tilP_{s,\reset}V^{(i-1)}_{\calX'} \overset{\text{(i)}}{\leq} 1 + V^{(i-1)}_{\calX'}(s_0) \overset{\text{(ii)}}{\leq} 1 + V^{(i-1)}_{\calX}(s_0) = V^{(i)}_{\calX}(s)$,
    where step (i) is by $P_{s,\reset}(s_0)=1$ and step (ii) is by the induction step.
    3) if $s\in\calS\setminus\calX'$, by the induction step we have $V^{(i)}_{\calX'}(s)=(1+V^{(i-1)}_{\calX'}(s_0))\Ind\{s\neq g\}\leq (1+V^{(i-1)}_{\calX}(s_0))\Ind\{s\neq g\}=V^{(i)}_{\calX}(s)$.
    Combining these three cases completes the proof.
\end{proof}
\clearpage

\section{Analysis of \pref{alg:LOGSSD}}\label{app:logs}
In this section, we assume the state space is finite (i.e., $S = |\calS| < \infty$).

\subsection{Properties of the sets built by \pref{alg:LOGSSD}}

\begin{lemma}
    \label{lem:calK.easy}
    Denote by $\calK_r$ the set $\calK$ at the end of each round $r$, by $g^\star_r$ the goal selected in such a round, and by $\pi_{g^\star_r,r}$ its corresponding policy (computed by VISGO in \pref{line:compute V.easy}). With probability at least $1-\delta$ over the randomness of \pref{alg:LOGSSD}, we have that, for any round $r$,
    \begin{itemize}
        \item $\calK_{r} \subseteq \acalS_{L(1+\epsilon)}$;
        \item if \pref{line:goal condition.easy} is False, then $\|V^{\pi_{\gstar_r,r}}_{\gstar_r}\|_{\infty} \leq 4L$ which implies $\|V^{\star}_{\calK_{r-1},\gstar_r}\|_\infty \leq 4L$;
        \item for all $g \in \calK_r$, $\|V^{\tilpi_{g}}_{g}\|_{\infty} \leq 4L$ and $V^{\tilpi_g}_g(s_0)\leq L(1+\epsilon)$.
    \end{itemize}
\end{lemma}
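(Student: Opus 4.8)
The plan is to prove the three claims \emph{jointly} by induction on the round index $r$, because they are mutually dependent: the boundedness in the second claim is exactly what licenses the policy-evaluation concentration needed for the first and third claims, while the first claim at round $r-1$ is the hypothesis that lets us invoke \pref{lem:bounded error} at round $r$. Throughout I work on the intersection of the good events of \pref{lem:opt}, \pref{lem:bounded error}, and the policy-evaluation deviation bound \pref{lem:V pi mean}, whose failure probabilities are summed through the per-round budget $\delta/(4r^2S^2)$ by a union bound over $r$. For the base case, note that at the first round $\calU=\varnothing$, so \pref{line:goal condition.easy} fires immediately and $\calK$ becomes $\calK_1=\{s_0\}\subseteq\acalS_{L(1+\epsilon)}$; the second claim is vacuous and the third reduces to $V^{\tilpi_{s_0}}_{s_0}(s_0)=0\le L(1+\epsilon)$.

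For the inductive step I would establish the second claim first. When \pref{line:goal condition.easy} is False we have $V_{\calK,\gstar}(s_0)\le L<\infty$, so the VISGO output is finite and, by the cap enforced in \pref{line:bound V} of \pref{alg:VISGO}, satisfies $\|V_{\calK,\gstar}\|_\infty\le 2L$. Since \pref{line:fill N.easy} guarantees $\N(s,a)\ge N_0$ for every $(s,a)\in\calK_{r-1}\times\calA$ and counts only increase, the sample precondition of \pref{lem:bounded error} holds; combined with $\calK_{r-1}\subseteq\acalS_{L(1+\epsilon)}$ from the inductive hypothesis, the lemma yields $V^{\pi_{\gstar}}_{\gstar}(s)\le 2V_{\calK,\gstar}(s)\le 4L$ for all $s$, i.e. $\|V^{\pi_{\gstar}}_{\gstar}\|_\infty\le 4L$. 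The implied bound $\|\optV_{\calK_{r-1},\gstar}\|_\infty\le 4L$ is immediate because $\pi_{\gstar}\in\Pi(\calK_{r-1})$, so $\optV_{\calK_{r-1},\gstar}(s)\le V^{\pi_{\gstar}}_{\gstar}(s)$.

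The crux is then the fact that in any success round $\gstar\in\acalS_{L(1+\epsilon)}$. The boundedness just proved ($\|V^{\pi_{\gstar}}_{\gstar}\|_\infty\le 4L\le 32L$) lets me apply \pref{lem:V pi mean} with the parameters of \pref{line:PE.easy}, so on a non-skip round $\hattau$ is within $L\epsilon/8$ of $V^{\pi_{\gstar}}_{\gstar}(s_0)$. A success round does not trigger the failure test, so $\hattau\le V_{\calK,\gstar}(s_0)+L\epsilon/2\le L(1+\epsilon/2)$, whence $V^{\pi_{\gstar}}_{\gstar}(s_0)\le L(1+\epsilon)$. By monotonicity of $\optV$ in the restriction set and $\calK_{r-1}\subseteq\acalS_{L(1+\epsilon)}$, this gives $\optV_{\acalS_{L(1+\epsilon)},\gstar}(s_0)\le\optV_{\calK_{r-1},\gstar}(s_0)\le L(1+\epsilon)$, which by the fixed-point identity $\acalS_{L(1+\epsilon)}=\calT_{L(1+\epsilon)}(\acalS_{L(1+\epsilon)})$ of \pref{lem:SL.operator} places $\gstar$ in $\acalS_{L(1+\epsilon)}$. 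The first claim follows since $\calK$ grows only in \pref{line:update.K.easy} by absorbing $\calK'$, whose members were all added in success rounds and hence lie in $\acalS_{L(1+\epsilon)}$. For the third claim, each $g\in\calK_r\setminus\{s_0\}$ was added with $\tilpi_g=\pi_{\gstar,r_g}$ at a success round $r_g\le r$, so the second claim at $r_g$ gives $\|V^{\tilpi_g}_g\|_\infty\le 4L$, and the same deviation-plus-success-test computation gives $V^{\tilpi_g}_g(s_0)\le L(1+\epsilon)$; states inherited from earlier rounds are covered by the inductive hypothesis.

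The main obstacle is not any single inequality but keeping the joint induction non-circular. The concentration of $\hattau$ that underpins both the first and third claims is valid only once $\|V^{\pi_{\gstar}}_{\gstar}\|_\infty$ is known to be finite and $O(L)$, which is precisely the second claim, and that in turn rests on \pref{lem:bounded error}, whose hypothesis $\calK\subseteq\acalS_{L(1+\epsilon)}$ is the first claim at the previous round. Making this work requires proving the claims strictly in the order above and carefully checking that the sample-count precondition installed by \pref{line:fill N.easy} persists through every subsequent policy-evaluation round (monotonicity of the counters), together with a clean union bound over rounds so that all invoked good events hold simultaneously with probability at least $1-\delta$.
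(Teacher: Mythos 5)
Your proposal is correct and follows essentially the same route as the paper's proof: the same induction on rounds, the same use of \pref{line:fill N.easy} plus \pref{lem:bounded error} to get $\|V^{\pi_{\gstar}}_{\gstar}\|_\infty \leq 4L$, the same invocation of \pref{lem:V pi mean} against the failure/success test, and the same per-round union bound. The only (cosmetic) difference is that you argue claim 1 directly—a success round forces $\optV_{\acalS_{L(1+\epsilon)},\gstar}(s_0)\leq L(1+\epsilon)$, hence $\gstar\in\acalS_{L(1+\epsilon)}$ by the fixed-point property of \pref{lem:SL.operator}—whereas the paper proves the contrapositive, showing that any goal outside $\acalS_{L(1+\epsilon)}$ triggers the failure test; these are equivalent under the same concentration events.
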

\begin{proof}
    Clearly, $\calK_1 = \{s_0\} \subseteq \acalS_{L(1+\epsilon)}$. Then, consider a round $r\geq 2$ and suppose $\calK_{r-1}\subseteq \acalS_{L(1+\epsilon)}$ (inductive hypothesis). If, in this round, the algorithm selects a goal $\gstar_r\in \calU\setminus\acalS_{L(1+\epsilon)}$, \pref{line:goal condition.easy} is False, and a skip round is not triggered, then \pref{line:failure.easy} is reached. We now prove that the ``failure test'' in that line triggers.

    Note that every time $\calK$ is updated, the sampling at \pref{line:fill N.easy} guarantees that for all $(s,a) \in \calK_{r-1} \times \calA$, $\N_{r-1}(s,a) \geq O(L^2|\calK_{r-1}|\log(S/\delta))$. By~\pref{lem:bounded error}, since $\calK_{r-1} \subseteq \acalS_{L(1+\epsilon)}$ (inductive hypothesis), we have that
    \begin{equation}\label{eq:inductive.error.bound.easy}
      \mathbb{P}\left(\forall g \in \calS \setminus \calK_{r-1} : V^{\pi_g}_g(s) \leq 2 V_{\calK_{r-1}, g}(s) \right) \geq 1-\frac{\delta}{4r^2}.
    \end{equation}
    where $(\_, V_{\calK_{r-1}, g},\_) = \VISGO(\calK_{r-1},g,\xi_r,\N_{r-1},\frac{\delta}{4r^2 S^2})$ and $\xi_r$ is the value of $\epsilon_{\mathrm{VI}}$ used in round $r$.

    Note that \VISGO returns a value function that is either $\infty$ or bounded by $2L$ for all states (see Alg.~\ref{alg:VISGO}). Since $\gstar_r$ passes the test of \pref{line:goal condition.easy}, then $V^{\pi_{\gstar_r,r}}_{\gstar_r}(s) \leq 2 V_{\calK_{r-1},\gstar_r }(s)\leq 4L$, for all $s \in \calS$. Combining this with \pref{lem:V pi mean} and definition of $\lambda = N_{\dev}(32L, \frac{\epsilon}{256}, \frac{\delta}{4 r^2})$, we have $\hattau\geq V^{\pi_{\gstar}}_{\gstar}(s_0) - L\epsilon/2$ with probability at least $1-\frac{\delta}{4r^2}$. By assumption on $g^\star_r$ and since $\pi_{\gstar_r,r}$ is restricted on $\calK_{r-1}\subseteq\acalS_{L(1+\epsilon)}$, we have $V^{\pi_{\gstar_r,r}}_{\gstar_r}(s_0) \geq V^{\star}_{\calK_{r-1},\gstar_r}(s_0) \geq V^{\star}_{\acalS_{L(1+\epsilon)},\gstar_r}(s_0) > L(1+\epsilon)$, which implies that $\hattau\geq L(1+\epsilon/2) \geq V_{\calK_{r-1}, \gstar_r}(s_0) + \epsilon L/2$ with the same probability, where the last inequality is from the goal-selection rule.  Therefore, the failure test of \pref{line:failure.easy} triggers and $g^\star_r$ is not added to $\calK_r'$ or $\calK_r$. Therefore, by the inductive hypothesis $\calK_{r}\subseteq \acalS_{L(1+\epsilon)}$. A union bound over all $r \geq 1$ yields the first statement with probability at least $1-\delta$.

    To prove the second statement, note that we already proved above that $V^{\pi_{\gstar_r}}_{\gstar_r,r}(s) \leq 4L$ at any round $r$ where \pref{line:goal condition.easy} is False (i.e., where $\gstar_r$ reaches the policy evaluation step). Since $\pi_{\gstar_r,r}$ is restricted on $\calK_{r-1}$, we clearly have $V^{\star}_{\calK_{r-1},\gstar_r}(s) \leq V^{\pi_{\gstar_r,r}}_{\gstar_r}(s) \leq 4L$. This proves the second statement for any round $r$, which holds with the same $1-\delta$ probability.

    Finally, the third statement is a simple consequence of the fact that any goal $g\in\calK_r$ must have reached the policy evaluation step in some round $r' < r$ and the round was successful, and thus $\|V^{\tilpi_{g}}_{g}\|_{\infty} \leq 4L$ by the second statement. Moreover, by the definition of success round, value of $\lambda$ and \pref{lem:V pi mean}, we have that, for each $g\in\calK_r$, there exists $r' < r$ such that $V^{\tilpi_g}_g(s_0)=V^{\pi_{\gstar_{r'},r'}}_{\gstar_{r'}}(s_0)\leq \hattau + \frac{L\epsilon}{2} \leq V_{\calK_{r'-1},\gstar_{r'}}(s_0) + L\epsilon \leq L(1+\epsilon)$. This holds with the same $1-\delta$ probability as above since we have already union bounded across the application of \pref{lem:V pi mean} for all $g^\star_r$ at all $r\geq 1$.
\end{proof}

\begin{lemma}\label{lem:calU.easy}

    With probability at least $1-2\delta$, for any round $r \geq 1$ in which $\calK_r$ is updated (i.e., \pref{line:compute calU'.easy} is executed), $\calT_L(\calK_r) \setminus \calK_r \subseteq \calU_r$.
\end{lemma}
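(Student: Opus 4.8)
The plan is to split the claim into a deterministic \emph{heavy-edge} property of $\calT_L(\calK_r)$ and a routine sampling argument showing that the exploration routine \fillc observes every such edge and hence places its endpoint into $\calU_r$.

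\emph{Structural step (the crux).} Fix a round $r$ in which \pref{line:compute calU'.easy} runs and take any $g\in\calT_L(\calK_r)\setminus\calK_r$. By definition of $\calT_L$ there is a policy $\pi\in\Pi(\calK_r)$ with $V^\pi_g(s_0)\le L$. I would show that some pair $(s,a)$ with $s\in\calK_r$ satisfies $P_{s,a}(g)\ge 1/L$. Let $p=\max_{s\in\calK_r}P_{s,\pi(s)}(g)$. Since $\pi$ is restricted on $\calK_r$ and $g\neq s_0$ (because $s_0\in\calK_r\not\ni g$), whenever the trajectory of $\pi$ leaves $\calK_r$ it resets to $s_0$ and therefore cannot enter $g$; hence the only transitions into $g$ originate from states of $\calK_r$, and, conditioned on not having reached $g$, the one-step probability of entering $g$ is at most $p$ at every step. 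This gives $\mathbb{P}^\pi[\omega_g\ge i+1\mid \omega_g\ge i]\ge 1-p$ for every $i\ge 1$, so $L\ge V^\pi_g(s_0)=\E^\pi[\omega_g]\ge\sum_{i\ge1}(1-p)^{i-1}=1/p$, whence $p\ge 1/L$ and the desired heavy edge $(s,a)$ exists. This hitting-time estimate is where the real content lies; the remainder is bookkeeping.

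\emph{Sampling step.} I would condition on the event of \pref{lem:calK.easy} (probability at least $1-\delta$), on which every $g\in\calK_r$ carries a stored policy $\tilpi_g$ with $\|V^{\tilpi_g}_g\|_\infty\le 4L$. Consequently \fillc reaches each state of $\calK_r$ almost surely in a bounded number of steps (by a standard SSP reaching bound) and plays every action there, so that each pair $(s,a)$ with $s\in\calK_r$ is sampled at least $\nmin=2L\log(4SALr^2/\delta)$ times with fresh, i.i.d.\ draws from $P_{s,a}$. Fixing the heavy edge $(s,a)$ of $g$ from the structural step, the probability that none of these $\nmin$ draws equals $g$ is at most $(1-1/L)^{\nmin}\le e^{-\nmin/L}=(4SALr^2/\delta)^{-2}$; on the complementary event $g$ is observed as a successor of $(s,a)$ and hence inserted into $\calU_r$.

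\emph{Union bound.} Finally I would union-bound the sampling-failure event over all candidate goals $g$ (at most $S$ of them) and all rounds $r$; since $\sum_r S\,(4SALr^2/\delta)^{-2}\le\delta$ thanks to the $r^{-4}$ decay, the total sampling failure is at most $\delta$, and combining with the $1-\delta$ event inherited from \pref{lem:calK.easy} yields the stated probability $1-2\delta$. The only delicate point to verify carefully is that \fillc genuinely collects $\nmin$ fresh samples for \emph{every} $(s,a)$ with $s\in\calK_r$ and that these draws are independent of the heavy-edge selection, which is ensured by resetting the counter to $0$ in the call at \pref{line:compute calU'.easy}.
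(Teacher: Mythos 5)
Your proof is correct and follows essentially the same route as the paper: reduce membership in $\calU_r$ to the existence of a heavy edge from $\calK_r$ into each $g \in \calT_L(\calK_r)\setminus\calK_r$, then argue that the fresh samples collected by \fillc at \pref{line:compute calU'.easy} hit that edge with high probability (using the event of \pref{lem:calK.easy} to guarantee the sampling terminates), and finish with a union bound. The only differences are cosmetic: the paper merely asserts the inclusion $\calT_L(\calK_r)\setminus\calK_r \subseteq \calN(\calK_r, \tfrac{1}{2L})$ where you actually prove the (slightly stronger) $1/L$ version via the hitting-time argument, and the paper handles the dependence of $\calK_r$ on the history by an explicit conditional decomposition over the sample space rather than your per-goal union bound over all $S$ states and rounds.
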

\begin{proof}
    For any round $r$, let $\calF_{r-1}$ denote the sigma-algebra generated by the history up to the previous round. Let $H_k$ denote the event ``\pref{line:compute calU'.easy} is executed at round $k$''. Note that $H_k$ is $\calF_{k-1}$-measurable since no random step happens before \pref{line:compute calU'.easy} in round $r$. Moreover, define the events $E_r := \{\forall g\in\calK_r : \|V_g^{\tilde\pi_g}\|_\infty \leq 4L\}$ and $E := \{\forall r\geq 1 : E_r\}$. Note that $E$ holds with probability at least $1-\delta$ by \pref{lem:calK.easy}. We have
    \begin{align*}
        \mathbb{P}\left( \exists r \geq 1: H_r, \calT_L(\calK_r) \setminus \calK_r \not\subseteq \calU_r \right) 
        &\leq \mathbb{P}\left( \exists r \geq 1: H_r, \calT_L(\calK_r) \setminus \calK_r \not\subseteq \calU_r, E \right) + \mathbb{P}\left( \neg E \right) \tag{union bound}
        \\ &\leq \mathbb{P}\left( \exists r \geq 1: H_r, \calT_L(\calK_r) \setminus \calK_r \not\subseteq \calU_r, E_r \right) + \delta \tag{\pref{lem:calK.easy}}
        \\ &\leq \sum_{r\geq 1}\mathbb{P}\left( \calT_L(\calK_r) \setminus \calK_r \not\subseteq \calU_r, E_r, H_r \right) + \delta. \tag{union bound}
        \\ &\leq \sum_{r\geq 1}\mathbb{P}\left( \calN(\calK_r, \frac{1}{2L}) \not\subseteq \calU_r, E_r, H_r \right) + \delta. \tag{$\calT_L(\calK_r) \setminus \calK_r \subseteq \calN(\calK_r, \frac{1}{2L})$}
    \end{align*}
    Now take any round $r\geq 1$. Recall that $\calU_r$ is built by sampling from each $(s,a) \in \calK_r \times \calA$ exactly $\mu_r := 2L\log(4SALr^2/\delta)$ times. For each $(s,a) \in \calK_r \times \calA$, let $s_{i,s,a}$ be the $i$-th sample (i.e., $s_{i,s,a} \sim P_{s,a}$) for $i\in[\mu_r]$. In order to collect each sample $s_{i,s,a}$, we must play the policy $\tilde\pi_s$ from $s_0$ until reaching $s$. Note that, under event $E_r$, $\|V_s^{\tilde\pi_s}\|_\infty \leq 4L$ for all $s\in\calK_r$, hence all the states in $\calK_r$ are reached with probability one (so $s_{i,s,a}$ is well defined for all $s,a,i$). Then, for any fixed $\calK_r$,
    \begin{align*}
        \mathbb{P}\left( \calN(\calK_r, \frac{1}{2L}) \not\subseteq \calU_r, E_r, H_r \mid \calK_r \right) 
        &\leq \mathbb{P}\left(\exists s' \in \calN(\calK_r, \frac{1}{2L}), \forall (s,a) \in \calK_r \times \calA, \forall i \in [\mu_r]: s_{i,s,a} \neq s' \mid \calK_r \right) \\
        &\leq \sum_{s' \in \calN(\calK_r, \frac{1}{2L})} \mathbb P\left(\forall  (s,a) \in \calK_r \times \calA, \forall i \in [\mu]: s_{i,s,a} \neq s'\right) \tag{union bound}
        \\ &\leq \sum_{s' \in \calN(\calK_r, \frac{1}{2L})} \max_{(s,a) \in \calK_r \times \calA} \mathbb P\left(\forall i \in [\mu]: s_{i,s,a} \neq s'\right) \tag{trivial}
        \\ & \leq \sum_{s' \in \calN(\calK_r, \frac{1}{2L})} \max_{(s,a) \in \calK_r \times \calA}  \prod_{i \in [\mu_r]} (1-P(s'|s,a))  \tag{all $s_{i,s,a}$ are i.i.d.}
        \\ & \leq \sum_{s' \in \calN(\calK_r, \frac{1}{2L})} \left(1-\frac{1}{2L}\right)^{\mu_r} \tag{definition of $\calN(\calK_r, \frac{1}{2L})$}
        \\ & \leq \sum_{s' \in \calN(\calK_r, \frac{1}{2L})}\frac{\delta}{4LASr^2} \leq \frac{\delta}{2r^2}.
    \end{align*}
    Now let $\Omega_{r-1}$ denote the sample space under which $\calF_{r-1}$ is generated, such that $\sum_{\omega\in\Omega_{r-1}}\mathbb{P}(\omega) = 1$. Noting that $\calK_r$ is measurable w.r.t. $\calF_{r-1}$, define $\calK_r(\omega)$ as the set $\calK_r$ obtained after history $\omega$. Then,
    \begin{align*}
        \mathbb{P}\left( \calN(\calK_r, \frac{1}{2L}) \not\subseteq \calU_r, E_r, H_r \right) 
        & = \sum_{\omega\in\Omega_{r-1}}  \mathbb{P}\left( \calN(\calK_r, \frac{1}{2L}) \not\subseteq \calU_r, E_r, H_r \mid \omega \right) \mathbb{P}(\omega)
        \\ & = \sum_{\omega\in\Omega_{r-1} : E_r,H_r}  \mathbb{P}\left( \calN(\calK_r, \frac{1}{2L}) \not\subseteq \calU_r \mid \omega \right) \mathbb{P}(\omega)
        \\ &= \sum_{\omega\in\Omega_{r-1} : E_r, H_r}  \mathbb{P}\left( \calN(\calK_r, \frac{1}{2L}) \not\subseteq \calU_r \mid \calK_r(\omega), E_r, H_r \right) \mathbb{P}(\omega) \leq \frac{\delta}{2r^2}.
    \end{align*}
    Plugging this into our initial inequality, we get $\mathbb{P}\left( \exists r \geq 1: H_r, \calT_L(\calK_r) \setminus \calK_r \not\subseteq \calU_r \right) \leq 2\delta$.
\end{proof}

\begin{lemma}[Restricted Optimism]
    \label{lem:V calK.easy}
    With probability at least $1-\delta$ over the randomness of \pref{alg:LOGSSD}, for any $j\in[S]$ and any round $r\geq 1$, after executing \pref{line:compute V.easy}, if $\calKstar_{j}\subseteq\calK_r$, then
    $V_{\calK_r,g}(s) \leq \optV_{\calKstar_{j},g}(s)$ for any $s\in\calS$ and $g\in\calKstar_{j+1}\setminus\calK_r$, where $\calK_r$ is the set $\calK$ immediately after the execution of \pref{line:compute V.easy}.
\end{lemma}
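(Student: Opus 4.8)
The plan is to sandwich the optimistic value $V_{\calK_r,g}$ between two quantities: first bound it above by the optimistic value computed on the smaller set $\calKstar_j$ using the monotonicity of VISGO in its restricted set, and then bound that in turn by the true optimal value $\optV_{\calKstar_j,g}$ using the optimism guarantee of VISGO. Concretely, since $\calKstar_j\subseteq\calK_r$ and $g\notin\calK_r$ (because $g\in\calKstar_{j+1}\setminus\calK_r$ and $\calKstar_j\subseteq\calK_r$), I would invoke \pref{lem:subset opt} with $\calX=\calKstar_j$, $\calX'=\calK_r$, run with the same counter $\N$, precision $\epsilon_{\VI}$, and confidence used at \pref{line:compute V.easy} in round $r$, to obtain the \emph{deterministic} inequality $V_{\calK_r,g}(s)\le V_{\calKstar_j,g}(s)$ for all $s$, where $V_{\calKstar_j,g}$ is the (analysis-only, possibly virtual) output of VISGO on $\calKstar_j$. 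Then I would apply \pref{lem:opt} to the pair $(\calKstar_j,g)$ to get $V_{\calKstar_j,g}(s)\le\optV_{\calKstar_j,g}(s)$, and chain the two inequalities.

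The delicate point, and the main obstacle, is that \pref{lem:opt} requires the restricted set and the goal to be independent of the counter, whereas here $g$ is selected as a function of the random set $\calK_r$ and is therefore not independent of the collected samples. The layer $\calKstar_j$ itself is deterministic (it is fixed by the MDP through \pref{lem:SL.operator}), so the only problematic object is the random goal. I would resolve this by a union bound: for a fixed round $r$, apply \pref{lem:opt} separately to every deterministic pair $(\calKstar_j,g)$ with $j\in[S]$ and $g\in\calS$, each at confidence $\delta_r:=\frac{\delta}{4r^2S^2}$ (the confidence passed to VISGO at \pref{line:compute V.easy}). Since the concentration event underlying \pref{lem:opt} is anytime in $n$ and uniform over the precision $\xi$, one application per pair already covers the actual counter state $\N_{r-1}$ and precision $\epsilon_{\VI}$ used in round $r$. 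Union bounding over the at most $S^2$ pairs gives failure probability $\le\frac{\delta}{4r^2}$ for round $r$, and a further union over $r\ge 1$ yields total failure $\sum_{r\ge1}\frac{\delta}{4r^2}\le\delta$, which is exactly why the algorithm carries the $S^2$ and $r^2$ factors in its confidence level.

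To apply \pref{lem:opt} I must also check its precondition $\|\optV_{\calKstar_j,g}\|_\infty\le 8L$. This follows from the layer structure together with the RESET action: since $g\in\calKstar_{j+1}=\calT_L(\calKstar_j)$ we have $\optV_{\calKstar_j,g}(s_0)\le L$, and because RESET is available at every state (and permitted inside $\calKstar_j$) and returns to $s_0$ in one step, the restricted Bellman optimality equation gives $\optV_{\calKstar_j,g}(s)\le 1+\optV_{\calKstar_j,g}(s_0)\le L+1\le 8L$ for every $s$. Putting the pieces together on the good event: for any round $r$, any $j$ with $\calKstar_j\subseteq\calK_r$, and any $g\in\calKstar_{j+1}\setminus\calK_r$, monotonicity gives $V_{\calK_r,g}(s)\le V_{\calKstar_j,g}(s)$ and optimism gives $V_{\calKstar_j,g}(s)\le\optV_{\calKstar_j,g}(s)$, hence $V_{\calK_r,g}(s)\le\optV_{\calKstar_j,g}(s)$ for all $s$, which is the claim. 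I expect the only genuinely fiddly part to be the bookkeeping ensuring the virtual VISGO run on $\calKstar_j$ shares the exact counter, precision, and confidence of the real run on $\calK_r$, so that \pref{lem:subset opt} applies verbatim; the remainder is a direct composition of the two cited lemmas.
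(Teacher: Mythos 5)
Your proposal is correct and follows essentially the same route as the paper's proof: the deterministic monotonicity step via \pref{lem:subset opt} chained with the optimism guarantee of \pref{lem:opt} applied to the deterministic layers $\calKstar_j$ (with a virtual VISGO run sharing the round's counter, precision, and confidence), union bounded over at most $S^2$ pairs $(j,g)$ and over rounds $r$, with the precondition $\|\optV_{\calKstar_j,g}\|_\infty\le L+1$ verified exactly as in the paper via the layer structure and \reset. The only cosmetic difference is that the paper restricts the union bound to $g\in\calKstar_{j+1}\setminus\calKstar_j$ rather than all $g\in\calS$, which is also what your own precondition check implicitly requires, since \pref{lem:opt} cannot be invoked for goals with unbounded restricted value.
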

\begin{proof}
    Let $j \in [S]$ and $g\in \calKstar_{j+1} \setminus \calKstar_{j}$. Fix some round $r\geq 1$ s.t. $\calKstar_{j}\subseteq\calK_r$. Let $\delta_r = \frac{\delta}{4r^2S^2}$ and $(Q_{\xi}, V_{\xi},\_) = \VISGO(\calKstar_{j},g,\xi,\N,\delta_r)$. By \pref{lem:opt} \footnote{Note that, by definition, $\|V_{\calKstar_{j},g}^\star\|_{\infty} \leq L + 1 \leq 2L$ for all $g \in \calKstar_{j+1} \setminus \calKstar_{j}$ (which is a prerequisite of Lemma~\ref{lem:opt}).},
    
    \begin{equation}
        \label{eq:union.bound.restricted.opt.easy}
        \begin{aligned}
            \mathbb{P}\Big(\forall \xi > 0, s\in\calS : 
             V_{\xi}(s)\leq\optV_{\calKstar_{j},g}(s) \Big) \geq 1 - \delta_r.
        \end{aligned}
    \end{equation}
    Then, from a union bound and $|\calKstar_{j+1} \setminus \calKstar_{j}| \leq S$, the event above holds simultaneously across all $j\in[S]$, and $g\in \calKstar_{j+1} \setminus \calKstar_{j}$ with probability at least $1-\frac{\delta}{4r^2}$. This implies that the same result holds for all $g\in\calKstar_{j+1}\setminus\calK_r$ since $\calKstar_{j+1}\setminus\calK_r \subseteq \calKstar_{j+1}\setminus\calKstar_j$. A union bound implies that this holds at all rounds simultaneously with probability at least $1 - \delta$.

    Now consider the execution of \pref{line:compute V.easy} and let $\calK_r,\delta_r,\xi_r,\N_r$ be the values of the parameters used by VISGO in such a round, such that $\calKstar_{j}\subseteq\calK_r$ for some $j\in[S]$. For any $g\in\calKstar_{j+1}\setminus\calK_r$, let $(\_, V_{\calK_r,g},\_) = \VISGO(\calK_r,g,\xi_r, \N_r,\delta_r)$ and $(\_, V_{\calKstar_{j},g},\_) = \VISGO(\calKstar_{j},g,\xi_r, \N_r,\delta_r)$.
    Then, Eq.~\ref{eq:union.bound.restricted.opt.easy} implies that, for any $s\in\calS$, $V_{\calKstar_{j},g}(s)\leq\optV_{\calKstar_{j},g}(s)$. If $\calKstar_{j}\subseteq\calK_r$, by the update rule of \pref{alg:VISGO} and \pref{lem:subset opt}, we also have $V_{\calK_r,g}(s) \leq V_{\calKstar_{j},g}(s) \leq \optV_{\calKstar_{j},g}(s)$.
\end{proof}

The following lemma shows that if a set $\calK^\star_{j}\subseteq \calK$ at some round, at the next update of $\calK$ it must be that $\calK^\star_{j+1}\subseteq \calK$ (if the algorithm does not terminate) and ensures correctness, in the sense that the algorithm returns a set of states including $\acalS_L$ with high probability.

\begin{lemma}[Correctness]\label{lem:update calK.easy}
    Denote by $\calK_r$ (resp $\calU_r$) the set $\calK$ (resp. $\calU$) at the end of each round $r$. With probability at least $1-3\delta$, for any $j \geq 1$ and round $r \geq 1$ in which $\calK_r$ is updated or returned (i.e., \pref{line:compute calU'.easy} is executed) and $\calK_{r-1} \supseteq \calKstar_{j}$, we have $\calK^\star_{j+1} \subseteq \calK_r$. Moreover, under the same probability, we have that, for any $r\geq 1$, $\acalS_L\subseteq\calK_{r}$ if the algorithm terminates at round $r$.
\end{lemma}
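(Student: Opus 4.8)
The statement bundles two claims: a \emph{layer-advancement} property (if $\calKstar_j\subseteq\calK_{r-1}$ at a round $r$ where $\calK$ is updated or returned, then $\calKstar_{j+1}\subseteq\calK_r$) and a \emph{termination correctness} claim ($\acalS_L\subseteq\calK_r$ when the algorithm halts at round $r$). The plan is to prove the first claim for a single round and then obtain the second by iterating the first at the terminal round. Throughout I condition on the intersection of the good events of \pref{lem:calU.easy} (probability $1-2\delta$, which already absorbs the event of \pref{lem:calK.easy}) and \pref{lem:V calK.easy} (probability $1-\delta$); a union bound gives the claimed $1-3\delta$, and both events are quantified over all rounds simultaneously.

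For the layer-advancement claim, fix a round $r$ at which \pref{line:goal condition.easy} is True and $\calKstar_j\subseteq\calK_{r-1}$, and let $r_0<r$ be the most recent expansion round, so that \pref{line:compute calU'.easy} ran at $r_0$ (for $r=1$ the hypothesis $\calKstar_j\subseteq\calK_0=\varnothing$ is vacuous). Since $\calK$ is frozen between expansions, $\calK_{r_0}=\calK_{r-1}\supseteq\calKstar_j$. Using monotonicity of $\calT_L$ (which follows from $\calX\subseteq\calX'\Rightarrow\optV_{\calX',\cdot}\leq\optV_{\calX,\cdot}$) we get $\calKstar_{j+1}=\calT_L(\calKstar_j)\subseteq\calT_L(\calK_{r_0})$, and \pref{lem:calU.easy} then yields $\calKstar_{j+1}\setminus\calK_{r-1}\subseteq\calT_L(\calK_{r_0})\setminus\calK_{r_0}\subseteq\calU_{r_0}$. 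Now take any $g\in\calKstar_{j+1}\setminus\calK_{r-1}$. Between $r_0$ and $r$ a state leaves $\calU$ only by being declared a success and moved to $\calK'$, so $g$ is either still in $\calU$ at round $r$ or has been placed in $\calK'$. The first case is impossible: by \pref{lem:V calK.easy}, the value computed in \pref{line:compute V.easy} satisfies $V_{\calK_{r-1},g}(s_0)\leq\optV_{\calKstar_j,g}(s_0)\leq L$ (the last step because $g\in\calT_L(\calKstar_j)$), so $\gstar=\argmin_{g'\in\calU}V_{\calK_{r-1},g'}(s_0)$ would exist with value $\leq L$, contradicting that \pref{line:goal condition.easy} is True. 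Hence $g\in\calK'$. If $r$ is an update round this gives $g\in\calK_{r-1}\cup\calK'=\calK_r$, establishing $\calKstar_{j+1}\subseteq\calK_r$; if $r$ is the terminal round then $\calK'=\varnothing$ there, so no such $g$ can exist, which again forces $\calKstar_{j+1}\subseteq\calK_{r-1}=\calK_r$.

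For termination correctness, let $r$ be the terminal round, where \pref{line:goal condition.easy} is True, $\calK'=\varnothing$ (\pref{line:terminate.easy}), and $\calK_r=\calK_{r-1}$. Since $s_0$ is added at the first round (\pref{line:update.K.easy}) and never removed, $\calKstar_1=\{s_0\}\subseteq\calK_{r-1}$. I then apply the layer-advancement claim repeatedly at this same round: $\calKstar_1\subseteq\calK_{r-1}$ gives $\calKstar_2\subseteq\calK_r=\calK_{r-1}$, which gives $\calKstar_3\subseteq\calK_{r-1}$, and so on. By \pref{lem:SL.operator} there is $J\leq\aS_L$ with $\calKstar_J=\acalS_L$, so after finitely many applications we conclude $\acalS_L=\calKstar_J\subseteq\calK_{r-1}=\calK_r$.

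The main obstacle I anticipate is the bookkeeping of $\calU$ and $\calK'$ across rounds: making precise that $\calK$ is frozen between two expansions, that the only mechanism removing a state from $\calU$ is a success round that simultaneously inserts it into $\calK'$, and hence that the snapshot guarantee of \pref{lem:calU.easy}—stated only at expansion rounds—propagates unchanged to the round $r$ under consideration (in particular to the terminal round, where $\calK'=\varnothing$ certifies that no such removal occurred). A secondary care point is keeping all invoked high-probability events quantified over all rounds so that the single $1-3\delta$ bound remains valid.
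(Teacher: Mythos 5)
Your proposal is correct and takes essentially the same route as the paper's proof: the layer-advancement claim is established by combining \pref{lem:calU.easy} (so that $\calT_L(\calK)\setminus\calK\subseteq\calU$ at the last expansion round) with the restricted-optimism bound of \pref{lem:V calK.easy}, tracking that states can only move from $\calU$ to $\calK'$ between expansions, and the same $1-3\delta$ union bound. The only difference is cosmetic: for termination correctness you iterate the layer-advancement claim directly at the terminal round (using $\calK_r=\calK_{r-1}$ there), whereas the paper phrases the same induction as a contradiction with the maximal index $j^\star$ such that $\calKstar_{j^\star}\subseteq\calK_r$; both rest on the identical mechanism.
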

\begin{proof}
    Define the event $E := \{ \forall r\geq 1 \text{ in which $\calK_r$ is updated}: \calT_L(\calK_r) \setminus \calK_r \subseteq \calU_r\}$. By \pref{lem:calU.easy}, it holds with probability at least $1-2\delta$. Let us carry out the proof conditioned on $E$ holding. 
    
    Take some round $r$ such that \pref{line:compute calU'.easy} is executed and $\calK_{r-1} \supseteq \calKstar_{j}$. Let $r'$ be the last round where $\calK_{r'}$ was updated (and thus $\calU_{r'}$ was created). Note that $\calK_{r'} = \calK_{r-1} \supseteq \calKstar_j$. Then, event $E$ and the definition of the sets $(\calKstar_j)_j$ directly imply that $\calKstar_{j+1} := \calT_L(\calKstar_j) \subseteq \calT_L(\calK_{r'}) \subseteq \calU_{r'} \cup \calK_{r'}$. Since $\calK_r$ can only be formed by adding states in $\calU_{r'}$ to $\calK_{r'}$, and the union of these sets contains $\calKstar_{j+1}$, if $\calKstar_{j+1} \not\subseteq \calK_r$, it must be that there exists $g\in\calU_{r-1} \cap \calKstar_{j+1}$ s.t. $V_{\calK_{r-1},g}(s_0) > L$. However, \pref{lem:V calK.easy}, which holds with probability $1-\delta$, implies that, at any round $r\geq 1$, if $\calKstar_{j}\subseteq\calK_{r-1}$, then
    $V_{\calK_{r-1},g}(s_0) \leq \optV_{\calKstar_{j},g}(s_0) \leq L$ for any $g\in\calKstar_{j+1}\setminus\calK_{r-1}$. This is a contradiction, which implies that $\calU_{r-1} \cap \calKstar_{j+1} = \emptyset$ and, thus, all states in $\calKstar_{j+1}$ must have been added to $\calK_r$. A union bound over the application of \pref{lem:calU.easy} and \pref{lem:V calK.easy} yields the statement.

    To prove the second statement, let us use the same events as above. First note that, since $\calK_1 = \calKstar_1 = \{s_0\}$, it must be that, at any round $r$, $\calK_r \supseteq \calKstar_j$ for some $j \geq 1$. Now take any round $r$ in which the algorithm terminates and suppose $\calK_{r-1} \not\supseteq \acalS_L$. Let $j^\star$ be the largest $j$ s.t. $\calK_r \supseteq \calKstar_j$. By \pref{lem:SL.operator}, it must be that $j < J$, hence $\calKstar_{j^\star +1} \supset \calKstar_{j^\star}$. Let $r'$ be the last round at which $\calK_{r'}$ was updated. Since the algorithm terminates at round $r$ it must be that $\calK_{r-1}' = \emptyset$, i.e., no state in $\calU_{r-1} = \calU_{r'}$ has been found to be added to $\calK_r$. From the same argument as above, under $E$ it must be that $\calKstar_{j^\star+1}  \subseteq \calU_{r'} \cup \calK_{r'}$. Since $\calK_{r-1} \not\supseteq \acalS_L$, and no addition to $\calK_{r-1}$ is performed as the algorithm stops at $r$, it must be that there exists $g\in\calU_{r-1} \cap \calKstar_{j^\star+1}$ s.t. $V_{\calK_{r-1},g}(s_0) > L$. However, in the first part of the proof, we already found a contradiction for this case under the event of \pref{lem:V calK.easy}. This implies that the algorithm cannot stop at $r$ since some state must be added. Hence, whenever the algorithm stops it must be that $\calK_r \supseteq \acalS_L$. This completes the proof.
\end{proof}

\begin{lemma}[Correctness under \pref{assum:id}]
    \label{lem:calK id.easy}
    Denote by $\calK_r$ the set $\calK$ at the end of each round $r$.
    With \pref{assum:id}, with probability  at least $1-5\delta$ over the randomness of Algorithm \ref{alg:LOGSSD}, for any round $r \geq 1$, we have that $\calK_r = \calKstar_j$ for some $j \in [\aS_L]$ and $\calK_{r}= \acalS_L$ if the algorithm terminates at round $r$.
\end{lemma}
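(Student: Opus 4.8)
The plan is to upgrade the one-sided inclusion of \pref{lem:update calK.easy} into an exact equality by showing that, under \pref{assum:id}, the algorithm never promotes into $\calK$ any state lying beyond the immediate next layer. Concretely, I would argue by induction over the update rounds (\pref{line:update.K.easy}) that the value of $\calK$ is always exactly some layer $\calKstar_j$. Since $\calK$ is modified only at \pref{line:update.K.easy} and is constant everywhere else, it suffices to prove that each update maps $\calKstar_j\mapsto\calKstar_{j+1}$; this immediately gives $\calK_r=\calKstar_j$ at the end of every round $r$. The base case is $\calK_1=\{s_0\}=\calKstar_1$, which holds because at round $1$ we have $\calU=\varnothing$, so $\gstar$ does not exist, \pref{line:goal condition.easy} is True, and the update merges $\calK'=\{s_0\}$.

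For the inductive step, fix an update round $r$ with $\calK_{r-1}=\calKstar_j$ and write $\calK'_{r-1}$ for the buffer merged in at \pref{line:update.K.easy}. Two inclusions are combined. The ``$\supseteq$'' direction is exactly \pref{lem:update calK.easy}: since $\calK_{r-1}=\calKstar_j\supseteq\calKstar_j$, the update yields $\calKstar_{j+1}\subseteq\calK_r=\calKstar_j\cup\calK'_{r-1}$. For the ``$\subseteq$'' direction I would show that every state $g$ ever added to the buffer lies in $\calKstar_{j+1}$. Such a $g$ is added in a success round, so it passed the optimism test (\pref{line:goal condition.easy} is False, giving $V_{\calK_{r-1},g}(s_0)\le L$ and, by \pref{lem:calK.easy}, $\|V^{\pi_g}_g\|_\infty\le 4L$), and its failure test did not trigger, so $\hattau\le V_{\calK_{r-1},g}(s_0)+\epsilon L/2\le L(1+\epsilon/2)$. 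The lower-bound half of the policy-evaluation guarantee (\pref{lem:V pi mean}, applicable precisely because the value is bounded) then gives $V^{\pi_g}_g(s_0)\le\hattau+\epsilon L/2\le L(1+\epsilon)$, and since $\pi_g$ is restricted on $\calK_{r-1}=\calKstar_j$ we get $\optV_{\calKstar_j,g}(s_0)\le L(1+\epsilon)$. The contrapositive of \pref{assum:id} (applied with layer index $j+1\ge 2$) forces $g\in\calKstar_{j+1}$. Because goals are drawn from $\calU$, which is disjoint from $\calK_{r-1}=\calKstar_j$, this establishes $\calK'_{r-1}\subseteq\calKstar_{j+1}\setminus\calKstar_j$; combined with ``$\supseteq$'' this gives $\calK_r=\calKstar_{j+1}$, closing the induction.

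For the termination claim, at a terminating round \pref{line:goal condition.easy} is True with $\calK'=\varnothing$, so the second part of \pref{lem:update calK.easy} gives $\acalS_L\subseteq\calK_r$; together with the invariant $\calK_r=\calKstar_j$ and the fact that $\calKstar_j\subseteq\acalS_{L(1+\epsilon)}=\acalS_L$ (by \pref{lem:calK.easy} and \pref{rem:id}), this pins down $\calK_r=\acalS_L$, while \pref{lem:SL.operator} guarantees $j\le\aS_L$. The $1-5\delta$ confidence follows from a union bound over the good events of \pref{lem:calK.easy}, \pref{lem:V calK.easy}, and \pref{lem:update calK.easy}. I expect the main obstacle to be the ``$\subseteq$'' direction: it is the genuinely new ingredient and requires carefully chaining the optimism upper bound on $V_{\calK_{r-1},g}(s_0)$, the policy-evaluation lower bound on $\hattau$ (which itself relies on the boundedness $\|V^{\pi_g}_g\|_\infty\le 4L$ that is only available once the optimism test passes), and identifiability, whereas the reverse inclusion and the termination statement are essentially inherited from the already-established \pref{lem:update calK.easy}.
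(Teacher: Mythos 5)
Your proposal is correct and follows essentially the same route as the paper: induction on update rounds with base case $\calK_1=\calKstar_1$, the ``$\supseteq$'' direction inherited from \pref{lem:update calK.easy}, the ``$\subseteq$'' direction from chaining the boundedness $\|V^{\pi_{\gstar}}_{\gstar}\|_\infty\leq 4L$, the policy-evaluation concentration of \pref{lem:V pi mean}, and \pref{assum:id}, and the termination claim from $\acalS_L\subseteq\calK_r\subseteq\acalS_{L(1+\epsilon)}=\acalS_L$. The only cosmetic difference is that you phrase the key step in the direct form (a goal added in a success round must satisfy $\optV_{\calKstar_j,g}(s_0)\leq L(1+\epsilon)$, hence lies in $\calKstar_{j+1}$), whereas the paper argues the contrapositive (any $\gstar\notin\calKstar_{j+1}$ forces the failure test to trigger, so it is never added); these use the identical inequality $|\hattau-V^{\pi_{\gstar}}_{\gstar}(s_0)|\leq L\epsilon/2$ and are logically the same argument.
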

\begin{proof}
    By \pref{lem:calK.easy} and \pref{lem:update calK.easy}, with probability at least $1-4\delta$, we have $\acalS_L \subseteq \calK_r \subseteq \acalS_{L(1+\epsilon)}$ if the algorithm terminates at round $r$.
    By~\pref{rem:id}, $\calK = \acalS_L$.
    Thus, it suffices to show that, at any round $r$, $\calK_r=\calKstar_j$ for some $j \leq |\acalS_L|$.

    The algorithm is such that $\calK_1 = \calKstar_1 = \{s_0\}$. Suppose at, in some round $r\geq 1$, we have that $\calK_{r}=\calKstar_j$ for some $j\geq 1$. By \pref{lem:update calK.easy}, with the same probability as above, if the condition of \pref{line:update.K.easy} becomes True for the first time in some round $r'>r$ (i.e., the set $\calK$ is updated in such round), then we must have $\calKstar_{j+1}\subseteq\calK_{r'}$ at then end of round $r'$. We shall prove that we also have $\calK_{r'}\subseteq\calKstar_{j+1}$, which implies the statement.

    Take any round $r$ such that $\calK_{r-1}=\calKstar_j$ and $\gstar_r\in \calU\setminus\calKstar_{j+1}$. Since, the last time $\calK$ was updated \pref{line:fill N.easy} was called, we must have $\N_{r-1}(s,a) \geq O(L^2|\calKstar_j|\log(S/\delta))$ for all $(s,a) \in \calKstar_j \times \calA$. Then, by~\pref{lem:bounded error}, with probability at least $1-\frac{\delta}{4r^2}$, for all $s\in\calS$, $V^{\pi_{\gstar_r}}_{\gstar_r}(s) \leq 2 V_{\calK_{r-1}, {\gstar_r}}(s) \leq 4L$  due to properties of \VISGO if \pref{line:goal condition.easy} is False. If a skip round is not triggered, 
    combining this with \pref{lem:V pi mean} and definition of $\lambda$, we have $\hattau\geq V^{\pi_{\gstar_r}}_{\gstar_r}(s_0) - L\epsilon/2$ with probability at least $1-\frac{\delta}{4r^2}$.

    By \pref{assum:id}, assumption on $g^\star_r$, and since $\pi_{\gstar_r}$ is restricted on $\calK_{r-1}=\calKstar_j$, we have $V^{\pi_{\gstar_r}}_{\gstar_r}(s_0) \geq V^{\star}_{\calKstar_j,\gstar_r}(s_0) > L(1+\epsilon)$, which implies that $\hattau\geq L(1+\epsilon/2) \geq V_{\calK_{r-1}, \gstar_r}(s_0) + \epsilon L/2$ with the same probability, where the last inequality is from the fact that \pref{line:goal condition.easy} is False.  Therefore, the failure test triggers and $g^\star_r$ is not added to $\calK_r'$ or $\calK_{r}$ since a failure round is triggered. This holds with probability at least $1-\delta$ across all rounds by a union bound. Therefore, for any round $r$ in which $\calK$ is updated and $\calK_{r-1} = \calKstar_j$, we must have $\calK_{r}\subseteq \calKstar_{j+1}$. This concludes the proof, and the statement holds with probability at least $1-5\delta$ by a union bound.
\end{proof} 

\subsection{Analysis of Policy Evaluation}

We consider the regret over the trajectories generated in the policy evaluation phase. We concatenate all policy evaluation episodes in all rounds and index them with $k \geq 1$.
To make the notation consistent with \pref{alg:SD}, we treat the whole learning procedure as an artificial trial.
Let $\calK_k$, $V_k$, and $Q_k$ be the $\calK$, $V_{\calK,\gstar}$, and $Q_{\calK,\gstar}$ in episode $k$. Let $\pi_k$ and $g_k$ be the corresponding policy $\pi_{\gstar}$ and goal $\gstar$.
Denote by $\calF_k$ the $\sigma$-algebra of events up to episode $k$.
Let $K$ be the total number of episodes throughout the execution of \pref{alg:LOGSSD}.
For any sequence of indicators $\calI=\{\one_k\}_k$ with $\one_k\in\calF_{k-1}$, define $R_{K',\calI}=\sumkp(I_k - V_k(s_0))\one_k$ and $C_{K'}=\sumkp I_k$ for $K'\in[K]$.
Define $P^k_i=P_{s^k_i, a^k_i}$.
In episode $k$, when $s^k_i\in\calK$, denote by $\P^k_i$, $\tilP^k_i$, $\N^k_i$, $b^k_i$ the values of $\P_{s^k_i,a^k_i}$, $\tilP_{s^k_i, a^k_i}$, $n^+(s^k_i, a^k_i)$, and $b^{(l)}(s^k_i, a^k_i)$, where $\P$, $n^+$, $b^{(l)}$ are used in \pref{alg:VISGO} to compute $V_k$ and $l$ is the final value of $i$ in \pref{alg:VISGO};
when $s^k_i\notin \calK$, define $\P^k_i=\Ind_{s_0}$, $\N^k_i=\infty$, and $b^k_i=0$.
Also define $\epsilon_k,\delta_k$ as the value of $\epsilon_{\VI},\delta$ used in \pref{alg:VISGO} to compute $V_k$.
Note that $I_k<\infty$ with probability $1$ by \pref{line:skip.easy}, and $s^k_{I_k+1}\neq g$ only when a skip round is triggered in episode $k$.


\subsubsection{Regret bound without \pref{assum:id}}

\begin{lemma}
    \label{lem:regret.easy}
    For any sequence of indicators $\calI=\{\one_k\}_k$ with $\one_k\in\calF_{k-1}$, we have, with probability at least $1-6\delta$, for any $K'\in[K]$,
    $$R_{K',\calI} \lesssim  L \log(SAL/\delta)^2 \log(K)\sqrt{\aS_{L(1+\epsilon)}\Gamma_{L(1+\epsilon)}AK'} + L{\aS_{L(1+\epsilon)}}^2A(\log K')^2 \log(SAL/\delta)^3.$$
    Moreover, $C_{K'} \lesssim LK' + L{\aS_{L(1+\epsilon)}}^2A(\log K')^2 \log(SAL/\delta)^3$.
\end{lemma}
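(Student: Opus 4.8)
The plan is to treat the concatenated policy-evaluation episodes as a single stochastic shortest path instance and run an optimistic SSP regret decomposition, while carefully exploiting that every value function $V_k$ is restricted on the \emph{random} set $\calK_k\subseteq\acalS_{L(1+\epsilon)}$ (\pref{lem:calK.easy}). First I would telescope the optimistic value along each trajectory. For a step with $s^k_i\in\calK_k$, the near-fixed-point property of \VISGO gives $V_k(s^k_i)\geq 1+\tilP^k_i V_k - b^k_i-\epsilon_k$, whereas for $s^k_i\notin\calK_k$ the definitions $\P^k_i=\Ind_{s_0}$, $b^k_i=0$ together with the deterministic reset make the per-step contribution vanish identically. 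Using $V_k\geq 0$, $V_k(g_k)=0$, and $V_k(s_0)\geq\sum_i\big(V_k(s^k_i)-V_k(s^k_{i+1})\big)$ (with equality when the episode reaches $g_k$), I obtain
\[ I_k - V_k(s_0) \;\leq\; \sum_{i=1}^{I_k}\Big[ \big(V_k(s^k_{i+1}) - P^k_i V_k\big) + (P^k_i - \tilP^k_i)V_k + b^k_i + \epsilon_k \Big], \]
so that $R_{K',\calI}=\sumkp\one_k\big(I_k-V_k(s_0)\big)$ splits into a martingale sum, a transition-estimation sum, a bonus sum, and a negligible VI-precision sum (controlled by $\epsilon_{\VI}=1/\max\{16,\sum\N\}$). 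The predictable indicators $\one_k\in\calF_{k-1}$ are carried through every step unchanged.

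The martingale sum $\sum_{k,i}\one_k\big(V_k(s^k_{i+1})-P^k_i V_k\big)$ is handled by an anytime Freedman/Bernstein inequality (\pref{lem:anytime bernstein}), contributing $\tilo{\sqrt{\sum_{k,i}\fV(P^k_i,V_k)}+L}$. The crux is the estimation-error sum, where I would invoke the restricted-value Bernstein inequality \pref{lem:dPV}: since $(P^k_i-\P^k_i)V_k$ depends on $V_k$ only through its values on the states of $\acalS_{L(1+\epsilon)}$ reachable from $(s^k_i,a^k_i)$ --- of which there are at most $\Gamma^{s^k_i,a^k_i}_{L(1+\epsilon)}\le\Gamma_{L(1+\epsilon)}$ --- the covering/union-bound cost \emph{per state-action pair} scales with $\Gamma_{L(1+\epsilon)}$ rather than with $S$ or $|\calK|$, giving $|(P^k_i-\tilP^k_i)V_k|\lesssim\sqrt{\fV(P^k_i,V_k)\Gamma_{L(1+\epsilon)}\iota/\N^k_i}+L\Gamma_{L(1+\epsilon)}\iota/\N^k_i+(\P^k_i+\Ind_{g_k})V_k/(\N^k_i+1)$. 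The bonus $b^k_i$ admits the same Bernstein form once the empirical variance $\fV(\P^k_i,V_k)$ is converted to the true variance $\fV(P^k_i,V_k)$, at the cost of additive $\tilo{L/\N^k_i}$ terms.

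To close the bound I would apply Cauchy--Schwarz, $\sum_{k,i}\sqrt{\fV(P^k_i,V_k)\Gamma_{L(1+\epsilon)}\iota/\N^k_i}\le\sqrt{\Gamma_{L(1+\epsilon)}\iota\sum_{k,i}\fV(P^k_i,V_k)}\,\sqrt{\sum_{k,i}1/\N^k_i}$, and combine two ingredients: (i) pigeonhole counts $\sum_{k,i}1/\N^k_i\lesssim\aS_{L(1+\epsilon)}A\log(C_{K'})$ and $\sum_{k,i}1/\sqrt{\N^k_i}\lesssim\sqrt{\aS_{L(1+\epsilon)}A\,C_{K'}}$, valid because only $s^k_i\in\calK_k$ contribute, $|\calK_k|\le\aS_{L(1+\epsilon)}$, and the skip-round rule keeps $\N^k_i$ within a factor $2$ of the live counter; (ii) a law-of-total-variance bound $\sum_{k,i}\fV(P^k_i,V_k)\lesssim L\,C_{K'}+\text{(lower order)}$ using $\|V_k\|_\infty\lesssim L$. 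The residual $1/\N^k_i$ terms sum to the $L\aS_{L(1+\epsilon)}^2A\,\text{polylog}$ contribution. This produces a self-referential inequality $R_{K',\calI}\lesssim L\,\text{polylog}\sqrt{\Gamma_{L(1+\epsilon)}\aS_{L(1+\epsilon)}A\,C_{K'}}+L\aS_{L(1+\epsilon)}^2A\,\text{polylog}$; substituting $C_{K'}=R_{K',\mathbf 1}+\sum_kV_k(s_0)\le R_{K',\mathbf 1}+LK'$ (since $V_k(s_0)\le L$ whenever policy evaluation is entered) and solving the resulting quadratic in $\sqrt{R}$ gives the stated leading term $L\,\text{polylog}\sqrt{\aS_{L(1+\epsilon)}\Gamma_{L(1+\epsilon)}AK'}$. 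Feeding this back and absorbing $L\sqrt{\Gamma_{L(1+\epsilon)}\aS_{L(1+\epsilon)}AK'}\le LK'+L\aS_{L(1+\epsilon)}^2A$ via AM--GM yields $C_{K'}\lesssim LK'+L\aS_{L(1+\epsilon)}^2A\,\text{polylog}$.

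The main obstacle is exactly the estimation-error term under a random restriction set: a naive union bound over all value functions restricted on $\calK$ would cost $|\calK|$ (or worse, $S$), destroying the target rate, and $\calK$ itself ranges over exponentially many possible sets across the run. The resolution is \pref{lem:dPV}, whose per-$(s,a)$ effective dimension is the local branching factor $\Gamma^{s,a}_{L(1+\epsilon)}$ and which holds uniformly over all $V$ restricted on the \emph{fixed} superset $\acalS_{L(1+\epsilon)}$ (guaranteed by \pref{lem:calK.easy}). Reconciling this with the law-of-total-variance step --- where $V_k$ is only an optimistic, approximate value function rather than a true optimal one --- and threading the predictable indicators $\{\one_k\}$ through Freedman's inequality are the delicate points.
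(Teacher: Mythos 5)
Your proposal is correct and follows essentially the same route as the paper's proof: the same telescoping decomposition via the \VISGO near-fixed-point property, the same use of \pref{lem:dPV} (enabled by $\calK_k\subseteq\acalS_{L(1+\epsilon)}$ from \pref{lem:calK.easy}) so that the union-bound cost per state-action pair is $\Gamma_{L(1+\epsilon)}$ rather than $S$ or $|\calK|$, the same pigeonhole/Cauchy--Schwarz and law-of-total-variance steps, and the same self-bounding quadratic solve for $C_{K'}$. The only nit is that the martingale term $\sum_{k,i}\one_k\big(V_k(s^k_{i+1})-P^k_iV_k\big)$ needs the anytime Freedman inequality for martingale differences (\pref{lem:anytime freedman}), which is what the paper invokes, rather than the i.i.d.\ Bernstein bound (\pref{lem:anytime bernstein}) you cite.
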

\begin{proof}
    We start by decomposing the regret as
    \begin{align*}
        \sumkp\rbr{I_k - V_k(s_0)}\one_k &\leq \sumkp\sum_{i=1}^{I_k}\rbr{1 + V_k(s^k_{i+1}) - V_k(s^k_i)}\one_k \tag{$\pm\sumi V_k(s_{i+1}^k)$}\\
        &\leq \sumkp\sum_{i=1}^{I_k}\rbr{(\Ind_{s^k_{i+1}} - P^k_i)V_k + (P^k_i - \P^k_i)V_k + (\P^k_i - \tilP^k_i)V_k+ b^k_i + \epsilon_k}\one_k, \tag{definition of $V_k$}
    \end{align*}
    where the last inequality uses that $V_k^{(l)}(s) = 1 + \tilP_{s,a}^k V_{k}^{(l-1)} - b_{s,a}^k$ for any $s\in\calK_k,a\in\calA$, where $l$ is the index of the last iteration of VISGO when called with $(\_, V_k, \pi_g)=\VISGO(\calK_k, g_k, \epsilon_k, \N_k, \delta_k)$, and $\|V_k^{(l)} - V_k^{(l-1)}\|_{\infty} \leq \epsilon_k$ by definition of its termination condition (recall that $V_k$ is bounded since \pref{line:goal condition.easy} was passed). 
    Note that, if $s_i^k \notin \calK_k$, then the $i,k$ term in the sum of the second line is clearly an upper bound to the corresponding term in the first line. We bound the terms above separately.

    \paragraph{First term}

    By \pref{lem:anytime freedman} and $\norm{V_k}_{\infty}\leq 2L$ (by \VISGO and since \pref{line:goal condition.easy} was passed), with probability at least $1-\delta$,
    \begin{align*}
        \sumkp\sumi (\Ind_{s^k_{i+1}} - P^k_i)V_k\one_k \leq \sqrt{\sumkp\sumi \one_k \fV(P^k_i, V_k)\iota} + L \iota,
    \end{align*}
    where $\iota = 9\log(16L^2 C_{K'}^3/\delta)$.

    \paragraph{Second term}
    Note that, by the event of \pref{lem:calK.easy}, $\calK_k \subseteq \acalS_{L(1+\epsilon)}$ in all episodes $k$. Moreover, when $s_i^k\notin \calK_k$, the $k,i$ term in the sum is zero by definition of $P_i^k$ and $\P_i^k$. Therefore, we have all the preconditions to apply \pref{lem:dPV} on terms $(P^k_i - \P^k_i)V_k$ for all $i,k$ s.t. $s_i^k\in \calK_k$, which yields, with probability $1-\delta$, 
    \begin{align*}
        \sumkp\sumi(P^k_i - \P^k_i)V_k\one_k &\lesssim \sumkp\sumi\left(\sqrt{\frac{\Gamma_{L(1+\epsilon)}\fV(P^k_i, V_k)\iota'}{\N^k_i}} + \frac{L\aS_{L(1+\epsilon)}\iota'}{\N^k_i}\right),
    \end{align*}
    where $\iota' = O(\log \frac{SALC_{K'}}{\delta})$. Note that \pref{lem:dPV} already union bounds across all possible counts, value functions and state-action pair, so we do not need an extra union bound over episodes and steps here.
    
    Then, by \pref{lem:sum N} and Cauchy-Schwarz inequality, with the same probability,
    \begin{align*}
        \sumkp\sumi(P^k_i - \P^k_i)V_k\one_k \lesssim \sqrt{\aS_{L(1+\epsilon)}\Gamma_{L(1+\epsilon)}A\sumkp\sumi \fV(P^k_i, V_k)\iota''} + L{\aS_{L(1+\epsilon)}}^2A\iota'',
    \end{align*}
    where $\iota'' = O(\log(SALC_{K'}/\delta)\log(C_{K'}))$.

    \paragraph{Third term}

    By the expressions of $\tilP_i^k$ and $\P_i^k$ (cf. \pref{alg:VISGO}) and \pref{lem:sum N},
    \begin{align*}
        \sumkp\sumi(\P^k_i - \tilP^k_i)V_k\one_k \leq \sumkp\sumi\one_k \frac{(\P^k_i+\Ind_g)V_k}{\N_i^k+1} \lesssim L\aS_{L(1+\epsilon)}A \log(C_{K'}). \tag{$\Ind_g(s')\triangleq\Ind\{s'=g\}$}
    \end{align*}

    \paragraph{Fourth and fifth term}

    By \pref{lem:sum b} and \pref{lem:sum eps}, with probability at least $1-\delta$,
        \begin{align*}
            \sumkp\sumi (b^k_i+\epsilon_k)\one_k \lesssim \sqrt{\aS_{L(1+\epsilon)}A\sumkp\sumi\fV(P^k_i, V_k)\iota'} + L{\aS_{L(1+\epsilon)}}^{1.5}A\iota'.
        \end{align*}
    
    \paragraph{Combining all terms}

    Note that all the derived bounds can be absorbed into the one of the second term. Plugging everything back to our initial expression of the regret,
\begin{align*}
    \sumkp\rbr{I_k - V_k(s_0)}\one_k 
    &\lesssim \sqrt{\aS_{L(1+\epsilon)}\Gamma_{L(1+\epsilon)}A\sumkp\sumi\fV(P^k_i, V_k)\iota''}  + L{\aS_{L(1+\epsilon)}}^2A\iota''
    \\ &\lesssim \sqrt{L\aS_{L(1+\epsilon)}\Gamma_{L(1+\epsilon)}AC_{K'}\iota''} + L{\aS_{L(1+\epsilon)}}^2A\iota''. \tag{\pref{lem:sum var Vk}}
\end{align*}
    Note that $\iota'' \lesssim \log(SAL/\delta) (\log C_{K'})^2$. Now assuming $\one_k=1$ for all $k$, we can solve an inequality to find $C_K$. First, using that $\log(x) \leq x^\alpha/\alpha$ for any $x,\alpha > 0$ together with the derived regret bound, we can find the crude bound on $C_K$,
    \begin{align*}
        C_{K'} \lesssim \left(\sumk V_k(s_0) + L{\acalS_{L(1+\epsilon)}}^2A\log(SAL/\delta) \right)^4 \leq \left(K'L + L{\acalS_{L(1+\epsilon)}}^2A\log(SAL/\delta) \right)^4.
    \end{align*}
    This implies that $\iota'' \lesssim (\log K')^2 \log(SAL/\delta)^3$. Plugging this into the regret bound, we get a quadratic inequality in $C_{K'}$. Solving it yields
    \begin{align*}
        C_{K'} \lesssim \sumkp V_k(s_0) + L{\aS_{L(1+\epsilon)}}^2A(\log K')^2 \log(SAL/\delta)^3 \leq LK' + L{\aS_{L(1+\epsilon)}}^2A(\log K')^2 \log(SAL/\delta)^3.
    \end{align*}
    Plugging this back into the regret bound gives the stated bound. Throughout the proof we used following events with the corresponding probabilities:
    \begin{itemize}
        \item \pref{lem:anytime freedman}: $1-\delta$
        \item \pref{lem:calK.easy}: $1-\delta$
        \item \pref{lem:dPV}: $1-\delta$
        \item \pref{lem:sum b}: $1-\delta$
        \item \pref{lem:sum var Vk}: $1-2\delta$
    \end{itemize}
    A union bound concludes the proof.
\end{proof}

\subsubsection{Regret bound under \pref{assum:id}}

\begin{lemma}
    \label{lem:regret-improved.easy}
    Under \pref{assum:id}, for any sequence of indicators $\calI=\{\one_k\}_k$ with $\one_k\in\calF_{k-1}$, we have, with probability at least $1-14\delta$, for any $K'\in[K]$,
    $$R_{K',\calI} \lesssim  L \log(SAL/\delta)^2 \log(K')\sqrt{\aS_{L(1+\epsilon)}AK'} + L{\aS_{L(1+\epsilon)}}^2A(\log K')^2 \log(SAL/\delta)^3.$$
    Moreover, $C_{K'} \lesssim LK' + L{\aS_{L(1+\epsilon)}}^2A(\log K')^2 \log(SAL/\delta)^3$.
\end{lemma}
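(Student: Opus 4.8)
The plan is to reuse verbatim the regret decomposition of \pref{lem:regret.easy}, writing $\sumkp(I_k - V_k(s_0))\one_k \leq \sumkp\sumi\big((\Ind_{s^k_{i+1}} - P^k_i)V_k + (P^k_i - \P^k_i)V_k + (\P^k_i - \tilP^k_i)V_k + b^k_i + \epsilon_k\big)\one_k$, and to bound the first, third, fourth, and fifth terms exactly as in the non-identifiable case (anytime Freedman, \pref{lem:anytime freedman}, for the martingale term; the explicit form of $\tilP^k_i$ with \pref{lem:sum N} for the third; \pref{lem:sum b} and \pref{lem:sum eps} for the last two). As before, all of these are dominated by the second term, so the only step that generates the $\Gamma_{L(1+\epsilon)}$ factor — and hence the only one that must be reworked — is the control of $\sumkp\sumi (P^k_i - \P^k_i)V_k\one_k$.

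The key new ingredient is \pref{lem:calK id.easy}: under \pref{assum:id}, at every round the known set coincides with a layer, $\calK_k = \calKstar_{j_k}$ for some $j_k \in [\aS_L]$. Consequently every optimistic value function $V_k$ returned by \VISGO is restricted on one of the $\aS_L$ \emph{fixed} sets $\{\calKstar_j\}_j$, rather than on an a priori exponential family of random subsets. This is exactly what lets me bypass the covering argument behind \pref{lem:dPV}, which is what injects $\Gamma_{L(1+\epsilon)}$. Instead I would introduce the finite family of reference value functions $\{\optV_{\calKstar_j, g} : j \in [\aS_L],\, g \in \acalS_L\}$, of cardinality $O({\aS_L}^2)$, and split $(P^k_i - \P^k_i)V_k = (P^k_i - \P^k_i)\optV_{\calKstar_{j_k}, g_k} + (P^k_i - \P^k_i)(V_k - \optV_{\calKstar_{j_k}, g_k})$.

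For the reference part I apply an anytime Bernstein inequality (\pref{lem:anytime bernstein}) with a union bound over this finite family, so only a $\ln(\aS_L/\delta)$ term (rather than $\Gamma_{L(1+\epsilon)}$) enters the confidence width; after Cauchy--Schwarz and \pref{lem:sum N} this contributes $\sqrt{\aS_{L(1+\epsilon)}A\sumkp\sumi\fV(P^k_i, \optV_{\calKstar_{j_k},g_k})\iota}$, and \pref{lem:sum var Vk} (after relating the reference variance to $\fV(P^k_i, V_k)$ up to the residual) turns $\sumkp\sumi\fV(\cdot)$ into $O(LC_{K'})$, producing the desired $\sqrt{L\aS_{L(1+\epsilon)}A C_{K'}}$ leading term with no branching factor. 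The lower-order $L{\aS_{L(1+\epsilon)}}^2A$ term is untouched, since it never depended on $\Gamma_{L(1+\epsilon)}$.

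The main obstacle is the residual (optimism-gap) term $(P^k_i - \P^k_i)(V_k - \optV_{\calKstar_{j_k}, g_k})$: a crude $\ell_1$--$\ell_\infty$ estimate either reintroduces $\sqrt{\Gamma_{L(1+\epsilon)}}$ through $\|P^k_i - \P^k_i\|_1$ or loses an extra $\sqrt{L}$ through $\|V_k - \optV\|_\infty \le 2L$. To avoid this I would exploit optimism ($0 \le \optV_{\calKstar_{j_k},g_k} - V_k$ by \pref{lem:opt}) to keep the \emph{local} second moment $\sumsp \P^k_i(s')(\optV_{\calKstar_{j_k},g_k}(s') - V_k(s'))^2$ instead of the sup-norm, bound it through the accumulated bonuses along the trajectory so that it is absorbed into terms already present, and close the resulting recursive inequality in $C_{K'}$, as in variance-aware SSP regret analyses. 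Combining all bounds and solving the quadratic for $C_{K'}$ then proceeds exactly as in \pref{lem:regret.easy}, giving $C_{K'} \lesssim LK' + L{\aS_{L(1+\epsilon)}}^2A(\log K')^2\log(SAL/\delta)^3$ and the stated regret bound. The probability degrades from $1-6\delta$ to $1-14\delta$ because we now additionally pay for \pref{lem:calK id.easy} ($1-5\delta$) and for the extra concentration events attached to the reference family and to the residual term.
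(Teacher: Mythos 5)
Your proposal is essentially the paper's own proof: the same regret decomposition as in \pref{lem:regret.easy}, the same appeal to \pref{lem:calK id.easy} so that every $\calK_k$ is a layer $\calKstar_{j_k}$, the same splitting of the problematic term into $(P^k_i-\P^k_i)\optV_k + (P^k_i-\P^k_i)(V_k-\optV_k)$ with anytime Bernstein (\pref{lem:anytime bernstein}) union-bounded over the finite reference family $\{\optV_{\calKstar_j,g}\}$, and the same conclusion via a self-bounding inequality solved for $C_{K'}$.

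The one place where your description departs from what the paper actually does is the residual term, and the departure is a misconception worth correcting rather than a genuinely different route. You state that identifiability lets you ``bypass the covering argument behind \pref{lem:dPV}''; the paper does not, and cannot, do that for the residual. Since $V_k$ is data-dependent (it is computed by \VISGO{} from the same counts that define $\P^k_i$), optimism and accumulated bonuses alone cannot relate $(P^k_i-\P^k_i)(V_k-\optV_k)$ to a local second moment: one needs a concentration bound holding uniformly over all value functions restricted on subsets of $\acalS_{L(1+\epsilon)}$, and that is exactly \pref{lem:dPV}. The paper applies \pref{lem:dPV} to $V_k-\optV_k$ and simply accepts the resulting $\sqrt{\aS_{L(1+\epsilon)}}$ factor in front of $\fV(P^k_i, V_k-\optV_k)$; this is harmless because \pref{lem:sum dV.easy} shows the residual variance is second order, namely $\sumkp\sumi \fV(P^k_i,\optV_k - V_k) \lesssim L\sumkp\sumi |(P^k_i-\P^k_i)V_k| + L\sqrt{\aS_{L(1+\epsilon)}A\sumkp\sumi \fV(P^k_i, V_k)\iota'} + L^2{\aS_{L(1+\epsilon)}}^{2}A\iota'$, so that substituting it back produces a self-bounding inequality in $Z_{K'} := \sumkp\sumi |(P^k_i-\P^k_i)V_k|$ whose solution carries no $\Gamma_{L(1+\epsilon)}$ (and no extra $\aS_{L(1+\epsilon)}$) in the leading term. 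In short, the branching factor is removed not by avoiding \pref{lem:dPV}, but by arranging for it to multiply only a second-order variance; once your ``local second moment'' step is implemented this way, your argument coincides with the paper's, including the accounting of failure probabilities.
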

\begin{proof}
	Note that, under \pref{assum:id} and by \pref{lem:calK id.easy}, in any episode, $\calK=\calKstar_j$ for some $j\leq J \leq |\acalS_{L(1+\epsilon)}| \leq S$ (cf. \pref{lem:SL.operator}). Moreover, by \pref{lem:calK.easy}, for any round in which $g^\star$ reaches the policy evaluation step, $\|\optV_{\calK,g^\star}\|_\infty \leq 4L$, which implies that $\|\optV_{\calKstar_j,g^\star}\|_\infty \leq 4L$ for some $j$ in that round. Let $\calG_j := \{g\in\calS : \|\optV_{\calKstar_j,g}\|_\infty \leq 4L\}$. Consider the event

    \begin{align*}
        E := \left\{ \forall s\in\calS,a\in\calA, j\in[S], g\in\calG_j, \forall n(s,a) \geq 1 : |(\P_{s,a}^n-P_{s,a})\optV_{\calKstar_j,g}| \leq \sqrt{\frac{\fV(P_{s, a}, \optV_{\calKstar_j,g})\iota_{s,a}'}{n(s,a)}} + \frac{L\iota'_{s,a}}{n(s,a)} \right\},
    \end{align*}
    where $\iota'_{s,a} = 8\log(2S^3An(s,a)/\delta)$. Clearly, by \pref{lem:anytime bernstein} and a union bound, $E$ holds with probability at least $1-\delta$. Then, assuming $E$ and the events of \pref{lem:calK id.easy} and \pref{lem:calK.easy} hold, we clearly have, for all episodes $k$ and steps $i$,
    \begin{align}\label{eq:bernstein-Vkstar}
        (P^k_i -\P^k_i)\optV_k\lesssim \sqrt{\frac{\fV(P^k_i, \optV_k)\iota'}{\N^k_i}} + \frac{L\iota'}{\N^k_i},
    \end{align}
    where $\iota' = O(\log(SALC_{K'}/\delta))$. Note that we inflated the $\iota'$ term with an extra $L$ since it will simplify the bounds later. Now we split the regret as
    \begin{align*}
        \sumkp\rbr{I_k - V_k(s_0)}\one_k &\leq \sumkp\sum_{i=1}^{I_k}\rbr{1 + V_k(s^k_{i+1}) - V_k(s^k_i)}\one_k \tag{$\pm\sumi V_k(s_{i+1}^k)$}\\
        &\leq \sumkp\sum_{i=1}^{I_k}\rbr{(\Ind_{s^k_{i+1}} - P^k_i)V_k + (P^k_i - \P^k_i)V_k + (\P^k_i - \tilP^k_i)V_k+ b^k_i + \epsilon_k}\one_k, \tag{definition of $V_k$}
    \end{align*}
    where the last inequality uses that $V_k^{(l)}(s) = 1 + \tilP_{s,a}^k V_{k}^{(l-1)} - b_{s,a}^k$ for any $s\in\calK_k,a\in\calA$, where $l$ is the index of the last iteration of VISGO when called with $(\_, V_k, \pi_g)=\VISGO(\calK_k, g_k, \epsilon_k, \N_k, \delta_k)$, and $\|V_k^{(l)} - V_k^{(l-1)}\|_{\infty} \leq \epsilon_k$ by definition of its termination condition (recall that $V_k$ is bounded since \pref{line:goal condition.easy} was passed). Note that, if $s_i^k \notin \calK_k$, then the $i,k$ term in the sum of the second line is clearly an upper bound to the corresponding term in the first line.
    
    We bound the terms above separately.

    \paragraph{First term}

    By \pref{lem:anytime freedman} and $\norm{V_k}_{\infty}\leq 2L$ (by \VISGO and since \pref{line:goal condition.easy} was passed), with probability at least $1-\delta$,
    \begin{align*}
        \sumkp\sumi (\Ind_{s^k_{i+1}} - P^k_i)V_k\one_k \leq \sqrt{\sumkp\sumi \one_k \fV(P^k_i, V_k)\iota} + L \iota,
    \end{align*}
    where $\iota = 9\log(16L^2 C_{K'}^3/\delta)$.

    \paragraph{Second term}

    Note that, from \eqref{eq:bernstein-Vkstar},
    \begin{align*}
        \sumkp\sum_{i=1}^{I_k} |(P^k_i - \P^k_i)V_k| \one_k 
        &\leq \sumkp\sum_{i=1}^{I_k} |(P^k_i - \P^k_i)V_k|
        \\ &= \sumkp\sumi \rbr{|(P^k_i-\P^k_i)\optV_k| + |(P^k_i-\P^k_i)(V_k-\optV_k)|}
        \\ &\leq \sumkp\sumi \rbr{\sqrt{\frac{\fV(P^k_i, \optV_k)\iota'}{\N^k_i}} + \frac{L\iota'}{\N^k_i} + |(P^k_i-\P^k_i)(V_k-\optV_k)|}.
    \end{align*}

    Note that, by the event of \pref{lem:calK.easy}, $\calK_k \subseteq \acalS_{L(1+\epsilon)}$ in all episodes $k$. Moreover, for all $k,i$, either $(s_i^k,a_i^k)\in \calK_k \times \calA$ or the second term above is zero. Since $\|V_k - V_k^\star\|_\infty \leq 6L$, we have all the preconditions to apply \pref{lem:dPV} on the terms $|(P^k_i-\P^k_i)(V_k-\optV_k)|$, which yields, with probability $1-\delta$, for all $i,k$,
\begin{align*}
    |(P^k_i-\P^k_i)(V_k-\optV_k)| &\lesssim \sqrt{\frac{\aS_{L(1+\epsilon)}\fV(P^k_i, V_k-\optV_k)\iota'}{\N^k_i}} + \frac{L\aS_{L(1+\epsilon)}\iota'}{\N^k_i},
\end{align*} 
where $\iota'$ was defined above. Note that \pref{lem:dPV} already union bounds across all possible counts, value functions and state-action pair, so we do not need an extra union bound over episodes and steps here. By $\var[X+Y]\leq 2(\var[X]+\var[Y])$, we have that $\fV(P^k_i, \optV_k) \leq 2\fV(P^k_i, V_k - \optV_k) + 2\fV(P^k_i, V_k)$ and thus
\begin{align*}
    \sumkp\sum_{i=1}^{I_k} |(P^k_i - \P^k_i)V_k| \leq \sumkp\sumi \rbr{\sqrt{\frac{\fV(P^k_i, V_k)\iota'}{\N^k_i}} + \sqrt{\frac{\aS_{L(1+\epsilon)}\fV(P^k_i, V_k-\optV_k)\iota'}{\N^k_i}} + \frac{L\aS_{L(1+\epsilon)}\iota'}{\N^k_i}}.
\end{align*}
Then, by Cauchy-Schwarz inequality, with the same probability and \pref{lem:sum N},
\begin{align*}
    \sumkp\sum_{i=1}^{I_k} |(P^k_i - \P^k_i)V_k|
    \lesssim \sqrt{\aS_{L(1+\epsilon)}A\sumkp\sumi  \fV(P^k_i, V_k)\iota''} + \sqrt{{\aS_{L(1+\epsilon)}}^2A\sumkp\sumi \fV(P^k_i, V_k-\optV_k)\iota''} + L{\aS_{L(1+\epsilon)}}^2A\iota'',
\end{align*}
where $\iota'' = O(\log(SALC_{K'}/\delta)\log(C_{K'}))$. Now by \pref{lem:sum dV.easy}, with probability at least $1-2\delta$,
\begin{align*}
    \sumkp\sumi \fV(P^k_i,V_k^\star - V_k) \lesssim L\sumkp\sumi |(P^k_i-\P^k_i)V_k| + L\sqrt{\aS_{L(1+\epsilon)}A\sumkp\sumi \fV(P^k_i, V_k)\iota'} + L^2{\aS_{L(1+\epsilon)}}^{2}A\iota',
\end{align*}
where $\iota'$ was defined above. Let $Z_K := \sumkp\sum_{i=1}^{I_k} |(P^k_i - \P^k_i)V_k|$. Plugging this into the previous inequality, using $\sqrt{xy} \leq x + y$ and $\iota'\leq\iota''$, we get
\begin{align*}
    Z_{K'}
    \lesssim 
    \sqrt{{\aS_{L(1+\epsilon)}}^2AL \iota''Z_{K'}}
     +
    \sqrt{\aS_{L(1+\epsilon)}A\sumkp\sumi \fV(P^k_i, V_k)\iota''} + L{\aS_{L(1+\epsilon)}}^2A\iota''.
\end{align*}
Solving thi quadratic inequality for $Z_{K'}$, we conclude with
\begin{align*}
    \sumkp\sum_{i=1}^{I_k} |(P^k_i - \P^k_i)V_k|
    \lesssim 
    \sqrt{\aS_{L(1+\epsilon)}A\sumkp\sumi \fV(P^k_i, V_k)\iota''} + L{\aS_{L(1+\epsilon)}}^2A\iota''.
\end{align*}

\paragraph{Third term}

By the expressions of $\tilP_i^k$ and $\P_i^k$ (cf. \pref{alg:VISGO}) and \pref{lem:sum N},
\begin{align}\label{eq:regret-term3}
    \sumkp\sumi(\P^k_i - \tilP^k_i)V_k\one_k \leq \sumkp\sumi\one_k \frac{(\P_i+\Ind_g)V_k}{\N_i^k+1} \lesssim L\aS_{L(1+\epsilon)}A \log(C_{K'}).
\end{align}

\paragraph{Fourth and fifth term}

By \pref{lem:sum b} and \pref{lem:sum eps}, with probability at least $1-\delta$,
    \begin{align}\label{eq:regret-term4}
        \sumkp\sumi (b^k_i+\epsilon_k)\one_k \lesssim \sqrt{\aS_{L(1+\epsilon)}A\sumkp\sumi \fV(P^k_i, V_k)\iota'} + L{\aS_{L(1+\epsilon)}}^{1.5}A\iota'.
    \end{align}

\paragraph{Combining all terms}

Note that all the derived bounds can be absorbed into the one of the second term. Plugging everything back to our initial expression of the regret,
\begin{align*}
    \sumkp\rbr{I_k - V_k(s_0)}\one_k 
    &\lesssim \sqrt{\aS_{L(1+\epsilon)}A\sumkp\sumi\fV(P^k_i, V_k)\iota''}  + L{\aS_{L(1+\epsilon)}}^2A\iota''
    \\ &\lesssim \sqrt{L\aS_{L(1+\epsilon)}AC_{K'}\iota''} + L{\aS_{L(1+\epsilon)}}^2A\iota''. \tag{\pref{lem:sum var Vk}}
\end{align*}
    Note that $\iota'' \lesssim \log(SAL/\delta) (\log C_{K'})^2$. Now assuming $\one_k=1$ for all $k$, we can solve an inequality to find $C_{K'}$. First, using that $\log(x) \leq x^\alpha/\alpha$ for any $x,\alpha > 0$ together with the derived regret bound, we can find the crude bound on $C_{K'}$,
    \begin{align*}
        C_{K'} \lesssim \left(\sumkp V_k(s_0) + L{\acalS_{L(1+\epsilon)}}^2A\log(SAL/\delta) \right)^4 \leq \left(K'L + L{\acalS_{L(1+\epsilon)}}^2A\log(SAL/\delta) \right)^4.
    \end{align*}
    This implies that $\iota'' \lesssim (\log K')^2 \log(SAL/\delta)^3$. Plugging this into the regret bound, we get a quadratic inequality in $C_{K'}$. Solving it yields
    \begin{align*}
        C_{K'} \lesssim \sumkp V_k(s_0) + L{\aS_{L(1+\epsilon)}}^2A(\log K')^2 \log(SAL/\delta)^3 \leq LK' + L{\aS_{L(1+\epsilon)}}^2A(\log K')^2 \log(SAL/\delta)^3.
    \end{align*}
    Plugging this back into the regret bound gives the stated bound. Throughout the proof we used following events with the corresponding probabilities:
    \begin{itemize}
        \item \pref{lem:calK id.easy}: $1-5\delta$
        \item \pref{lem:calK.easy}: $1-\delta$
        \item Event $E$ in this proof: $1-\delta$
        \item \pref{lem:anytime freedman}: $1-\delta$
        \item \pref{lem:dPV}: $1-\delta$
        \item \pref{lem:sum b}: $1-\delta$
        \item \pref{lem:sum dV.easy}: $1-2\delta$
        \item \pref{lem:sum var Vk}: $1-2\delta$
    \end{itemize}
    A union bound concludes the proof.
\end{proof}

\subsection{Auxiliary results for policy evaluation}

\begin{lemma}
    \label{lem:sum dV.easy}
    With probability at least $1-2\delta$, for any $K'\in[K]$, if 1) $\norm{V_k}_{\infty}=\bigo{L}$ for any $k\in[K']$, and 2) $V_k(s)\leq\optV_k(s)$ for any $k\in[K']$ and $s\in\calS$, then
    \begin{align*}
        \sumkp\sumi \fV(P^k_i,V_k^\star - V_k) \lesssim L\sumkp\sumi |(P^k_i-\P^k_i)V_k| + L\sqrt{\aS_{L(1+\epsilon)}A\sumkp\sumi \fV(P^k_i, V_k)\iota'} + L^2{\aS_{L(1+\epsilon)}}^{2}A\iota',
    \end{align*}
    where $\iota' = O(\log(SALC_{K'}/\delta))$.
\end{lemma}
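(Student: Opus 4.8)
The plan is to control the pointwise gap $W_k := \optV_k - V_k$, which is nonnegative by hypothesis 2 and satisfies $\|W_k\|_\infty = \bigo{L}$ by hypothesis 1. The starting point is a one-step recursion valid for every $s^k_i \in \calK_k$: the Bellman optimality inequality applied to the \emph{executed} action gives $\optV_k(s^k_i) \leq 1 + P^k_i \optV_k$, while the \VISGO update (together with its stopping rule $\|V^{(l)}-V^{(l-1)}\|_\infty \le \epsilon_k$ and the clipping $Q=\max\{0,\cdot\}$, which only \emph{increases} $V_k$) yields the robust lower bound $V_k(s^k_i) \geq 1 + \tilP^k_i V_k - b^k_i - \epsilon_k$. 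Subtracting and rewriting $P^k_i\optV_k - \tilP^k_i V_k = P^k_i W_k + (P^k_i-\tilP^k_i)V_k$ gives $W_k(s^k_i) \leq P^k_i W_k + \eta^k_i$, with $\eta^k_i := (P^k_i-\tilP^k_i)V_k + b^k_i + \epsilon_k$. For $s^k_i \notin \calK_k$ the executed action is \reset, so $P^k_i = \Ind_{s_0}$ is a point mass and $\fV(P^k_i, W_k)=0$; hence only steps inside $\calK_k$ contribute.

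Next I would decompose $\fV(P^k_i, W_k) = P^k_i(W_k^2) - (P^k_i W_k)^2$. Since $P^k_i(W_k^2) = \E[W_k(s^k_{i+1})^2 \mid \calF^k_i]$, the anytime Freedman inequality (\pref{lem:anytime freedman}) replaces $\sum_{k,i} P^k_i(W_k^2)$ by $\sum_{k,i} W_k(s^k_{i+1})^2$ up to a martingale remainder. Telescoping along each episode and using $W_k(g_k)=0$ gives $\sum_i W_k(s^k_{i+1})^2 \leq \sum_i W_k(s^k_i)^2$. Then, squaring the nonnegative bound $0 \le W_k(s^k_i) \le P^k_i W_k + \eta^k_i$, I obtain $W_k(s^k_i)^2 - (P^k_i W_k)^2 \le 2(P^k_i W_k)\eta^k_i + (\eta^k_i)^2 \lesssim L|\eta^k_i|$, using $P^k_i W_k \le \|W_k\|_\infty = \bigo{L}$ and $|\eta^k_i| = \bigo{L}$. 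Summing yields $\sum_{k,i}\fV(P^k_i, W_k) \lesssim L\sum_{k,i}|\eta^k_i| + (\text{martingale})$.

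To close this, I would bound $\sum_{k,i}|\eta^k_i|$ term by term: split $(P^k_i-\tilP^k_i)V_k = (P^k_i-\P^k_i)V_k + (\P^k_i-\tilP^k_i)V_k$, where \pref{lem:sum N} controls the smoothing error by $\lesssim L\aS_{L(1+\epsilon)}A\log C_{K'}$, while \pref{lem:sum b} and \pref{lem:sum eps} give $\sum_{k,i}(b^k_i+\epsilon_k) \lesssim \sqrt{\aS_{L(1+\epsilon)}A\sum_{k,i}\fV(P^k_i,V_k)\iota'} + L{\aS_{L(1+\epsilon)}}^{1.5}A\iota'$. Multiplying through by $L$ reproduces exactly the three terms of the claimed bound, the lower-order contributions $L^2\aS_{L(1+\epsilon)}A\log C_{K'}$ and $L^2{\aS_{L(1+\epsilon)}}^{1.5}A\iota'$ being absorbed into $L^2{\aS_{L(1+\epsilon)}}^2 A\iota'$.

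The main obstacle is the martingale remainder, which is self-referential. Using the Lipschitz identity $\fV(P,W^2) \le 4\|W\|_\infty^2 \fV(P,W)$ (proved from $\V(X) = \tfrac12\E(X-X')^2$ for i.i.d.\ copies) together with $\|W_k\|_\infty = \bigo{L}$, the Freedman control on the remainder is $\lesssim \sqrt{L^2\iota' \sum_{k,i}\fV(P^k_i, W_k)} + L^2\iota' = L\sqrt{\iota'\Sigma} + L^2\iota'$, where $\Sigma := \sum_{k,i}\fV(P^k_i, W_k)$ is the very quantity being bounded. This gives a self-bounding inequality $\Sigma \lesssim (\text{target terms}) + L\sqrt{\iota'\Sigma}$, which I resolve by AM-GM via $L\sqrt{\iota'\Sigma} \le \tfrac12\Sigma + \tfrac12 L^2\iota'$, moving $\tfrac12\Sigma$ to the left side and absorbing $L^2\iota'$ into the lower-order term. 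The two invoked high-probability events (\pref{lem:anytime freedman} and \pref{lem:sum b}) each hold with probability $1-\delta$, which after a union bound yields the stated $1-2\delta$.
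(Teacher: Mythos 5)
Your proposal is correct in substance and follows essentially the same route as the paper's proof: the paper likewise works with $W_k := \optV_k - V_k$, combines the Bellman inequality $\optV_k(s^k_i)\leq 1+P^k_i\optV_k$ with the VISGO lower bound $V_k(s^k_i)\geq 1+\tilP^k_i V_k-b^k_i-\epsilon_k$, splits off $(P^k_i-\P^k_i)V_k$, and invokes \pref{lem:sum N}, \pref{lem:sum b}, and \pref{lem:sum eps} for the remaining error terms. The only structural difference is packaging: the paper cites \pref{lem:sum var}, whose proof is exactly your inline Freedman-plus-self-bounding argument (using $\var[X^2]\leq 4B^2\var[X]$, i.e.\ \pref{lem:quad}, and solving the resulting quadratic in $\Sigma$).

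Two of your steps need small patches. First, the telescoping claim $\sumi W_k(s^k_{i+1})^2\leq \sumi W_k(s^k_i)^2$ is false for truncated episodes (skip rounds), where $s^k_{I_k+1}\neq g_k$: you must retain the boundary term $\sumkp W_k(s^k_{I_k+1})^2$, which the paper handles separately (its term (a)) by noting it is $\bigo{L^2}$ times the number of skip rounds, hence $\lesssim L^2\aS_{L(1+\epsilon)}A\log C_{K'}$, absorbed into $L^2{\aS_{L(1+\epsilon)}}^{2}A\iota'$. Second, your bound $(\eta^k_i)^2\lesssim L|\eta^k_i|$ rests on $|\eta^k_i|=\bigo{L}$, but $b^k_i$ can be as large as order $L\iota_{s,a}$ when $\N^k_i$ is small (take $n^+(s,a)=1$ in the definition of $b^{(i)}$ in \pref{alg:VISGO}), so squaring costs an extra logarithmic factor beyond the stated $\iota'=O(\log(SALC_{K'}/\delta))$. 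The paper avoids the squared term altogether by factoring $a^2-b^2=(a+b)(a-b)_+$ with $a=W_k(s^k_i)$ and $b=P^k_iW_k$, which yields $L\,(\eta^k_i)_+$ directly; substituting that factorization for your squaring step makes your argument match the lemma's stated logarithmic order exactly.
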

\begin{proof}

    First note that, by Condition 1) and 2), for any $s\in\calS$, $V_k^\star(s) - V_k(s) \geq 0$ and $V_k^\star(s) - V_k(s) \leq O(L)$. Thus, by \pref{lem:sum var}, with probability at least $1-\delta$, 
    \begin{align*}
        \sumkp\sumi \fV(P^k_i,V_k^\star - V_k) \lesssim \underbrace{\sumkp (V_k^\star(s^k_{I_k+1}) - V_k(s^k_{I_k+1}))^2}_{(a)} + \underbrace{\sumkp\sumi \rbr{(\optV_k(s^k_i) - V_k(s^k_i))^2 - (P^k_i(\optV_k-V_k))^2}}_{(b)}  + L^2\iota,
    \end{align*}
    where $\iota = O(\log(LC_{K'}/\delta))$.

    \paragraph{Bounding (a)}

   Note that, since $\optV_k(g_k)=V_k(g_k)=0$, we must have $(a) \leq \sumkp\Ind\{s^k_{I_k+1}\neq g\}$. Since the event $\{s^k_{I_k+1}\neq g\}$ happens only in skip rounds, it must be that $(a) \lesssim \aS_{L(1+\epsilon)}A$. 
    
    \paragraph{Bounding (b)}

    Using that $V_k(s)\leq\optV_k(s)$ for all $s\in\calS$ (Condition 2), $(a+b)(a-b)_+$ for $a,b\geq 0$,
    \begin{align*}
        \sumkp\sumi \rbr{(\optV_k(s^k_i) - V_k(s^k_i))^2 - (P^k_i(\optV_k-V_k))^2}
        &\lesssim L\sumkp\sumi (\optV_k(s^k_i) - V_k(s^k_i) - P^k_i\optV_k + P^k_iV_k)_+ 
        \\
        &\lesssim L\sumkp\sumi (1 + P^k_iV_k - V_k(s^k_i))_+,
    \end{align*}
    where in the second inequality we used $\optV_k(s^k_i)\leq 1 + P^k_i\optV_k$ by definition of $\optV_k$. Since, for all $i,k$, $V_k(s^k_i) \geq 1 + \tilP_i^k V_k - b_i^k - \epsilon_k$ (cf. \pref{alg:VISGO}), we also have 
    \begin{align*}
        \sumkp\sumi \rbr{(\optV_k(s^k_i) - V_k(s^k_i))^2 - (P^k_i(\optV_k-V_k))^2}
        &\lesssim L\sumkp\sumi ((P^k_i - \tilP_i^k)V_k + b_i^k + \epsilon_k)_+
        \\ & = L\sumkp\sumi ((P^k_i-\P^k_i)V_k + (\P^k_i - \tilP_i^k)V_k + b_i^k + \epsilon_k)_+
        \\ & \leq L\sumkp\sumi (|(P^k_i-\P^k_i)V_k| + |(\P^k_i - \tilP_i^k)V_k| + b_i^k + \epsilon_k)
    \end{align*}

    All terms but the first one are bounded in \eqref{eq:regret-term3} and \eqref{eq:regret-term4}, which gives the following bound on (b) holding with probability at least $1-2\delta$,
    \begin{align*}
        (b) \lesssim L\sumkp\sumi |(P^k_i-\P^k_i)V_k| + L\sqrt{\aS_{L(1+\epsilon)}A\sumkp\sumi \fV(P^k_i, V_k)\iota'} + L^2{\aS_{L(1+\epsilon)}}^{2}A\iota',
    \end{align*}
    where $\iota' = O(\log(SALC_{K'}/\delta))$. Combining the bounds on (a) and (b) concludes the proof.
\end{proof}

\begin{lemma}
    \label{lem:bound r.easy}
    Assume that for any sequence of indicators $\calI=\{\one_k\}_k$ such that $\one_k\in\calF_{k-1}$, we have $R_{K',\calI}\lesssim c_1\sqrt{K'}\log^{p}(K')+c_2\log^{p}(K')$ and $C_{K'} \lesssim c_3 K' + \log^{p}(K')c_4$ for any $K'\in[K]$, where $c_1\geq L$ and $c_4 \gtrsim \aS_{L(1+\epsilon)}A/\epsilon$.
    Then, the total number rounds $r_{\tot}$ with at least one episode is of order
    \begin{align*}
        \frac{c_1^2}{L^2} \log^{2p}\left(\frac{c_1c_4}{\epsilon}\right) + \left(\frac{c_2\epsilon}{L} + \aS_{L(1+\epsilon)}A + \frac{c_1}{L}\sqrt{\aS_{L(1+\epsilon)}A}\right)\log^{p}\left(\frac{c_1c_2c_4}{\epsilon}\aS_{L(1+\epsilon)}A\right).
    \end{align*}
    Moreover, $C_K \lesssim \frac{c_3 r_{\tot}}{\epsilon^2} + c_4\log^p(r_\tot/\epsilon)$ with probability at least $1-4\delta$.
\end{lemma}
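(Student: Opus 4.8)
Every round that contains at least one episode is a policy-evaluation round, and I classify it as \emph{success}, \emph{skip}, or \emph{failure}; I will bound each class and then solve the self-referential inequality that couples $r_\tot = r_{\mathrm{succ}}+r_{\mathrm{skip}}+r_{\mathrm{fail}}$ to itself. The two easy counts come first. Each success round appends exactly one fresh state to $\calK'$, later merged into $\calK$, so since \pref{lem:calK.easy} guarantees $\calK\subseteq\acalS_{L(1+\epsilon)}$, at most $\aS_{L(1+\epsilon)}$ distinct states are ever added and $r_{\mathrm{succ}}\leq\aS_{L(1+\epsilon)}$. A skip round is triggered either when $\sum_{s,a}\N(s,a)$ doubles (at most $\log C_K$ times overall) or when some individual $\N(s,a)$ with $(s,a)\in\calK\times\calA$ doubles (at most $\log C_K$ times for each of the $\leq\aS_{L(1+\epsilon)}A$ pairs), so $r_{\mathrm{skip}}\lesssim\aS_{L(1+\epsilon)}A\log C_K$.

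The crux is $r_{\mathrm{fail}}$. I would work with the cumulative policy-evaluation regret $R:=\sum_{\mathrm{PE}}(I_k-V_k(s_0))$, which the hypothesis upper bounds by $c_1\sqrt{K}\log^p K + c_2\log^p K$ via the predictable all-PE indicator, and I decompose $R$ round-by-round. For a failure round, the test $\hattau=\tfrac1\lambda\sum_l I_l>V+\tfrac{\epsilon L}{2}$ firing after $j^\star\leq\lambda$ episodes forces $\sum_{l\leq j^\star}(I_l-V)>\lambda\epsilon L/2$ \emph{deterministically} (using $V:=V_{\calK,\gstar}(s_0)\geq0$ and $j^\star\leq\lambda$), so each failure round contributes at least $\lambda\epsilon L/2$ to $R$. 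The main obstacle is that ``episode $k$ lies in a failure round'' is not $\calF_{k-1}$-measurable, so I cannot feed a failure indicator into the regret hypothesis; instead I bound $R$ from above by the hypothesis and from below by summing per-round contributions. The key observation for the success/skip rounds is that \emph{the policy $\pi_{\gstar}$ is frozen throughout a round}, hence its episodes are i.i.d.\ rollouts from $s_0$ with mean length $V^{\pi_{\gstar}}_{\gstar}(s_0)\geq V$ by optimism (\pref{lem:opt}) and $\leq 4L$ by \pref{lem:calK.easy}; a per-round Bernstein bound, union bounded over the $\leq r_\tot$ rounds, gives $\sum_l(I_l-V)\geq -O(\sqrt{\lambda}L\log+L\log)$ for every such round, with the single interrupted episode of a skip round costing an extra $-2L$.

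Summing the per-round bounds yields $R\geq r_{\mathrm{fail}}\lambda\epsilon L/2-(r_{\mathrm{succ}}+r_{\mathrm{skip}})O(\sqrt\lambda L\log)-2Lr_{\mathrm{skip}}$; combined with the upper bound on $R$ and $\lambda\gtrsim1/\epsilon^2$ (so $1/(\lambda\epsilon L)\lesssim\epsilon/L$ and $\sqrt\lambda\lesssim1/\epsilon$) this gives $r_{\mathrm{fail}}\lesssim\frac{\epsilon}{L}(c_1\sqrt K+c_2)\log^p K+\aS_{L(1+\epsilon)}A\log^{O(1)}$. The delicate point is that the $\sqrt\lambda\sim1/\epsilon$ fluctuation of the success/skip rounds exactly cancels the $\epsilon$ gained from the failure gap, leaving an $\aS_{L(1+\epsilon)}A$ term rather than a spurious $\aS_{L(1+\epsilon)}A/\epsilon$ one. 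I then close the recursion: substituting $K\leq r_\tot\lambda\lesssim r_\tot/\epsilon^2$ turns $\sqrt K$ into $\sqrt{r_\tot}/\epsilon$, so $r_\tot\lesssim C+B\sqrt{r_\tot}$ with $B\asymp\frac{c_1}{L}\log^p$ and $C\asymp(\aS_{L(1+\epsilon)}A+\frac{c_2\epsilon}{L})\log^{O(1)}$; solving the quadratic via $\sqrt{r_\tot}\leq B+\sqrt C$, i.e.\ $r_\tot\lesssim B^2+C+B\sqrt C$, reproduces the three advertised terms, with the cross term $B\sqrt C$ supplying $\frac{c_1}{L}\sqrt{\aS_{L(1+\epsilon)}A}$.

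To make the logarithms explicit I would first bootstrap a crude polynomial bound on $K$ and $C_K$ (using $\log x\leq x^\alpha/\alpha$), after which $\log K\lesssim\log(c_1c_2c_4\aS_{L(1+\epsilon)}A/\epsilon)$, matching the stated $\log^{2p}$ factor on the $c_1^2/L^2$ term and the $\log^{p}$ factor on the rest. The ``moreover'' claim is then immediate: plug $K\leq r_\tot\lambda\lesssim r_\tot/\epsilon^2$ and $\log K\lesssim\log(r_\tot/\epsilon)$ into the hypothesis $C_K\lesssim c_3K+c_4\log^p K$. The probability $1-4\delta$ collects \pref{lem:opt}, \pref{lem:calK.easy}, and the union-bounded per-round concentration; the regret bounds themselves are taken as given hypotheses. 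I expect the round-wise lower bound on $R$ (turning the non-predictable failure set into a controllable decomposition through the frozen-policy i.i.d.\ structure) to be the step requiring the most care.
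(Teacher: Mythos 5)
Your round classification, the deterministic $\lambda\epsilon L/2$ lower bound for failure rounds, the counting of success/skip rounds, and the final recursion-solving all match the paper's proof. But there is a genuine gap at the step you yourself flag as the crux: to lower-bound the contribution of success/skip rounds you need $V^{\pi_{\gstar}}_{\gstar}(s_0)\geq V_{\calK,\gstar}(s_0)$, and you invoke optimism (\pref{lem:opt}) for this. That lemma explicitly requires the set $\calX$ and goal $g$ to be \emph{independent of the counter} $n$; in \pref{alg:LOGSSD} both $\calK$ and $\gstar$ are history-dependent and correlated with $\N$, so \pref{lem:opt} cannot be applied to them. The paper never establishes optimism with respect to the random set $\calK$ — it only proves \emph{restricted} optimism against the fixed layers $\calKstar_j$ (\pref{lem:V calK.easy}, via a union bound over deterministic sets plus monotonicity), and that statement compares $V_{\calK,g}$ with $\optV_{\calKstar_j,g}$, which gives no relation to $V^{\pi_{\gstar}}_{\gstar}$ since $\pi_{\gstar}$ is restricted on the larger set $\calK$. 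Indeed, VISGO's bonus is too small to dominate the deviation bound of \pref{lem:dPV} uniformly over random sets (that bound carries an extra $\Gamma$ factor), so optimism over random $\calK$ is not merely unproven but unavailable with the algorithm as designed. Without it, a success/skip round in which $\bV_r$ overestimates $V^{\bpi_r}_{\bar g_r}$ can contribute as little as $-\lambda L \approx -L/\epsilon^2$ to your all-ones-indicator regret, which swamps the $+\lambda\epsilon L/2 \approx L/\epsilon$ gained per failure round; your cancellation argument then collapses.

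The paper circumvents exactly this obstacle with a different choice of predictable indicator, which is the idea missing from your proposal. Define $\calW=\{r: V^{\barpi_r}_{\bar{g}_r}(s_0)> \bV_r(s_0)\}$; membership is determined once the round's policy and optimistic value are computed, so $\Ind\{r\in\calW\}$ is $\calF_{k-1}$-measurable and can legitimately be fed into the regret hypothesis. Then (i) failure rounds lie in $\calW$ with high probability — not by optimism, but because if $V^{\barpi_r}_{\bar g_r}(s_0)\leq \bV_r(s_0)$ then \pref{lem:V pi mean} forces $\hattau\leq \bV_r(s_0)+\epsilon L/2$, so the failure test cannot trigger; (ii) for success/skip rounds \emph{inside} $\calW$, the defining inequality $V^{\barpi_r}_{\bar g_r}(s_0)>\bV_r(s_0)$ supplies, by definition rather than by concentration over random sets, precisely the comparison you wanted from optimism, after which \pref{lem:V pi dev} gives the $-\tilde{O}(L\sqrt{\lambda})$ per-round floor; and (iii) rounds \emph{outside} $\calW$ never enter the restricted regret $R_{K',\calI}$ at all, so their (potentially very negative) contributions need not be controlled. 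Replacing your all-ones indicator with $\Ind\{r\in\calW\}$ and dropping the optimism claim turns your argument into the paper's proof; as written, the proposal does not go through.
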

\begin{proof}
	Denote by $\bV_r$, $\barpi_r$ and $\bar{g}_r$ the values of $V_{\calK,\gstar}$, $\pi_{\gstar}$, and $\gstar$ used for policy evaluation in round $r$ respectively.
	For any $R'\geq 1$, let $K'$ be the total number of episodes in the first $R'$ rounds.
    Denote by $r'_{\tot}$ the total number of rounds with at least one episode and $r_f$ the number of failure rounds within the first $K'$ episodes.
    The number of success rounds is at most $\aS_{L(1+\epsilon)}$ by \pref{lem:calK.easy} (which holds with probability $1-\delta$), and the number of skip rounds is at most $\bigo{\aS_{L(1+\epsilon)}A \log(C_{K'})}$ since we have a skip round only when the total number of steps or the  number of visits of some state-action pair in $\calK\times\calA$ is doubled.
    Therefore, $r'_{\tot}\lesssim r_f + \aS_{L(1+\epsilon)}A \log(C_{K'}) \lesssim r_f + \aS_{L(1+\epsilon)}A \log(K') + \aS_{L(1+\epsilon)}A \log(c_4)$, where the last inequality is by assumption on $C_{K'}$.

    Define $\calW=\{r: V^{\barpi_r}_{\bar{g}_r}(s_0)> \bV_r(s_0)\}$.
    Note that $\calW$ includes all failure rounds with probability at least $1-\delta$. This is because, for any round $r\geq 1$ in which $V^{\barpi_r}_{g_r}(s_0)\leq \bV_r(s_0)$ and the skip round condition is not triggered, by \pref{lem:V pi mean} and the value of $\lambda$ in \pref{alg:LOGSSD} in round $r$, we have $\hattau\leq \bV_r(s_0) + \epsilon L/2$ with probability at least $1-\frac{\delta}{2r^2}$. This implies that a success round is triggered. A union bound over all rounds proves that all failure rounds are indeed included in $\calW=\{r: V^{\barpi_r}_{\bar{g}_r}(s_0)> \bV_r(s_0)\}$ with probability at least $1-\delta$.

    Define $\calI=\{\one_k\}_k$ such that $\one_k=\Ind\{r\in\calW\}\in\calF_{k-1}$ for any episode $k$ in round $r$, the regret within these rounds satisfies 
    \begin{align*}
        R_{K,\calI} &\lesssim \left(\frac{c_1}{\epsilon}\sqrt{r_f + \aS_{L(1+\epsilon)}A \log(K') + \aS_{L(1+\epsilon)}A \log(c_4)} + c_2 \right) \log^{p}(K')
        \\ & \lesssim \left(\frac{c_1}{\epsilon}\sqrt{r_f + \aS_{L(1+\epsilon)}A \log(r_f/\epsilon) + \aS_{L(1+\epsilon)}A \log(c_4)} + c_2 \right)\left( \log(r_f/\epsilon) + \log(c_4) \right)^p
    \end{align*}
    by $K = r'_{\tot} \lambda \lesssim \frac{r'_{\tot}}{\epsilon^2}$ (since $\lambda \lesssim 1/\epsilon^2$) and $\log(K') \lesssim \log(r_f/\epsilon) + \log(\aS_{L(1+\epsilon)}A/\epsilon) \lesssim \log(r_f/\epsilon) + \log(c_4)$ by assumption on $c_4$. This shows that if we bound $r'_{\tot}$ we can also control $C_{K'}$.
    
    Now we build a lower bound to $R_{K',\calI}$.
    For each failure round $r$, let $C$ be the total number of steps within this round and $m$ the number of episodes within this round.
    By definition, the regret within this round satisfies $C-m\bV_r(s_0) \geq C-\lambda \bV_r(s_0)=\lambda(\hattau-\bV_r(s_0))>\frac{\lambda\epsilon L}{2}=\lowo{L/\epsilon}$ (since $C/\lambda = \hattau >\bV_r(s_0) + \epsilon L/2$ in a failure round).

    %
    For any round $r\geq 1$, let $m$ be its number of episodes and $C$ be the total number of steps. By \pref{lem:V pi dev}, $mV^{\bpi_r}_{\bar{g}_r}(s_0) \leq C + L\sqrt{m}\ln^2\frac{mLr}{\delta}$ with probability at least $1-\frac{\delta}{2r^2}$. By a union bound, this holds simultaneously across all rounds with probability at least $1-\delta$. Then, with such probability, for each success and skip round $r$ in $\calW$,
    \begin{align*}
        \sum_{j=u_r}^{u'_r}\rbr{I_j - \bV_r(s_0)} \geq \sum_{j=u_r}^{u'_r-1}I_j - m V^{\bpi_r}_{\bar{g}_r}(s_0) - L \gtrsim -L\sqrt{\lambda}\log^2(\frac{\lambda r L}{\delta}) \gtrsim -\frac{L}{\epsilon},
    \end{align*}
    where $\{u_r,\ldots,u'_r\}$ are the episodes in round $r$, and we lower bound the regret in the last episode by $\lowo{-L}$ since the last trajectory in a skipped round is truncated. Note that the first inequality holds since $r\in\calW$.

    Since there are at most $\bigo{\aS_{L(1+\epsilon)}A \log(C_{K'})} = \bigo{\aS_{L(1+\epsilon)}A (\log(r_f/\epsilon) + \log(c_4))}$ of these rounds, we have
    \begin{align*}
        \frac{Lr_f}{\epsilon} &- \frac{L\aS_{L(1+\epsilon)}A (\log(r_f/\epsilon) + \log(c_4))}{\epsilon} \lesssim R_{K',\calI}
        \\ & \lesssim \left(\frac{c_1}{\epsilon}\sqrt{r_f + \aS_{L(1+\epsilon)}A \log(r_f/\epsilon) + \aS_{L(1+\epsilon)}A \log(c_4)} + c_2 \right)\left( \log(r_f/\epsilon) + \log(c_4) \right)^p.
    \end{align*}
    This implies,
    \begin{align*}
        r_f &\lesssim 
        \left(\frac{c_1}{L}\sqrt{r_f} + \frac{c_2\epsilon}{L} + \aS_{L(1+\epsilon)}A + \frac{c_1}{L}\sqrt{\aS_{L(1+\epsilon)}A}\right) (\log(r_f/\epsilon) + \log(c_4))^{p}.
        \\ &\lesssim \left(\underbrace{\frac{c_1}{L}}_{:= a}\sqrt{r_f} + \underbrace{\frac{c_2\epsilon}{L} + \aS_{L(1+\epsilon)}A + \frac{c_1}{L}\sqrt{\aS_{L(1+\epsilon)}A}}_{:= b} \right)  \log(r_f \underbrace{c_4 /\epsilon}_{:= c})^{p}.
    \end{align*}
    By Lemma 28 of \cite{chen2021improved}, $a,b,c$ as defined above, 
    \begin{align*}
        r_f \lesssim \frac{c_1^2}{L^2} \log^{2p}\left(\frac{c_1c_4}{\epsilon}\right) + \left(\frac{c_2\epsilon}{L} + \aS_{L(1+\epsilon)}A + \frac{c_1}{L}\sqrt{\aS_{L(1+\epsilon)}A}\right)\log^{p}\left(\frac{c_1c_2c_4}{\epsilon}\aS_{L(1+\epsilon)}A\right).
    \end{align*}
    The proof is concluded by $r'_\tot \lesssim r_f + \aS_{L(1+\epsilon)}A \log(r_f/\epsilon) + \aS_{L(1+\epsilon)}A \log(c_4)$ as showed above and setting $K'=K$ (that is, $r'_{\tot}=r_{\tot}$).
\end{proof}

\subsection{Proof of \pref{thm:sd} and \pref{thm:sd id}}
\label{app:sd and id}

We restate and prove the two theorems together.

\begin{theorem}[Unified statement of \pref{thm:sd} and \pref{thm:sd id}]
    \label{thm:sd.easy}
    With probability at least $1-23\delta$, after collecting $N_{\tot}$ samples, \pref{alg:LOGSSD} outputs $\calK$ and $\{\tilpi_g\}_{g\in\calK}$ such that $\acalS_L \subseteq \calK\subseteq\acalS_{L(1+\epsilon)}$ and $V^{\tilpi_g}_g(s_0)\leq L(1+\epsilon)$ for all $g\in\calK$, where
    \begin{itemize}
        \item $N_{\tot} = \bigO{\frac{\aS_{L(1+\epsilon)}\Gamma_{L(1+\epsilon)}AL}{\epsilon^2}\iota  + \frac{{\aS_{L(1+\epsilon)}}^2AL}{\epsilon}\iota + L^3 {\aS_{L(1+\epsilon)}}^2A\iota}$ in the general case;
        \item $N_{\tot} = \bigO{\frac{\aS_{L(1+\epsilon)}AL}{\epsilon^2}\iota  + \frac{{\aS_{L(1+\epsilon)}}^2AL}{\epsilon}\iota + L^3 {\aS_{L(1+\epsilon)}}^2A\iota}$ with \pref{assum:id}.
    \end{itemize}
    Here $\iota = \log^{8}\left(\frac{SAL}{\epsilon\delta}\right)$.
\end{theorem}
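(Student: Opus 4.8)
The plan is to prove the two claims of the theorem — correctness of the returned sets/policies and the sample-complexity bound — separately, and then conclude by a union bound over all the high-probability events invoked.

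\textbf{Correctness.} The inclusions $\acalS_L \subseteq \calK \subseteq \acalS_{L(1+\epsilon)}$ and the policy guarantee $V^{\tilpi_g}_g(s_0)\leq L(1+\epsilon)$ follow by directly invoking the set-analysis lemmas. Specifically, \pref{lem:calK.easy} already gives $\calK_r \subseteq \acalS_{L(1+\epsilon)}$ at every round together with $V^{\tilpi_g}_g(s_0)\leq L(1+\epsilon)$ (indeed $\|V^{\tilpi_g}_g\|_\infty \leq 4L$) for every stored policy, while the second statement of \pref{lem:update calK.easy} yields $\acalS_L \subseteq \calK$ at termination. Under \pref{assum:id}, \pref{lem:calK id.easy} together with \pref{rem:id} strengthens this to $\calK = \acalS_L = \acalS_{L(1+\epsilon)}$. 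No new argument is needed here beyond citing these results.

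\textbf{Sample complexity.} I would split $N_\tot$ into the samples consumed during policy-evaluation rounds, $C_K = \sum_k I_k$, and those consumed during expansion rounds (the two \fillc calls at \pref{line:compute calU'.easy} and \pref{line:fill N.easy}). For $C_K$, the plan is to instantiate the regret bound of \pref{lem:regret.easy} (or \pref{lem:regret-improved.easy} under \pref{assum:id}) in the form required by \pref{lem:bound r.easy}, reading off $c_1 = L\sqrt{\aS_{L(1+\epsilon)}\Gamma_{L(1+\epsilon)}A}$ (resp.\ $c_1 = L\sqrt{\aS_{L(1+\epsilon)}A}$), $c_2 \asymp L\aS_{L(1+\epsilon)}^2A$, $c_3 = L$, and $c_4 \asymp L\aS_{L(1+\epsilon)}^2A$, all up to $\log$ factors. \pref{lem:bound r.easy} then bounds $r_\tot \lesssim \frac{c_1^2}{L^2} + \frac{c_2\epsilon}{L} + \aS_{L(1+\epsilon)}A + \frac{c_1}{L}\sqrt{\aS_{L(1+\epsilon)}A}$ and $C_K \lesssim \frac{c_3 r_\tot}{\epsilon^2} + c_4\log^p(r_\tot/\epsilon)$. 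Multiplying $r_\tot$ by $L/\epsilon^2$, the term $c_1^2/L^2 = \aS_{L(1+\epsilon)}\Gamma_{L(1+\epsilon)}A$ produces the leading $\frac{L\aS_{L(1+\epsilon)}\Gamma_{L(1+\epsilon)}A}{\epsilon^2}$, the term $c_2\epsilon/L$ produces the $\frac{L\aS_{L(1+\epsilon)}^2A}{\epsilon}$ term, and the remaining two terms are dominated since $\sqrt{\Gamma_{L(1+\epsilon)}}\leq\Gamma_{L(1+\epsilon)}$; the additive $c_4$ contributes at the $\tilo{L\aS_{L(1+\epsilon)}^2A}$ level. Under \pref{assum:id}, the absence of $\Gamma_{L(1+\epsilon)}$ in $c_1$ gives the sharper leading term $\frac{L\aS_{L(1+\epsilon)}A}{\epsilon^2}$.

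\textbf{Expansion rounds.} I would bound the number of expansion rounds by $O(\aS_{L(1+\epsilon)})$: each update advances the layer index by at least one by \pref{lem:update calK.easy}, and there are at most $J\leq\aS_L$ layers by \pref{lem:SL.operator}. To collect a sample from $(s,a)\in\calK\times\calA$ the algorithm replays $\tilpi_s$ to reach $s$; since $\|V^{\tilpi_s}_s\|_\infty\leq 4L$ by \pref{lem:calK.easy}, \citep[Lemma 6]{rosenberg2020adversarial} bounds the reaching time by $O(L)$ with high probability. The key observation is that the counter $\N$ \emph{persists} across rounds, so the cumulative number of samples gathered by the \fillc call at \pref{line:fill N.easy} over the whole run is governed by the \emph{final} target $\nmin \lesssim L^2\aS_{L(1+\epsilon)}$ per pair rather than by a sum over rounds; multiplying by the $\leq\aS_{L(1+\epsilon)}A$ pairs and the $O(L)$ reaching cost gives $\tilo{L^3\aS_{L(1+\epsilon)}^2A}$. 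The call at \pref{line:compute calU'.easy} uses only $O(L)$ fresh samples per pair per round, contributing at most $\tilo{L^2\aS_{L(1+\epsilon)}^2A}$, which is dominated. Summing the policy-evaluation and expansion contributions gives the stated $N_\tot$; a union bound over the invoked events (\pref{lem:calK.easy}, \pref{lem:update calK.easy}, \pref{lem:regret.easy} or \pref{lem:regret-improved.easy}, \pref{lem:bound r.easy}, and \pref{lem:calK id.easy} under \pref{assum:id}) yields the $1-23\delta$ confidence, and the various $\log$ powers collect into $\iota=\log^8(SAL/(\epsilon\delta))$.

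\textbf{Main obstacle.} The most delicate part I anticipate is the bookkeeping that turns the regret bound into the three-term expression via \pref{lem:bound r.easy}: one must check that the middle $1/\epsilon$ term truly comes from $c_2\epsilon/L$ rather than from an overcount, and that the persistence of $\N$ is exactly what keeps the expansion cost at $\tilo{L^3\aS_{L(1+\epsilon)}^2A}$ instead of blowing up to $\tilo{L^3\aS_{L(1+\epsilon)}^3A}$. Reconciling the $\log$ powers across both phases into a single $\log^8$ is a careful but otherwise routine accounting step.
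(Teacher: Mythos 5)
Your proposal is correct and follows essentially the same route as the paper's proof: correctness via \pref{lem:calK.easy} and \pref{lem:update calK.easy}, policy-evaluation cost via \pref{lem:regret.easy}/\pref{lem:regret-improved.easy} plugged into \pref{lem:bound r.easy} with the same constants, and the expansion-round cost controlled by exactly the paper's key observation that the persistent counter makes the \fillc{} samples telescope to the final $\nmin$ target, giving $\tilo{L^3{\aS_{L(1+\epsilon)}}^2A}$. The only slip is cosmetic: \pref{lem:bound r.easy} requires $c_4 \gtrsim \aS_{L(1+\epsilon)}A/\epsilon$, so your $c_4 \asymp L{\aS_{L(1+\epsilon)}}^2A$ must be inflated by $1/\epsilon$ (as the paper does), which changes nothing in the final bound.
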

\begin{proof}
    By \pref{lem:calK.easy} and \pref{lem:update calK.easy}, with probability $1-4\delta$, the output $\calK$ and $\{\tilpi_g\}_{g\in\calK}$ clearly satisfy the first statement.

    Let us bound the sample complexity.
    Each round can be classified into one of the following cases: 1) expansion of the sets (\pref{line:goal condition.easy} is true), and 2) policy evaluation is performed (from \pref{line:PE.easy}, so \pref{line:goal condition.easy} is false). Note that the sample complexity of case 2 is given by $C_K$. We shall bound it later.

    In case 1), the algorithm terminates or at least one state is added into $\calK$.
    Thus, the number of rounds satisfying case 1) in each trial is at most $1+\aS_{L(1+\epsilon)}$ by \pref{lem:calK.easy}. In a round satisfying case 1), if the algorithm terminates, then no samples are collected.
    Otherwise, \pref{line:compute calU'.easy} and \pref{line:fill N.easy} are executed. Take any round $r$ in which this happens and denote by $\calK_r$ the set $\calK$ at the end of round $r$. Note that \pref{line:fill N.easy} collects at most $O(L^2|\calK_r|\ln(Sr/\delta))$ for each $s\in\calK_r$ and $a\in\calA$, while \pref{line:compute calU'.easy} collects $O(L\log(SALr/\delta))$ samples from each state $s\in\calK_r$ and $a\in\calA$, so the total number of samples collected from each $s\in\calK_r$ and $a\in\calA$ is at most $n_r = O(L^2|\calK_r|\ln(SALr/\delta))$.

    Since, by \pref{lem:calK.easy}, at any round $r$, $\|V^{\tilde\pi_g}_{g}\|_{\infty} \leq 4L$ for each $g\in\calK_r$, by \pref{lem:hitting}, with probability $1-\delta'$  it
    takes no more than $8L\log(2/\delta')$ steps to reach the goal state $g$ following $\tilde\pi_g$.
    Therefore, by setting $\delta'=\frac{\delta}{2r^2|\calK_r||\calA|n_r}$, with probability $1-\frac{\delta}{2r^2}$, all trajectories in round $r$ reach the goal within $8L\log(2/\delta')$ steps. Then, by a union bound over all rounds, with probability at least $1-\delta$, the total sample complexity is $\tilo{L^3 |\calK_r|^2|\calA|\log^2(SALr/\delta)}$ at any round $r$.

    Note that, among these samples, only $\tilo{L |\calK_r||\calA|\log^2(SALr/\delta)}$ cumulate over rounds. This is because the sampling of \pref{line:fill N.easy} is performed only if the current counters are below the sampling requirement. Since the number of rounds in case 1) is at most $1+\aS_{L(1+\epsilon)}$ and the total number of rounds $R$ performed by the algorithm satisfies $R \leq r_\tot + \aS_{L(1+\epsilon)} + 1$ (by summing the rounds in both cases) and $|\calK_r| \leq \aS_{L(1+\epsilon)}$ by \pref{lem:calK.easy}, we have that \pref{line:fill N.easy} contributes to at most $\tilo{L {\aS_{L(1+\epsilon)}}^2A\log^2(SALr_\tot/\delta)}$ sample complexity and the total sample complexity of Case 1) is thus $\tilo{L^3 {\aS_{L(1+\epsilon)}}^2A\log^2(SALr_\tot/\delta)}$.

    We now conclude the sample complexity proof depending on whether \pref{assum:id} is considered or not.

    \paragraph{Without \pref{assum:id}}

    Plugging the regret bound of \pref{lem:regret.easy} into \pref{lem:bound r.easy}, using $p=2$, $c_1 = L \log(SAL/\delta)^2 \sqrt{\aS_{L(1+\epsilon)}\Gamma_{L(1+\epsilon)}A}$, $c_2 = L{\aS_{L(1+\epsilon)}}^2A \log(SAL/\delta)^3$, $c_3 = L$, $c_4 = L{\aS_{L(1+\epsilon)}}^2A \log(SAL/\delta)^3 / \epsilon$,
    \begin{align*}
        r_\tot &\lesssim \left( \log(SAL/\delta)^4 \aS_{L(1+\epsilon)}\Gamma_{L(1+\epsilon)}A  + {\aS_{L(1+\epsilon)}}^2A \log(SAL/\delta)^3 \epsilon + \log(SAL/\delta)^2 \aS_{L(1+\epsilon)}\sqrt{\Gamma_{L(1+\epsilon)}}A \right) \log^{4}\left(\frac{SAL}{\epsilon}\right)
        \\ &\lesssim \left(\aS_{L(1+\epsilon)}\Gamma_{L(1+\epsilon)}A  + {\aS_{L(1+\epsilon)}}^2A \epsilon \right) \log^{8}\left(\frac{SAL}{\epsilon\delta}\right)
    \end{align*}
    and
    \begin{align*}
        C_K &\lesssim \frac{L}{\epsilon^2}\left(\aS_{L(1+\epsilon)}\Gamma_{L(1+\epsilon)}A  + {\aS_{L(1+\epsilon)}}^2A \epsilon \right) \log^{8}\left(\frac{SAL}{\epsilon\delta}\right) + \frac{L{\aS_{L(1+\epsilon)}}^2A}{\epsilon}  \log^{5}\left(\frac{SAL}{\epsilon\delta}\right),
        \\ &\lesssim  \left(\frac{\aS_{L(1+\epsilon)}\Gamma_{L(1+\epsilon)}AL}{\epsilon^2}  + \frac{{\aS_{L(1+\epsilon)}}^2AL}{\epsilon} \right) \log^{8}\left(\frac{SAL}{\epsilon\delta}\right).
    \end{align*}
    Thus, the total sample complexity of the algorithm (which is given by $C_K$ plus the sample complexity of case 1) is
    \begin{align*}
        \left(\frac{\aS_{L(1+\epsilon)}\Gamma_{L(1+\epsilon)}AL}{\epsilon^2}  + \frac{{\aS_{L(1+\epsilon)}}^2AL}{\epsilon} + L^3 {\aS_{L(1+\epsilon)}}^2|\calA|\right) \log^{8}\left(\frac{SAL}{\epsilon\delta}\right).
    \end{align*}

    \paragraph{With \pref{assum:id}}

    Plugging the regret bound of \pref{lem:regret-improved.easy} into \pref{lem:bound r.easy}, using $p=2$, $c_1 = L \log(SAL/\delta)^2 \sqrt{\aS_{L(1+\epsilon)}A}$, $c_2 = L{\aS_{L(1+\epsilon)}}^2A \log(SAL/\delta)^3$, $c_3 = L$, $c_4 = L{\aS_{L(1+\epsilon)}}^2A \log(SAL/\delta)^3 / \epsilon$,
    \begin{align*}
        r_\tot &\lesssim \left( \log(SAL/\delta)^4 \aS_{L(1+\epsilon)}A  + {\aS_{L(1+\epsilon)}}^2A \log(SAL/\delta)^3 \epsilon + \log(SAL/\delta)^2 \aS_{L(1+\epsilon)}\sqrt{\Gamma_{L(1+\epsilon)}}A \right) \log^{4}\left(\frac{SAL}{\epsilon}\right)
        \\ &\lesssim \left(\aS_{L(1+\epsilon)}A  + {\aS_{L(1+\epsilon)}}^2A \epsilon \right) \log^{8}\left(\frac{SAL}{\epsilon\delta}\right)
    \end{align*}
    and
    \begin{align*}
        C_K &\lesssim \frac{L}{\epsilon^2}\left(\aS_{L(1+\epsilon)}A  + {\aS_{L(1+\epsilon)}}^2A \epsilon \right) \log^{8}\left(\frac{SAL}{\epsilon\delta}\right) + \frac{L{\aS_{L(1+\epsilon)}}^2A}{\epsilon}  \log^{5}\left(\frac{SAL}{\epsilon\delta}\right),
        \\ &\lesssim  \left(\frac{\aS_{L(1+\epsilon)}AL}{\epsilon^2}  + \frac{{\aS_{L(1+\epsilon)}}^2AL}{\epsilon} \right) \log^{8}\left(\frac{SAL}{\epsilon\delta}\right).
    \end{align*}
    Thus, the total sample complexity of the algorithm (which is given by $C_K$ plus the sample complexity of case 1) is
    \begin{align*}
        \left(\frac{\aS_{L(1+\epsilon)}AL}{\epsilon^2}  + \frac{{\aS_{L(1+\epsilon)}}^2AL}{\epsilon} + L^3 {\aS_{L(1+\epsilon)}}^2|\calA|\right) \log^{8}\left(\frac{SAL}{\epsilon\delta}\right).
    \end{align*}
    A union bound over the events of adopted lemmas (\pref{lem:calK.easy}, \pref{lem:update calK.easy}, Lemma 6 of \cite{rosenberg2020adversarial}, \pref{lem:bound r.easy}, and \pref{lem:regret.easy} without \pref{assum:id} or \pref{lem:regret-improved.easy} with \pref{assum:id}) yields the result with probability at least $1-23\delta$.
\end{proof}

\clearpage

\section{Analysis of \pref{alg:SD}}
\label{app:logsa}


\begin{algorithm2e*}[t]
    \caption{Improved Layer-Aware State Discovery (\algop)}
    \label{alg:SD}
    \SetKwProg{proc}{Procedure}{}{}
    \SetKwFunction{add}{ComputeU}
    \DontPrintSemicolon
    \LinesNumbered
    \KwIn{$L\geq1$, $\epsilon\in(0, 1]$, and $\delta\in(0, 1)$.}
    Let $\tau\leftarrow 1$, $\frakN=\{2^j\}_{j\geq 0}$, $z\leftarrow 2$.\label{line:trial}\;
    \While{True}{\label{line:size.improved}
        Let $\calK\leftarrow \varnothing, \calU \leftarrow \varnothing$, $\calK'\leftarrow \{s_0\}$, $\Pi_{\calK} = \{\tilpi_{s_0}\text{ a random policy}\}$, $\N(\cdot, \cdot)\leftarrow 0, \N(\cdot,\cdot,\cdot) \leftarrow 0$, $\nmin\leftarrow 1$, $k\leftarrow 0$.\;
        \For{round $r=1,\ldots$}{\label{line:round.improved}
            \lIf{$|\calK\cup\calK'| \geq z$}{$z\leftarrow 2|\calK\cup\calK'|$, $\tau\overset{+}{\leftarrow}1$, and return to \pref{line:size.improved}.}\label{line:z.improved}
            $\epsilon_{\VI}\leftarrow 1/\max\{16, \sum_{s,a}\N(s,a)\}$.\;
            Let $\gstar=\argmin_{g\in\calU}\big\{V_{\calK,g}(s_0)\big\}$ where $(Q_{\calK,g}, V_{\calK,g}, \pi_g)=\VISGO(\calK, g, \epsilon_{\VI}, \N, \frac{\delta}{4\tau^2z^4AL})$ (see \pref{alg:VISGO}).\label{line:compute V.improved}\;
            \uIf{$\gstar$ does not exist or $V_{\calK,\gstar}(s_0)>L$}{\label{line:goal condition.improved}
                \tcc{Expand or Terminate}
                \lIf{$\calK'=\varnothing$}{\textbf{return} $\calK$ and $\Pi_{\calK}$.}\label{line:terminate.improved}
                Set $\calK\leftarrow\calK\cup\calK'$, $\calK'=\varnothing, \calU=\varnothing$.
                \;
                $\calU \leftarrow $\add{$\calK$, $\Pi_{\calK}$, $\frac{\delta}{4\tau^2r^2}$}.\label{line:add}
            }
            \uElseIf{$\rtest(\Pi_{\calK}, \pi_{\gstar}, \gstar, \frac{\delta}{4(\tau r)^2})=$ \textbf{False} (see \pref{alg:rtest})}{\label{line:rtest.improved}
                $\nmin\leftarrow 2\nmin$.\;
                $(\N,\_) \leftarrow \fillc(\calK, \Pi_{\calK},\N,\nmin)$ (see \pref{alg:fillc}).
            }\Else{
                \tcc{Policy evaluation}
                Let $\hattau\leftarrow 0$, $\lambda\leftarrow N_{\dev}(32L, \frac{\epsilon}{256}, \frac{\delta}{2 r^2})\lesssim \frac{1}{\epsilon^2}\ln^4(\frac{Lr}{\epsilon\delta})$ (defined in \pref{lem:V pi mean}).\label{line:PE.improved}
                
                \For{$j=1,\ldots,\lambda$}{\label{line:episode.improved}
                    $k\overset{+}{\leftarrow}1$, $i\leftarrow 1$, and reset to $s^k_1\leftarrow s_0$ by taking action $\reset$.\;
                    \While{$s^k_i\neq \gstar$}{
                        Take $a^k_i=\pi_{\gstar}(s^k_i)$, and transits to $s^k_{i+1}$.
                        Increase $\N(s^k_i, a^k_i)$, $\N(s^k_i, a^k_i, s^k_{i+1})$, and $i$ by $1$.\;
                        \lIf{$\sum_{s,a}\N(s,a)\in\frakN$ or ($s^k_i\in\calK$ and $\N(s^k_i, a^k_i)\in\frakN$)}{ return to \pref{line:round.improved} (skip round).\label{line:skip.improved}}
                        Set $\hattau\overset{+}{\leftarrow} \frac{c(s^k_i, a^k_i)}{\lambda}$.
                    }
                    \lIf{$\hattau>V_{\calK,\gstar}(s_0) + \epsilon L/2$}{ return to \pref{line:round.improved} (failure round).  \label{line:failure.improved}}
                }
                $\calK'\leftarrow\calK'\cup\{\gstar\}$, $\calU\leftarrow\calU\setminus\{\gstar\}$, $\Pi_{\calK}=\Pi_{\calK} \cup \{\tilpi_{\gstar}:=\pi_{\gstar}\}$ 
                (success round).
            }
        }
    }
    \proc{\add{$\calX$, $\Pi_{\calX}$, $\delta$}}{
        $(\_,\calU')\leftarrow\fillc(\calX,\Pi_{\calX}, 0, 2L\ln\frac{4LA|\calX|}{\delta})$ (see~\pref{alg:fillc}). \label{line:compute calU'.improved}\;
        $(\N', \_) \leftarrow \fillc(\calX,\Pi_{\calX}, 0, N_1(|\calX|, \frac{\delta}{4|\calU'|}))$ where $N_1$ is defined in \pref{lem:bounded error fresh}.\label{line:gather N'.improved}\;
        Let $\calU=\{g\in\calU': V'_{\calX,g}(s_0)\leq L\}$ where $(\_,V'_{\calX,g},\pi'_g)=\VISGO(\calX,g,\frac{1}{16},\N',\frac{\delta}{4|\calU'|})$.\label{line:filter calU'.improved}\;
        \Return{$\calU$}
    }
\end{algorithm2e*}

\paragraph{Notation} Define $\calN(\calK, p)=\{s'\notin\calK: P(s'|s,a)\geq p\text{ for some }(s, a)\in\calK\times\calA \}$.
Fix any ordering $\acalO_L=(s_1,\ldots,s_n)$ of states in $\acalS_L$ such that it can be partitioned into $J$ (defined in \pref{lem:SL.operator}) segments with states in the $j$-th segment belonging to $\calKstar_j\setminus\calKstar_{j-1}$.
For an arbitrary $z\in\fN_+$, also define $\{\calKstar_{z,j}\}_j$, such that $\calKstar_{z,j}=\calKstar_j$ when $|\calKstar_j|< z$, and $\calKstar_{z,j}=\{s_1,\ldots,s_z\}$ when $|\calKstar_j|\geq z$.
Therefore, $\calKstar_{z,z}=(s_1,\ldots,s_{z})$ (the first $z$ elements of $\acalO_L$) or $\acalS_L$ by definition.
Define $\calUstar_z=\rS{\calKstar_{z,z}}{2L}$.
Clearly, $\calUstar_z\subseteq\{s'\in\calS: \exists s\in \calKstar_{z,z}, a\in\calA, P(s'|s, a)\geq \frac{1}{2L}\}$, and thus $|\calUstar_z|\leq 2zAL$.

\subsection{\pfref{thm:sd id improved}}
\label{app:sd id improved}
\begin{proof}
    We condition on the events of \pref{lem:output}, \pref{lem:calU}, and \pref{lem:calK}, which happen with probability at least $1-7\delta$.
    By the events of \pref{lem:calK} and \pref{lem:output}, the output $\calK$ and $\Pi_{\calK}=\{\tilpi_g\}_{g\in\calK}$ clearly satisfy the statement.
    By \pref{lem:bound z}, there are at most $\bigo{\ln\aS_{L(1+\epsilon)}}$ trials.
    Thus, it suffices to bound the number of samples used in each trial.
    Define $\iota=\ln\frac{L\aS_{L(1+\epsilon)}A}{\delta\epsilon}$.
    Each round in a trial can be classified into one of the following cases: 1) \pref{line:goal condition.improved} is verified, 2) \pref{line:rtest.improved} is verified, and 3) policy evaluation is performed (\pref{line:PE.improved}).
    In case 1), the algorithm terminates or at least one state is added into $\calK$ (\pref{line:terminate.improved}).
    Thus, the number of rounds satisfying case 1) in each trial is at most $1+\aS_{L(1+\epsilon)}$ by \pref{lem:calK}.
    By \pref{lem:bound nmin} and the update rule of $n_{\min}$, the number of rounds satisfying case 2) is of order $\bigo{\ln(L\aS_{L(1+\epsilon)})}$.
    By \pref{lem:bound r} and \pref{lem:regret}, with probability at least $1-8\delta$, the total number of rounds satisfying case 3) is of order $\bigo{ \aS_{L(1+\epsilon)}\Gamma_{L(1+\epsilon)}A\iota^6 + {\aS_{L(1+\epsilon)}}^2A\epsilon\iota^6}$.
    So the total number of rounds in each trial is at most $\bigo{ \aS_{L(1+\epsilon)}\Gamma_{L(1+\epsilon)}A\iota^6 + {\aS_{L(1+\epsilon)}}^2A\epsilon\iota^6 }$.
    
    Now it suffices to bound the number of samples collected in a round satisfying each of the cases above in a trial.
    In a round satisfying case 1), if the algorithm terminates, then no samples are collected.
    Otherwise, \add is called, and $\bigo{L^3{\aS_{L(1+\epsilon)}}^2A\iota^2}$ samples are collected with probability at least $1-\delta$ by \pref{lem:calU each} (\pref{line:add} and a union bound over all trials and rounds).
    In a round satisfying case 2), with probability at least $1-4\delta$, $\bigo{L\aS_{L(1+\epsilon)}\iota^2}$ samples are collected in performing \rtest by \pref{lem:output} and \pref{lem:rtest} (\pref{line:rtest.improved} and a union bound over all trials and rounds), and $\bigo{L^3{\aS_{L(1+\epsilon)}}^2A\iota^2}$ samples are collected in executing \fillc by \pref{lem:bound nmin} and \pref{lem:sc fillc}.
    In a round satisfying case 3), with probability at leat $1-\delta$, $\bigo{L\aS_{L(1+\epsilon)}\iota^2}$ samples are collected in performing \rtest similar to that of case 2), and $\bigo{L\iota^5/\epsilon^2}$ samples are collected by the value of $\lambda$ and the fact that $\pi_{\gstar}$ passes the test in \pref{line:rtest.improved} (\pref{lem:rtest} and a union bound over all trials and rounds).
    Thus, the total sample complexity is
    \begin{align*}
        &\sum_{i=1}^3\text{[\#rounds satisfying case $i$}]\cdot[\text{\#samples in a round satisfying case $i$}]\cdot\iota\\
        &\lesssim \aS_{L(1+\epsilon)}\cdot L^3{\aS_{L(1+\epsilon)}}^2A\iota^3 + L^3{\aS_{L(1+\epsilon)}}^2A\iota^4 + (\aS_{L(1+\epsilon)}\Gamma_{L(1+\epsilon)}A + {\aS_{L(1+\epsilon)}}^2A\epsilon)\cdot\rbr{\frac{L}{\epsilon^2}+L\aS_{L(1+\epsilon)}}\iota^{12}\\
        &\lesssim \rbr{\frac{L\aS_{L(1+\epsilon)}\Gamma_{L(1+\epsilon)}A}{\epsilon^2} + \frac{L{\aS_{L(1+\epsilon)}}^2A\epsilon}{\epsilon} + L^3{\aS_{L(1+\epsilon)}}^3A}\iota^{12}.
    \end{align*}
    This completes the proof.
    To prove the second statement, we can simply follow the proof above except that we involve \pref{lem:regret-improved} instead of \pref{lem:regret} when applying \pref{lem:bound r} to bound the total number of rounds satisfying case 3), which holds with probability at least $1-20\delta$.
\end{proof}

\begin{lemma}
    \label{lem:bound nmin}
    With probability at least $1-2\delta$, if the events of \pref{lem:calK} and \pref{lem:bcalU} hold, then $\nmin\lesssim L^2\aS_{L(1+\epsilon)}\ln\aS_{L(1+\epsilon)}$ throughout the execution of \pref{alg:SD}.
\end{lemma}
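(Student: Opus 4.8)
The plan is to bound $\nmin$ by twice the sample threshold $N^{\rightarrow}_0$ of \pref{lem:bounded error}: using the bound $|\bcalU|\le 2LA\aS_{L(1+\epsilon)}$ supplied by \pref{lem:bcalU}, the instantiation $N^{\rightarrow}_0=N_0(\aS_{L(1+\epsilon)},|\bcalU|,\delta,\delta)$ satisfies $N^{\rightarrow}_0\lesssim L^2\aS_{L(1+\epsilon)}\ln\aS_{L(1+\epsilon)}$, which is exactly the target order. Since $\nmin$ is reset to $1$ at the start of each trial and is only ever doubled, it suffices to show that within a single trial the doubling stops as soon as $\nmin\ge N^{\rightarrow}_0$; a union bound over the $\bigo{\ln\aS_{L(1+\epsilon)}}$ trials of \pref{lem:bound z} then gives the bound throughout the run. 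I condition on the given events of \pref{lem:calK} (so $\calK\subseteq\acalS_{L(1+\epsilon)}$, hence $|\calK|\le\aS_{L(1+\epsilon)}$ at every round) and \pref{lem:bcalU} (so every candidate goal $\gstar$ lies in $\calU\subseteq\bcalU$), and additionally invoke the good event of \pref{lem:bounded error} with $z_0=\aS_{L(1+\epsilon)}$, $z'_0=|\bcalU|$, together with the completeness direction of \pref{lem:rtest}; a union bound over these two events accounts for the stated $1-2\delta$ probability.

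The key structural observation is that the policy $\pi_{\gstar}$ returned by $\VISGO(\calK,\gstar,\cdot)$ in \pref{line:compute V.improved} is restricted on $\calK$, so along any trajectory it plays only state-action pairs in $\calK\times\calA$ until it enters $\gstar$ (outside $\calK$ it plays $\reset$, and $\gstar$ is entered via a single transition out of a state of $\calK$). Consequently, whenever $\N(s,a)\ge N^{\rightarrow}_0$ for all $(s,a)\in\calK\times\calA$, \pref{lem:bounded error} applies with $\calX=\calK\subseteq\acalS_{L(1+\epsilon)}$ and $g=\gstar\in\bcalU$ and yields $V^{\pi_{\gstar}}_{\gstar}(s)\le 2V_{\calK,\gstar}(s)\le 4L$ for all $s$, where the last inequality uses that $\VISGO$ caps its value at $2L$ and that \pref{line:goal condition.improved} was False, i.e.\ $V_{\calK,\gstar}(s_0)\le L$. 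By the completeness part of \pref{lem:rtest}, such a bounded hitting time forces $\rtest(\Pi_{\calK},\pi_{\gstar},\gstar,\cdot)$ in \pref{line:rtest.improved} to return $\true$. In words, once the current $\calK$ is $N^{\rightarrow}_0$-covered, the failing branch of \pref{line:rtest.improved} cannot be entered.

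I would then combine this with the filling schedule. The counter $\nmin$ is doubled only inside the failing branch of \pref{line:rtest.improved}, and each doubling is \emph{immediately} followed by $\fillc(\calK,\Pi_{\calK},\N,\nmin)$, which enforces $\N(s,a)\ge\nmin$ for every $(s,a)\in\calK\times\calA$ on the \emph{current} $\calK$. The invariant I must maintain is ``fill-before-test'': between any modification of $\calK$ and the next evaluation of $\rtest$, the set $\calK$ has been refilled to the current $\nmin$. Granting this invariant, every evaluation of $\rtest$ sees an $N^{\rightarrow}_0$-covered $\calK$ as soon as $\nmin\ge N^{\rightarrow}_0$, so by the previous paragraph a returned $\textbf{False}$ forces $\nmin<N^{\rightarrow}_0$; since one doubling sends $\nmin$ from below $N^{\rightarrow}_0$ to below $2N^{\rightarrow}_0$, we obtain $\nmin< 2N^{\rightarrow}_0\lesssim L^2\aS_{L(1+\epsilon)}\ln\aS_{L(1+\epsilon)}$ at all times.

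The main obstacle is precisely this fill-before-test invariant, which is delicate because an expansion at \pref{line:goal condition.improved} injects the freshly consolidated states $\calK'$ into $\calK$ whose \emph{outgoing} actions have never accumulated $N^{\rightarrow}_0$ samples in the persistent counter $\N$: during their earlier evaluation they were reached as goals, not traversed, and the samples gathered inside $\add$ use a discarded fresh counter. Hence a newly selected $\gstar$ may route $\pi_{\gstar}$ through these under-sampled states and legitimately fail the test even though $\nmin\ge N^{\rightarrow}_0$. The crux of the argument is therefore to show, by a careful walk through the control flow of \pref{alg:SD} and the monotone, layer-by-layer growth of $\calK$, that the $\fillc$ accompanying any such failure restores $N^{\rightarrow}_0$-coverage of the enlarged $\calK$ and that no further failure can occur before $\calK$ changes again; the quantitative content one must secure is that the \emph{total} number of doublings stays $\bigo{\ln(L\aS_{L(1+\epsilon)})}$ rather than scaling with the number of consolidated layers, which is what ultimately pins $\nmin$ at order $N^{\rightarrow}_0$.
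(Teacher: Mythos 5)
Your argument follows the paper's proof step for step up to its final paragraph: once $\nmin\geq N^{\rightarrow}_0$, combine \pref{lem:bounded error} (coverage of the current $\calK\times\calA$ at level $N^{\rightarrow}_0$ implies $\|V^{\pi_{\gstar}}_{\gstar}\|_{\infty}\leq 2\|V_{\calK,\gstar}\|_{\infty}\leq 4L$) with the completeness half of \pref{lem:rtest} to conclude that the failing branch of \pref{line:rtest.improved} is never taken again, so that $\nmin<2N^{\rightarrow}_0\lesssim L^2\aS_{L(1+\epsilon)}\ln\aS_{L(1+\epsilon)}$ after a union bound over rounds and trials. The gap is that this chain needs the ``fill-before-test'' invariant you name, and your proposal ends by saying this invariant (equivalently, that the total number of doublings is $\bigo{\ln(L\aS_{L(1+\epsilon)})}$) \emph{must} be shown, rather than showing it. That is not a routine verification but the entire difficulty, and your own observation explains why it cannot be waved through: an expansion at \pref{line:goal condition.improved} merges $\calK'$ into $\calK$ while the persistent counter $\N$ has no samples on the non-$\reset$ actions of those states (policy-evaluation episodes end on arrival at them, other goals' restricted policies play $\reset$ there, and the samples collected by \textsc{ComputeU} live in a discarded fresh counter), and the algorithm refills $\N$ only \emph{inside} the failing branch of \pref{line:rtest.improved}. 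Hence in the first round after an expansion, VISGO runs on an under-covered $\calK$, assigns value $0$ and an arbitrarily tie-broken action to the unsampled states, and the resulting $\pi_{\gstar}$ can genuinely have $\|V^{\pi_{\gstar}}_{\gstar}\|_{\infty}>4L$; then \rtest correctly returns \false and $\nmin$ doubles even though $\nmin\geq N^{\rightarrow}_0$. None of the events you condition on excludes this: \pref{lem:bounded error} is vacuous when its coverage premise fails, and \pref{lem:rtest}'s completeness is vacuous when the $4L$ bound fails. Since a trial can contain up to $\aS_{L(1+\epsilon)}$ expansions, this mechanism permits order-$\aS_{L(1+\epsilon)}$ doublings past $N^{\rightarrow}_0$, i.e.\ $\nmin$ of order $N^{\rightarrow}_0\, 2^{\aS_{L(1+\epsilon)}}$, which is exactly what the lemma is supposed to rule out.

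You should also know that the paper's own proof does not fill this hole: it asserts that once $\nmin\geq N^{\rightarrow}_0$ the bound $\|V^{\pi_{\gstar}}_{\gstar}\|_{\infty}\leq 4L$ holds ``in any round,'' tacitly assuming $\N(s,a)\geq N^{\rightarrow}_0$ on the \emph{current} $\calK\times\calA$ --- an invariant the algorithm restores after a failure round but not after an expansion (contrast \pref{alg:LOGSSD}, where \pref{line:fill N.easy} refills the persistent counter at every expansion precisely so that this invariant holds by construction). So your diagnosis of the weak joint is sharper than the paper's treatment of it; but a complete proof would have to either supply an argument ruling out post-expansion failures or work with a modified algorithm (e.g.\ refill $\calK$ to the current $\nmin$ at every expansion, or double $\nmin$ only when the failing round's $\calK$ was already $\nmin$-covered), and your proposal does neither. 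As it stands, the conclusion $\nmin<2N^{\rightarrow}_0$ does not follow from what you have established.
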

\begin{proof}
    In any trial $\tau$, when $\nmin\geq N^{\rightarrow}_0(\frac{\delta}{4\tau^2z^4AL})$ (defined in \pref{lem:bounded error}), we have with probability at least $1-\frac{\delta}{2\tau^2}$, $\norm{V^{\pi_{\gstar}}_{\gstar}}_{\infty}\leq 2\norm{V_{\calK,\gstar}}_{\infty}\leq 2(1 + V_{\calK,\gstar}(s_0)) \leq 4L$ in any round such that $\gstar$ exists and $V_{\calK,\gstar}(s_0)\leq L$.
    This implies that with probability at least $1-\sum_{r=1}^{\infty}\frac{\delta}{4\tau^2r^2}\geq 1-\frac{\delta}{2\tau^2}$, the condition of \pref{line:rtest.improved} is always false by \pref{lem:rtest}, and the value of $\nmin$ will no longer change within this trial.
    A union bound over all trials and noting the update rule of $\nmin$ completes the proof.
\end{proof}

\begin{lemma}  
    \label{lem:bound z}
    Conditioned on the event of \pref{lem:calK}, we have $z\leq 2\aS_{L(1+\epsilon)}+2$ and $\tau\leq 1 + \log_2(\aS_{L(1+\epsilon)}+1)$ throughout the execution of \pref{alg:SD}.
\end{lemma}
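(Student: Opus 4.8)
The plan is to track how the trial index $\tau$ and the estimate $z$ co-evolve, using the invariant supplied by \pref{lem:calK} that $\calK\cup\calK'\subseteq\acalS_{L(1+\epsilon)}$ throughout the run, so that $|\calK\cup\calK'|\leq\aS_{L(1+\epsilon)}$ at every round. Both $z$ and $\tau$ are modified only at \pref{line:z.improved}: whenever the test $|\calK\cup\calK'|\geq z$ passes, the algorithm sets $z\leftarrow 2|\calK\cup\calK'|$, increments $\tau$, and restarts a fresh trial at \pref{line:size.improved}. Hence it suffices to control the quantity $|\calK\cup\calK'|$ precisely at the moments when this test passes.

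First I would argue that, within any trial, $|\calK\cup\calK'|$ starts at $1$ (since $\calK=\varnothing$ and $\calK'=\{s_0\}$) and is non-decreasing, increasing by exactly one only in success rounds, where a single $\gstar\notin\calK$ is moved into $\calK'$; the expansion step merely transfers the disjoint buffer $\calK'$ into $\calK$ and thus leaves $|\calK\cup\calK'|$ unchanged, while skip, failure, and \rtest-rounds do not touch $\calK$ or $\calK'$ at all. Since the test at \pref{line:z.improved} is evaluated at the start of every round and $z\geq 2>1$, the first round at which $|\calK\cup\calK'|\geq z$ holds must have $|\calK\cup\calK'|=z$ exactly. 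Consequently each doubling sets $z\leftarrow 2z$, so the successive trial values are $z=2,4,8,\dots$, i.e.\ $z=2^{\tau}$ in trial $\tau$. Moreover, at any doubling we have $z=|\calK\cup\calK'|\leq\aS_{L(1+\epsilon)}$, whence the new value satisfies $2z\leq 2\aS_{L(1+\epsilon)}$. Combining this with the initial value $z=2$ gives $z\leq\max\{2,2\aS_{L(1+\epsilon)}\}\leq 2\aS_{L(1+\epsilon)}+2$ at all times.

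For the bound on $\tau$, note that $\tau$ increases by one precisely at each doubling, so if $\tau_{\max}$ denotes the last trial index, then a doubling occurred in each of the trials $1,\dots,\tau_{\max}-1$. In trial $\tau_{\max}-1$ the doubling required $2^{\tau_{\max}-1}=|\calK\cup\calK'|\leq\aS_{L(1+\epsilon)}$, hence $\tau_{\max}-1\leq\log_2\aS_{L(1+\epsilon)}\leq\log_2(\aS_{L(1+\epsilon)}+1)$ and therefore $\tau_{\max}\leq 1+\log_2(\aS_{L(1+\epsilon)}+1)$; the degenerate case $\tau_{\max}=1$ is covered trivially since $\log_2(\aS_{L(1+\epsilon)}+1)\geq 0$. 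The whole argument is essentially bookkeeping of the control flow of \pref{alg:SD}, so the only genuine subtlety—and the step I would be most careful with—is verifying the unit-increment property of $|\calK\cup\calK'|$ together with the fact that the test is checked every round, which is exactly what upgrades the triggered inequality $|\calK\cup\calK'|\geq z$ to the equality $|\calK\cup\calK'|=z$ at a doubling; everything else follows directly from the containment $\calK\cup\calK'\subseteq\acalS_{L(1+\epsilon)}$ granted by \pref{lem:calK}.
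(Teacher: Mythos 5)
Your proof is correct and follows essentially the same route as the paper: both arguments rest on the invariant $\calK\cup\calK'\subseteq\acalS_{L(1+\epsilon)}$ from \pref{lem:calK}, which forces every trigger of the test in \pref{line:z.improved} to occur while $|\calK\cup\calK'|\leq\aS_{L(1+\epsilon)}$, and then both bounds follow from the doubling update rule for $z$ and $\tau$. Your extra observation that the test first fires with exact equality $|\calK\cup\calK'|=z$ (hence $z=2^{\tau}$ exactly) is a harmless refinement of the paper's coarser "at-least-doubling" bookkeeping, not a different argument.
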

\begin{proof}
    The proof of \pref{lem:calK} shows that $s\notin\acalS_{L(1+\epsilon)}$ will never be added to $\calK'$, which implies $\calK\cup\calK'\subseteq\acalS_{L(1+\epsilon)}$ throughtout the execution of \pref{alg:SD}.
    Thus, when $z\geq \aS_{L(1+\epsilon)}+1$, $z$ will not be updated again.
    Then, the statement is proved by the update rule of $z$ and $\tau$.
\end{proof}

\subsection{Lemmas for Policy Evaluation}
\paragraph{Notation} Let $g_k$, $\calK_k$, $V_k$, $Q_k$, $\optV_k$ be the values of $\gstar$, $\calK$, $V_{\calK,\gstar}$, $Q_{\calK,\gstar}$, and $\optV_{\calK,\gstar}$ in episode $k$ respectively.
Denote by $I_k$ the number of steps in episode $k$.
Note that $I_k<\infty$ with probability $1$ by \pref{line:skip.improved}, and $s^k_{I_k+1}\neq g_k$ only when a skip round is triggered in episode $k$.
Denote by $\calF_k$ the $\sigma$-algebra of events up to episode $k$.
Define $K$ as the total number of episodes throughout the execution of \pref{alg:SD}.
For any sequence of indicators $\calI=\{\one_k\}_k$ and $K'\leq K$, define $R_{K',\calI}=\sumkp(I_k - V_k(s_0))\one_k$ and $C_{K'}=\sumkp I_k$.
Define $P^k_i=P_{s^k_i,a^k_i}$.
In episode $k$, when $s^k_i\in\calK$, denote by $\P^k_i$, $\tilP^k_i$, $\N^k_i$, $b^k_i$ the values of $\P_{s^k_i,a^k_i}$, $\tilP_{s^k_i, a^k_i}$, $n^+(s^k_i, a^k_i)$, and $b^{(l)}(s^k_i, a^k_i)$, where $\P$, $n^+$, $b^{(l)}$ are used in \pref{alg:VISGO} to compute $V_k$ and $l$ is the final value of $i$ in \pref{alg:VISGO};
when $s^k_i\notin \calK$, define $\P^k_i=\Ind_{s_0}$, $\N^k_i=\infty$, and $b^k_i=0$.
Also define $\epsilon_k$ as the value of $\epsilon_{\VI}$ used in \pref{alg:VISGO} to compute $V_k$.

\begin{lemma}
    \label{lem:regret}
    With probability at least $1-5\delta$, if the events of \pref{lem:calK} and \pref{lem:bcalU} hold, then in any trial, for any sequence of indicators $\calI=\{\one_k\}_k$ with $\one_k\in\calF_{k-1}$, we have $R_{K',\calI} \lesssim \sqrt{\aS_{L(1+\epsilon)}\Gamma_{L(1+\epsilon)}AL^2K'\iota} + L{\aS_{L(1+\epsilon)}}^2A\iota$ for any $K'\leq K$, where $\iota=\ln^2\frac{L\aS_{L(1+\epsilon)}AK'}{\delta}$.
\end{lemma}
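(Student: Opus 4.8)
The plan is to follow the same regret decomposition used in the proof of \pref{lem:regret.easy} for \pref{alg:LOGSSD}, but to carry out the argument \emph{within a single trial} and to replace every union bound that previously scaled with $S$ by one that scales with $\aS_{L(1+\epsilon)}$. I would first condition on the events of \pref{lem:calK} and \pref{lem:bcalU}. The former guarantees that $\calK_k \subseteq \acalS_{L(1+\epsilon)}$ in every episode $k$ (and that the selected goals have optimistic value bounded by $\bigo{L}$), while the latter guarantees that the candidate set $\calU$ only ever contains states of $\bcalU$, with $|\bcalU| \leq 2LA\aS_{L(1+\epsilon)}$. Together these restrict both the support on which each $V_k$ is non-constant and the range of possible goal states to finite sets whose cardinalities are controlled by $\aS_{L(1+\epsilon)}$ rather than $S$.

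With this conditioning in place, I would reproduce the telescoping bound
\begin{align*}
    R_{K',\calI} &\leq \sumkp\sum_{i=1}^{I_k}\rbr{1 + V_k(s^k_{i+1}) - V_k(s^k_i)}\one_k\\
    &\leq \sumkp\sum_{i=1}^{I_k}\rbr{(\Ind_{s^k_{i+1}} - P^k_i)V_k + (P^k_i - \P^k_i)V_k + (\P^k_i - \tilP^k_i)V_k + b^k_i + \epsilon_k}\one_k,
\end{align*}
using the VISGO update $V_k^{(l)}(s) = 1 + \tilP_{s,a}^k V_{k}^{(l-1)} - b_{s,a}^k$, the termination bound $\|V_k^{(l)} - V_k^{(l-1)}\|_\infty \leq \epsilon_k$, and $\|V_k\|_\infty \leq 2L$ (valid since \pref{line:goal condition.improved} was passed). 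Then I would bound the five terms exactly as before: the martingale term via \pref{lem:anytime freedman}; the transition-estimation term $(P^k_i-\P^k_i)V_k$ via the restricted-value Bernstein inequality \pref{lem:dPV}, which produces the $\sqrt{\Gamma_{L(1+\epsilon)}}$ factor; the correction term $(\P^k_i - \tilP^k_i)V_k$ via \pref{lem:sum N}; and the bonus and precision terms via \pref{lem:sum b} and \pref{lem:sum eps}. Combining, bounding the accumulated variance by $\sumkp\sum_{i=1}^{I_k} \fV(P^k_i, V_k) \lesssim L C_{K'}$ through \pref{lem:sum var Vk}, and solving the resulting self-bounding inequality for $C_{K'}$ (after a crude bound turning $\log C_{K'}$ into $\log K'$) yields the claimed bound.

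The crucial difference from \pref{lem:regret.easy}, and the main obstacle, lies in the second term. In the easy analysis \pref{lem:dPV} carried an $S$-dependent log factor because $V_k$ could in principle be restricted on any subset of $\calS$ and $g$ could be any state. Here I would exploit that, under the conditioning above, every $V_k$ is restricted on a subset of $\acalS_{L(1+\epsilon)}$ and every goal lies in $\bcalU$; hence the uniform concentration of \pref{lem:dPV} only needs to cover value functions supported on $\acalS_{L(1+\epsilon)}$ and goals in $\bcalU$, both of cardinality $\lesssim LA\aS_{L(1+\epsilon)}$. Because $\calK_k$ is random and may take exponentially many values, one cannot simply union bound over realizations of $\calK_k$; the key is that \pref{lem:dPV} already provides concentration that is uniform over all such restricted value functions, counts, and state-action pairs simultaneously, so no per-episode union bound is needed and the only place $\aS_{L(1+\epsilon)}$ enters the logarithm is through this covering. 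Finally, since the statement is for a fixed trial, the confidence levels used by \VISGO within the trial (scaled by $\tau^2 z^4$ with $z \lesssim \aS_{L(1+\epsilon)}$ by \pref{lem:bound z}) keep the total failure probability at $\bigo{\delta}$ after a union bound over rounds. Accounting for the probabilities of the freshly invoked lemmas, namely \pref{lem:anytime freedman}, \pref{lem:dPV}, \pref{lem:sum b} (each $1-\delta$) and \pref{lem:sum var Vk} ($1-2\delta$), gives the stated $1-5\delta$.
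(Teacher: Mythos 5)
Your proposal is correct and follows essentially the same route as the paper's proof: the same telescoping decomposition, the same key lemmas (\pref{lem:anytime freedman}; \pref{lem:dPV} with its union bound scaled down to $\aS_{L(1+\epsilon)}$ and $\bcalU$ precisely by conditioning on \pref{lem:calK} and \pref{lem:bcalU}; \pref{lem:sum b}, \pref{lem:sum eps}, \pref{lem:sum N}; \pref{lem:sum var Vk}), the same resolution of the self-bounding inequality in $C_{K'}$, and the identical $5\delta$ accounting. The only cosmetic difference is that the paper invokes \pref{lem:def Vk}, which packages the VISGO-update argument and absorbs the $(\P^k_i - \tilP^k_i)V_k$ correction into a factor of $2$ on the bonus term, whereas you keep that term separate and bound it via \pref{lem:sum N}; the two treatments are equivalent.
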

\begin{proof}
    Note that by \pref{lem:def Vk},
    \begin{align*}
        \sumkp\rbr{I_k - V_k(s_0)}\one_k &\leq \sumkp\sum_{i=1}^{I_k}\rbr{1 + V_k(s^k_{i+1}) - V_k(s^k_i)}\one_k\\
        &\lesssim \sumkp\sum_{i=1}^{I_k}\rbr{(\Ind_{s^k_{i+1}} - P^k_i)V_k + (P^k_i - \P^k_i)V_k + b^k_i + \epsilon_k}\one_k.
    \end{align*}
    We bound the sums above separately.
    By \pref{lem:anytime freedman} and $\norm{V_k}_{\infty}\leq 2L$, with probability at least $1-\delta$,
    \begin{align*}
        \sumkp\sumi (\Ind_{s^k_{i+1}} - P^k_i)V_k\one_k \lesssim \sqrt{\sumkp\sumi\fV(P^k_i, V_k)\ln\frac{LC_{K'}}{\delta}} + L\ln\frac{LC_{K'}}{\delta}.
    \end{align*}
    By \pref{lem:dPV}, $\calK_k\in\acalS_{L(1+\epsilon)}$ (\pref{lem:calK}), $g_k\in \bcalU\setminus\calK_k$ (\pref{lem:bcalU}), Cauchy-Schwarz inequality, and \pref{lem:sum N}, with probability at least $1-\delta$,
    \begin{align*}
        \sumkp\sumi(P^k_i - \P^k_i)V_k\one_k &\lesssim \sumkp\sumi\one_k\sqrt{\frac{\Gamma_{L(1+\epsilon)}\fV(P^k_i, V_k)\iota'}{\N^k_i}} + \frac{L\aS_{L(1+\epsilon)}\iota'}{\N^k_i} \tag{$\N^k_i=\infty$ when $s^k_i\notin\calK_k$ and $\iota'=\ln\frac{\aS_{L(1+\epsilon)}AC_{K'}}{\delta}$}\\
        &\lesssim \sqrt{\aS_{L(1+\epsilon)}\Gamma_{L(1+\epsilon)}A\sumkp\sumi\fV(P^k_i, V_k)\iota'} + L{\aS_{L(1+\epsilon)}}^2A\iota'. \tag{$\iota'=\ln\frac{\aS_{L(1+\epsilon)}AC_{K'}}{\delta}\ln(C_{K'})$}
    \end{align*}
    Finally, by \pref{lem:sum b} and \pref{lem:sum eps}, with probability at least $1-\delta$,
    \begin{align*}
        \sumkp\sumi (b^k_i+\epsilon_k)\one_k \lesssim \sqrt{\aS_{L(1+\epsilon)}A\sumkp\sumi\fV(P^k_i, V_k)\iota'} + L{\aS_{L(1+\epsilon)}}^{1.5}A\iota'. \tag{$\iota'=\ln\frac{\aS_{L(1+\epsilon)}AC_{K'}}{\delta}$}
    \end{align*}
    Plugging these back, we have with probability at least $1-2\delta$,
    \begin{align}
        \sumkp\rbr{I_k - V_k(s_0)}\one_k &\lesssim \sqrt{\aS_{L(1+\epsilon)}\Gamma_{L(1+\epsilon)}A\sumkp\sum_{i=1}^{I_k}\fV(P^k_i, V_k)\iota'} + L{\aS_{L(1+\epsilon)}}^2A\iota'\notag\\
        &\lesssim \sqrt{\aS_{L(1+\epsilon)}\Gamma_{L(1+\epsilon)}ALC_{K'}\iota'} + L{\aS_{L(1+\epsilon)}}^2A\iota', \label{eq:var}
    \end{align}
    where $\iota'=\ln\frac{L\aS_{L(1+\epsilon)}AC_{K'}}{\delta}\ln(C_{K'})$ and in the last step we apply \pref{lem:sum var Vk}.
    Now assuming $\one_k=1$ for all $k$ and solving a ``quadratic'' inequality (\pref{lem:quad log}) w.r.t.~$C_{K'}$, we have
    \begin{align*}
        C_{K'}\lesssim \sumkp V_k(s_0) + L{\aS_{L(1+\epsilon)}}^2A\iota' \lesssim LK' + L{\aS_{L(1+\epsilon)}}^2A\iota'. \tag{$\iota'=\ln^2\frac{L\aS_{L(1+\epsilon)}AK'}{\delta}$}
    \end{align*}
    Plugging this back to \pref{eq:var} completes the proof.
\end{proof}

\begin{lemma}
    \label{lem:regret-improved}
    With \pref{assum:id}, with probability at least $1-12\delta$, if the events of \pref{lem:calU}, \pref{lem:bound z}, \pref{lem:calK id}, and \pref{lem:calU id} hold, in any trial, for any sequence of indicators $\calI=\{\one_k\}_k$ with $\one_k\in\calF_{k-1}$, we have $R_{K',\calI} \lesssim L\sqrt{\aS_{L(1+\epsilon)}AK'\iota} + L{\aS_{L(1+\epsilon)}}^2A\iota$ for any $K'\leq K$, where $\iota=\ln^2\frac{L\aS_{L(1+\epsilon)}AK'}{\delta}$.
\end{lemma}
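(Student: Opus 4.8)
The plan is to mirror the proof of~\pref{lem:regret-improved.easy}, whose sole purpose is to remove the branching factor $\Gamma_{L(1+\epsilon)}$ from the leading term by exploiting that, under~\pref{assum:id}, the algorithm exactly reproduces the layers $\{\calKstar_j\}_j$. The one substantive change relative to that proof is that every union bound over the ambient state space $\calS$ must be replaced by a union bound over the filtered candidate set $\bcalU$ and over layer indices $j\leq\aS_{L(1+\epsilon)}$, which is precisely what turns $\ln S$ into $\ln\aS_{L(1+\epsilon)}$. The facts I would rely on are: (i) by~\pref{lem:calK id}, in every evaluation episode $k$ one has $\calK_k=\calKstar_j$ for some $j$; (ii) by~\pref{lem:bound z} the number of distinct layers ever reached is at most $\aS_{L(1+\epsilon)}+1$; and (iii) by~\pref{lem:calU id} every evaluated goal satisfies $g_k\in\bcalU\setminus\calK_k$ with $|\bcalU|\leq 2LA\aS_{L(1+\epsilon)}$. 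Consequently the value functions produced by \VISGO during evaluation are optimistic approximations of the \emph{fixed, finite} family $\{\optV_{\calKstar_j,g}\}_{j\leq\aS_{L(1+\epsilon)},\,g\in\bcalU}$, and this lets me concentrate $(P-\P)\optV_k$ by a plain Bernstein bound over that family rather than by the generic restricted-value inequality~\pref{lem:dPV}.

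Concretely, I would first introduce the good event
\[
E:=\Big\{\forall (s,a)\in\SA,\ j\leq\aS_{L(1+\epsilon)},\ g\in\bcalU,\ n(s,a)\geq 1:\ \big|(\P_{s,a}-P_{s,a})\optV_{\calKstar_j,g}\big|\lesssim\sqrt{\tfrac{\fV(P_{s,a},\optV_{\calKstar_j,g})\,\iota'}{n(s,a)}}+\tfrac{L\iota'}{n(s,a)}\Big\},
\]
where $\P_{s,a}$ is the empirical kernel from the counter; this follows from~\pref{lem:anytime bernstein} and a union bound of size $|\SA|\cdot\aS_{L(1+\epsilon)}\cdot|\bcalU|$, so that $\iota'=O(\ln(L\aS_{L(1+\epsilon)}AC_{K'}/\delta))$ and $E$ holds with probability $1-\delta$. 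On $E$, combined with~\pref{lem:calK id} and~\pref{lem:calU id}, every per-step deviation against the optimal value satisfies $(P^k_i-\P^k_i)\optV_k\lesssim\sqrt{\fV(P^k_i,\optV_k)\iota'/\N^k_i}+L\iota'/\N^k_i$. I would then apply the telescoping decomposition of~\pref{lem:def Vk} to write $R_{K',\calI}\lesssim\sumkp\sumi\big((\Ind_{s^k_{i+1}}-P^k_i)V_k+(P^k_i-\P^k_i)V_k+(\P^k_i-\tilP^k_i)V_k+b^k_i+\epsilon_k\big)\one_k$ and bound the groups separately.

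The only non-routine group is the transition term $(P^k_i-\P^k_i)V_k$, which I would split as $(P^k_i-\P^k_i)\optV_k+(P^k_i-\P^k_i)(V_k-\optV_k)$. The first piece is controlled by $E$ and thus carries no $\Gamma_{L(1+\epsilon)}$; the second is controlled by~\pref{lem:dPV} applied to the difference $V_k-\optV_k$, which introduces an $\aS_{L(1+\epsilon)}$ but only against the small variance $\fV(P^k_i,V_k-\optV_k)$. Using $\fV(P^k_i,\optV_k)\leq 2\fV(P^k_i,V_k-\optV_k)+2\fV(P^k_i,V_k)$, Cauchy--Schwarz and~\pref{lem:sum N}, and then invoking the variance-difference recursion (the~\pref{lem:sum dV.easy} analogue) to bound $\sumkp\sumi\fV(P^k_i,V_k-\optV_k)$ back in terms of $Z_{K'}:=\sumkp\sumi|(P^k_i-\P^k_i)V_k|$, I would obtain a quadratic inequality in $Z_{K'}$ whose solution removes the self-reference and yields $Z_{K'}\lesssim\sqrt{\aS_{L(1+\epsilon)}A\sumkp\sumi\fV(P^k_i,V_k)\iota''}+L{\aS_{L(1+\epsilon)}}^2A\iota''$. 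The remaining groups are standard: \pref{lem:anytime freedman} for the martingale term, \pref{lem:sum N} for $(\P^k_i-\tilP^k_i)V_k$, and~\pref{lem:sum b}/\pref{lem:sum eps} for the bonus and precision terms, all absorbed into the bound above. Plugging back, applying the total-variance bound~\pref{lem:sum var Vk} to replace $\sumkp\sumi\fV(P^k_i,V_k)$ by $O(LC_{K'})$, and solving the resulting quadratic in $C_{K'}$ via~\pref{lem:quad log} gives $C_{K'}\lesssim LK'+L{\aS_{L(1+\epsilon)}}^2A\iota$; substituting this back yields the claimed $R_{K',\calI}\lesssim L\sqrt{\aS_{L(1+\epsilon)}AK'\iota}+L{\aS_{L(1+\epsilon)}}^2A\iota$.

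The main obstacle I anticipate is closing the self-referential variance control cleanly while keeping every logarithmic factor at the $\aS_{L(1+\epsilon)}$ scale: the variance $\fV(P^k_i,V_k-\optV_k)$ appearing after the split is itself only bounded in terms of regret-type quantities, so the argument must thread a carefully ordered quadratic-inequality step, and every concentration entering that step (in particular the~\pref{lem:dPV} application and the~\pref{lem:sum dV.easy} analogue) must be restricted to value functions supported on $\acalS_{L(1+\epsilon)}$ and goals in $\bcalU$, lest a $\ln S$ factor re-enter. A secondary, purely bookkeeping difficulty is the probability accounting: because the claim must hold ``in any trial'', the event $E$ together with~\pref{lem:anytime freedman},~\pref{lem:dPV},~\pref{lem:sum b},~\pref{lem:sum dV.easy} and~\pref{lem:sum var Vk} must be union-bounded to hold simultaneously on top of the conditioned events~\pref{lem:calU},~\pref{lem:bound z},~\pref{lem:calK id} and~\pref{lem:calU id}, which is what accumulates to the stated $1-12\delta$.
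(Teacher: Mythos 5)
Your proposal follows essentially the same route as the paper's proof: under \pref{assum:id}, \pref{lem:calK id} and \pref{lem:calU id} pin every $\optV_k$ inside a fixed, polynomially-sized family of restricted optimal value functions; a plain Bernstein bound (\pref{lem:anytime bernstein}) union-bounded over that family controls $(P^k_i-\P^k_i)\optV_k$ with no $\Gamma_{L(1+\epsilon)}$; the split $(P-\P)V_k=(P-\P)\optV_k+(P-\P)(V_k-\optV_k)$ confines \pref{lem:dPV} to the difference term; and the variance of the difference is closed by a quadratic-inequality argument before the standard \pref{lem:sum var Vk} and \pref{lem:quad log} steps.

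One concrete slip must be fixed: your good event $E$ quantifies over all $(s,a)\in\SA$, and a union bound of size $|\SA|\cdot\aS_{L(1+\epsilon)}\cdot|\bcalU|$ does \emph{not} give $\iota'=O(\ln(L\aS_{L(1+\epsilon)}AC_{K'}/\delta))$ — it reintroduces $\ln S$, and is not even well-posed when $\calS$ is countably infinite, which is precisely the regime \pref{alg:SD} targets. The paper takes the union bound only over $(s,a)\in\acalS_{L(1+\epsilon)}\times\calA$, and this suffices: whenever $s^k_i\notin\calK_k\subseteq\acalS_{L(1+\epsilon)}$ the policy plays $\reset$, so $P^k_i=\Ind_{s_0}=\P^k_i$ and $\N^k_i=\infty$, $b^k_i=0$ by convention, hence those deviation terms vanish. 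Since you state the intended $\iota'$ and repeatedly stress keeping every log at the $\aS_{L(1+\epsilon)}$ scale, this reads as a typo rather than a conceptual error, but in this lemma the quantification is exactly the point.

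Two further differences are organizational, not substantive. First, the paper's goal family is $\calUstar_z$ (via \pref{lem:calU id}), not $\bcalU$; $\bcalU$ is the object of \pref{lem:bcalU} in the assumption-free analysis. Both have cardinality $O(LA\aS_{L(1+\epsilon)})$, so the log factors agree. Second, the paper does not redo the \pref{lem:sum dV.easy}-style recursion with an explicit quadratic inequality in $Z_{K'}$: it invokes the packaged \pref{lem:sum dV} — with \pref{lem:opt} supplying the optimism precondition $V_k\leq\optV_k$ on the family and \pref{lem:calU}/\pref{lem:init bound} supplying $\norm{\optV_k}_\infty\leq 2L+1$ — which already internalizes the self-referential variance control. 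Your plan to thread that quadratic explicitly is mathematically equivalent; just make sure the optimism and boundedness preconditions are verified exactly as above, since they are what license both \pref{lem:dPV} (with $B=O(L)$) and the variance-difference step.
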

\begin{proof}
    Note that with \pref{assum:id} and by \pref{lem:calK id} and \pref{lem:calU id}, in any episode, $\calK=\calKstar_j$ for some $j\leq z$ and $\gstar\in\calUstar_z$.
    Thus by \pref{lem:anytime bernstein} and a union bound over $\{\optV_{\calKstar_{z,j},g}\}_{j\in[z], g\in\calUstar_z}$ and $(s, a)\in\acalS_{L(1+\epsilon)}\times\calA$, we have with probability at least $1-\delta$,
    \begin{equation}
        (P^k_i -\P^k_i)\optV_k\lesssim \sqrt{\frac{\fV(P^k_i, \optV_k)\iota'}{\N^k_i}} + \frac{L\iota'}{\N^k_i},\label{eq:dP optV}
    \end{equation}
    where $\iota'=\ln\frac{\aS_{L(1+\epsilon)}AC_{K'}}{\delta}$.
    Thus, with probability at least $1-\delta$,
    \begin{align*}
        &\sumkp\rbr{I_k - V_k(s_0)}\one_k \leq \sumkp\sumi\rbr{1 + V_k(s^k_{i+1}) - V_k(s^k_i)}\one_k\\
        &\lesssim \sumkp\sumi\rbr{(\Ind_{s^k_{i+1}} - P^k_i)V_k + (P^k_i - \P^k_i)V_k + b^k_i + \epsilon_k}\one_k \tag{\pref{lem:def Vk}}\\
        &\lesssim \sqrt{\sumkp\sumi\fV(P^k_i, V_k)\ln\frac{LC_{K'}}{\delta}} + L\ln\frac{LC_{K'}}{\delta} + \sumkp\sumi\rbr{(P^k_i-\P^k_i)\optV_k\one_k + (P^k_i-\P^k_i)(V_k-\optV_k)\one_k + b^k_i },
    \end{align*}
    where the last step is by \pref{lem:anytime freedman} and \pref{lem:sum eps}.
    Note that by \pref{eq:dP optV}, \pref{lem:dPV}, and $\norm{\optV_k}_{\infty}\leq 2L+1$ by \pref{lem:calU} and \pref{lem:init bound}, with probability at least $1-2\delta$,
    \begin{align*}
        &\sumkp\sumi\rbr{(P^k_i-\P^k_i)\optV_k\one_k + (P^k_i-\P^k_i)(V_k-\optV_k)\one_k + b^k_i }\\
        &\lesssim \sumkp\sumi\rbr{\sqrt{\frac{\fV(P^k_i, \optV_k)\iota'}{\N^k_i}} + \sqrt{\frac{\Gamma_{L(1+\epsilon)}\fV(P^k_i, V_k-\optV_k)\iota'}{\N^k_i}} + \frac{L\Gamma_{L(1+\epsilon)}\iota'}{\N^k_i} + b^k_i} \tag{$\iota'=\ln\frac{\aS_{L(1+\epsilon)}AC_{K'}}{\delta}$}\\
        &\lesssim \sqrt{\aS_{L(1+\epsilon)}A\sumkp\sumi\fV(P^k_i, V_k)\iota'} + \sqrt{{\aS_{L(1+\epsilon)}}^2A\sumkp\sumi\fV(P^k_i, V_k-\optV_k)\iota'} + L{\aS_{L(1+\epsilon)}}^2A\iota' \tag{$\iota'=\ln^2\frac{\aS_{L(1+\epsilon)}AC_{K'}}{\delta}$}.
	\end{align*}
    where the last step is by \pref{lem:sum N}, Cauchy-Schwarz inequality, $\var[X+Y]\leq 2(\var[X]+\var[Y])$, and \pref{lem:sum b}.
    Plugging this back, applying \pref{lem:sum dV} with \pref{lem:opt} on $\{\optV_{\calKstar_j,g}\}_{j\in[z], g\in\calUstar_z\setminus\calKstar_j}$ (where all $\optV_k$ lies in), \pref{lem:calK id}, and \pref{lem:calU id}, and then applying AM-GM inequality, we have with probability at least $1-8\delta$,
    \begin{align*}
        \sumkp\rbr{I_k - V_k(s_0)}\one_k &\lesssim \sqrt{\aS_{L(1+\epsilon)}A\sumkp\sumi\fV(P^k_i, V_k)\iota'} + L{\aS_{L(1+\epsilon)}}^2A\iota'\\
        &\lesssim \sqrt{L\aS_{L(1+\epsilon)}AC_{K'}\iota'} + L{\aS_{L(1+\epsilon)}}^2A\iota', \tag{\pref{lem:sum var Vk}}
    \end{align*}
    where $\iota'=\ln^2\frac{L\aS_{L(1+\epsilon)}AC_{K'}}{\delta}$.
    Now assuming $\one_k=1$ for all $k$ and solving a ``quadratic'' inequality (\pref{lem:quad log}), we have
    \begin{align*}
        C_{K'} \lesssim \sumkp V_k(s_0) + L{\aS_{L(1+\epsilon)}}^2A\iota' \leq LK' + L{\aS_{L(1+\epsilon)}}^2A\iota'. \tag{$\iota'=\ln^2\frac{L\aS_{L(1+\epsilon)}AK'}{\delta}$}
    \end{align*}
    Plugging this back completes the proof.
\end{proof}


\begin{lemma}
    \label{lem:bound r}
    In any trial, with probability at least $1-8\delta$, if for any sequence of indicators $\calI=\{\one_k\}_k$ with $\one_k\in\calF_{k-1}$, we have $R_{K',\calI}\lesssim c_1\sqrt{K'\ln^p(c_3K')}+c_2\ln^p(c_3K')$ with $c_1,c_2\geq 1$, and $c_3=\frac{L\aS_{L(1+\epsilon)}A}{\delta}$ for any $K'\leq K$, then the total number of rounds with at least one epsiode is of order $\bigo{\aS_{L(1+\epsilon)}A\iota^4 + \frac{c_1^2}{L^2}\iota^{p+4} + c_2\epsilon\iota^p/L}$, where $\iota=\ln\frac{c_1c_2c_3}{\epsilon\delta}$.
\end{lemma}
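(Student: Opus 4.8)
The plan is to replicate the argument of \pref{lem:bound r.easy}, now carried out inside a single trial of \algop and with the regret hypothesis written in the form $R_{K',\calI}\lesssim c_1\sqrt{K'\ln^p(c_3K')}+c_2\ln^p(c_3K')$ with $c_3$ fixed. First I would fix the trial and, for each policy-evaluation round $r$, write $\bV_r$, $\barpi_r$, $\bar g_r$ for the VISGO value function, policy, and goal used in that round. Every round that runs at least one episode is a failure, success, or skip round. By \pref{lem:calK} the number of success rounds is at most $\aS_{L(1+\epsilon)}$, and a skip round (\pref{line:skip.improved}) occurs only when the global step counter or the visit count of some $(s,a)\in\calK\times\calA$ doubles, so there are at most $\bigo{\aS_{L(1+\epsilon)}A\ln(C_{K'})}$ of them. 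Writing $r_f$ for the number of failure rounds and $r'_\tot$ for the number of rounds with at least one episode, this already gives $r'_\tot\lesssim r_f+\aS_{L(1+\epsilon)}A\ln(C_{K'})$, so it suffices to bound $r_f$.

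Next I would introduce the set $\calW=\{r:V^{\barpi_r}_{\bar g_r}(s_0)>\bV_r(s_0)\}$ and the induced indicators $\one_k=\Ind\{r\in\calW\}\in\calF_{k-1}$. Using \pref{lem:V pi mean} together with the value $\lambda=\Theta(1/\epsilon^2)$ chosen in \pref{line:PE.improved}, every round that fails the test of \pref{line:failure.improved} lies in $\calW$ with high probability, since $V^{\barpi_r}_{\bar g_r}(s_0)\le\bV_r(s_0)$ would force $\hattau\le\bV_r(s_0)+\epsilon L/2$ and hence a success. Restricting the regret to $\calW$ then yields a two-sided estimate. From below, each failure round contributes at least $\lambda(\hattau-\bV_r(s_0))>\lambda\epsilon L/2=\lowo{L/\epsilon}$, while each success or skip round in $\calW$ contributes at least $-\bigo{L/\epsilon}$ up to logarithmic factors by \pref{lem:V pi dev} (the last, possibly truncated, episode costing only $\bigo{L}$); since there are $\bigo{\aS_{L(1+\epsilon)}A\ln(C_{K'})}$ such rounds, we get $\frac{Lr_f}{\epsilon}-\bigo{\frac{L\aS_{L(1+\epsilon)}A\ln(C_{K'})}{\epsilon}}\lesssim R_{K,\calI}$. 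From above I would plug in the hypothesis, substituting $K=r'_\tot\lambda\lesssim r'_\tot/\epsilon^2\lesssim(r_f+\aS_{L(1+\epsilon)}A\ln(\cdots))/\epsilon^2$ into $R_{K,\calI}\lesssim c_1\sqrt{K\ln^p(c_3K)}+c_2\ln^p(c_3K)$ and using that $c_3=\frac{L\aS_{L(1+\epsilon)}A}{\delta}$ is fixed to collapse the logarithms into $\iota=\ln\frac{c_1c_2c_3}{\epsilon\delta}$.

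Combining the two bounds and multiplying through by $\epsilon/L$ produces a self-referential inequality of the shape
\begin{align*}
r_f\lesssim\frac{c_1}{L}\sqrt{\big(r_f+\aS_{L(1+\epsilon)}A\ln(\cdots)\big)\ln^p(\cdots)}+\frac{c_2\epsilon}{L}\ln^p(\cdots)+\aS_{L(1+\epsilon)}A\ln(\cdots),
\end{align*}
which is quadratic in $\sqrt{r_f}$ up to logarithmic factors in $r_f$. Solving it with the log-aware quadratic lemma (\pref{lem:quad log}, equivalently Lemma~28 of \citet{chen2021improved}) gives $r_f\lesssim\frac{c_1^2}{L^2}\iota^{p+4}+\aS_{L(1+\epsilon)}A\iota^4+\frac{c_2\epsilon}{L}\iota^p$, after which $r'_\tot\lesssim r_f+\aS_{L(1+\epsilon)}A\ln(C_K)$ (setting $K'=K$) yields the claimed order; a union bound over the invoked high-probability events (\pref{lem:calK}, \pref{lem:bcalU}, and the per-round applications of \pref{lem:V pi mean} and \pref{lem:V pi dev}) collects the overall $1-8\delta$ guarantee.

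The main obstacle will be the bookkeeping of the logarithmic exponents, i.e.\ showing that the nested logarithms $\ln(C_{K'})$ and $\ln^p(c_3K)$ all collapse into the stated power budget $\iota^{p+4}$, $\iota^4$, $\iota^p$. Because $K$ is controlled by $r'_\tot$, which in turn depends on $r_f$, one first needs a crude polynomial bound on $C_K$ (via the $\ln x\le x^\alpha/\alpha$ trick, exactly as in \pref{lem:regret}) to argue that $\ln(C_K)$ and $\ln(K)$ are $\bigo{\iota}$ up to constants, and only then feed this back to obtain the sharp exponents. Matching the $\iota^4$ on the $\aS_{L(1+\epsilon)}A$ term and the $\iota^{p+4}$ on the $c_1^2/L^2$ term requires ensuring that this two-stage bootstrapping does not leak more than the allotted powers of $\iota$.
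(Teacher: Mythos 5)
Your proposal is correct and follows essentially the same route as the paper's proof: the same classification of rounds (success bounded by \pref{lem:calK}, skip rounds by doubling counts), the same set $\calW$ with indicators $\one_k=\Ind\{r\in\calW\}$, the same two-sided regret estimate (failure rounds contributing $\lowO{L/\epsilon}$, success/skip rounds in $\calW$ contributing $-\tilO{L/\epsilon}$ via \pref{lem:V pi dev}), and the same resolution of the resulting self-referential inequality through \pref{lem:quad log}. You also correctly identified the paper's actual bookkeeping device—first the crude bound $\ln(C_{K'})\lesssim\ln(c_1c_2c_3K')$ from the regret hypothesis with $\one_k=1$, then $K'\lesssim\frac{r_\tot}{\epsilon^2}\ln^4\frac{Lr_\tot}{\epsilon\delta}$, then replacing $r_\tot$ by $r_f$ inside logarithms via \pref{lem:quad log}—which is exactly how the stated exponents $\iota^4$, $\iota^{p+4}$, $\iota^p$ are obtained.
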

\begin{proof}
	For any $R'\geq 1$, let $K'$ be the total number of episodes in the first $R'$ rounds.
    Denote by $r_{\tot}$ the total number of rounds with at least one episode, and $r_f$ the number of failure rounds in the first $R'$ rounds.
	First note that by $V_k(s_0)\leq L$ (\pref{line:goal condition.improved}) and setting $\one_k=1$, the regret guarantee in the assumption gives $C_{K'}\lesssim LK' + c_1\sqrt{K'\ln^p(c_3K')} + c_2\ln^p(c_3K')$, which gives $\ln(C_{K'})\lesssim \ln(c_1c_2c_3K')$.
	Moreover, $K'\lesssim \frac{r_{\tot}}{\epsilon^2}\ln^4\frac{Lr_{\tot}}{\epsilon\delta}$ by the value of $\lambda$ in each round (\pref{line:PE.improved}).
	Thus, $\ln(C_{K'})\lesssim \ln\frac{c_1c_2c_3r_{\tot}}{\epsilon\delta}$ and $\ln(c_3K')\lesssim \ln\frac{c_1c_2c_3r_{\tot}}{\epsilon\delta}$.
	
    Fixed a trial, denote by $\bV_r$, $\barpi_r$ and $\bar{g}_r$ the values of $V_{\calK,\gstar}$, $\pi_{\gstar}$, and $\gstar$ used for policy evaluation in round $r$ respectively.
    It is clear that in the first $R'$ rounds, the number of success round is at most $\aS_{L(1+\epsilon)}$ by \pref{lem:calK}, and the number of skip rounds is at most $\bigo{\aS_{L(1+\epsilon)}A\ln(C_{K'})}$ since we have a skip round only when the total number of steps or the  number of visits of some state-action pair in $\calK\times\calA$ is doubled.
Therefore, $r_{\tot}\lesssim r_f + \aS_{L(1+\epsilon)}A\ln(C_{K'}) \lesssim r_f + \aS_{L(1+\epsilon)}A\ln\frac{c_1c_2c_3r_{\tot}}{\epsilon\delta}$.
	By \pref{lem:quad log}, we have $r_{\tot} \lesssim r_f + \aS_{L(1+\epsilon)}A\ln\frac{c_1c_2c_3r_f}{\epsilon\delta}$.
	Now define $\iota(r_f)=\ln\frac{c_1c_2c_3r_f}{\epsilon\delta}$.
    It remains to bound $r_f$.
    Define $\calW=\{r: V^{\barpi_r}_{\bar{g}_r}(s_0)>\bV_r(s_0)\}$.
    Note that $\calW$ includes all failure rounds with probability at least $1-\delta$, since when $V^{\barpi_r}_{\bar{g}_r}(s_0)\leq \bV_r(s_0)$ and $r$ is not a skip round, by \pref{lem:V pi mean} and the value of $\lambda$ in round $r$ we have $\hattau\leq \bV_r(s_0) + \epsilon L/2$ in round $r$.
    Define $\calI=\{\one_k\}_k$ such that $\one_k=\Ind\{r\in\calW\}\in\calF_{k-1}$ for any episode $k$ in round $r$, the regret within these rounds satisfies $R_{K',\calI}\lesssim \frac{c_1}{\epsilon}\sqrt{r_f+\aS_{L(1+\epsilon)}A} + c_2$.
    \begin{align*}
    		R_{K',\calI} &\lesssim c_1\sqrt{K'\ln^p(c_3K')}+c_2\ln^p(c_3K') \lesssim \frac{c_1}{\epsilon}\sqrt{(r_f + \aS_{L(1+\epsilon)}A\iota(r_f))\iota(r_f)^{p+4}} + c_2\iota(r_f)^p\\
		&\lesssim \frac{c_1}{\epsilon}\sqrt{r_f\iota(r_f)^{p+4}} + \frac{c_1^2\iota(r_f)^{p+4}}{L\epsilon} + \frac{L\aS_{L(1+\epsilon)}A\iota(r_f)}{\epsilon}+c_2\iota(r_f)^p. \tag{AM-GM inequality}
    \end{align*}
    
    For each failure round $r$, let $C$ be the total cost within this round and $m$ the number of episodes within this round.
    By definition, regret within this round satisfies $C-mV_{\calK,\gstar}(s_0) \geq C-\lambda V_{\calK,\gstar}(s_0)=\lambda(\hattau-V_{\calK,\gstar}(s_0))>\frac{\lambda\epsilon L}{2}=\lowo{L/\epsilon}$.
    By \pref{lem:V pi dev}, with probability at least $1-\delta$, for each success and skip round $r$ in $\calW$ ($V^{\barpi_r}_{g_r}(s_0)>\bV_r(s_0)$), 
    \begin{align*}
        \sum_{j=u_r}^{u'_r}\rbr{I_j - \bV_r(s_0)} \gtrsim \sum_{j=u_r}^{u'_r-1}\rbr{I_j - V^{\bpi_r}_{\bar{g}_r}(s_0)} - L \gtrsim -L\sqrt{\lambda}\ln^2\frac{L\lambda}{\delta} = -\frac{L}{\epsilon}\ln^4\frac{Lr}{\delta\epsilon},
    \end{align*}
    where $\{u_r,\ldots,u'_r\}$ are the episodes in round $r$, and we lower bound the regret in the last episode by $\lowo{-L}$ since the last trajectory in a skipped round is truncated.
    Since there are at most $\tilo{\aS_{L(1+\epsilon)}A}$ these rounds, we have
    \begin{align*}
        \frac{Lr_f}{\epsilon} - \frac{L\aS_{L(1+\epsilon)}A}{\epsilon}\ln^4\frac{Lr_f}{\epsilon\delta} \lesssim \frac{c_1}{\epsilon}\sqrt{r_f\iota(r_f)^{p+4}} + \frac{c_1^2\iota(r_f)^{p+4}}{L\epsilon} + \frac{L\aS_{L(1+\epsilon)}A\iota(r_f)}{\epsilon}+c_2\iota(r_f)^p.
    \end{align*}
    This gives $r_f \lesssim \aS_{L(1+\epsilon)}A\iota^4 + \frac{c_1^2}{L^2}\iota^{p+4} + c_2\epsilon\iota^p/L$, where $\iota=\ln\frac{c_1c_2c_3}{\epsilon\delta}$.
    Setting $R'$ to be the total number rounds completes the proof.
\end{proof}

\begin{lemma}
    \label{lem:output}
    With probability at least $1-2\delta$, throughout the execution of \pref{alg:SD}, for each $g\in\calK$ we have $V^{\tilpi_g}_g(s_0)\leq L(1+\epsilon)$ and $\norm{V^{\tilpi_g}_g}_{\infty}\leq 32L$.
\end{lemma}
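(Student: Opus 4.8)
The plan is to trace the exact conditions under which a state $g$ enters $\calK$ and then chain the guarantees of \pref{lem:rtest} and \pref{lem:V pi mean}. A state is added to $\calK$ only by first being placed in the buffer $\calK'$ during a \emph{success round} (the last line of \pref{alg:SD}), and once the associated policy $\tilpi_g := \pi_{\gstar}$ is stored in $\Pi_{\calK}$ it is never modified, so both bounds, once established at the moment of addition, persist for the remainder of the execution. Hence it suffices to prove the two inequalities for the goal $\gstar = g$ of each success round. In such a round three facts hold simultaneously: (i) \pref{line:goal condition.improved} is False, so $V_{\calK,\gstar}(s_0) \leq L$; (ii) \pref{line:rtest.improved} returned \textbf{True}, since otherwise the \texttt{elseif} branch would have been taken; and (iii) the failure test of \pref{line:failure.improved} did not trigger, so $\hattau \leq V_{\calK,\gstar}(s_0) + \epsilon L/2$.

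For the sup-norm bound, I would invoke \pref{lem:rtest}: whenever \rtest returns \textbf{True} it certifies $\norm{V^{\pi_{\gstar}}_{\gstar}}_{\infty} \leq 32L$. Each \rtest call is run at confidence $\frac{\delta}{4(\tau r)^2}$, so a union bound over all trials $\tau$ and rounds $r$ (using $\sum_{\tau,r}\frac{1}{4(\tau r)^2} = \frac{1}{4}(\pi^2/6)^2 < 1$) shows that, with probability at least $1-\delta$, every success round yields a policy with $\norm{V^{\tilpi_g}_g}_{\infty} \leq 32L$. This is precisely the second claim and, crucially, it also certifies that $\pi_g$ is proper (reaches $g$ almost surely from every state), which is the prerequisite for the policy-evaluation concentration.

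For the value-at-$s_0$ bound, the key observation is that the $32L$ obtained above matches exactly the first argument of $N_{\dev}(32L,\tfrac{\epsilon}{256},\tfrac{\delta}{2r^2})$ used to set $\lambda$ in \pref{line:PE.improved}. I would therefore apply \pref{lem:V pi mean} to $\pi_{\gstar}$: since $\norm{V^{\pi_{\gstar}}_{\gstar}}_{\infty}\leq 32L$ and the number of completed rollouts is $\lambda$ (no skip round fired in a success round, so all episodes terminate at $g$), with probability at least $1-\frac{\delta}{2r^2}$ the empirical average satisfies $V^{\pi_{\gstar}}_{\gstar}(s_0) \leq \hattau + L\epsilon/2$. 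Combining this with facts (iii) and (i) gives
\begin{align*}
V^{\tilpi_g}_g(s_0) = V^{\pi_{\gstar}}_{\gstar}(s_0) \leq \hattau + \frac{L\epsilon}{2} \leq V_{\calK,\gstar}(s_0) + L\epsilon \leq L(1+\epsilon),
\end{align*}
which is the first claim. A union bound over all rounds, using $\sum_r \frac{\delta}{2r^2} < \delta$, controls the total failure probability of this step by $\delta$; adding the $\delta$ from the \rtest step yields the overall probability $1-2\delta$.

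The main obstacle — and the reason the algorithm needs \rtest at all, as discussed in paragraph C) of \pref{sec:log.adaptivity} — is that, unlike in \pref{alg:LOGSSD}, we cannot here guarantee a sufficient sample count on $\calK\times\calA$ to bound $\norm{V^{\pi_{\gstar}}_{\gstar}}_{\infty}$ directly via \pref{lem:bounded error}, because the running estimate $z$ of $\aS_{L(1+\epsilon)}$ may be too small. The entire argument hinges on \rtest supplying the $32L$ sup-norm certificate from fresh samples, so that the $32L$ feeding into $N_{\dev}$ is legitimate and the policy is known to be proper. Verifying that this certificate is available precisely in the success rounds, and that the two confidence budgets telescope to $2\delta$, is the delicate part.
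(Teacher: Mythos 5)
Your proposal is correct and follows essentially the same route as the paper's proof: the sup-norm bound comes from \pref{lem:rtest} (True output certifies $\norm{V^{\pi_{\gstar}}_{\gstar}}_\infty \leq 32L$) with a union bound over trials and rounds, and the $L(1+\epsilon)$ bound comes from \pref{lem:V pi mean} with the $N_{\dev}(32L,\epsilon/256,\cdot)$ choice of $\lambda$, the success-round condition $\hattau \leq V_{\calK,\gstar}(s_0)+\epsilon L/2$, and the goal-selection condition $V_{\calK,\gstar}(s_0)\leq L$, yielding exactly the chain $V^{\tilpi_g}_g(s_0)\leq \hattau + L\epsilon/2 \leq V_{\calK,g}(s_0)+L\epsilon \leq L(1+\epsilon)$ with the same $\delta+\delta$ probability accounting. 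Your additional remarks (persistence of the stored policy, why \rtest{} replaces the sample-count argument of \pref{lem:bounded error}) are sound elaborations of the same argument rather than a different approach.
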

\begin{proof}
    By \pref{lem:rtest} and a union bound over all trials and rounds, with probability at least $1-\delta$, we have $\norm{V^{\tilpi_g}_g}_{\infty}\leq 32L$ for each $g\in\calK$, since $\tilpi_g$ passes the test in \pref{line:rtest.improved}.
    Moreover, by the definition of success round, value of $\lambda$, and \pref{lem:V pi mean}, with probability at least $1-\delta$, for each $g\in\calK$, in the round that $g$ is added to $\calK$, we have $V^{\tilpi_g}_g(s_0)=V^{\pi_g}_g(s_0)\leq \hattau + \frac{L\epsilon}{2} \leq V_{\calK,g}(s_0) + L\epsilon \leq L(1+\epsilon)$.
\end{proof}
\subsection{Properties of the sets built by \pref{alg:SD}}

\begin{lemma}[Restricted Optimism]
    \label{lem:V calK}
    With probability at least $1-\delta$ over the randomness of \pref{alg:SD}, at any trial and any round, after executing \pref{line:compute V.improved}, if $\calKstar_{z,j}\subseteq\calK$ for some $j\in[z]$, then
    $V_{\calK,g}(s) \leq \optV_{\calKstar_{z,j},g}(s)$ for any $s\in\calS$ and $g\in\calKstar_{z,j+1}\setminus\calK$. 
\end{lemma}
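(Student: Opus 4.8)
The plan is to mirror the two-step template of the proof of \pref{lem:V calK.easy}, adapting it to the truncated layers $\calKstar_{z,j}$ and to the $z$-dependent confidence level used by \pref{alg:SD}. First I would establish optimism of \VISGO when it is run on the \emph{deterministic} set $\calKstar_{z,j}$ via \pref{lem:opt}, and then transfer the bound to the data-dependent set $\calK$ using the monotonicity of \VISGO in its restriction set (\pref{lem:subset opt}). The only genuinely new ingredients are the bookkeeping of the confidence level, which now scales with $z^4$ rather than $S^2$, and a check that truncation does not break the boundedness prerequisite of \pref{lem:opt}.

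I would first reduce to the non-vacuous case: if $\calKstar_{z,j+1}\setminus\calKstar_{z,j}=\varnothing$ there is nothing to prove. By the definition of the truncation (and since $|\calKstar_1|=1<z$), a non-empty increment forces the truncation not to cut inside layer $j$, i.e.\ $\calKstar_{z,j}=\calKstar_j$ and $\calKstar_{z,j+1}\subseteq\calKstar_{j+1}$. Hence every $g\in\calKstar_{z,j+1}\setminus\calKstar_{z,j}$ lies in $\calKstar_{j+1}=\calT_L(\calKstar_j)$, so $\optV_{\calKstar_{z,j},g}(s_0)=\optV_{\calKstar_j,g}(s_0)\le L$ by the definition of $\calT_L$. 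A single \reset{} step bounds the hitting time from any other state by $1+\optV_{\calKstar_{z,j},g}(s_0)$, giving $\|\optV_{\calKstar_{z,j},g}\|_\infty\le L+1\le 2L$, which meets the prerequisite $\|\optV_{\calX,g}\|_\infty\le 8L$ of \pref{lem:opt}.

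Next I would invoke \pref{lem:opt} for each deterministic configuration $(\calKstar_{z,j},g)$, with the counter $\N$ and confidence $\tfrac{\delta}{4\tau^2 z^4 A L}$ used in \pref{line:compute V.improved}, obtaining $V_{\calKstar_{z,j},g}(s)\le\optV_{\calKstar_{z,j},g}(s)$ for all $s$ and all precisions simultaneously. Since $\calKstar_{z,j}$ and $g$ are deterministic functions of the MDP and of $z$ (they do not depend on the samples), the independence requirement of \pref{lem:opt} is met and its anytime guarantee absorbs the random status of $\N$ and the chosen $\epsilon_{\VI}$. For a fixed $z$ the increments $\calKstar_{z,j+1}\setminus\calKstar_{z,j}$ are disjoint and jointly contain fewer than $z$ states, so there are at most $z$ relevant pairs $(j,g)$; a union bound over them costs $z\cdot\tfrac{\delta}{4\tau^2 z^4 A L}=\tfrac{\delta}{4\tau^2 z^3 A L}$, and summing the geometric series over the dyadic values of $z$ and then over $\tau$ keeps the total failure probability below $\delta$. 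On this good event, whenever $\calKstar_{z,j}\subseteq\calK$ after \pref{line:compute V.improved}, \pref{lem:subset opt} applied with $\calKstar_{z,j}\subseteq\calK$ (both runs fed the same $g\notin\calK$, precision $\epsilon_{\VI}$, counter $\N$, and confidence) gives $V_{\calK,g}(s)\le V_{\calKstar_{z,j},g}(s)\le\optV_{\calKstar_{z,j},g}(s)$, which is the claim.

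The main obstacle is precisely this union-bound bookkeeping: unlike \pref{lem:V calK.easy}, we cannot afford a union bound over $\calS$, so the argument must exploit that for each trial only $\le z$ configurations matter and that the $z^4$ factor in the confidence (together with the $\tau^{-2}$ factor) makes the double sum over $z$ and $\tau$ converge. Some care is also needed to keep \pref{lem:opt} applied only to the \emph{fixed} truncated layer $\calKstar_{z,j}$ rather than to the random $\calK$, deferring the passage to $\calK$ entirely to the monotonicity step, exactly as in the finite-$S$ proof.
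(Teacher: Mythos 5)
Your proposal is correct and follows essentially the same two-step route as the paper's own proof: apply \pref{lem:opt} to the deterministic truncated layers $\calKstar_{z',j}$ (with the boundedness prerequisite supplied by \pref{lem:init bound}, which you verify in even more detail than the paper), union bound over all configurations $(\tau', z', j, g)$, and only then pass to the random set $\calK$ via the monotonicity result \pref{lem:subset opt}. The one slip is your claim that the union bound over $z$ can be taken over \emph{dyadic} values: in \pref{alg:SD}, $z$ is updated to $2|\calK\cup\calK'|$, which need not be a power of two, so the good event must cover all integer values of $z'$ (as the paper does); this is immaterial, since the $\frac{\delta}{4(\tau')^2(z')^4AL}$ confidence level with at most $z'$ (or even $(z')^2$) relevant pairs $(j,g)$ per value of $z'$ still yields a convergent series and total failure probability below $\delta$.
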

\begin{proof}
    For any $\tau'\geq 1$, $z'\geq 1$, $j\in[z']$, $g\in\calKstar_{z',j+1}\setminus\calKstar_{z',j}$, by \pref{lem:opt} and $\norm{\optV_{\calKstar_{z',j},g}}_{\infty}\leq L+1$ (\pref{lem:init bound}), with probability at least $1-\frac{\delta}{4(z')^4(\tau')^2}$, for any status of $\N$ and $\xi>0$, we have $V(s)\leq \optV_{\calKstar_{z',j},g}(s)$ for all $s\in\calS$ where $(\_,V,\_)=\VISGO(\calKstar_{z',j}, g, \xi, \N, \frac{\delta}{4(\tau')^2(z')^4AL})$.
    By a union bound, all events above hold simultaneously with probability at least $1-\delta$.

    At any trial $\tau$ and round, after executing \pref{line:compute V.improved}, let $(\_,V_{\calKstar_{z,j}, g}, \_)=\VISGO(\calKstar_{z,j}, g, \epsilon_{\VI},\N,\delta')$ (no need to compute explicitly) for any $j\in[z]$, and $g\in\calKstar_{z,j+1}\setminus\calKstar_{z,j}$, where $\delta'=\frac{\delta}{4\tau^2z^4AL}$.
    The union bound above implies that $V_{\calKstar_{z,j}, g}(s) \leq \optV_{\calKstar_{z,j}, g}(s)$ for any $s\in\calS$.
    Then by \pref{lem:subset opt}, we also have $V_{\calK,g}(s) \leq \optV_{\calKstar_{z,j},g}(s)$ if $\calKstar_{z,j}\subseteq\calK$ ($V_{\calK,g}$ is computed in \pref{line:compute V.improved}).
\end{proof}

\begin{lemma}
	 \label{lem:update calK}
	For a given trial $(\tau, z)$, denote by $\calK_r$ the set $\calK$ at the end of each round $r$. With probability at least $1-2\delta$, for any $j \geq 1$ and round $r \geq 1$ in any trial in which $\calK_r$ is updated or returned (i.e., \pref{line:compute calU'.easy} is executed) and $\calK_{r-1} \supseteq \calKstar_{j}$, we have $\calK^\star_{j+1} \subseteq \calK_r$.
\end{lemma}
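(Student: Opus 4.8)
The plan is to mirror the proof of \pref{lem:update calK.easy}, replacing its two ingredients by their \algop counterparts and carefully tracking the per-trial size budget $z$. Concretely, I would condition on the event of \pref{lem:calU} (completeness of the candidate set $\calU$ produced by \textsc{ComputeU}) and on the event of \pref{lem:V calK} (restricted optimism), which together hold with probability at least $1-2\delta$ and account for the stated confidence. Throughout I fix the trial $(\tau,z)$ and an expansion round $r$, i.e.\ a round in which \pref{line:add} is executed, with $\calK_{r-1}\supseteq\calKstar_j$.

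First I would set up the trial bookkeeping. Since the size check on \pref{line:z.improved} is passed at the start of every round of the trial, we have $|\calK\cup\calK'|<z$ at the start of round $r$; in particular $|\calK_{r-1}|<z$, so the hypothesis $\calKstar_j\subseteq\calK_{r-1}$ forces $|\calKstar_j|<z$ and hence $\calKstar_{z,j}=\calKstar_j$. Let $r'<r$ be the last round in which $\calK$ was updated, so that $\calU$ was rebuilt by \textsc{ComputeU} at round $r'$; between $r'$ and $r$ the set $\calK$ is unchanged, giving $\calK_{r-1}=\calK_{r'}$, while $\calU$ can only lose elements, namely the states moved into $\calK'$ during success rounds. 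Using monotonicity of $\calT_L$ together with $\calKstar_{j+1}=\calT_L(\calKstar_j)$, the capped layer satisfies $\calKstar_{z,j+1}\setminus\calK_{r'}\subseteq\calT_L(\calK_{r'})\setminus\calK_{r'}$, and \pref{lem:calU} then yields $\calKstar_{z,j+1}\setminus\calK_{r'}\subseteq\calU_{r'}$ (the genuine layer states survive the fresh-sample hitting-time filter inside \textsc{ComputeU} because their optimal restricted hitting time is at most $L$).

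Next I would run the optimism argument. For any $g\in\calKstar_{z,j+1}\setminus\calK_{r-1}$, \pref{lem:V calK} applied with $\calKstar_{z,j}=\calKstar_j\subseteq\calK_{r-1}$ gives $V_{\calK_{r-1},g}(s_0)\leq\optV_{\calKstar_{z,j},g}(s_0)=\optV_{\calKstar_j,g}(s_0)\leq L$. Because round $r$ is an expansion round, the goal condition on \pref{line:goal condition.improved} held, so every state remaining in the candidate set $\calU_{r-1}$ at the start of round $r$ has optimistic value strictly larger than $L$; hence $g\notin\calU_{r-1}$. Since $g\in\calU_{r'}$ and, over the rounds $r'+1,\dots,r-1$, $\calU$ can shed a state only by inserting it into $\calK'$, it follows that $g\in\calK'$, and therefore $g\in\calK_{r'}\cup\calK'=\calK_r$. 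This proves $\calKstar_{z,j+1}\subseteq\calK_r$.

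Finally I would remove the cap. As noted, $|\calK_r|<z$, so $|\calKstar_{z,j+1}|\leq|\calK_r|<z$; since $\calKstar_{z,j+1}=\{s_1,\dots,s_z\}$ has size exactly $z$ whenever $|\calKstar_{j+1}|\geq z$, this forces $|\calKstar_{j+1}|<z$ and thus $\calKstar_{z,j+1}=\calKstar_{j+1}$, giving $\calKstar_{j+1}\subseteq\calK_r$. A union bound over the two conditioned events closes the argument. The main obstacle is precisely this interaction with the budget $z$: one must reason entirely in terms of the capped layers $\calKstar_{z,\cdot}$ (the only form in which \pref{lem:V calK} and \pref{lem:calU} are stated) and only translate back to $\calKstar_{j+1}$ at the very end, using that an expansion round can fire with $\calK_{r-1}\supseteq\calKstar_j$ only when the next layer actually fits below the current budget. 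A secondary point to check is that the fresh-sample filtering in \textsc{ComputeU} discards no true next-layer state, which is exactly where \pref{lem:calU} is needed.
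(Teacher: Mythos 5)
Your proposal is correct and follows essentially the same route as the paper's proof: condition on the events of \pref{lem:calU} and \pref{lem:V calK} (union bound giving $1-2\delta$), trace back to the last update round $r'$ with $\calK_{r'}=\calK_{r-1}$, use monotonicity of $\calT_L$ plus completeness of $\calU_{r'}$ to place $\calKstar_{z,j+1}\setminus\calK_{r'}$ in $\calU_{r'}$, rule out any such state surviving in $\calU_{r-1}$ via restricted optimism and the expansion condition, and finally uncap $\calKstar_{z,j+1}=\calKstar_{j+1}$ using the size check of \pref{line:z.improved}. Your explicit bookkeeping of how states leave $\calU$ (only into $\calK'$) and of the cap at both ends is just a more spelled-out version of what the paper does via contradiction.
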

\begin{proof}
	In this lemma we denote by $\calU_r$ the value of $\calU$ \textit{at the end of} round $r$.
    Define the event $E := \{ \text{for any trial, }\forall r\geq 1 \text{ in which $\calK_r$ is updated}: \calT_L(\calK_r) \setminus \calK_r \subseteq \calU_r\}$. By \pref{lem:calU}, it holds with probability at least $1-\delta$. Let us carry out the proof conditioned on $E$ holding. 
    
    In any trial, take some round $r$ such that \pref{line:compute calU'.easy} is executed and $\calK_{r-1} \supseteq \calKstar_{j}$. Let $r'<r$ be the last round where $\calK_{r'}$ was updated (and thus $\calU_{r'}$ was created). Note that $\calK_{r'} = \calK_{r-1} \supseteq \calKstar_j$. Then, event $E$ and the definition of the sets $(\calKstar_j)_j$ directly imply that $\calKstar_{j+1} := \calT_L(\calKstar_j) \subseteq \calT_L(\calK_{r'}) \subseteq \calU_{r'} \cup \calK_{r'}$. Since $\calK_r$ can only be formed by adding states in $\calU_{r'}$ to $\calK_{r'}$, and the union of these sets contains $\calKstar_{j+1}$, if $\calKstar_{z,j+1} \not\subseteq \calK_r$, it must be that there exists $g\in\calU_{r-1} \cap \calKstar_{z,j+1}$ s.t. $V_{\calK_{r-1},g}(s_0) > L$. However, \pref{lem:V calK}, which holds with probability $1-\delta$, implies that, at any round $r\geq 1$, if $\calKstar_{j}\subseteq\calK_{r-1}$ (which implies that $z>|\calKstar_j|$ and $\calKstar_j=\calKstar_{z,j}$ by \pref{line:z.improved}), then
    $V_{\calK_{r-1},g}(s_0) \leq \optV_{\calKstar_{j},g}(s_0) \leq L$ for any $g\in\calKstar_{z,j+1}\setminus\calK_{r-1}$. This is a contradiction, which implies that $\calU_{r-1} \cap \calKstar_{z,j+1} = \emptyset$ and, thus, all states in $\calKstar_{z,j+1}$ must have been added to $\calK_r$.
    Moreover, since a new trial is not triggered in round $r$, by \pref{line:z.improved}, we have $z>|\calKstar_{z,j+1}|$ and $\calKstar_{z,j+1}=\calKstar_{j+1}$.
    This completes the proof.
\end{proof}

\begin{lemma}
    \label{lem:calK}
    For a given trial $(\tau, z)$, denote by $\calK_r$ the set $\calK$ at the end of each round $r$ inside the trial.
    With probability at least $1-4\delta$, at any trial $(\tau,z)$, we have $\calK_r \subseteq \acalS_{L(1+\epsilon)}$ for any round $r$, and $\acalS_L\subseteq\calK_r$ if the algorithm terminates at round $r$.
\end{lemma}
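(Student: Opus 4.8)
The statement bundles a \emph{containment} claim ($\calK_r \subseteq \acalS_{L(1+\epsilon)}$ in every round of every trial) with a \emph{correctness} claim ($\acalS_L \subseteq \calK_r$ whenever the algorithm returns), and it is the \algop analogue of \pref{lem:calK.easy} together with the second part of \pref{lem:update calK.easy}. The plan is to prove the two claims separately: the first by induction over rounds within a fixed trial, the second by induction over layers using the expansion guarantee of \pref{lem:update calK}, and then to collect the confidence budgets of the auxiliary lemmas into the stated $1-4\delta$.

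For the containment claim I would argue by induction on the round index inside a fixed trial $(\tau,z)$. The base case is immediate: at the first round $\calU=\varnothing$, so \pref{line:goal condition.improved} is true and the only update sets $\calK \leftarrow \{s_0\} \subseteq \acalS_{L(1+\epsilon)}$. For the inductive step, assume $\calK_{r-1} \subseteq \acalS_{L(1+\epsilon)}$ and suppose that in round $r$ the algorithm picks a candidate $\gstar \in \calU \setminus \acalS_{L(1+\epsilon)}$, that \pref{line:goal condition.improved} is false (so $\gstar$ exists and $V_{\calK,\gstar}(s_0) \leq L$), that \rtest returns \textbf{True} at \pref{line:rtest.improved}, and that no skip round is triggered, so that the failure test at \pref{line:failure.improved} is reached. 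A passing \rtest certifies via \pref{lem:rtest} that $\|V^{\pi_{\gstar}}_{\gstar}\|_\infty \lesssim L$; feeding this into the deviation bound \pref{lem:V pi mean} with the choice of $\lambda$ in \pref{line:PE.improved} gives $\hattau \geq V^{\pi_{\gstar}}_{\gstar}(s_0) - L\epsilon/2$. Since $\pi_{\gstar}$ is restricted on $\calK_{r-1} \subseteq \acalS_{L(1+\epsilon)}$ and $\gstar \notin \acalS_{L(1+\epsilon)}$, we have $V^{\pi_{\gstar}}_{\gstar}(s_0) \geq \optV_{\acalS_{L(1+\epsilon)},\gstar}(s_0) > L(1+\epsilon)$, whence $\hattau > L(1+\epsilon/2) \geq V_{\calK,\gstar}(s_0) + L\epsilon/2$ using $V_{\calK,\gstar}(s_0)\le L$. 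Thus the failure test fires and $\gstar$ is never added to $\calK'$ or $\calK$, giving $\calK_r \subseteq \acalS_{L(1+\epsilon)}$. A union bound of the per-round \rtest and deviation events over all rounds and all trials (finitely many by \pref{lem:bound z}) keeps this failure probability $O(\delta)$.

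For the correctness claim I would condition on the events of \pref{lem:V calK} (restricted optimism) and \pref{lem:update calK} (layer expansion), and recall that $\calK_r \supseteq \calKstar_{j}$ for some $j$ in every round, starting from $\calKstar_1 = \{s_0\}$. Suppose the algorithm returns at round $r$ through \pref{line:terminate.improved}, i.e., \pref{line:goal condition.improved} is true and $\calK'=\varnothing$, and assume for contradiction $\acalS_L \not\subseteq \calK_{r-1}$. Let $j^\star$ be the largest index with $\calKstar_{j^\star} \subseteq \calK_{r-1}$; by \pref{lem:SL.operator}, $j^\star < J$, so $\calKstar_{j^\star+1} \supsetneq \calKstar_{j^\star}$. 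Because termination happens in the final trial, \pref{lem:bound z} gives $z > \aS_{L(1+\epsilon)} \geq |\calKstar_{j^\star+1}|$, so $\calKstar_{z,j^\star} = \calKstar_{j^\star}$. Let $r'$ be the last round at which the expansion step (\pref{line:add}) was executed; monotonicity of $\calT_L$ and \pref{lem:calU} give $\calKstar_{j^\star+1} = \calT_L(\calKstar_{j^\star}) \subseteq \calT_L(\calK_{r'}) \subseteq \calU_{r'} \cup \calK_{r'} = \calU_{r-1} \cup \calK_{r-1}$ (nothing was consolidated since $\calK'=\varnothing$). Since $\calKstar_{j^\star+1} \not\subseteq \calK_{r-1}$, there is $g \in \calKstar_{j^\star+1} \cap \calU_{r-1}$, and termination forces $V_{\calK_{r-1},g}(s_0) > L$ (otherwise \pref{line:goal condition.improved} would be false). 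But $g \in \calT_L(\calKstar_{j^\star})$ and \pref{lem:V calK} yield $V_{\calK_{r-1},g}(s_0) \leq \optV_{\calKstar_{z,j^\star},g}(s_0) = \optV_{\calKstar_{j^\star},g}(s_0) \leq L$, a contradiction; hence the algorithm cannot return unless $\acalS_L \subseteq \calK_r$.

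The main obstacle I anticipate is the interplay between the truncated layers $\calKstar_{z,j}$ and the trial mechanism: \pref{lem:V calK} and \pref{lem:calU} are stated in terms of $\calKstar_{z,j}$, and one must check that at every invocation the relevant identity $\calKstar_{z,j} = \calKstar_{j}$ holds (ensured by \pref{line:z.improved} and \pref{lem:bound z}), and that exceeding $z$ merely restarts the trial rather than breaking the invariant. A secondary subtlety, absent in \algo, is that the minimum-sampling guarantee is now certified \emph{indirectly} through \rtest (\pref{lem:rtest}) instead of a fixed \fillc budget, so the bound $\|V^{\pi_{\gstar}}_{\gstar}\|_\infty \lesssim L$ needed in the deviation step must be sourced from a passing test. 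Collecting the budgets — the union-bounded \rtest/deviation events at $O(\delta)$ for containment, and \pref{lem:update calK} at $1-2\delta$ (which already absorbs \pref{lem:calU} and \pref{lem:V calK}) for correctness — yields the claimed $1-4\delta$.
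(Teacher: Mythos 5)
Your proof of the containment claim ($\calK_r \subseteq \acalS_{L(1+\epsilon)}$) is correct and essentially identical to the paper's: induction over rounds, with \pref{lem:rtest} certifying $\|V^{\pi_{\gstar}}_{\gstar}\|_\infty \lesssim L$ for a goal that passes \pref{line:rtest.improved}, \pref{lem:V pi mean} with the choice of $\lambda$ giving $\hattau \geq V^{\pi_{\gstar}}_{\gstar}(s_0) - L\epsilon/2$, and the restriction of $\pi_{\gstar}$ to $\calK_{r-1}$ forcing the failure test at \pref{line:failure.improved} to fire whenever $\gstar \notin \acalS_{L(1+\epsilon)}$. For the correctness claim you unfold an inline contradiction argument (largest $j^\star$ with $\calKstar_{j^\star} \subseteq \calK_{r-1}$), in the style of the paper's treatment of the simpler algorithm in \pref{lem:update calK.easy}, whereas the paper's proof of \pref{lem:calK} simply applies \pref{lem:update calK} recursively at the terminating round: since $\calK' = \varnothing$ there, $\calKstar_j \subseteq \calK_{r-1} = \calK_r$ implies $\calKstar_{j+1} \subseteq \calK_r$, and iterating plus \pref{lem:SL.operator} finishes. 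Both routes are structurally sound, but the paper's is cleaner precisely because \pref{lem:update calK} already encapsulates the delicate bookkeeping of the truncated layers $\calKstar_{z,j}$ — and that bookkeeping is where your version breaks.

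Concretely, you justify $\calKstar_{z,j^\star} = \calKstar_{j^\star}$ (and, implicitly, that your witness $g \in \calKstar_{j^\star+1} \cap \calU_{r-1}$ lies in $\calKstar_{z,j^\star+1}$, which is what \pref{lem:V calK} actually requires) by asserting that termination happens in the final trial and "\pref{lem:bound z} gives $z > \aS_{L(1+\epsilon)}$". This is wrong on two counts. First, \pref{lem:bound z} is an \emph{upper} bound $z \leq 2\aS_{L(1+\epsilon)}+2$, not a lower bound, and it is itself conditioned on the event of \pref{lem:calK} — the lemma being proved — so invoking it here risks circularity. Second, the claim is false in general: at termination the trial-restart rule (\pref{line:z.improved}) only guarantees $|\calK_{r-1} \cup \calK'_{r-1}| < z$, and the returned set can be as small as $\acalS_L$, which may be far smaller than $\acalS_{L(1+\epsilon)}$ (cf.\ \pref{lem:example 2L}); nothing forces $z$ to exceed $\aS_{L(1+\epsilon)}$. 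The repair is exactly what the paper does inside \pref{lem:update calK}: from \pref{line:z.improved}, $|\calKstar_{j^\star}| \leq |\calK_{r-1}| < z$, hence $\calKstar_{z,j^\star} = \calKstar_{j^\star}$, and the contradiction must be run with the \emph{truncated} next layer $\calKstar_{z,j^\star+1}$. That set satisfies $\calKstar_{z,j^\star+1} \subseteq \calT_L(\calKstar_{j^\star}) \subseteq \calT_L(\calK_{r-1}) \subseteq \calU_{r-1} \cup \calK_{r-1}$ (monotonicity of $\calT_L$ and \pref{lem:calU}), and $\calKstar_{z,j^\star+1} \not\subseteq \calK_{r-1}$ — by maximality of $j^\star$ if $|\calKstar_{j^\star+1}| < z$, and by cardinality ($|\calKstar_{z,j^\star+1}| = z > |\calK_{r-1}|$) otherwise — so a witness $g \in \calKstar_{z,j^\star+1} \setminus \calK_{r-1} \subseteq \calU_{r-1}$ exists to which \pref{lem:V calK} applies, giving $V_{\calK_{r-1},g}(s_0) \leq \optV_{\calKstar_{j^\star},g}(s_0) \leq L$ and contradicting termination. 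With this substitution your argument goes through and matches the paper's probability budget.
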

\begin{proof}
    Fix any trial $(\tau,z)$. 
    Clearly, $\calK_1\subseteq \acalS_{L(1+\epsilon)}$. 
    To prove the first statement, consider a round $r\geq 1$ and suppose $\calK_r\subseteq \acalS_{L(1+\epsilon)}$. If, in this round, the algorithm selects a goal $\gstar\in \calU\setminus\acalS_{L(1+\epsilon)}$, $\pi_{\gstar}$ passes the test of \pref{line:rtest.improved}, and a skip round is not triggered, then we show that the ``failure test'' in \pref{line:failure.improved} is triggered. 

    Since $\pi_{\gstar}$ passed the test of \pref{line:rtest.improved}, we have $\|V^{\pi_{\gstar}}_{\gstar}\|_\infty \leq 32L$ with probability at least $1-\delta$ by \pref{lem:rtest} and a union bound over all trials and rounds. Combining this with \pref{lem:V pi mean} and the value of $\lambda$ (\pref{line:PE.improved}) (again by a union bound over all trials and rounds), we have $\hattau\geq V^{\pi_{\gstar}}_{\gstar}(s_0) - L\epsilon/2$ with probability at least $1-2\delta$. By assumption on $g^\star$ and since $\pi_{\gstar}$ is restricted on $\calK_r\subseteq\acalS_{L(1+\epsilon)}$, we have $V^{\pi_{\gstar}}_{\gstar}(s_0) \geq V^{\star}_{\calK_r,\gstar}(s_0) \geq V^{\star}_{\acalS_{L(1+\epsilon)},\gstar}(s_0) > L(1+\epsilon)$, which implies that $\hattau\geq L(1+\epsilon/2) \geq V_{\calK_r, \gstar}(s_0) + \epsilon L/2$, where the last inequality is from the goal-selection rule. 
    Therefore, the failure test triggers and $g^\star$ is not added to $\calK'$.
    Overall, any $g\notin\acalS_{L(1+\epsilon)}$ will never be added to $\calK$ or $\calK'$ throughout the execution of \pref{alg:SD}.

    To prove the second statement, let us consider any trial $(\tau,z)$ where the algorithm stops.
    Clearly, $\calKstar_1\subseteq \calK_1$ at the end of round $r=1$ in this last trial.
    Then, if $r$ is the round where the algorithm terminates, and $\calKstar_j\subseteq\calK_{r-1}$ for some $j\geq 1$, we have $\calKstar_{j+1}\subseteq\calK_r$ with probability at least $1-2\delta$ by \pref{lem:update calK}.
    Moreover, since $\calK'=\varnothing$ in round $r$, we have $\calKstar_{j+1}\subseteq\calK_{r-1}=\calK_r$.
    By a recursive application of \pref{lem:update calK}, we have $\calKstar_j\subseteq\calK_r$ for any $j\geq 1$ (note that $\calK'=\varnothing$ at the beginning of round $r$).
    \pref{lem:SL.operator} then implies the statement.
\end{proof}

\begin{lemma}
    \label{lem:bcalU}
    Conditioned on the events of \pref{lem:calU} and \pref{lem:calK}, $\calU\subseteq\bcalU$ at the beginning of any round in any trial.
\end{lemma}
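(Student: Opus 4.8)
The plan is to exploit the fact that $\calU$ is written only inside the procedure \textsc{ComputeU} (\pref{line:add}) and thereafter can only shrink (each success round performs $\calU\leftarrow\calU\setminus\{\gstar\}$, while failure, skip and reachability-test rounds leave $\calU$ untouched, and every expansion first resets $\calU=\varnothing$). Since $\varnothing\subseteq\bcalU$, the desired inclusion will hold at the beginning of every round of every trial as soon as it holds immediately after each execution of \textsc{ComputeU}. It therefore suffices to fix an arbitrary $g$ surviving the filtering step (\pref{line:filter calU'.improved}): some $g\in\calU'$ with $V'_{\calK,g}(s_0)\le L$, where $(\_,V'_{\calK,g},\pi'_g)=\VISGO(\calK,g,\tfrac1{16},\N',\cdot)$, and where the returned $\pi'_g$ is restricted on $\calK$ by construction of \VISGO (it plays $\reset$ outside $\calK$). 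Note $g\notin\calK$, and $s_0\in\calK$ at this point, so $g\ne s_0$.

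First I would turn the optimistic estimate into a genuine hitting-time bound for $\pi'_g$. The fresh counter $\N'$ gathered on \pref{line:gather N'.improved} satisfies $\N'(s,a)\ge N_1(|\calK|,\cdot)$ for every $(s,a)\in\calK\times\calA$, and it is independent of $(\calK,g)$ since it is discarded after filtering; hence, conditioning on the event of \pref{lem:calU}, \pref{lem:bounded error fresh} gives $V^{\pi'_g}_g(s_0)\le 2V'_{\calK,g}(s_0)\le 2L$. This is the sole role of the filter: it certifies that each surviving candidate is genuinely $O(L)$-controllable by a policy restricted on $\calK$.

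The core step is a geometric domination argument converting this bounded hitting time into the existence of a single high-probability transition into $g$. Set $p^\star=\max_{(s,a)\in\calK\times\calA}P(g\mid s,a)$. Under $\pi'_g$ started at $s_0$, at every step the conditional probability of entering $g$ is at most $p^\star$: it equals $P(g\mid s,\pi'_g(s))\le p^\star$ when the current state $s\in\calK$, and it is $0$ when $s\notin\calK$, since there $\pi'_g$ plays $\reset$ and $P_{s,\reset}(s_0)=1$ with $s_0\ne g$. Writing $\omega_g$ for the hitting time, this yields $\mathbb{P}^{\pi'_g}(\omega_g>t\mid s_1=s_0)\ge(1-p^\star)^t$ for all $t$, and therefore
\[
V^{\pi'_g}_g(s_0)=\E^{\pi'_g}[\omega_g\mid s_1=s_0]=\sum_{t\ge 0}\mathbb{P}^{\pi'_g}(\omega_g>t\mid s_1=s_0)\ge\sum_{t\ge 0}(1-p^\star)^t=\frac{1}{p^\star}.
\]
Combining with $V^{\pi'_g}_g(s_0)\le 2L$ gives $p^\star\ge\tfrac1{2L}$, i.e.\ there exists $(s,a)\in\calK\times\calA$ with $P(g\mid s,a)\ge\tfrac1{2L}$.

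Finally, conditioning on \pref{lem:calK} gives $\calK\subseteq\acalS_{L(1+\epsilon)}$, so the witnessing state $s$ lies in $\acalS_{L(1+\epsilon)}$, whence $g\in\bcalU$ by definition of $\bcalU$. As $g\in\calU$ was arbitrary, $\calU\subseteq\bcalU$ right after \textsc{ComputeU}, and the monotone-shrinking observation of the first paragraph propagates this to the start of every round. I expect the main obstacle to be paragraphs two and three rather than the bookkeeping: one must argue carefully that the geometric bound survives the possibility that $\pi'_g$ leaves $\calK$ (the restriction-to-$\calK$ structure together with the $\reset$ action are precisely what keep the per-step escape probability into $g$ below $p^\star$), and that the fresh-sample independence demanded by \pref{lem:bounded error fresh} is genuinely in force so that $V'_{\calK,g}$ can legitimately be upgraded to the true hitting time $V^{\pi'_g}_g$.
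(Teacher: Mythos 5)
Your proof is correct in substance and shares the paper's skeleton: you reduce everything to verifying the inclusion right after \textsc{ComputeU}, since $\calU$ starts empty at each trial and otherwise only shrinks, which is exactly the paper's bookkeeping. Where you diverge is in how the post-\textsc{ComputeU} inclusion is established. The paper's own proof is two lines: it conditions on the event of \pref{lem:calU}, which already asserts $\calU \subseteq \rS{\calK}{2L}\setminus\calK$, combines this with $\calK\subseteq\acalS_{L(1+\epsilon)}$ from \pref{lem:calK}, and treats the remaining inclusion $\rS{\calK}{2L}\setminus\calK\subseteq\bcalU$ as immediate (it is the same ``expected hitting time at most $2L$ under a policy restricted on $\calK$ implies some one-step transition probability at least $\tfrac{1}{2L}$'' fact that the paper asserts with a ``clearly'' for $\calUstar_z$ in the notation section). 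You instead re-derive an $O(L)$ hitting-time bound from the filtering step via \pref{lem:bounded error fresh}, and then prove the transition-probability fact explicitly by the geometric-domination computation, including the necessary checks that $\pi'_g$ plays $\reset$ outside $\calK$ and that $g\neq s_0$. That computation is precisely the justification the paper leaves implicit, so making it explicit is a genuine plus of your write-up.

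The one blemish is the conditioning. The lemma is stated as a deterministic consequence of the two named events, but your route through \pref{lem:bounded error fresh} invokes a third high-probability event (that VISGO's returned policy satisfies $V^{\pi'_g}_g\le 2V'_{\calK,g}$), which is not implied by the set inclusions that constitute the event of \pref{lem:calU}. This is easily repaired and does not affect the substance: the event of \pref{lem:calU} directly gives $g\in\rS{\calK}{2L}\setminus\calK$, i.e.\ $V^{\star}_{\calK,g}(s_0)\le 2L$, so you can run your geometric argument on an (near-)optimal policy restricted on $\calK$ and drop the appeal to \pref{lem:bounded error fresh} entirely, leaving a proof that uses exactly the two events the statement names.
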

\begin{proof}
    This is clearly true at the beginning of the first round of any trial since $\calU=\varnothing$.
    Then by the events of \pref{lem:calU} and \pref{lem:calK}, $\calU \subseteq \rS{\calK}{2L}\setminus\calK\subseteq\bcalU$ every time after executing \pref{line:add}.
    Moreover, we only remove elements from $\calU$ except when executing \pref{line:add}.
    This completes the proof.
\end{proof}

\begin{lemma}
    \label{lem:calK id}
    Denote by $\calK_r$ the set $\calK$ at the end of each round $r$.
    With \pref{assum:id}, with probability at least $1-8\delta$ over the randomness of \pref{alg:SD}, we have that $\calK_r = \calKstar_j$ for some $j \in [\aS_L]$ at any round $r$ and, $\calK_{r}=\acalS_L$ if the algorithm terminates at round $r$.
\end{lemma}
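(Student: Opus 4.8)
The plan is to mirror the proof of \pref{lem:calK id.easy}, the analogous statement for \pref{alg:LOGSSD}, while accounting for the trial mechanism and the reachability test that are specific to \pref{alg:SD}. First I would invoke \pref{lem:calK} to obtain $\acalS_L \subseteq \calK_r \subseteq \acalS_{L(1+\epsilon)}$ whenever the algorithm terminates at round $r$, and then apply \pref{rem:id}, which asserts $\acalS_L = \acalS_{L(1+\epsilon)}$ under \pref{assum:id}, to conclude $\calK_r = \acalS_L$ at termination. This settles the termination claim, so the remaining and main content is the invariant that, at the end of every round $r$ of every trial, $\calK_r = \calKstar_j$ for some $j \in [\aS_L]$.

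I would prove the invariant by induction on the rounds, restarting the induction at the beginning of each trial. The base case is immediate: at the start of a trial $\calK = \varnothing$ and $\calK' = \{s_0\}$, and since $\calU = \varnothing$ forces the expansion branch, the first merge yields $\calK = \{s_0\} = \calKstar_1$. For the inductive step, suppose $\calK_{r-1} = \calKstar_j$ and that $\calK$ is next updated at some round $r' > r$. The forward inclusion $\calKstar_{j+1} \subseteq \calK_{r'}$ is exactly \pref{lem:update calK}. The crux is the reverse inclusion $\calK_{r'} \subseteq \calKstar_{j+1}$, i.e. that while $\calK = \calKstar_j$ no state outside $\calKstar_{j+1}$ is ever appended to the buffer $\calK'$.

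To establish the reverse inclusion I would fix a round with $\calK_{r-1} = \calKstar_j$ and a candidate goal $\gstar \in \calU \setminus \calKstar_{j+1}$, and show that the failure test at \pref{line:failure.improved} fires so that $\gstar$ is discarded. Policy evaluation is reached only when $\pi_{\gstar}$ passes the reachability test at \pref{line:rtest.improved}, so \pref{lem:rtest} (with a union bound over trials and rounds) guarantees $\norm{V^{\pi_{\gstar}}_{\gstar}}_{\infty} \leq 32L$; this bounded hitting time is precisely the precondition of \pref{lem:V pi mean}, which then gives $\hattau \geq V^{\pi_{\gstar}}_{\gstar}(s_0) - L\epsilon/2$. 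Since $\pi_{\gstar}$ is restricted on $\calK_{r-1} = \calKstar_j$ and $\gstar \notin \calKstar_{j+1}$, \pref{assum:id} yields $V^{\pi_{\gstar}}_{\gstar}(s_0) \geq \optV_{\calKstar_j, \gstar}(s_0) > L(1+\epsilon)$. Combining these, $\hattau \geq L(1+\epsilon/2) \geq V_{\calK_{r-1}, \gstar}(s_0) + \epsilon L/2$, where the last step uses that \pref{line:goal condition.improved} was False and hence $V_{\calK_{r-1},\gstar}(s_0) \leq L$. The failure test therefore triggers, $\gstar$ is never added, so $\calK_{r'} \subseteq \calKstar_{j+1}$, and with the forward inclusion $\calK_{r'} = \calKstar_{j+1}$.

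The step I expect to be the main obstacle, beyond what the finite-$S$ proof required, is reconciling the \emph{truncated} layers $\calKstar_{z,j}$ used internally by \pref{alg:SD} with the genuine layers $\calKstar_j$ in the statement, and making the reverse-inclusion chain uniform over the random sets $\calK$ across all trials. I would handle this by noting that the check at \pref{line:z.improved} enforces $|\calK \cup \calK'| < z$ throughout a trial, so whenever $\calKstar_{j+1}$ is completed without triggering a new trial one has $\calKstar_{z,j+1} = \calKstar_{j+1}$ (exactly as in the proof of \pref{lem:update calK}); if instead a layer is too large to fit within the current $z$, then the condition of \pref{line:z.improved} forces a restart with doubled $z$, during which $\calK$ remains equal to the last completed layer $\calKstar_j$ and the induction simply restarts at $\calKstar_1$ in the new trial. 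Hence $\calK_r = \calKstar_j$ for some $j$ at every round across all trials. The claimed $1-8\delta$ probability follows from a union bound over the events of \pref{lem:calK} ($1-4\delta$), \pref{lem:update calK} ($1-2\delta$), the reachability-test guarantee of \pref{lem:rtest} ($1-\delta$), and the policy-evaluation guarantee of \pref{lem:V pi mean} ($1-\delta$).
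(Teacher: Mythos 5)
Your proposal is correct and follows essentially the same route as the paper's proof: termination is handled via \pref{lem:calK} plus \pref{rem:id}, and the round-by-round invariant is proved by combining the forward inclusion from \pref{lem:update calK} with the reverse inclusion obtained by showing the failure test at \pref{line:failure.improved} fires for any $\gstar\notin\calKstar_{j+1}$, using \pref{lem:rtest}, \pref{lem:V pi mean}, and \pref{assum:id}, with the same $1-8\delta$ accounting. Your additional care about the truncated layers $\calKstar_{z,j}$ and trial restarts is a welcome elaboration of a point the paper delegates to \pref{lem:update calK} and \pref{line:z.improved}, but it does not change the argument.
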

\begin{proof}
    By \pref{lem:calK}, with probability at least $1-4\delta$, we have $\acalS_L \subseteq \calK \subseteq \acalS_{L(1+\epsilon)}$ if the algorithm terminates.
    By \pref{rem:id}, $\calK = \acalS_L$.
    Thus, it suffices to show that at any trial $\calK=\calKstar_j$ for some $j \leq \aS_L$.

    The algorithm is such that $\calKstar_1 = \calK_1 = \{s_0\}$.
    Suppose at the end of a round $r$ we have that $\calK_r=\calKstar_j$ for some $j\geq 1$.
    By \pref{lem:update calK}, with probability at least $1-2\delta$, if the condition of \pref{line:goal condition.improved} is verified the first time in some round $r'>r$, then we must have $\calKstar_{j+1}\subseteq\calK_{r'}$.
    If we also have $\calK_{r'}\subseteq\calKstar_{j+1}$, then the statement is proved.

    In any round $r$ such that $\calK=\calKstar_j$, $\gstar\in \calU\setminus\calKstar_{j+1}$, $\pi_{\gstar}$ passes the test of \pref{line:rtest.improved}, and a skip round is not triggered, by \pref{lem:V pi mean}, the value of $\lambda$, and \pref{lem:rtest} (applying a union bound over all trials and rounds), we have $\hattau\geq V^{\pi_{\gstar}}_{\gstar}(s_0) - L\epsilon/2$ with probability at least $1-2\delta$.
    By assumption on $g^\star$ and since $\pi_{\gstar}$ is restricted on $\calK\subseteq\calKstar_j$, we have $V^{\pi_{\gstar}}_{\gstar}(s_0) \geq V^{\star}_{\calK,\gstar}(s_0) \geq V^{\star}_{\calKstar_j,\gstar}(s_0) > L(1+\epsilon)$, which implies that $\hattau\geq L(1+\epsilon/2) \geq V_{\calK, \gstar}(s_0) + \epsilon L/2$, where the last inequality is from the goal-selection rule.  Therefore, the failure test triggers and $g^\star$ is not added to $\calK'$ or $\calK$.
    This proves $\calK\subseteq\calKstar_{j+1}$ in round $r'$.
\end{proof}

\begin{lemma}
    \label{lem:calU id}
	With \pref{assum:id}, conditioned on the events of \pref{lem:calU} and \pref{lem:calK id}, in any trial, $\calU\subseteq\calUstar_z$ at the beginning of any round.
\end{lemma}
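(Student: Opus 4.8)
The plan is to prove the invariant $\calU \subseteq \calUstar_z$ by induction on the rounds of a fixed trial, tracking precisely the two places where $\calU$ is modified. For the base case, recall that at the start of each trial (the block entered at \pref{line:size.improved}) the algorithm resets $\calU \leftarrow \varnothing$, so $\calU = \varnothing \subseteq \calUstar_z$ holds trivially at the beginning of the first round.

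For the inductive step, I would observe that within a trial $\calU$ changes in only two ways. In a success round it loses the current goal, $\calU \leftarrow \calU \setminus \{\gstar\}$, which can only shrink $\calU$ and hence preserves any inclusion; in the \textsc{RTest}-false and skip cases $\calU$ is untouched. The only substantive case is the expand branch, where $\calU$ is first reset and then entirely recomputed by the \textsc{ComputeU} procedure (\pref{line:add}) applied to the just-merged set $\calK \leftarrow \calK \cup \calK'$. Thus it suffices to control this recomputation: by the event of \pref{lem:calU}, right after \textsc{ComputeU} we have $\calU \subseteq \rS{\calK}{2L} \setminus \calK \subseteq \calT_{2L}(\calK)$, where $\calK$ is the updated set of known states.

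Next I would identify $\calK$ with a genuine layer and bound its cardinality. Conditioning on the event of \pref{lem:calK id} (which invokes \pref{assum:id}), we have $\calK = \calKstar_j$ for some $j \in [\aS_L]$ at this round. Moreover, since the trial did not restart at \pref{line:z.improved} in this round, the merge produces a set with $|\calK| < z$. Because the ordering $\acalO_L$ is chosen so that each layer is an initial segment, i.e.\ $\calKstar_j = \{s_1, \ldots, s_{|\calKstar_j|}\}$, and $|\calKstar_j| = |\calK| < z$, we obtain $\calKstar_j \subseteq \{s_1,\ldots,s_z\} = \calKstar_{z,z}$. Finally, invoking monotonicity of $\calT_{2L}$ (which follows from the preliminary fact that $\calX \subseteq \calX'$ implies $\optV_{\calX',g}(s_0) \le \optV_{\calX,g}(s_0)$), we get $\calT_{2L}(\calKstar_j) \subseteq \calT_{2L}(\calKstar_{z,z}) = \calUstar_z$. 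Chaining the inclusions yields $\calU \subseteq \calT_{2L}(\calK) = \calT_{2L}(\calKstar_j) \subseteq \calUstar_z$ immediately after \pref{line:add}, and since $\calU$ only shrinks afterward, the invariant persists to the beginning of the next round, closing the induction.

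The main obstacle is the bookkeeping around the size bound: one must verify that the restart test at \pref{line:z.improved} guarantees $|\calK| < z$ \emph{for the value of $\calK$ after the merge} that is actually fed into \textsc{ComputeU}, so that the identification $\calK = \calKstar_j$ from \pref{lem:calK id} upgrades to the strictly sharper $\calKstar_j \subseteq \calKstar_{z,z}$ rather than merely $\calKstar_j \subseteq \acalS_L$. Everything else is routine once the correct timing of the $\calK$-update, the restart check, and the \pref{lem:calU} guarantee are lined up, together with the elementary monotonicity of $\calT_{2L}$.
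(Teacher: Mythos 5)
Your proposal is correct and follows essentially the same route as the paper's proof: identify $\calK=\calKstar_j$ via \pref{lem:calK id}, note $\calKstar_j\subseteq\calKstar_{z,z}$, and chain \pref{lem:calU} with monotonicity of $\calT_{2L}$ to get $\calU\subseteq\rS{\calK}{2L}\setminus\calK\subseteq\calUstar_z$ at each execution of \pref{line:add}, with $\calU$ only shrinking in between. The paper compresses the size bookkeeping (that the restart check at \pref{line:z.improved} forces $|\calK\cup\calK'|<z$, so the identified layer is an initial segment of $\calKstar_{z,z}$) into the single assertion $\calK=\calKstar_j\subseteq\calKstar_{z,z}$; you spell it out explicitly, which is a fair and correct elaboration rather than a different argument.
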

\begin{proof}
    By \pref{lem:calK id}, in any trial, we have $\calK = \calKstar_j\subseteq\calKstar_{z,z}$ for some $j\leq z$ at the end of any round.
    Then by \pref{lem:calU}, we have $\calU\subseteq \rS{\calK}{2L}\setminus\calK\subseteq\calUstar_z$ every time \pref{line:add} is executed.
\end{proof}

\subsection{Properties of $\calU$} 


Given $\calX$, $\Pi_{\calX}=\{\pi_g\}_{g\in\calX}$ and $\delta$ as input of \add, let $\calD_0$ and $\calD_1$ be the random samples collected respectively in \pref{line:compute calU'.improved} and \pref{line:gather N'.improved}.
Define 
\begin{align*}
    \calE_0(\calD_0) &= \left\{\calN(\calX, \frac{1}{2L})\not\subseteq\calU' \right\},\\
    \calE_1(\calD_0, \calD_1) &= \left\{ \exists g \in \calU', V'_{\calX,g}(s_0) > V^\star_{\calX, g}(s_0)\right\},\\
    \calE_2(\calD_0, \calD_1) &= \left\{ \exists g \in \calU', V^{\pi_g}_g(s) > 2 V'_{\calX,g}(s) \right\}.
\end{align*}
In this section we use $\mathbb{E}$ and $\mathbb{P}$ to denote expectation and probability w.r.t.\ these two random generation processes.

\begin{lemma}
    \label{lem:calU each}
    With any $\calX$, $\{\pi_g \in \Pi(\calX)\}_{g \in \calX}$ such that $\norm{V^{\pi_g}_g}_{\infty} = \bigo{L}$, and $\delta \in (0,1)$ as input, \add ensures
    \[
        \mathbb{P}\rbr{\rS{\calX}{L}\setminus \calX \subseteq \calU \subseteq \rS{\calX}{2L}\setminus\calX } \geq 1-\delta.
    \]
    With the same probability, the sample complexity of \add is bounded by $\bigo{L^3|\calX|^2A\ln^2\frac{L|\calX|A}{\delta}}$.
\end{lemma}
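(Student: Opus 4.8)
The plan is to prove both inclusions on the complement of the three bad events $\calE_0,\calE_1,\calE_2$ introduced above and to bound each of them. The two sample-collection phases of \add are independent and play distinct roles: the samples $\calD_0$ of \pref{line:compute calU'.improved} only determine the candidate set $\calU'$ (hence $|\calU'|$ and the confidence $\tfrac{\delta}{4|\calU'|}$), whereas the \emph{fresh} samples $\calD_1$ of \pref{line:gather N'.improved} produce the counter $\N'$ fed to \VISGO. I would condition on $\calD_0$ throughout the analysis of $\calE_1$ and $\calE_2$, so that $\calU'$ is deterministic and $\N'$ is independent of the goals in $\calU'$; this is exactly the independence hypothesis under which \pref{lem:opt} and \pref{lem:bounded error fresh} apply.

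For coverage ($\calE_0$): since $\norm{V^{\pi_g}_g}_\infty=\bigo{L}$ for every $g\in\calX$, each state of $\calX$ is reached almost surely, so the $\mu:=2L\ln\tfrac{4LA|\calX|}{\delta}$ i.i.d.\ samples drawn from each $(s,a)\in\calX\times\calA$ in the first \fillc are well defined. For a fixed $s'\in\calN(\calX,\tfrac1{2L})$ there is a pair $(s,a)$ with $P(s'\mid s,a)\ge\tfrac1{2L}$, so $s'$ is missed by all $\mu$ draws with probability at most $(1-\tfrac1{2L})^\mu\le e^{-\mu/(2L)}=\tfrac{\delta}{4LA|\calX|}$; as each pair admits at most $2L$ such states, $|\calN(\calX,\tfrac1{2L})|\le 2L|\calX|A$ and a union bound gives $\mathbb P(\calE_0)\le\delta/2$ (mirroring \pref{lem:calU.easy}). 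For the lower inclusion $\rS{\calX}{L}\setminus\calX\subseteq\calU$, fix $g$ with $V^\star_{\calX,g}(s_0)\le L$. The crux is $g\in\calN(\calX,\tfrac1{2L})$: letting $q=\max_{(s,a)\in\calX\times\calA}P(g\mid s,a)$, under the optimal restricted policy every step reaches $g$ with probability at most $q$, so the hitting time stochastically dominates a geometric variable and $L\ge V^\star_{\calX,g}(s_0)\ge\tfrac{1-q}{q}$, giving $q\ge\tfrac1{L+1}\ge\tfrac1{2L}$. Hence $g\in\calU'$ on $\neg\calE_0$. Because \reset{} is available also inside $\calX$, one has $V^\star_{\calX,g}(s)\le 1+V^\star_{\calX,g}(s_0)\le 8L$ for all $s$, so the precondition of \pref{lem:opt} holds for exactly these goals; applying optimism with counter $\N'$ and union-bounding over the deterministic $\calU'$ at confidence $\tfrac{\delta}{4|\calU'|}$ yields $V'_{\calX,g}(s_0)\le V^\star_{\calX,g}(s_0)\le L$ off $\calE_1$, with $\mathbb P(\calE_1)\le\delta/4$, so $g$ passes the filter and $g\in\calU$.

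For the upper inclusion $\calU\subseteq\rS{\calX}{2L}\setminus\calX$, take $g\in\calU$, so $g\in\calU'\subseteq\calS\setminus\calX$ by construction and $V'_{\calX,g}(s_0)\le L$. Conditioning again on $\calD_0$, the fresh counter $\N'$ carries $N_1(|\calX|,\tfrac{\delta}{4|\calU'|})$ samples per pair, which is precisely the threshold of \pref{lem:bounded error fresh}; hence off $\calE_2$ the returned policy satisfies $V^{\pi'_g}_g(s)\le 2V'_{\calX,g}(s)$ for all $s$, so $V^\star_{\calX,g}(s_0)\le V^{\pi'_g}_g(s_0)\le 2V'_{\calX,g}(s_0)\le 2L$ (using $\pi'_g\in\Pi(\calX)$), i.e.\ $g\in\rS{\calX}{2L}\setminus\calX$, with $\mathbb P(\calE_2)\le\delta/4$. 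A union bound over $\calE_0,\calE_1,\calE_2$ then yields both inclusions with probability at least $1-\delta$.

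Finally, for the sample complexity the two \fillc calls dominate: the first draws $\mu|\calX|A=\bigo{L|\calX|A\log}$ samples and the second $N_1(|\calX|,\cdot)|\calX|A=\bigo{L^2|\calX|^2A\log}$ samples (using $|\calU'|\le 2L|\calX|A$ inside the logarithm). Each sample is obtained by navigating from $s_0$ to the relevant state in $\calX$ via $\pi_s$; since $\norm{V^{\pi_s}_s}_\infty=\bigo{L}$, \pref{lem:hitting} reaches it in $\bigo{L\log(1/\delta')}$ steps with high probability, so each sample costs $\bigo{L\log}$ environment steps. Multiplying, the second phase dominates and the total is $\bigo{L^3|\calX|^2A\ln^2\tfrac{L|\calX|A}{\delta}}$, as claimed; the \VISGO planning consumes no environment samples. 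I expect the main obstacle to be the lower-inclusion argument — the geometric lower bound forcing every $L$-close goal to possess an incoming transition of probability $\ge\tfrac1{2L}$ (so it is genuinely \emph{discovered}), combined with checking that precisely these goals meet the $\norm{V^\star_{\calX,g}}_\infty\le 8L$ precondition of \pref{lem:opt} through the reset bound, and with cleanly separating the randomness of $\calD_0$ (which fixes $\calU'$ and the confidence level) from that of $\calD_1$ (which drives optimism and bounded error).
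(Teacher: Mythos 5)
Your proof is correct and follows essentially the same route as the paper's: the same three bad events $\calE_0,\calE_1,\calE_2$, the same $(1-\tfrac{1}{2L})^{\mu}$ coverage bound with a union bound over $|\calN(\calX,\tfrac{1}{2L})|\le 2L|\calX|A$, the same conditioning on $\calD_0$ so that \pref{lem:opt} and \pref{lem:bounded error fresh} apply to the fresh counter $\N'$ at confidence $\tfrac{\delta}{4|\calU'|}$, and the same \pref{lem:sc fillc}-style accounting giving $\bigo{L^3|\calX|^2A\ln^2\tfrac{L|\calX|A}{\delta}}$. The one place you go beyond the paper is the lower inclusion: the paper simply asserts $\rS{\calX}{L}\setminus\calX\subseteq\calN(\calX,\tfrac{1}{2L})$ "by definition," whereas you actually prove it via the geometric-domination bound $L\ge V^\star_{\calX,g}(s_0)\ge\tfrac{1-q}{q}$ (so $q\ge\tfrac{1}{L+1}\ge\tfrac{1}{2L}$), together with the reset bound $\norm{V^\star_{\calX,g}}_\infty\le 1+V^\star_{\calX,g}(s_0)$ verifying the precondition of \pref{lem:opt} — a worthwhile filled-in detail, and restricting optimism to goals with $V^\star_{\calX,g}(s_0)\le L$ as you do is in fact slightly cleaner than the paper's blanket statement of $\calE_1$ over all of $\calU'$.
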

\begin{proof}
    Denote by $\{s_{i,s,a}\}_{i,s,a}$ the set of next state samples collected in \pref{line:compute calU'.improved} for each $(s, a)$.
    Let $\mu = 2L\ln(4LA|\calX|/\delta)$, then
    \begin{align*}
        \mathbb{P}\left( \calE_0(\calD_0)\right) &= P\left(\exists s' \in \calN(\calX, \frac{1}{2L}), \forall (s,a) \in \calX \times \calA, \forall i \in [\mu]: s_{i,s,a} \neq s'\right) \\
        &\leq \sum_{s' \in \calN(\calX, \frac{1}{2L})} P\left(\forall  (s,a) \in \calX \times \calA, \forall i \in [\mu]: s_{i,s,a} \neq s'\right)\\
        & \leq \sum_{s' \in \calN(\calX, \frac{1}{2L})} \prod_{(s,a) \in \calX \times \calA} \prod_{i \in [\mu]} (1-P(s'|s,a)) \leq \sum_{s' \in \calN(\calX, \frac{1}{2L})} \left(1-P(s'|\bar{s},\bar{a})\right)^{\mu}  \tag{$\bar s, \bar a$ such that $P(s'|\bar s, \bar a)\geq\frac{1}{2L}$}\\
        & \leq \sum_{s' \in \calN(\calX, \frac{1}{2L})} \left(1-\frac{1}{2L}\right)^{\mu} \leq \sum_{s' \in \calN(\calX, \frac{1}{2L})}\frac{\delta}{4LA|\calX|} \leq \delta/2. \tag{$|\calN(\calX,\frac{1}{2L})|\leq 2LA|\calX|$}
    \end{align*}
    Let $N_1$ be defined as in \pref{lem:bounded error fresh}. Then, from \pref{lem:opt} and \pref{lem:bounded error fresh}, by using $\delta/(4|\calU'|)$, we have that $\mathbb{P}\left( \calE_1(\calD_0,\calD_1) | \calD_0\right) \leq \delta/4$ and  $\mathbb{P}\left( \calE_2(\calD_0,\calD_1) | \calD_0\right) \leq \delta/4$.
    Then, we can write that
    \begin{align*}
        \mathbb{P}(\calE_0(\calD_0)  \cup \calE_1(\calD_0,\calD_1)\cup \calE_2(\calD_0,\calD_1)) 
        &\leq 
        \mathbb{P}(\calE_0(\calD_0)) +  \mathbb{P}(\calE_1(\calD_0,\calD_1) \cup \calE_2(\calD_0,\calD_1))\\
        &\leq \delta/2 + \sum_{\calD_0} \mathbb{P}(\calD_0) \underbrace{\mathbb{P}(\calE_1(\calD_0,\calD_1)\cup \calE_2(\calD_0,\calD_1) | \calD_0)}_{\leq \delta/2,  \forall \calD_0} = \delta
    \end{align*} 
    We then carry out the proof under event $E = \neg (\calE_1(\calD_0)\cup \calE_1(\calD_0,\calD_1)\cup \calE_2(\calD_0,\calD_1))$ which hold with probability $1-\delta$.

    Since $\pi'_g$ is restricted on $\calX$, we have that $V^\star_{\calX,g}(s_0) \leq V^{\pi'_g}_g(s_0)$ by the definition of optimal policy.
    We have that, for any $g\in\calU$, $\optV_{\calX,g}(s_0)\leq V^{\pi'_g}_g(s_0)\leq 2V'_{\calX,g}(s_0)\leq 2L$ by the definition of $\calU$. This implies that $\calU \subseteq \rS{\calX}{2L}\cap\calU'\subseteq \rS{\calX}{2L}\setminus\calX$ since $\calU' \cap \calX = \emptyset$ by definition.

    Finally, note that, by the definition of $\rS{\calX}{L}$ and the event $\neg \calE_0$, $\rS{\calX}{L}\setminus \calX \subseteq \calN(\calX, \frac{1}{2L}) \subseteq \calU'$ w.h.p. Furthermore, under the event $\neg \calE_1(\calD_0,\calD_1)$, we have that for any $g \in \calU'$, if $V^\star_{\calX,g}(s_0) \leq L$, then $V'_{\calX,g}(s_0) \leq V^\star_{\calX,g}(s_0) \leq L$. Thus, $\rS{\calX}{L}\setminus \calX \subseteq \calU$.

    \paragraph{Sample complexity.} Since $\|V^{\pi_g}_{g}\|_{\infty} = \bigo{L}$, by \pref{lem:sc fillc} with $\bar{n}=\mu$ and $N_1(|\calX|, \frac{\delta}{4|\calU'|})$, with probability at least $1-\delta$, the sample complexity is $\bigo{L|\calX|An'\ln\frac{|\calX|An'}{\delta}}$,
    where $n'=\mu+N_1(|\calX|, \delta/(4|\calU'|)$.
    Given that $N_1(|\calX|, \frac{\delta}{4|\calU'|}) = \bigo{L^2 |\calX| \ln(|\calU'||\calX|/\delta)}$ (see \pref{lem:bounded error fresh}), we have $n'=\bigo{L^2 |\calX| \ln(L|\calX|A/\delta)}$.
    Plugging this back, the sample complexity is $\bigo{L^3|\calX|^2A\ln^2\frac{L|\calX|A}{\delta}}$.    
\end{proof}

\begin{lemma}
    \label{lem:calU}
    With probability at least $1-\delta$ over the randomness of \pref{alg:SD}, at any trial and round, $\rS{\calK}{L}\setminus \calK \subseteq \calU \subseteq \rS{\calK}{2L}\setminus\calK$ after executing \pref{line:add} (if it is executed).
\end{lemma}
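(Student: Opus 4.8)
The plan is to reduce Lemma~\ref{lem:calU} to the single-call guarantee of \add established in Lemma~\ref{lem:calU each}, and then handle a union bound over all trials and rounds in which \pref{line:add} is executed. The set $\calU$ is only ever (re)written at \pref{line:add}—after which it is merely shrunk by removing goals in success rounds—so it suffices to control the output of each individual \add call. In trial $\tau$ and round $r$, \add is invoked with input set $\calX=\calK$, policies $\Pi_{\calK}$, and confidence $\frac{\delta}{4\tau^2 r^2}$. Provided the precondition $\norm{V^{\pi_g}_g}_\infty=\bigo{L}$ holds for every $g\in\calK$, Lemma~\ref{lem:calU each} guarantees $\rS{\calK}{L}\setminus\calK\subseteq\calU\subseteq\rS{\calK}{2L}\setminus\calK$ with probability at least $1-\frac{\delta}{4\tau^2 r^2}$.

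First I would establish the precondition by conditioning on the event of Lemma~\ref{lem:output}, which states that throughout the execution of \pref{alg:SD} every stored policy $\tilpi_g$ satisfies $\norm{V^{\tilpi_g}_g}_\infty\leq 32L=\bigo{L}$; this is exactly the hypothesis required by Lemma~\ref{lem:calU each} at every \add call. This is the role of the \rtest step in the algorithm, whose guarantee (\pref{lem:rtest}) underlies Lemma~\ref{lem:output}: no policy with an unbounded hitting time ever enters $\Pi_{\calK}$, so feeding $\Pi_{\calK}$ to \add is always admissible. In the downstream uses of Lemma~\ref{lem:calU} (e.g.\ \pref{lem:bcalU}, \pref{lem:update calK}), the event of Lemma~\ref{lem:output} is carried along by a separate union bound, consistently with the chaining of events in this appendix.

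The second ingredient is a measurability argument that lets Lemma~\ref{lem:calU each} be applied with $\calX=\calK$ even though $\calK$ is random. The key is that $\calK$ and $\Pi_{\calK}$ are measurable with respect to the history preceding the \add call, whereas the samples $\calD_0,\calD_1$ used inside \add (collected at \pref{line:compute calU'.improved} and \pref{line:gather N'.improved}) are \emph{fresh}: conditionally on the history they are independent draws from $P$. Hence, exactly as in the $\Omega_{r-1}$-conditioning argument of the proof of \pref{lem:calU.easy}, I would decompose over the realizations of the history, apply Lemma~\ref{lem:calU each} for each fixed value of $(\calK,\Pi_{\calK})$, and integrate back. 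Summing the per-call failure probabilities then gives $\sum_{\tau\geq 1}\sum_{r\geq 1}\frac{\delta}{4\tau^2 r^2}=\frac{\delta}{4}\Big(\sum_{\tau\geq1}\frac{1}{\tau^2}\Big)\Big(\sum_{r\geq1}\frac{1}{r^2}\Big)\leq\delta$, so a union bound yields the inclusion uniformly over all trials and rounds with probability at least $1-\delta$.

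The main obstacle is ensuring the precondition holds \emph{uniformly}—at every \add call across all (a priori unboundedly many) trials and rounds—rather than at a single fixed call; this uniformity is precisely what Lemma~\ref{lem:output} is engineered to deliver, and it is the reason \add is always passed the \rtest-validated policies $\Pi_{\calK}$. A secondary subtlety is the statistical coupling between the random set $\calK$ and previously collected samples, which is dissolved only because \add collects fresh, subsequently discarded samples; without this freshness one could not substitute the random $\calK$ for the fixed $\calX$ in Lemma~\ref{lem:calU each}.
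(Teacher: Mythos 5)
Your proposal is correct and takes essentially the same route as the paper: the paper's own proof is precisely the one-line reduction to \pref{lem:calU each} combined with the confidence level $\frac{\delta}{4\tau^2 r^2}$ chosen at \pref{line:add}, union-bounded over all trials and rounds. The extra details you supply—the precondition $\norm{V^{\tilpi_g}_g}_{\infty}\leq 32L$ via \pref{lem:output} and the conditioning-on-history argument justifying the application of the fixed-input lemma to the random set $\calK$ (with the \pref{lem:output} event accounted for by a separate union bound downstream)—are left implicit in the paper's terse proof but are exactly consistent with how it chains these events in \pref{app:logsa}.
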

\begin{proof}
    This is simply by \pref{lem:calU each} and the choice of confidence level in \pref{line:add} in each trial and round.
\end{proof}

\subsection{\rtest and \fillc} 
Here we show auxiliary algorithms and related lemmas used in \pref{alg:SD}.

\begin{algorithm2e}[t]
    \DontPrintSemicolon
    \caption{\fillc}
    \label{alg:fillc}
    \KwIn{States $\calX$, policies $\Pi = \{\pi_x\}_{x\in\calX}$ such that $\norm{V^{\pi_x}_x}_{\infty} = \bigo{L}$, counters $n$, target value $\bar{n}$.}
    $\Snext\leftarrow \varnothing$.\;
    \For{$(x,a)\in\calX\times\calA$}{
        \While{$n(x, a) < \bar{n}$}{
            Reset to $s_0$ and execute $\pi_x$ until reaching $x$.\;
            Execute action $a$, observe $x'\sim P_{x,a}$, and update $n(x,a,x') \overset{+}{\leftarrow} 1$.\;
            \lIf{$x'\notin \calX$}{$\Snext\leftarrow\Snext\cup\{x'\}$.}
        }
    }
    \Return{$n$ and $\Snext$.}
\end{algorithm2e}

\begin{algorithm2e}[t]
    \DontPrintSemicolon
    \caption{\rtest}
    \label{alg:rtest}
    \KwIn{reaching policy $\{\pi_s\}_{s\in\calX}$, test policy $\pi\in\Pi(\calX)$, goal state $g$, and failure probability $\delta$.}

    Let $n=2^{10}\ln\frac{2|\calX|}{\delta}$.
    
    \For{$s\in\calX$}{
        $i_s\leftarrow 0$.

        \For{$j=1,\ldots,n$}{
            Reset to $s_0$ and execute $\pi_s$ until $s$ is reached.

            Execute $\pi$ until $g$ is reached or $8L$ steps is taken.

            \lIf{$g$ is reached}{$i_s\overset{+}{\leftarrow}1$}
        }

        \lIf{$i_s/n < \frac{7}{16}$}{\Return{\false.} }
    }

    \Return{\true.}
\end{algorithm2e}

\begin{lemma}
    \label{lem:rtest}
    For any $\calX\subseteq\calS$, $\{\pi_g\}_{g\in\calX}$, policy $\overline{\pi}\in\Pi(\calX)$, goal state $g\in\calS$, and $\delta\in(0,1)$, we have
    \begin{align*}
        \mathbb{P}\rbr{\left. \rtest(\calX, \{\pi_g\}_{g\in\calX}, \pi, g, \delta) = \true \right| \norm{V^{\overline{\pi}}_g}_{\infty} \leq 4L} &\geq 1- \delta,\\
        \mathbb{P}\rbr{ \rtest(\calX, \{\pi_g\}_{g\in\calX}, \pi, g, \delta) = \true \implies \norm{V^{\overline{\pi}}_g}_{\infty}\leq 32L } &\geq 1- \delta.
    \end{align*}
    Moreover, if $\norm{V^{\pi_g}_g}_{\infty}=\bigo{L}$ for any $g\in\calX$, then with probability at least $1-\delta$, the sample complexity is $\tilo{L|\calX|\ln^2\frac{|\calX|}{\delta}}$.
\end{lemma}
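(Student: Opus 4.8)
The plan is to view \rtest as estimating, for each anchor state $s\in\calX$, the truncated reachability probability $p_s := \mathbb{P}^{\overline\pi}(\omega_g\le 8L\mid s_1=s)$ by the frequency $\hat p_s = i_s/n$, and to tie the acceptance threshold $7/16$ to the two value bounds through Markov's inequality and a self-consistent recursion. I would first set up the single concentration event on which both correctness claims rest: conditioned on each navigation reaching $s$, the strong Markov property makes the $n$ test-rollouts of $\overline\pi$ from $s$ i.i.d.\ Bernoulli$(p_s)$, so $\hat p_s$ is an average of $n$ independent $\{0,1\}$ variables of mean $p_s$. With $n=2^{10}\ln(2|\calX|/\delta)$, Hoeffding gives $\mathbb{P}(|\hat p_s-p_s|\ge 1/16)\le 2\exp(-n/128)=2(\delta/(2|\calX|))^8\le\delta/|\calX|$, so a union bound over $\calX$ makes $|\hat p_s-p_s|<1/16$ hold simultaneously for all $s\in\calX$ with probability at least $1-\delta$.

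For the first claim, suppose $\norm{V^{\overline\pi}_g}_\infty\le 4L$. Markov's inequality yields $p_s=1-\mathbb{P}^{\overline\pi}(\omega_g>8L\mid s_1=s)\ge 1-V^{\overline\pi}_g(s)/(8L)\ge 1/2$ for every $s$, so on the good event $\hat p_s>1/2-1/16=7/16$ for all $s\in\calX$ and the test never returns \false.

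For the second claim, acceptance means $\hat p_s\ge 7/16$ for all $s\in\calX$, hence $p_s>7/16-1/16=3/8$ on the good event. Writing $T=8L$, $q=3/8$, $u:=\sup_{s\in\calX}V^{\overline\pi}_g(s)$, and using $s_0\in\calX$ (needed so resets stay in-bound, which holds in the algorithm since $\overline\pi\in\Pi(\calX)$ has finite value from $s_0$), I would split the hitting time at time $T$ and apply the Markov property,
\[ V^{\overline\pi}_g(s)\le T+\mathbb{E}^{\overline\pi}\!\big[V^{\overline\pi}_g(X_T)\Ind\{\omega_g>T\}\mid s_1=s\big]\le T+(1-q)(1+u), \]
where $X_T$ is the time-$T$ state and the last step uses that any state outside $\calX$ resets to $s_0$ and thus has value $1+V^{\overline\pi}_g(s_0)\le 1+u$. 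Taking the supremum over $s\in\calX$ gives $u\le T+(1-q)(1+u)$, i.e.\ $u\le (T+1-q)/q\le 23L$, and every state outside $\calX$ then has value $1+u\le 32L$, so $\norm{V^{\overline\pi}_g}_\infty\le 32L$. This recursion is the crux and the main obstacle: the test certifies $p_s\ge 3/8$ only for $s\in\calX$, whereas $\norm{\cdot}_\infty$ ranges over all of $\calS$ and a rollout of $\overline\pi$ may momentarily leave $\calX$; the self-consistent bound handles these excursions through the reset-to-$s_0$ structure instead of a naive geometric/block argument that would require a uniform per-state guarantee.

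Finally, for the sample complexity, under $\norm{V^{\pi_g}_g}_\infty=\bigo{L}$ each of the $|\calX|\,n=\bigo{|\calX|\ln(|\calX|/\delta)}$ rollouts reaches its anchor state in $\bigo{L\ln(|\calX|/\delta)}$ steps with high probability by the hitting-time deviation bound (\pref{lem:hitting}), after setting the per-rollout failure probability to $\delta/(2|\calX|n)$ and union bounding, while the ensuing execution of $\overline\pi$ is capped at $8L$ steps by construction. Multiplying the number of rollouts by the per-rollout length gives the claimed $\tilo{L|\calX|\ln^2(|\calX|/\delta)}$.
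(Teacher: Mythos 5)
Your proposal is correct and follows essentially the same route as the paper's own proof: concentration of the empirical reach frequencies (you use Hoeffding where the paper invokes Azuma, which is equivalent here), Markov's inequality for the completeness direction, the same self-consistent recursion $u \le 8L + (1-q)(1+u)$ exploiting the reset structure for the soundness direction (the paper writes it as $\norm{V^{\overline\pi}_g}_\infty \le 1+8L+\tfrac{5}{8}\norm{V^{\overline\pi}_g}_\infty$), and the hitting-time lemma with a union bound for the sample complexity. If anything, you are slightly more careful than the paper in flagging the implicit requirement $s_0\in\calX$, which indeed holds in every invocation of \rtest by the algorithm.
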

\begin{proof}
    Let $\{\eta_i\}_{i\in [n]}$ be rollouts of length at most $\bar{l}$ generated running $\overline{\pi}$ from state $s$, and denote by $p_{\bar{l},g}^{\overline{\pi}}(s)$ the probability of reaching the goal $g$ in at most $\bar{l}$ steps by following policy $\overline{\pi}$ starting from $s$. 
    Let $\one(\eta) = 1$ if the goal has been reached in rollout $\eta$, zero otherwise.
    $X_i = \one_g(\eta_i) - p_g^\pi(s)$ is a martingale difference sequence ($|X_i| \leq 1$) and by Azuma's inequality (see \pref{lem:azuma}), setting $n = 2^{10}\ln(\frac{2|\calX|}{\delta})$, we have
    \begin{equation}
        \mathbb{P} \left( \forall s \in \calX,  \frac{1}{n}\abr{\sum_{i=1}^n X_i} \leq \frac{1}{16} \right) \geq 1 - \delta.
        \label{eq:azuma.is}
    \end{equation}
    1) If $\norm{V^{\overline{\pi}}_g}_{\infty} \leq 4L$, by Markov's inequality, $p_{\bar{l},g}^{\overline{\pi}}(s) \geq 1/2$ when $\bar{l}= 8L$.
    This gives $\frac{i_s}{n} = \sum_i \frac{\one_g(\eta_i)}{n} \geq p_g^{\overline{\pi}}(s) - \frac{1}{16}\geq\frac{7}{16}$ for any $s\in\calX$, and thus the algorithm returns \true on termination.

    2) If the output is \true, then $\frac{i_s}{n} \geq \frac{7}{16}$ for all $s \in\calX$. By~\eqref{eq:azuma.is}, we have that $p_g^{\overline{\pi}}(s) \geq \frac{i_s}{n} - \frac{1}{16} \geq \frac{3}{8}$.
    Thus for any $s\in\calX$, $V^{\pi}_g(s)\leq 8L + \frac{5}{8}\norm{V^{\pi}_g}_{\infty}$, which gives $\norm{V^{\pi}_g}_{\infty}\leq 1+8L+\frac{5}{8}\norm{V^{\pi}_g}_{\infty}$ by $\pi\in\Pi(\calX)$.
    This implies $\norm{V^{\pi}_g}_{\infty}\leq 32L$.

    \paragraph{Sample complexity.} 
    If $\norm{V^{\pi_s}_{s}}_{\infty} = \bigo{L}$ for any $s \in \calX$, by \pref{lem:hitting}, with probability $1-\delta$, all trajectories generated by $\pi_s$ for some $s\in\calX$ reaches state $s$ in $\bigo{L\ln(2n|\calX|/\delta)}$ steps.
    Noting that we generate $n$ trajectories for each $s\in\calX$ completes the proof.
\end{proof}

\begin{lemma}
    \label{lem:sc fillc}
    For any $\calX\subseteq\calS$, $\Pi=\{\pi_x\}_{x\in\calX}$, counter $n$, threshold $\bar{n}\geq 1$, and $\delta\in(0, 1)$, with probability at least $1-\delta$, the sample complexity of $\fillc(\calX,\Pi,n,\bar{n})$ is $\bigo{L|\calX|A\bar{n}\ln\frac{|\calX|A\bar{n}}{\delta}}$.
\end{lemma}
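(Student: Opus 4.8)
The plan is a direct counting argument combined with a high-probability hitting-time bound. First I would observe that the structure of the inner loop makes the number of episodes essentially deterministic: for each pair $(x,a)\in\calX\times\calA$, the \textbf{while} loop increments $n(x,a)$ by exactly one per iteration, so it executes at most $\bar n$ times (precisely $\max\{0,\bar n - n(x,a)\}$ given the initial counter value). Hence the total number of reset-reach-act episodes across all pairs is at most $|\calX|A\bar n$, a \emph{deterministic} upper bound that does not depend on the realized transitions.

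Next I would bound the length of a single episode. Each episode consists of resetting to $s_0$, running $\pi_x$ until $x$ is reached, and taking one additional step to execute action $a$. Since by assumption $\norm{V^{\pi_x}_x}_{\infty}=\bigo{L}$ for every $x\in\calX$, \pref{lem:hitting} guarantees that with probability at least $1-\delta'$ the policy $\pi_x$ reaches $x$ within $\bigo{L\ln(1/\delta')}$ steps. Setting $\delta'=\delta/(|\calX|A\bar n)$ and taking a union bound over all (at most) $|\calX|A\bar n$ episodes, with probability at least $1-\delta$ every episode terminates within $\bigo{L\ln(|\calX|A\bar n/\delta)}$ steps, while the single extra action step contributes only a constant.

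Multiplying the deterministic episode count by the per-episode step bound then yields, with probability at least $1-\delta$, a total sample complexity of
\[
    \bigo{L|\calX|A\bar n\ln\frac{|\calX|A\bar n}{\delta}},
\]
which is exactly the claimed bound. The only subtlety---and the step I would treat most carefully---is justifying the union bound: because the number of episodes per $(x,a)$ pair is fixed in advance by the current counter and $\bar n$ rather than by the random trajectories, the total episode count is bounded \emph{deterministically}, so a single union bound over a fixed-size collection of hitting events suffices, and no self-referential concentration over a random number of rollouts is required. Everything else reduces to the routine substitution of $\delta'$ and summing the per-episode lengths.
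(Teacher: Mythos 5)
Your proof is correct and follows essentially the same route as the paper's: bound the number of trajectories deterministically by $|\calX|A\bar{n}$, apply \pref{lem:hitting} with $\delta'=\delta/(|\calX|A\bar{n})$, and union bound over all trajectories to get the per-episode length $\bigO{L\ln(|\calX|A\bar{n}/\delta)}$. Your extra remark that the episode count is fixed in advance (so the union bound is over a deterministic collection of events) is a nice clarification of a point the paper leaves implicit, but it is not a different argument.
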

\begin{proof}
    For any $x\in\calX$, since $\|V^{\pi_x}_{x}\|_{\infty} = \bigo{L}$, by \pref{lem:hitting}, with probability $1-\delta'$  it takes $\bigo{L\ln(1/\delta')}$ steps to reach the goal state following $\pi_x$ from any $s \in \calX$.
    Therefore, by setting $\delta'=\frac{\delta}{|\calX|A\bar{n}}$, with probability $1-\delta$, all trajectories reach the desired goal state within $\bigo{L\ln(1/\delta')}$ steps.
    Given that there are at most $|\calX|A\bar{n}$ trajectories, with probability at least $1-\delta$, the total sample complexity is
    $\bigo{L|\calX|A\bar{n}\ln\frac{|\calX|A\bar{n}}{\delta}}$.
\end{proof}

\clearpage

\section{Analysis of Policy Consolidation}\label{app:consolidation}



In this section, we bound the sample complexity of \pref{alg:PC}.
\paragraph{Notation} We assume that all episodes lie in one (artificial) trial.
Let $g_k$, $\tset_k$, $V_k$ $\optV_k$ be the values of $\gstar$, $\tset\setminus\{\gstar\}$, $\hatV$, and $\optV_{\tset,\gstar}$ in episode $k$ respectively.
Denote by $I_k$ the number of steps in episode $k$.
Note that $I_k<\infty$ with probability $1$ by \pref{line:skip PC}, and $s^k_{I_k+1}\neq g_k$ only when a skip round is triggered in episode $k$.
Denote by $\calF_k$ the $\sigma$-algebra of events up to episode $k$.
Define $K$ as the total number of episodes throughout the execution of \pref{alg:PC}.
For any $K'\leq K$, define $R_{K'}=\sumkp(I_k - V_k(s_0))$ and $C_{K'}=\sumkp I_k$.
Define $P^k_i=P_{s^k_i,a^k_i}$.
In episode $k$, when $s^k_i\in\calK$, denote by $\P^k_i$, $\tilP^k_i$, $\N^k_i$, $b^k_i$ the values of $\P_{s^k_i,a^k_i}$, $\tilP_{s^k_i, a^k_i}$, $n^+(s^k_i, a^k_i)$, and $b^{(l)}(s^k_i, a^k_i)$, where $\P$, $n^+$, $b^{(l)}$ are used in \pref{alg:VISGO} to compute $V_k$ and $l$ is the final value of $i$ in \pref{alg:VISGO};
when $s^k_i\notin \calK$, define $\P^k_i=\Ind_{s_0}$, $\N^k_i=\infty$, and $b^k_i=0$.
Also define $\epsilon_k$ as the value of $\epsilon_{\VI}$ used in \pref{alg:VISGO} to compute $V_k$.
In this section, $\calK\subseteq\acalS_{L(1+\epsilon)}$ is an input of \pref{alg:PC} and thus does not have randomness.


\begin{proof}[\pfref{thm:PC}]
    By \pref{lem:output PC}, the output policies $\{\tilpi_g\}_g$ clearly satisfies the statement.
    Define $\iota=\ln\left(\frac{L\aS_{L(1+\epsilon)}A}{\delta\epsilon}\right)$.
    It suffices to bound the number of samples collected in \pref{line:nu} and policy evaluation.
    With probability at least $1-\delta$, the number of samples collected in \pref{line:nu} is of order $\bigo{L^3{\aS_{L(1+\epsilon)}}^2A\iota^2}$ by \pref{lem:sc fillc} and \pref{lem:bounded error fresh}.
    With probability at least $1-16\delta$, by \pref{lem:bound C PC} and \pref{lem:reg PC} ($c_1=\sqrt{L\aS_{L(1+\epsilon)}A}$, $c_2=L{\aS_{L(1+\epsilon)}}^2A$, and $p=2$), the number of samples collected in policy evaluation is of order $\tilO{\frac{L\aS_{L(1+\epsilon)}A\iota^{10}}{\epsilon^2}+\frac{L{\aS_{L(1+\epsilon)}}^2A\iota^{10}}{\epsilon}}$.
    Combining all cases completes the proof.
\end{proof}

\begin{lemma}
    \label{lem:bound C PC}
    With probability at least $1-4\delta$, if $R_{K'}\lesssim c_1\sqrt{\sumkp V_k(s_0)\ln^p(c_3K')}+c_2\ln^p(c_3K')$ for any $K'\geq 1$ with $c_1,c_2\geq 1$ and $c_3=\frac{L\aS_{L(1+\epsilon)}A}{\delta}$, then $C_K \lesssim \frac{L\aS_{L(1+\epsilon)}A\iota^8}{\epsilon^2} + \frac{c_1^2\iota^{p+8}}{\epsilon^2} + \frac{c_2\iota^{p+4}}{\epsilon}$, where $\iota=\ln\frac{c_1c_2c_3}{\epsilon\delta}$.
\end{lemma}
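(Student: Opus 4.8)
The plan is to bound $C_K$ through the identity $C_K = R_K + X$, where $X := \sumkp V_k(s_0)$ with $K'=K$, so that it suffices to bound $X$ and then invoke the hypothesized regret bound. Since VISGO is called once per round (\pref{line:pc.goal.selection}) and the resulting $\bV_r := \hatV$ is fixed throughout the round, $V_k(s_0)=\bV_r(s_0)$ for every episode $k$ in round $r$, and $\bV_r(s_0)\le 2L$ by the VISGO boundedness guarantee. Thus $X=\sum_r m_r\bV_r(s_0)$, where $m_r\le\lambda$ is the number of episodes in round $r$. I would split $X=X_{\mathrm{succ}}+X_{\mathrm{skip}}+X_{\mathrm{fail}}$ according to round type. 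The success rounds number at most $|\tset|\le\aS_{L(1+\epsilon)}$ (each removes one goal from $\calL$), and the skip rounds number at most $O(\aS_{L(1+\epsilon)}A\log C_K)$ by the usual doubling argument on $\sum_{s,a}\N(s,a)$ and on the per-pair counts over $\calK\subseteq\acalS_{L(1+\epsilon)}$. Together with $m_r\le\lambda\lesssim\frac1{\epsilon^2}\ln^4(\frac{Lr}{\epsilon\delta})$ and $\bV_r(s_0)\le2L$, this gives $X_{\mathrm{succ}}+X_{\mathrm{skip}}\lesssim \frac{L\aS_{L(1+\epsilon)}A}{\epsilon^2}\ln^4(\cdots)\log C_K$, which after crude log-bookkeeping is of the desired order.

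The heart of the argument is bounding $X_{\mathrm{fail}}$ through the regret. In a failure round the test $\hattau>\bV_r(s_0)(1+\epsilon/2)$ fires, and since $\hattau$ is the accumulated average cost $C_r/\lambda$, the within-round regret is $C_r-m_r\bV_r(s_0)\ge C_r-\lambda\bV_r(s_0)>\frac\epsilon2\lambda\bV_r(s_0)\ge\frac\epsilon2 m_r\bV_r(s_0)$; summing over failure rounds yields $R_K^{\mathrm{fail}}>\frac\epsilon2 X_{\mathrm{fail}}$. To turn this into an upper bound on $X_{\mathrm{fail}}$ I must lower-bound the regret contributed by the \emph{non-failure} rounds, since $R_K=R_K^{\mathrm{fail}}+R_K^{\mathrm{non\text{-}fail}}$ and a priori good rounds could contribute large negative regret. \textbf{This is the step I expect to be the main obstacle}, and the key observation that resolves it is an optimism inequality: the evaluated policy $\barpi_r=\hatpi$ is restricted on $\tset\setminus\{\gstar\}$, so $V^{\barpi_r}_{\bar g_r}(s_0)\ge\optV_{\tset\setminus\{\gstar\},\gstar}(s_0)\ge\bV_r(s_0)$, where the last inequality is exactly the VISGO optimism of \pref{lem:opt}. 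Consequently $V^{\barpi_r}_{\bar g_r}(s_0)\ge\bV_r(s_0)$ in \emph{every} round, so by the per-round deviation bound \pref{lem:V pi dev} each non-failure round has regret at least $m_r\big(V^{\barpi_r}_{\bar g_r}(s_0)-\bV_r(s_0)\big)-L\sqrt{m_r}\ln^2(\cdots)\ge-L\sqrt{\lambda}\ln^2(\cdots)\gtrsim-\frac{L}{\epsilon}\ln^4(\cdots)$ (lower-bounding the truncated last episode of a skip round by $-L$). Summed over the $O(\aS_{L(1+\epsilon)}A\log C_K)$ non-failure rounds this is $-O\big(\frac{L\aS_{L(1+\epsilon)}A}{\epsilon}\ln^5(\cdots)\big)$. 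Crucially this is only $\epsilon^{-1}$, not $\epsilon^{-2}$; had one used the trivial bound ``regret $\ge-2L\lambda$ per round'' the final estimate would degrade to $\epsilon^{-3}$, so it is this optimism-driven lower bound that preserves the claimed rate.

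Combining $\frac\epsilon2 X_{\mathrm{fail}}<R_K^{\mathrm{fail}}=R_K-R_K^{\mathrm{non\text{-}fail}}\le R_K+O\big(\frac{L\aS_{L(1+\epsilon)}A}{\epsilon}\ln^5(\cdots)\big)$ with the earlier bound on $X_{\mathrm{succ}}+X_{\mathrm{skip}}$ gives $X\lesssim\frac2\epsilon R_K+\frac{L\aS_{L(1+\epsilon)}A}{\epsilon^2}\ln^5(\cdots)\log C_K$. Plugging in the hypothesis $R_K\lesssim c_1\sqrt{X\ln^p(c_3K)}+c_2\ln^p(c_3K)$ produces a self-bounding inequality of the form $X\lesssim\frac{c_1}\epsilon\sqrt{X\ln^p(c_3K)}+\frac{c_2}\epsilon\ln^p(c_3K)+\frac{L\aS_{L(1+\epsilon)}A}{\epsilon^2}\ln^5(\cdots)\log C_K$, which I would solve for $X$ as a quadratic in $\sqrt X$ (via \pref{lem:quad log}) to obtain $X\lesssim\frac{c_1^2}{\epsilon^2}\ln^p(c_3K)+\frac{c_2}\epsilon\ln^p(c_3K)+\frac{L\aS_{L(1+\epsilon)}A}{\epsilon^2}\ln^5(\cdots)\log C_K$. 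Finally, since $C_K=R_K+X\lesssim X+c_1\sqrt{X\ln^p(c_3K)}+c_2\ln^p(c_3K)\lesssim X$ (the square-root term being dominated by $X$ plus lower-order), I would first derive a crude polynomial bound on $C_K$ (using $\log x\le x^\alpha/\alpha$) to replace every $\log C_K$ and $\ln(c_3K)$ by $\iota=\ln\frac{c_1c_2c_3}{\epsilon\delta}$ up to constants, collecting the resulting logarithmic powers into the stated $\iota^8,\iota^{p+8},\iota^{p+4}$. The probability $1-4\delta$ comes from a union bound over the optimism event (\pref{lem:opt}), the per-round deviation events (\pref{lem:V pi dev} and \pref{lem:V pi mean}), and the output guarantee \pref{lem:output PC}; the remaining log-tracking and the quadratic solve are routine.
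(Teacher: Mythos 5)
Your proposal is correct and follows essentially the same route as the paper's proof: the same identity $C_K = R_K + \sum_k V_k(s_0)$, the same success/skip/failure round decomposition with the same counts, the same multiplicative failure-test lower bound on failure-round regret, the same optimism-plus-deviation argument (\pref{lem:opt PC} together with \pref{lem:V pi dev}) to show non-failure rounds contribute only $-\tilO{L\aS_{L(1+\epsilon)}A/\epsilon}$ regret, and the same self-bounding inequality resolved via \pref{lem:quad log}. The only differences are bookkeeping: the paper tracks $\sum_{r\in\calR_f}\bV_r(s_0)$ and multiplies by $\lambda$ rather than bounding $X_{\mathrm{fail}}$ directly by $\tfrac{2}{\epsilon}R_K^{\mathrm{fail}}$, and it cites \pref{lem:nu} to certify $\norm{V^{\hatpi}_{\gstar}}_{\infty}\lesssim L$ as the precondition of \pref{lem:V pi dev} --- a fact you use implicitly (your union-bound list names \pref{lem:V pi mean} and \pref{lem:output PC}, which are not needed here) but should cite explicitly.
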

\begin{proof}
    For any $R'\geq 1$, let $K'$ be the total number of episodes in the first $R'$ rounds.
    Let $Z_{K'}=\sumkp V_k(s_0)$.
    First note that the regret gives $C_{K'}\lesssim Z_{K'} + c_1\sqrt{Z_{K'}\ln^p(c_3K')} + c_2\ln^p(c_3K')$ and thus $\ln(C_{K'})\lesssim \ln(c_1c_2c_3Z_{K'})$.
    By $K'\lesssim C_{K'}$ and solving a ``quadratic'' inequality (\pref{lem:quad log}), we have $C_{K'}\lesssim Z_{K'} + (c_1^2+c_2)\ln^p(c_1c_2c_3Z_{K'})$.
    Denote by $\bar{g}_r$, $\bV_{r}$, $\bpi_r$ the value of $\gstar$, $\hatV$, and $\hatpi$ in round $r$ respectively.
    For each failure round $r$, let $C$ be the total cost within this round and $m$ the number of episodes within this round.
    By definition, regret within this round satisfies $C-m\bV_r(s_0) \geq C-\lambda \bV_r(s_0)=\lambda(\hattau-\bV_r(s_0))>\frac{\lambda\epsilon \bV_r(s_0)}{2}=\lowo{\bV_r(s_0)/\epsilon}$.
    For each success and skip round $r$, by \pref{lem:opt PC}, \pref{lem:nu}, \pref{lem:V pi dev}, and the value of $\lambda$, we have
    \begin{align*}
        \sum_{j=u_r}^{u'_r}\rbr{I_j - \bV_r(s_0)} \gtrsim \sum_{j=u_r}^{u'_r-1}\rbr{I_j - V^{\bpi_r}_{\bar{g}_r}(s_0)} - L \gtrsim -L\sqrt{\lambda}\ln^2\frac{L\lambda}{\delta} \gtrsim -\frac{L}{\epsilon}\ln^4\frac{Lr}{\delta\epsilon} \gtrsim -\frac{L}{\epsilon}\ln^4\frac{LC_{K'}}{\delta\epsilon},
    \end{align*}
    where $\{u_r,\ldots,u'_r\}$ are the episodes in round $r$, and we lower bound the regret in the last episode by $\lowo{-L}$ since the last trajectory in a skipped round is truncated.
    Denote by $\calR_f$ the total number of failure rounds within the first $R'$ rounds.
    By the assumption in \pref{alg:PC} that $\tset\subseteq\acalS_{L(1+\epsilon)}$, in the first $R'$ rounds, the number of success round is at most $\aS_{L(1+\epsilon)}$ and the number of skip rounds is at most $\bigo{\aS_{L(1+\epsilon)}A\ln(C_{K'})}$.
    Since there are at most $\bigo{\aS_{L(1+\epsilon)}A\ln(C_{K'})}$ these rounds, in each round there are at most $\tilo{\frac{\ln^4\frac{LC_{K'}}{\delta\epsilon}}{\epsilon^2}}$ episodes (\pref{line:PE PC}), and $\bV_r(s_0)\leq 2L$ in any round $r$ by \pref{lem:opt PC}, we have
    \begin{align*}
    		Z_{K'} 
            &\lesssim \frac{\sum_{r\in\calR_f}\bV_r(s_0)\ln^4\frac{LC_{K'}}{\delta\epsilon}}{\epsilon^2} + \frac{L\aS_{L(1+\epsilon)}A\ln^5\frac{c_1c_2c_3Z_{K'}}{\delta\epsilon}}{\epsilon^2} \\
            &\lesssim \frac{\sum_{r\in\calR_f}\bV_r(s_0)\ln^4\frac{c_1c_2c_3Z_{K'}}{\delta\epsilon}}{\epsilon^2} + \frac{L\aS_{L(1+\epsilon)}A\ln^5\frac{c_1c_2c_3Z_{K'}}{\delta\epsilon}}{\epsilon^2}.
    \end{align*}
    By \pref{lem:quad log}, this gives
    \begin{align*}
    		Z_{K'} \lesssim \frac{\sum_{r\in\calR_f}\bV_r(s_0)\ln^4(c_4\sum_{r\in\calR_f}\bV_r(s_0))}{\epsilon^2} + \frac{L\aS_{L(1+\epsilon)}A\ln^5(c_4\sum_{r\in\calR_f}\bV_r(s_0))}{\epsilon},
    \end{align*}
    and $\ln(Z_{K'})\lesssim \ln(\frac{c_1c_2c_3\sum_{r\in\calR_f}\bV_r(s_0)}{\delta\epsilon})\triangleq \ln(c_4\sum_{r\in\calR_f}\bV_r(s_0))$, where $c_4=\frac{c_1c_2c_3}{\delta\epsilon}$. 
    Therefore, the regret upper and lower bound and $\ln(K')\leq \ln(C_{K'})\lesssim \ln(c_1c_2c_3Z_{K'})\lesssim \ln(c_4\sum_{r\in\calR_f}\bV_r(s_0))$ give
    \begin{align*}
        &\frac{\sum_{r\in\calR_f}\bV_r(s_0)}{\epsilon} - \frac{L\aS_{L(1+\epsilon)}A}{\epsilon}\ln^4\frac{LC_{K'}}{\delta\epsilon} \lesssim c_1\sqrt{Z_{K'}\ln^p(c_3K')}+c_2\ln^p(c_3K')\\
        &\lesssim \frac{c_1}{\epsilon}\sqrt{\rbr{\sum_{r\in\calR_f}\bV_r(s_0) + L\aS_{L(1+\epsilon)}A\ln\rbr{c_4\sum_{r\in\calR_f}\bV_r(s_0)}}\ln^{p+4}\rbr{c_4\sum_{r\in\calR_f}\bV_r(s_0)} } + c_2\ln^p\rbr{c_4\sum_{r\in\calR_f}\bV_r(s_0)}.
    \end{align*}
    Applying \pref{lem:quad log} gives $\sum_{r\in\calR_f}\bV_r(s_0) \lesssim L\aS_{L(1+\epsilon)}A\ln^4(c_4) + c_1^2\ln^{p+4}(c_4) + c_2\epsilon\ln^p(c_4)$ and $\ln(\sum_{r\in\calR_f}\bV_r(s_0))\lesssim \ln(c_4)$.
    Now by the regret bound and AM-GM inequality, we have
    \begin{align*}
    		C_{K'} &\lesssim Z_{K'} + c_1\sqrt{Z_{K'}\ln^p(c_3K')} + c_2\ln^p(c_3K') \lesssim Z_{K'} + (c_1^2 + c_2)\ln^p(c_4)\\
		&\lesssim \frac{\sum_{r\in\calR_f}\bV_r(s_0)\ln^4(c_4Z_{K'})}{\epsilon^2} + \frac{L\aS_{L(1+\epsilon)}A\ln^5(c_4Z_{K'})}{\epsilon^2} + (c_1^2 + c_2)\ln^p(c_4)\\
		&\lesssim \frac{L\aS_{L(1+\epsilon)}A\ln^8(c_4)}{\epsilon^2} + \frac{c_1^2\ln^{p+8}(c_4)}{\epsilon^2} + \frac{c_2\ln^{p+4}(c_4)}{\epsilon}.
    \end{align*}    
    Setting $R'$ to be the total number of rounds, we have $K'=K$ and the proof completes.
\end{proof}

\begin{lemma}
    \label{lem:output PC}
    With probability at least $1-4\delta$, we have $V^{\tilpi_g}_g(s_0) \leq \optV_{\tset,g}(s_0)(1 + \epsilon)$ for $g\in\tset$ throughout the execution of \pref{alg:PC}.
\end{lemma}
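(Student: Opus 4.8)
The plan is to show that whenever a goal $g$ is moved out of $\calL$ in a success round of \pref{alg:PC} and assigned the policy $\tilpi_g=\hatpi$, that policy already satisfies the multiplicative guarantee; a union bound over rounds then gives the claim throughout the execution. I would work on the intersection of three good events, each union-bounded so that the total failure probability is at most $4\delta$: (i) the optimism event of \pref{lem:opt} for every call $\VISGO(\tset\setminus\{g\},g,\cdot)$ at \pref{line:pc.goal.selection}, which (since \pref{lem:opt} is uniform over the status of the counter for a fixed pair) needs only one application per goal and thus a union bound over $g\in\tset$ with confidence $\delta/|\tset|$; (ii) the bounded-error event of \pref{lem:bounded error fresh}, guaranteed by the initial \fillc at \pref{line:nu}, ensuring $V^{\hatpi}_g(s)\le 2\hatV(s)$ for every VISGO output along the run; and (iii) the policy-evaluation concentration of \pref{lem:V pi mean}, union-bounded over rounds $r$ through the $\delta/2r^2$ budget used to set $\lambda=N_{\dev}(32L,\tfrac{\epsilon}{256},\tfrac{\delta}{2r^2})$ at \pref{line:PE PC}.

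First I would record two facts about $\hatV$. Since reaching $g$ never requires traversing $g$ itself, $\optV_{\tset\setminus\{g\},g}(s)=\optV_{\tset,g}(s)$ for all $s$; and because the input policies satisfy $\|V^{\pi'_g}_g\|_\infty\lesssim L$ we have $\|\optV_{\tset\setminus\{g\},g}\|_\infty\le 8L$, the precondition of \pref{lem:opt}. Under event (i) this yields the optimism bridge $\hatV(s_0)\le\optV_{\tset,g}(s_0)$. Under event (ii), $\|V^{\hatpi}_g\|_\infty\le 2\|\hatV\|_\infty\le 4L\le 32L$, so $\hatpi$ reaches $g$ almost surely, the $\lambda$ rollouts terminate (outside skip rounds), $\hattau$ is well defined, and the choice $B=32L$ in $N_{\dev}$ is valid.

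The core step is the chain of inequalities closing on the factor $(1+\epsilon)$. In a success round the failure test of \pref{alg:PC} never triggered, so $\hattau\le\hatV(s_0)(1+\epsilon/2)$, hence $\hattau\le\optV_{\tset,g}(s_0)(1+\epsilon/2)$ by optimism. Event (iii) controls the deviation at relative level $\eta=\epsilon/256$, giving a bound of the form $V^{\hatpi}_g(s_0)\le\hattau+\eta\bigl(V^{\hatpi}_g(s_0)+1\bigr)$, which rearranges to $V^{\hatpi}_g(s_0)(1-\eta)\le\optV_{\tset,g}(s_0)(1+\epsilon/2)+\eta$. I would then absorb the stray additive $\eta$ using $\optV_{\tset,g}(s_0)\ge 1$ for every nontrivial $g\ne s_0$ (the case $g=s_0$ being vacuous since the hitting time is $0$), so that $\eta\le\eta\,\optV_{\tset,g}(s_0)$, and finally verify $\tfrac{1+\epsilon/2+\eta}{1-\eta}\le 1+\epsilon$ for all $\epsilon\in(0,1]$ with $\eta=\epsilon/256$ (indeed $\eta(2+\epsilon)\le \tfrac{3\epsilon}{256}\le\tfrac{\epsilon}{2}$ suffices). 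This gives $V^{\tilpi_g}_g(s_0)\le\optV_{\tset,g}(s_0)(1+\epsilon)$ for the goal added in that round, and a union bound over rounds completes the proof.

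The main obstacle I anticipate is twofold. The delicate accounting point is keeping the multiplicative slack under control uniformly in the \emph{unknown} optimal value $\optV_{\tset,g}(s_0)$: a purely additive deviation bound would fail when $\optV_{\tset,g}(s_0)$ is small, so the argument hinges on \pref{lem:V pi mean} delivering a \emph{relative} deviation and on the success test being multiplicative, and the constants ($1+\epsilon/2$ from the test, $\epsilon/256$ from $N_{\dev}$) must be tracked so the chain never overflows $(1+\epsilon)$. The more technical point is the statistical dependency created by reusing the single counter $\N$ across rounds, since samples accumulated during policy evaluation feed later VISGO calls; this is handled because $g=\gstar$ is drawn from $\calL$ arbitrarily rather than adaptively and because \pref{lem:opt} and \pref{lem:bounded error fresh} are stated uniformly over the counter's status, so the good events persist as $\N$ grows, but making this airtight is where the care lies rather than in any individual concentration inequality.
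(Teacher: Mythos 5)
Your proposal is correct and follows essentially the same route as the paper's proof: optimism of the \VISGO{} output (\pref{lem:opt PC}), the bounded-error guarantee from the initial \fillc{} samples (\pref{lem:nu} via \pref{lem:bounded error fresh}), the relative deviation bound of \pref{lem:V pi mean}, and the multiplicative success test, chained and rearranged to close on $(1+\epsilon)$. The only cosmetic difference is bookkeeping: you bound $\|V^{\hatpi}_g\|_\infty \lesssim V^{\hatpi}_g(s_0)+1$ and absorb the additive term via $\optV_{\tset,g}(s_0)\geq 1$, whereas the paper uses \pref{lem:init bound} to get $\|V^{\hatpi}_g\|_\infty \leq 4V^{\hatpi}_g(s_0)$ up front — the same underlying fact with the same slack in the constants.
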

\begin{proof}
    By \pref{lem:nu} and \pref{lem:init bound}, with probability at least $1-2\delta$, we have $V^{\hatpi}_{\gstar}(s) \leq 2\optV_{\tset,\gstar}(s)\leq 4\optV_{\tset,\gstar}(s_0)\leq \min\{8L, 4V^{\hatpi}_{\gstar}(s_0)\}$ for any $s\in\calS$ throughout the execution.
    For any $g\in\tset$, at the round that $\tilpi_g$ is determined (where $\gstar=g$), by \pref{lem:V pi mean}, value of $\lambda$ and definition of success round, $V^{\tilpi_g}_g(s_0) = V^{\hatpi}_g(s_0) \leq \hattau + \frac{\epsilon}{256}\norm{V^{\hatpi}_g}_{\infty} \leq \hattau + \frac{\epsilon}{4}V^{\hatpi}_g(s_0) \leq \hatV(s_0)(1+\frac{\epsilon}{2}) + \frac{\epsilon}{4}V^{\hatpi}_g(s_0)$.
    This gives $V^{\tilpi_g}_g(s_0) \leq \frac{1 + \frac{\epsilon}{2}}{1-\frac{\epsilon}{4}}\hatV(s_0) \leq (1+\epsilon)\optV_{\tset,g}(s_0)$ by $\hatV(s_0)\leq\optV_{\tset,g}(s_0)$ (\pref{lem:opt PC}) and $\epsilon\in(0, 1]$.
\end{proof}

\begin{lemma}
    \label{lem:reg PC}
    With probability at least $1-12\delta$, for any $K'\leq K$, we have 
    $R_{K'}\lesssim \sqrt{L\aS_{L(1+\epsilon)}A\sumkp V_k(s_0)\iota} + L{\aS_{L(1+\epsilon)}}^2A\iota$, where $\iota=\ln^2\frac{L\aS_{L(1+\epsilon)}AK'}{\delta}$.
\end{lemma}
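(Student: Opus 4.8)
The plan is to mirror the regret analysis of \pref{lem:regret-improved.easy}, exploiting that in Policy Consolidation the target set $\tset$ is a fixed input rather than a set grown online: this is exactly the role that \pref{assum:id} played for the layers $\{\calKstar_j\}_j$, so here it holds automatically and no identifiability hypothesis is required. First I would telescope each episode's regret and invoke \pref{lem:def Vk} (together with the VISGO stopping condition and the fact that every $V_k$ is finite and bounded by $2L$, ensured by the sampling of \pref{line:nu} via \pref{lem:nu} and \pref{lem:init bound}) to write $R_{K'}\lesssim\sumkp\sumi\big((\Ind_{s^k_{i+1}}-P^k_i)V_k+(P^k_i-\P^k_i)V_k+b^k_i+\epsilon_k\big)$. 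The martingale term is handled by \pref{lem:anytime freedman} and $\norm{V_k}_\infty\le 2L$, and the $b^k_i$ and $\epsilon_k$ terms by \pref{lem:sum b} and \pref{lem:sum eps}; all of these reduce to $\sqrt{\aS_{L(1+\epsilon)}A\sumkp\sumi\fV(P^k_i,V_k)\iota}$ plus lower-order $L\aS_{L(1+\epsilon)}^{1.5}A\iota$ summands.

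The delicate term is $(P^k_i-\P^k_i)V_k$, where applying \pref{lem:dPV} directly would put the branching factor $\Gamma_{L(1+\epsilon)}$ in the leading order. I would instead split $V_k=\optV_k+(V_k-\optV_k)$ with $\optV_k=\optV_{\tset,g_k}$. Because $\tset$ is fixed, the family $\{\optV_{\tset,g}\}_{g\in\tset}$ is deterministic and of size at most $\aS_{L(1+\epsilon)}$, so a union-bounded anytime Bernstein bound (\pref{lem:anytime bernstein}) controls $(P^k_i-\P^k_i)\optV_k$ by $\sqrt{\fV(P^k_i,\optV_k)\iota/\N^k_i}+L\iota/\N^k_i$ with no $\Gamma$ factor. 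The residual $(P^k_i-\P^k_i)(V_k-\optV_k)$ is restricted on $\tset\subseteq\acalS_{L(1+\epsilon)}$ and $O(L)$-bounded, so \pref{lem:dPV} applies and contributes a $\sqrt{\aS_{L(1+\epsilon)}\fV(P^k_i,V_k-\optV_k)\iota/\N^k_i}$ term. Using $\fV(P^k_i,\optV_k)\le 2\fV(P^k_i,V_k-\optV_k)+2\fV(P^k_i,V_k)$, Cauchy--Schwarz and \pref{lem:sum N}, these collect into $\sqrt{\aS_{L(1+\epsilon)}A\sumkp\sumi\fV(P^k_i,V_k)\iota''}+\sqrt{\aS_{L(1+\epsilon)}^2A\sumkp\sumi\fV(P^k_i,V_k-\optV_k)\iota''}+L\aS_{L(1+\epsilon)}^2A\iota''$.

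To neutralise the $\aS_{L(1+\epsilon)}$-inflated difference term I would apply the variance-transfer bound \pref{lem:sum dV} (whose hypotheses $V_k\le\optV_k$ and $\norm{V_k}_\infty=O(L)$ follow from the optimism of \pref{lem:opt PC}), which shows $\sumkp\sumi\fV(P^k_i,\optV_k-V_k)$ is itself bounded by $L\sumkp\sumi|(P^k_i-\P^k_i)V_k|$ plus $L\sqrt{\aS_{L(1+\epsilon)}A\sumkp\sumi\fV(P^k_i,V_k)\iota'}$ and lower-order terms. Substituting this back turns $Z_{K'}:=\sumkp\sumi|(P^k_i-\P^k_i)V_k|$ into a self-referential quadratic of the form $Z_{K'}\lesssim\sqrt{\aS_{L(1+\epsilon)}^2AL\iota''Z_{K'}}+\sqrt{\aS_{L(1+\epsilon)}A\sumkp\sumi\fV(P^k_i,V_k)\iota''}+L\aS_{L(1+\epsilon)}^2A\iota''$; solving it removes the $\aS_{L(1+\epsilon)}^2$ coefficient from the variance term and yields the clean deviation bound $\sqrt{\aS_{L(1+\epsilon)}A\sumkp\sumi\fV(P^k_i,V_k)\iota''}+L\aS_{L(1+\epsilon)}^2A\iota''$. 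Feeding this through \pref{lem:sum var Vk}, which bounds $\sumkp\sumi\fV(P^k_i,V_k)\lesssim LC_{K'}$, gives $R_{K'}\lesssim\sqrt{L\aS_{L(1+\epsilon)}AC_{K'}\iota''}+L\aS_{L(1+\epsilon)}^2A\iota''$.

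Finally, to retain $\sumkp V_k(s_0)$ in place of $C_{K'}$---which is what later powers the multiplicative $\AX^+$ guarantee of \pref{lem:bound C PC}---I would write $C_{K'}=R_{K'}+\sumkp V_k(s_0)$, bound $\sqrt{L\aS_{L(1+\epsilon)}AC_{K'}\iota''}\le\sqrt{L\aS_{L(1+\epsilon)}AR_{K'}\iota''}+\sqrt{L\aS_{L(1+\epsilon)}A\sumkp V_k(s_0)\iota''}$, and absorb the first summand into $R_{K'}$ by AM--GM, arriving at the stated $R_{K'}\lesssim\sqrt{L\aS_{L(1+\epsilon)}A\sumkp V_k(s_0)\iota}+L\aS_{L(1+\epsilon)}^2A\iota$. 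The main obstacle is the coordinated $\Gamma$-elimination: the $\optV_k$/$V_k$ split, the variance comparison, and the self-bounding quadratic must all interlock, and one must check that $\tset$ being a genuine input makes the Bernstein union bound range over a deterministic $\aS_{L(1+\epsilon)}$-sized family (the very step that fails in \pref{lem:regret.easy} and forces the $\Gamma$ there). The probability $1-12\delta$ follows from a union bound over the events of \pref{lem:anytime freedman}, the fixed-family Bernstein event, \pref{lem:dPV}, \pref{lem:sum b}, \pref{lem:sum dV}, \pref{lem:sum var Vk}, and \pref{lem:opt PC}.
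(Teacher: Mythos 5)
Your proposal is correct and matches the paper's proof in all essential respects: the telescoping via \pref{lem:def Vk}, the Freedman bound for the martingale term, the split $(P^k_i-\P^k_i)V_k=(P^k_i-\P^k_i)\optV_k+(P^k_i-\P^k_i)(V_k-\optV_k)$ with an anytime-Bernstein union bound over the \emph{fixed} family $\{\optV_{\tset,g}\}_{g\in\tset}$ (exactly the paper's mechanism for eliminating $\Gamma_{L(1+\epsilon)}$, exploiting that $\tset$ is an input), \pref{lem:dPV} for the residual, a variance-transfer step, \pref{lem:sum var Vk}, and a final quadratic-inequality step that converts $C_{K'}$ into $\sumkp V_k(s_0)$. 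The only, immaterial, difference is organizational: the paper invokes the self-contained \pref{lem:sum dV} of Appendix F, whose bound on $\sumkp\sumi\fV(P^k_i,\optV_k-V_k)$ carries no $|(P^k_i-\P^k_i)V_k|$ term and hence needs only AM--GM before solving a single quadratic in $C_{K'}$ via \pref{lem:quad log}, whereas you use the \pref{lem:sum dV.easy}-style bound and close a self-referential quadratic in $Z_{K'}=\sumkp\sumi|(P^k_i-\P^k_i)V_k|$ first, exactly as in \pref{lem:regret-improved.easy}; both routes yield the stated bound with the same logarithmic bookkeeping.
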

\begin{proof}
    By \pref{lem:anytime bernstein} and a union bound on $\{\optV_{\tset,g}\}_{g\in\tset}$ and $(s, a)\in\tset\times\calA$, with probability at least $1-\delta$, $(P^k_i -\P^k_i)\optV_k\lesssim \sqrt{\frac{\fV(P^k_i, \optV_k)\iota'}{\N^k_i}} + \frac{L\iota'}{\N^k_i}$ for any $k\in[K']$ and $i\in[I_k]$ (note that this holds even if $s^k_i\notin\calK$), where $\iota'=\ln\frac{\aS_{L(1+\epsilon)}AC_{K'}}{\delta}$.
    Moreover, with probability at least $1-\delta$,
    \begin{align*}
        &\sumkp\rbr{I_k - V_k(s_0)} \leq \sumkp\sumi\rbr{1 + V_k(s^k_{i+1}) - V_k(s^k_i)}\\
        &\lesssim \sumkp\sumi\rbr{(\Ind_{s^k_{i+1}} - P^k_i)V_k + (P^k_i - \P^k_i)V_k + b^k_i + \epsilon_k} \tag{\pref{lem:def Vk}}\\
        &\lesssim \sqrt{\sumkp\sumi\fV(P^k_i, V_k)\ln\frac{LC_{K'}}{\delta}} + \sumkp\sumi\rbr{(P^k_i-\P^k_i)\optV_k + (P^k_i-\P^k_i)(V_k-\optV_k) + b^k_i} + L\ln\frac{LC_{K'}}{\delta}.
    \end{align*}
    where the last step is by \pref{lem:sum eps} and \pref{lem:anytime freedman}.
    Now note that with probability at least $1-2\delta$,
    \begin{align*}
        &\sumkp\sumi\rbr{(P^k_i-\P^k_i)\optV_k + (P^k_i-\P^k_i)(V_k-\optV_k) + b^k_i}\\
        &\lesssim \sumkp\sumi\rbr{\sqrt{\frac{\fV(P^k_i, \optV_k)\iota'}{\N^k_i}} + \sqrt{\frac{\Gamma_{L(1+\epsilon)}\fV(P^k_i, V_k-\optV_k)\iota'}{\N^k_i}} + \frac{\Gamma_{L(1+\epsilon)}L\iota'}{\N^k_i} + b^k_i} \tag{\pref{lem:dPV}, $\norm{\optV_k}_{\infty}\leq 2L+1$, $\iota'=\ln\frac{\aS_{L(1+\epsilon)}AC_{K'}}{\delta}$}\\
        &\lesssim \sqrt{\aS_{L(1+\epsilon)}A\sumkp\sumi\fV(P^k_i, V_k)\iota'} + \sqrt{\aS_{L(1+\epsilon)}\Gamma_{L(1+\epsilon)}A\sumkp\sumi\fV(P^k_i, V_k-\optV_k)\iota'} + L{\aS_{L(1+\epsilon)}}^2A\iota',
    \end{align*}
    where in the last step $\iota'=\ln^2\frac{\aS_{L(1+\epsilon)}AC_{K'}}{\delta}$ and we apply \pref{lem:sum N}, Cauchy-Schwarz inequality, \pref{lem:sum b}, and $\var[X+Y]\leq 2(\var[X]+\var[Y])$.
    Thus, by \pref{lem:sum dV} with \pref{lem:opt PC} and AM-GM inquality, with probability at least $1-8\delta$, we continue with
    \begin{align*}
        C_{K'} - \sumkp V_k(s_0)&\lesssim \sqrt{\aS_{L(1+\epsilon)}A\sumkp\sumi\fV(P^k_i, V_k)\iota'} + L{\aS_{L(1+\epsilon)}}^2A\iota'\\
        &\lesssim \sqrt{L\aS_{L(1+\epsilon)}AC_{K'}\iota'} + L{\aS_{L(1+\epsilon)}}^2A\iota', \tag{\pref{lem:sum var Vk}}
    \end{align*}
    where $\iota'=\ln^2\frac{L\aS_{L(1+\epsilon)}AC_{K'}}{\delta}$.
    Solving a ``quadratic'' inequality w.r.t $C_{K'}$ (\pref{lem:quad log}), we have $C_{K'}\lesssim \sumkp V_k(s_0) + L{\aS_{L(1+\epsilon)}}^2A\ln^2\frac{L\aS_{L(1+\epsilon)}AK'}{\delta}$.
    Plugging this back to the last inequality above completes the proof.
\end{proof}


\begin{lemma}
    \label{lem:nu}
    With probability at least $1-2\delta$, throughout the execution of \pref{alg:PC}, $V_{\gstar}^{\hatpi}(s)\leq 2\optV_{\tset,\gstar}(s)$ for any $s\in\calS$.
\end{lemma}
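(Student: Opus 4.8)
The plan is to combine the optimism of \VISGO (\pref{lem:opt}) with its bounded-error guarantee under fresh samples (\pref{lem:bounded error fresh}), exploiting the fact that the goal set $\tset$ is a \emph{fixed input} of \pref{alg:PC} and hence independent of the samples gathered during the run. I would set up two good events by a union bound over the finitely many goals $g\in\tset$ and then chain the resulting inequalities.

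First I would handle optimism. For each fixed $g$, the call in \pref{line:pc.goal.selection} is $(\hatQ,\hatV,\hatpi)=\VISGO(\tset\setminus\{g\},g,\epsilon_{\VI},\N,\tfrac{\delta}{|\tset|})$ with restricted set $\calX=\tset\setminus\{g\}$ of size $|\tset|-1$. Since reaching $g$ terminates the hitting-time process, the action taken at $g$ is never executed, so a policy restricted on $\tset$ and a policy restricted on $\tset\setminus\{g\}$ achieve the same hitting time of $g$; consequently $\optV_{\tset\setminus\{g\},g}=\optV_{\tset,g}$. Moreover, since the input policies are restricted on $\tset$ and satisfy $\|V^{\pi'_g}_g\|_\infty\lesssim L$, the reset action gives $\|\optV_{\tset,g}\|_\infty\le 1+\optV_{\tset,g}(s_0)\le 1+V^{\pi'_g}_g(s_0)\le 8L$, which verifies the precondition of \VISGO and of \pref{lem:opt}. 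Applying \pref{lem:opt} for this fixed $(\calX,g)$ yields, with probability at least $1-\tfrac{\delta}{|\tset|}$ and for all precisions $\epsilon_{\VI}$, that $\hatV(s)\le\optV_{\tset\setminus\{g\},g}(s)=\optV_{\tset,g}(s)$ for every $s$.

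Next I would control the realised value $V^{\hatpi}_{\gstar}$. The key point is that $\tset$, and hence every pair $(\tset\setminus\{g\},g)$, is fixed in advance and therefore independent of the counter $\N$, so \pref{lem:bounded error fresh} applies for each $g$ \emph{even though $\N$ keeps accumulating samples across policy-evaluation rounds}: its good event is an anytime Bernstein bound with a union bound over the fixed set $\calX$, and thus holds for every state of the counter. The initialisation in \pref{line:nu} runs $\fillc$ until $\N(s,a)\ge N_1(|\tset|-1,\tfrac{\delta}{|\tset|})$ for all $(s,a)\in\tset\times\calA\supseteq(\tset\setminus\{g\})\times\calA$, and this threshold is preserved thereafter since samples are never discarded. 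Hence, with probability at least $1-\tfrac{\delta}{|\tset|}$ per goal, \pref{lem:bounded error fresh} (with $\xi=\epsilon_{\VI}<\tfrac18$ and $\delta'=\tfrac{\delta}{|\tset|}$) gives $V^{\hatpi}_{\gstar}(s)\le 2\hatV(s)$ for all $s\in\calS$.

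Finally I would union bound both families over the at most $|\tset|$ goals, costing $\delta$ for the optimism events and $\delta$ for the bounded-error events, for total failure probability $2\delta$. Chaining the two bounds then yields, in every round and for all $s\in\calS$,
\[
V^{\hatpi}_{\gstar}(s)\ \le\ 2\hatV(s)\ \le\ 2\optV_{\tset\setminus\{\gstar\},\gstar}(s)\ =\ 2\optV_{\tset,\gstar}(s),
\]
which is the claim. I expect the main obstacle to be justifying that \pref{lem:bounded error fresh} may be invoked despite $\N$ being accumulated adaptively during policy evaluation; the resolution is precisely the independence of the fixed goal set $\tset$ from the sampling process, together with the anytime nature of the underlying concentration, so that no extra bookkeeping for statistical dependence is required beyond matching the $\fillc$ threshold to $N_1(|\tset|-1,\cdot)$.
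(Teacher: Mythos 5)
Your proposal is correct and takes essentially the same route as the paper: optimism of $\hatV$ via \pref{lem:opt} (which the paper packages as \pref{lem:opt PC} with a union bound over $g\in\tset$), and $V^{\hatpi}_{\gstar}(s)\leq 2\hatV(s)$ via \pref{lem:bounded error fresh} with $\calX=\tset\setminus\{g\}$ using the \fillc threshold $N_1(|\tset|-1,\tfrac{\delta}{|\tset|})$ set in \pref{line:nu}, chained together for a total failure probability of $2\delta$. The extra details you supply --- the identity $\optV_{\tset\setminus\{g\},g}=\optV_{\tset,g}$, the verification of the $8L$ precondition, and the independence/anytime argument for why the adaptively accumulated counter $\N$ causes no statistical-dependence issue --- are points the paper leaves implicit.
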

\begin{proof}
    By \pref{lem:opt PC}, value of $\nu$ (\pref{line:nu}), and applying \pref{lem:bounded error fresh} with $\calX=\tset\setminus\{g\}$ for each $g\in\tset$, we have $V^{\hatpi}_{\gstar}(s)\leq 2\hatV(s)\leq 2\optV_{\tset,\gstar}(s)$ for all $s\in\calS$.
\end{proof}

\begin{lemma}
    \label{lem:opt PC}
    With probability at least $1-\delta$, throughout the execution of \pref{alg:PC}, $\hatV(s)\leq\optV_{\tset,\gstar}(s)$ for any $s\in\calS$.
\end{lemma}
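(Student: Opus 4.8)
The plan is to reduce the statement to the optimism guarantee \pref{lem:opt}, which already shows that a single call $\VISGO(\calX, g, \xi, n, \delta)$ returns $V_\xi(s) \leq \optV_{\calX, g}(s)$ for all $s$ and all precisions $\xi$ with probability $1-\delta$, provided $\calX$ and $g$ are fixed (independent of the counter) and $\|\optV_{\calX, g}\|_\infty \leq 8L$. In \pref{alg:PC}, each round sets $\hatV$ to the value function returned by $\VISGO(\tset\setminus\{\gstar\}, \gstar, \cdot, \N, \frac{\delta}{|\tset|})$ in \pref{line:pc.goal.selection}, so I would first observe that its target $\optV_{\tset\setminus\{\gstar\}, \gstar}$ coincides with $\optV_{\tset, \gstar}$: since the hitting time $\omega_g$ is insensitive to the action a policy prescribes at the goal $g$ itself, enlarging the allowed support from $\tset\setminus\{\gstar\}$ to $\tset$ adds freedom only at $\gstar$ and leaves the minimal hitting time unchanged. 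Thus it suffices to prove $\hatV(s) \leq \optV_{\tset\setminus\{\gstar\}, \gstar}(s)$ for all $s$.

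The precondition $\|\optV_{\tset\setminus\{\gstar\}, \gstar}\|_\infty = \|\optV_{\tset, \gstar}\|_\infty \leq 8L$ holds for every $\gstar \in \tset \subseteq \acalS_{L(1+\epsilon)}$ by \pref{lem:init bound}, so \pref{lem:opt} is applicable to each candidate goal.

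The main obstacle is that $\gstar$ is selected adaptively from $\calL$ (\pref{line:pc.goal.selection}), and $\calL$ shrinks as states are consolidated, so the realized pair $(\tset\setminus\{\gstar\}, \gstar)$ is \emph{not} independent of the shared counter $\N$; applying \pref{lem:opt} directly to the actual sequence of calls would violate its independence hypothesis. I would resolve this exactly as in the analogous \pref{lem:V calK}: instead of arguing along the adaptive trajectory, apply \pref{lem:opt} to each of the $|\tset|$ \emph{fixed} configurations $\{(\tset\setminus\{g\}, g) : g \in \tset\}$ separately. For each such configuration the pair $(\calX, g)$ is deterministic, hence trivially independent of $\N$, and because \pref{lem:opt} is an anytime statement — its good event is built from the anytime Bernstein inequality \pref{lem:anytime bernstein} and therefore holds simultaneously for every status of the counter — a single application with confidence $\frac{\delta}{|\tset|}$ guarantees $V_\xi(s) \leq \optV_{\tset\setminus\{g\}, g}(s)$ for all $\xi$, all $s$, and all statuses of $\N$.

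A union bound over the $|\tset|$ configurations then yields, with probability at least $1-\delta$, that $\VISGO(\tset\setminus\{g\}, g, \cdot, \N, \frac{\delta}{|\tset|})$ is optimistic for every $g \in \tset$ and every status of $\N$ at once. Since in each round the actually chosen $\gstar$ lies in $\tset$, the realized call at \pref{line:pc.goal.selection} is covered by one of these guarantees, giving $\hatV(s) \leq \optV_{\tset\setminus\{\gstar\}, \gstar}(s) = \optV_{\tset, \gstar}(s)$ for all $s$ throughout the execution, which is the claim.
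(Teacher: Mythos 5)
Your proof is correct and follows essentially the same route as the paper's: the paper likewise applies the anytime optimism guarantee of \pref{lem:opt} to each of the $|\tset|$ fixed configurations $(\tset\setminus\{g\},g)$, $g\in\tset$, with confidence $\delta/|\tset|$ each and a union bound, so that the adaptive choice of $\gstar$ and the evolving counter $\N$ are automatically covered; your observation that $\optV_{\tset\setminus\{\gstar\},\gstar}=\optV_{\tset,\gstar}$ (hitting times are insensitive to the action at the goal) is a correct detail the paper leaves implicit. One caveat: the precondition $\|\optV_{\tset\setminus\{g\},g}\|_\infty\leq 8L$ does not follow from $\tset\subseteq\acalS_{L(1+\epsilon)}$ and \pref{lem:init bound} alone, since membership in $\acalS_{L(1+\epsilon)}$ only bounds hitting times of policies restricted on $\acalS_{L(1+\epsilon)}$ rather than on $\tset\setminus\{g\}$; it should instead be justified via the input policies $\Pi'$ of \pref{alg:PC}, which are restricted on $\tset$ and satisfy $\|V^{\pi'_g}_g\|_\infty\lesssim L$.
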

\begin{proof}
    This is simply by the value of $\hatV$ in each round and applying \pref{lem:opt} on $\{\optV_{\tset,g}\}_{g\in\tset}$.
\end{proof}
\section{Lemmas for Policy Evaluation}
\label{app:evaluation}
In this section, we present a set of lemmas related to regret analysis shared among \pref{alg:LOGSSD}, \pref{alg:SD}, and \pref{alg:PC}.
In \pref{alg:SD}, a trial is indexed by $\tau$, and each trial corresponds to a value of $z$ estimating $\aS_{L(1+\epsilon)}$ (\pref{line:trial}).
In \pref{alg:LOGSSD} and \pref{alg:PC}, we assume the whole learning procedure lies in an artificial trial.
Note that when lemmas below are involved, we have $b^k_i=0$, $\N^k_i=\infty$, and $\P^k_i=\Ind_{s_0}$ when $s^k_i\notin\calK_k$.

\begin{lemma}
    \label{lem:sum var Vk}
	Let $\calG$ be the goal set such that $\acalS_{L(1+\epsilon)}\subseteq\calG\subseteq\calS$.
    In any trial, with probability at least $1-2\delta$, for any $K'\in[K]$, if $\calK_k\subseteq\acalS_{L(1+\epsilon)}$ and $g_k\in\calG\setminus\calK_k$ for any $k\in[K']$, then $\sumkp\sumi \fV(P^k_i, V_k)\lesssim LC_{K'} + L^2\Gamma_{L(1+\epsilon)}\aS_{L(1+\epsilon)}A\iota$, where $\iota = \bigo{\ln(|\calG|ALC_{K'}/\delta)\ln(C_{K'})}$.
\end{lemma}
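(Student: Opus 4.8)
The plan is to control the total conditional variance by a law-of-total-variance telescoping that turns the quantity into a self-bounding inequality whose dominant contribution is the total number of steps $C_{K'}$. The starting point is the algebraic identity
\begin{align*}
    \sumkp\sumi \fV(P^k_i, V_k) = \sumkp\sumi \big[P^k_i V_k^2 - V_k(s^k_i)^2\big] + \sumkp\sumi \big[V_k(s^k_i)^2 - (P^k_i V_k)^2\big],
\end{align*}
which splits the variance sum into a telescoping-plus-martingale part and a ``Bellman gap'' part. Throughout I would use the preconditions $\calK_k\subseteq\acalS_{L(1+\epsilon)}$ and $g_k\in\calG\setminus\calK_k$ to invoke the restricted-value concentration lemmas and to union bound over the finite goal set $\calG$.

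For the first sum I would write $P^k_i V_k^2 = \E[V_k(s^k_{i+1})^2 \mid \calF]$ and add and subtract $V_k(s^k_{i+1})^2$. The deterministic difference $\sumi [V_k(s^k_{i+1})^2 - V_k(s^k_i)^2]$ telescopes within each episode to $V_k(s^k_{I_k+1})^2 - V_k(s_0)^2$; since $V_k(g_k)=0$ this vanishes whenever the episode reaches its goal and is at most $4L^2$ otherwise, i.e.\ only in skip rounds, of which there are $\bigo{\aS_{L(1+\epsilon)}A\ln C_{K'}}$, yielding the lower-order term $\bigo{L^2\aS_{L(1+\epsilon)}A\ln C_{K'}}$. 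The martingale remainder is handled by the anytime Freedman inequality (\pref{lem:anytime freedman}); using $\|V_k\|_\infty\leq 2L$ together with the Lipschitz bound $\fV(P^k_i, V_k^2)\leq 16L^2\fV(P^k_i,V_k)$ (the map $x\mapsto x^2$ is $4L$-Lipschitz on $[0,2L]$), it is bounded by $\lesssim \sqrt{L^2\big(\sumkp\sumi\fV(P^k_i,V_k)\big)\iota} + L^2\iota$.

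For the second sum I would factor $V_k(s^k_i)^2 - (P^k_i V_k)^2 = (V_k(s^k_i)-P^k_i V_k)(V_k(s^k_i)+P^k_i V_k)$, bound the second factor by $4L$, and upper bound the first factor using the VISGO update $V_k(s^k_i)\leq 1 + \tilP^k_i V_k - b^k_i + \epsilon_k$, so that $(V_k(s^k_i)-P^k_i V_k)_+ \leq 1 + |(\tilP^k_i-P^k_i)V_k| + \epsilon_k$ after dropping $-b^k_i\leq 0$. The constant $1$ contributes exactly the target main term $4L\,C_{K'}$, while splitting $(\tilP^k_i-P^k_i)V_k = (\tilP^k_i-\P^k_i)V_k + (\P^k_i-P^k_i)V_k$ and invoking the established bounds \pref{lem:sum N}, \pref{lem:dPV} and \pref{lem:sum eps} gives the remaining deviation at most $\lesssim \sqrt{\Gamma_{L(1+\epsilon)}\aS_{L(1+\epsilon)}A\big(\sumkp\sumi\fV(P^k_i,V_k)\big)\iota} + L\aS_{L(1+\epsilon)}^2A\iota$.

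Writing $X := \sumkp\sumi\fV(P^k_i,V_k)$ and collecting the two parts, every error contribution is either a lower-order additive term or of the self-referential form $L\sqrt{\Gamma_{L(1+\epsilon)}\aS_{L(1+\epsilon)}A\,X\,\iota}$. The concluding step is to apply AM-GM ($L\sqrt{cX\iota}\leq \tfrac14 X + L^2 c\iota$) to absorb all $\sqrt{X}$ terms into $\tfrac12 X$ on the left, leaving $X\lesssim L\,C_{K'} + L^2\Gamma_{L(1+\epsilon)}\aS_{L(1+\epsilon)}A\iota$ as claimed. The main obstacle is precisely this self-bounding bookkeeping: both the Freedman step and the transition-deviation bounds generate $\sqrt{X}$-type terms, so one must verify that each carries a coefficient small enough to be reabsorbed, track the two concentration events (Freedman and the event underlying \pref{lem:dPV}) so the total failure probability is $2\delta$, and confirm that the restriction hypotheses required by \pref{lem:dPV} hold under $\calK_k\subseteq\acalS_{L(1+\epsilon)}$.
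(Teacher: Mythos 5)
Your proof is correct and follows essentially the same route as the paper: the telescoping-plus-Freedman decomposition you describe is exactly the paper's \pref{lem:sum var} (which uses $\var[X^2]\leq 4B^2\var[X]$ from \pref{lem:quad} in place of your Lipschitz argument), and your treatment of the Bellman-gap term via the VISGO update (\pref{lem:def Vk}), \pref{lem:dPV}, Cauchy--Schwarz, \pref{lem:sum N} and \pref{lem:sum eps}, followed by absorbing the self-referential $\sqrt{X}$ terms, matches the paper's proof step for step, including the $2\delta$ accounting over the two concentration events. The only blemish is a typo in your intermediate display, which drops the factor $L$ on the $\sqrt{X}$ term and states the additive error as $L{\aS_{L(1+\epsilon)}}^2A\iota$ rather than $L^2\Gamma_{L(1+\epsilon)}\aS_{L(1+\epsilon)}A\iota$; your final absorption step uses the correct forms, so this does not affect the conclusion.
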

\begin{proof}
	Note that $\norm{V_k}_{\infty}\leq 2L$ by the stopping condition (\pref{line:bound V}) of \pref{alg:VISGO}, and with probability at least $1-\delta$,
	\begin{align*}
        &\sumkp\sumi \rbr{V_k(s^k_i)^2 - (P^k_iV_k)^2} \lesssim L\sumk\sumi(V_k(s^k_i) - P^k_iV_k)_+ \tag{$a^2-b^2\leq (a+b)(a-b)_+$ for $a,b\geq 0$} \\
		&\lesssim L\sumkp\sumi\rbr{1 + (\P^k_i - P^k_i)V_k + \frac{1}{\N^k_i} + \epsilon_k}_+ \tag{\pref{lem:def Vk}}\\
		&\lesssim LC_{K'} + L\sumkp\sumi\rbr{\sqrt{\frac{\Gamma_{L(1+\epsilon)}\fV(P^k_i, V_k)\iota'}{\N^k_i}} + \frac{L\Gamma_{L(1+\epsilon)}\iota'}{\N^k_i} + \epsilon_k} \tag{\pref{lem:dPV} and $\N^k_i=\infty$ when $s^k_i\notin\calK_k$}\\
		&\lesssim LC_{K'} + L\sqrt{\Gamma_{L(1+\epsilon)}\aS_{L(1+\epsilon)}A\sumkp\sumi\fV(P^k_i, V_k)\iota'\ln(C_{K'})} + L^2\Gamma_{L(1+\epsilon)}\aS_{L(1+\epsilon)}A\iota'\ln(C_{K'}),
	\end{align*}
    where $\iota'=\ln(|\calG|AC_{K'}/\delta)$, and the last step is by Cauchy-Schwarz inequality, \pref{lem:sum N}, and \pref{lem:sum eps}.
    Now let $Z_{K'}=\sumkp\sumi\fV(P^k_i, V_k)$.
	Applying \pref{lem:sum var} and $\sumkp V_k(s^k_{I_k+1})^2\lesssim L^2\aS_{L(1+\epsilon)}A\iota'$ (this is because $V_k(s^k_{I_k+1})$ is non-zero only in skip rounds), we have with probability a least $1-\delta$,
	\begin{align*}
		Z_{K'} \lesssim LC_{K'} + L\sqrt{\Gamma_{L(1+\epsilon)}\aS_{L(1+\epsilon)}AZ_{K'}\iota} + L^2\Gamma_{L(1+\epsilon)}\aS_{L(1+\epsilon)}A\iota,
	\end{align*}
	where $\iota = \bigo{\ln(|\calG|ALC_{K'}/\delta)\ln(C_{K'})}$.
	Solving a quadratic inequality completes w.r.t.~$Z_{K'}$ the proof.
\end{proof}

\begin{lemma}
    \label{lem:sum dV}
    In any trial, with probability at least $1-5\delta$, for any $K'\in[K]$ if 1) $\{\optV_k\}_{k\in[K']}\subseteq\calV$ where $\calV$ is determined at the beginning of the trial, $|\calV|$ is upper bounded by polynomials of $\aS_{L(1+\epsilon)}$, and $\norm{V}_{\infty}=\bigo{L}$ for any $V\in\calV$, 2) $V_k(s)\leq\optV_k(s)$ for any $k\in[K']$ and $s\in\calS$, 3) $\calK_k\subseteq\acalS_{L(1+\epsilon)}$ for any $k\in[K']$, and 4) $g_k\in\bcalU\setminus\calK_k$ for any $k\in[K']$, then $\sumkp\sumi\fV(P^k_i, \optV_k - V_k)\lesssim L\sqrt{\aS_{L(1+\epsilon)}A\sumkp\sumi \fV(P^k_i, V_k)\iota'} + L^2{\aS_{L(1+\epsilon)}}^2A\iota'$, where $\iota'=\ln^2\frac{L\aS_{L(1+\epsilon)}AC_{K'}}{\delta}$.
\end{lemma}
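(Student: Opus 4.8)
The plan is to follow the template of \pref{lem:sum dV.easy}, with the crucial addition that the residual term $\sumkp\sumi\abr{(P^k_i-\P^k_i)V_k}$, left explicit there, must here be controlled \emph{without} incurring a $\Gamma_{L(1+\epsilon)}$ factor. Write $W=\sumkp\sumi\fV(P^k_i,\optV_k-V_k)$ for the quantity to bound. By conditions (1) and (2), together with $\norm{V_k}_\infty\leq 2L$ from the stopping rule of \pref{alg:VISGO}, we have $0\leq \optV_k(s)-V_k(s)\lesssim L$ for all $s$, so I invoke the total-variance lemma \pref{lem:sum var} to get, with probability $1-\delta$,
\[
  W\lesssim \underbrace{\sumkp\big(\optV_k(s^k_{I_k+1})-V_k(s^k_{I_k+1})\big)^2}_{(a)}+\underbrace{\sumkp\sumi\Big((\optV_k(s^k_i)-V_k(s^k_i))^2-(P^k_i(\optV_k-V_k))^2\Big)}_{(b)}+L^2\iota.
\]
Term $(a)$ is nonzero only when $s^k_{I_k+1}\neq g_k$, i.e.\ in skip rounds (since $\optV_k(g_k)=V_k(g_k)=0$ under condition (4)); as the number of skip episodes is $\tilo{\aS_{L(1+\epsilon)}A}$ and each contributes $\lesssim L^2$, this gives $(a)\lesssim L^2\aS_{L(1+\epsilon)}A\iota'$.

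For term $(b)$ I repeat the clipping argument of \pref{lem:sum dV.easy}: using $a^2-b^2\leq(a+b)(a-b)_+$, the bound $\optV_k(s^k_i)\leq 1+P^k_i\optV_k$, and the VISGO update $V_k(s^k_i)\geq 1+\tilP^k_iV_k-b^k_i-\epsilon_k$ (\pref{lem:def Vk}),
\[
  (b)\lesssim L\sumkp\sumi\big(\abr{(P^k_i-\P^k_i)V_k}+\abr{(\P^k_i-\tilP^k_i)V_k}+b^k_i+\epsilon_k\big).
\]
The last three summands are standard: \pref{lem:sum N} yields $L\sumkp\sumi\abr{(\P^k_i-\tilP^k_i)V_k}\lesssim L^2\aS_{L(1+\epsilon)}A\log(C_{K'})$, while \pref{lem:sum b} and \pref{lem:sum eps} give $L\sumkp\sumi(b^k_i+\epsilon_k)\lesssim L\sqrt{\aS_{L(1+\epsilon)}A\sumkp\sumi\fV(P^k_i,V_k)\iota'}+L^2\aS_{L(1+\epsilon)}^{1.5}A\iota'$, all absorbable into the target bound.

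The heart of the proof is the first summand $L\sumkp\sumi\abr{(P^k_i-\P^k_i)V_k}$, which I split as $V_k=\optV_k+(V_k-\optV_k)$ to exploit two concentration mechanisms. Condition (1) places $\optV_k$ in a set $\calV$ fixed at the start of the trial, of cardinality polynomial in $\aS_{L(1+\epsilon)}$, so an anytime Bernstein inequality (\pref{lem:anytime bernstein}) with a union bound over $\calV$ and over $(s,a)\in\acalS_{L(1+\epsilon)}\times\calA$ gives $\abr{(P^k_i-\P^k_i)\optV_k}\lesssim \sqrt{\fV(P^k_i,\optV_k)\iota'/\N^k_i}+L\iota'/\N^k_i$ \emph{with no $\Gamma_{L(1+\epsilon)}$}. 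For the restricted remainder $V_k-\optV_k$, which is $O(L)$-bounded and constant outside $\calK_k\subseteq\acalS_{L(1+\epsilon)}$ by conditions (2) and (3), \pref{lem:dPV} gives $\abr{(P^k_i-\P^k_i)(V_k-\optV_k)}\lesssim\sqrt{\aS_{L(1+\epsilon)}\fV(P^k_i,\optV_k-V_k)\iota'/\N^k_i}+L\aS_{L(1+\epsilon)}\iota'/\N^k_i$. Using $\fV(P^k_i,\optV_k)\leq 2\fV(P^k_i,V_k)+2\fV(P^k_i,\optV_k-V_k)$, Cauchy--Schwarz, and \pref{lem:sum N}, these combine to
\[
  L\sumkp\sumi\abr{(P^k_i-\P^k_i)V_k}\lesssim L\sqrt{\aS_{L(1+\epsilon)}A\sumkp\sumi\fV(P^k_i,V_k)\iota''}+L\sqrt{\aS_{L(1+\epsilon)}^2AW\iota''}+L^2\aS_{L(1+\epsilon)}^2A\iota''.
\]

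Collecting $(a)$, $(b)$, and this residual bound, $W$ satisfies a self-referential inequality whose awkward term is $\sqrt{L^2\aS_{L(1+\epsilon)}^2A\iota''\cdot W}$: this circularity is the main obstacle, and it is precisely why condition (1) is indispensable, since without the small set $\calV$ the $\optV_k$ part would carry $\Gamma_{L(1+\epsilon)}$ and break the bound. I resolve it by AM--GM, $\sqrt{L^2\aS_{L(1+\epsilon)}^2A\iota''\cdot W}\leq \tfrac12 W+\tfrac12 L^2\aS_{L(1+\epsilon)}^2A\iota''$, absorbing $\tfrac12 W$ into the left-hand side and rearranging to obtain $W\lesssim L\sqrt{\aS_{L(1+\epsilon)}A\sumkp\sumi\fV(P^k_i,V_k)\iota'}+L^2\aS_{L(1+\epsilon)}^2A\iota'$ with $\iota'=\ln^2(L\aS_{L(1+\epsilon)}AC_{K'}/\delta)$. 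The stated $1-5\delta$ probability follows from a union bound over the invoked good events (\pref{lem:sum var}, the Bernstein event over $\calV$, \pref{lem:dPV}, \pref{lem:sum b}, and the Freedman/variance event of the comparison step).
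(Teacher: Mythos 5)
Your proposal is correct and follows essentially the same route as the paper's proof: both reduce to the per-step sum via \pref{lem:sum var}, split the key deviation as $(P^k_i-\P^k_i)\optV_k + (P^k_i-\P^k_i)(V_k-\optV_k)$ so that the anytime Bernstein bound over the fixed set $\calV$ (no $\Gamma_{L(1+\epsilon)}$) handles the first piece and \pref{lem:dPV} the second, convert $\fV(P^k_i,\optV_k)$ into $\fV(P^k_i,V_k)+\fV(P^k_i,\optV_k-V_k)$, and close the resulting self-referential inequality in $\sumkp\sumi\fV(P^k_i,\optV_k-V_k)$ (your AM--GM absorption is the same step the paper phrases as solving a quadratic inequality). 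The only cosmetic differences are the order of operations and your explicit handling of the $\tilP$ term, which the paper folds into $b^k_i$ via \pref{lem:def Vk}.
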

\begin{proof}
    First note that
    \begin{align*}
        &\sumkp\sumi\rbr{(\optV_k(s^k_i) - V_k(s^k_i))^2 - (P^k_i(\optV_k-V_k))^2}\\
        &\lesssim L\sumkp\sumi(\optV_k(s^k_i) - V_k(s^k_i) - P^k_i\optV_k + P^k_iV_k)_+ \tag{$V_k(s)\leq\optV_k(s)$ for all $s$ and $a^2-b^2\leq (a+b)(a-b)_+$ for $a,b\geq 0$}\\
        &\lesssim L\sumkp\sumi(1 + P^k_iV_k - V_k(s^k_i))_+ \tag{$\optV_k(s^k_i)\leq 1 + P^k_i\optV_k$}.
    \end{align*}
    Let $\P_{s,a}(s')=\frac{\N(s,a,s')}{\N^+(s,a)}$.
    By \pref{lem:anytime bernstein}, with probability at least $1-\delta$, for any $(s, a)\in\acalS_{L(1+\epsilon)}\times\calA$, $V\in\calV$, and status of counter $\N$:
    \begin{align}
        (P_{s,a} - \P_{s,a})V \lesssim \sqrt{\frac{\fV(P_{s,a}, V)\iota'}{\N(s, a)}} + \frac{L\iota'}{\N(s, a)},\label{eq:calV}
    \end{align}
    where $\iota'=\ln\frac{\aS_{L(1+\epsilon)}AC_{K'}}{\delta}$.
    By \pref{lem:def Vk}, with probability at least $1-2\delta$, we continue with
    \begin{align*}
        &\lesssim L\sumkp\sumi( (P^k_i-\P^k_i)\optV_k + (P^k_i-\P^k_i)(V_k - \optV_k) + b^k_i + \epsilon_k)_+\\
        &\lesssim L\sumkp\sumi\rbr{\sqrt{\frac{\fV(P^k_i, \optV_k)\iota'}{\N^k_i}} + \sqrt{\frac{\Gamma_{L(1+\epsilon)}\fV(P^k_i, V_k-\optV_k)\iota'}{\N^k_i}} + \frac{\Gamma_{L(1+\epsilon)}L\iota'}{\N^k_i} + b^k_i + \epsilon_k} \tag{\pref{eq:calV}, \pref{lem:dPV}, conditions 3) and 4), $\iota'=\ln\frac{\aS_{L(1+\epsilon)}AC_{K'}}{\delta}$}\\
        &\lesssim L\rbr{\sqrt{\aS_{L(1+\epsilon)}A\sumkp\sumi\fV(P^k_i, V_k)\iota'} + \sqrt{{\aS_{L(1+\epsilon)}}^2A\sumkp\sumi\fV(P^k_i, V_k - \optV_k)\iota'}} + L^2{\aS_{L(1+\epsilon)}}^2A\iota',
    \end{align*}
    where in the last step $\iota'=\ln^2\frac{\aS_{L(1+\epsilon)}AC_{K'}}{\delta}$ and we apply $\var[X_1+X_2]\leq\var[X_1]+\var[X_2]$, Cauchy-Schwarz inequality, \pref{lem:sum N}, \pref{lem:sum eps}, and \pref{lem:sum b}.
    Then applying \pref{lem:sum var} with $\norm{\optV_k-V_k}_{\infty}\lesssim L$ and solving a quadratic inequality w.r.t.~$\sumkp\sumi\fV(P^k_i, \optV_k-V_k)$, we have with probability at least $1-\delta$,
    \begin{align*}
        &\sumkp\sumi\fV(P^k_i, \optV_k - V_k)\\
        &\lesssim \sumkp (\optV_k(s^k_{I_k+1})-V_k(s^k_{I_k+1}))^2 + L\sqrt{\aS_{L(1+\epsilon)}A\sumkp\sumi \fV(P^k_i, V_k)\iota'} + L^2{\aS_{L(1+\epsilon)}}^2A\iota'. \tag{$\iota'=\ln^2\frac{L\aS_{L(1+\epsilon)}AC_{K'}}{\delta}$}
    \end{align*}
    The proof is completed by noting that $\optV_k(g)=V_k(g)=0$ and $\sumkp\Ind\{s^k_{I_k+1}\neq g\}\lesssim \aS_{L(1+\epsilon)}A$.
\end{proof}

\begin{lemma}
    \label{lem:sum var}
    Let $K\in\mathbb{N}$ and $\{V_k\}_{k\in[K]}$ be a sequence of value functions with $V_k\in[0, B]^{\calS}$ for $B>0$. With probability at least $1-\delta$, for any $K'\in [K]$, $$\sumkp\sumi  \fV(P^k_i,V_k)\lesssim \sumkp V_k(s^k_{I_k+1})^2 + \sumkp\sumi \rbr{V_k(s^k_i)^2 - (P^k_iV_k)^2}  + B^2\iota,$$
    where $\iota = \ln(BC_{K'}/\delta)$.
\end{lemma}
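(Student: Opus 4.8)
We need to bound $\sum_{k,i}\fV(P^k_i,V_k)$ (the total conditional variance of the value function along trajectories) by its empirical ``telescoping'' counterpart $\sum_{k,i}(V_k(s^k_i)^2-(P^k_iV_k)^2)$, plus a boundary term $\sum_k V_k(s^k_{I_k+1})^2$ and a lower-order additive term $B^2\iota$, uniformly over all prefixes $K'\in[K]$.

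The plan is to start from the definition of conditional variance, $\fV(P^k_i,V_k)=P^k_i(V_k^2)-(P^k_iV_k)^2$, where $P^k_i(V_k^2)$ denotes the expectation of $V_k^2$ under the next-state distribution $P_{s^k_i,a^k_i}$. The key idea is that $P^k_i(V_k^2)$ is the one-step conditional expectation of $V_k(s^k_{i+1})^2$ given $\calF$ up to step $i$ of episode $k$, so $V_k(s^k_{i+1})^2 - P^k_i(V_k^2)$ is a martingale difference sequence, bounded in $[-B^2,B^2]$ since $V_k\in[0,B]^\calS$. Therefore I would rewrite
\begin{align*}
\sum_{k\le K'}\sum_i \fV(P^k_i,V_k)
&=\sum_{k\le K'}\sum_i\Big(P^k_i(V_k^2)-(P^k_iV_k)^2\Big)\\
&=\sum_{k\le K'}\sum_i\Big(V_k(s^k_{i+1})^2-(P^k_iV_k)^2\Big)
+\sum_{k\le K'}\sum_i\Big(P^k_i(V_k^2)-V_k(s^k_{i+1})^2\Big).
\end{align*}
The first sum is an empirical quantity; the second is the negated sum of the martingale differences and is controlled by concentration.

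Next I would handle the empirical sum by a telescoping/re-indexing trick. Within each episode $k$, $\sum_{i=1}^{I_k}V_k(s^k_{i+1})^2=\sum_{i=2}^{I_k+1}V_k(s^k_i)^2=\sum_{i=1}^{I_k}V_k(s^k_i)^2 - V_k(s^k_1)^2 + V_k(s^k_{I_k+1})^2$, so that
\[
\sum_i\Big(V_k(s^k_{i+1})^2-(P^k_iV_k)^2\Big)
=\sum_i\Big(V_k(s^k_i)^2-(P^k_iV_k)^2\Big)-V_k(s^k_1)^2+V_k(s^k_{I_k+1})^2.
\]
Dropping the nonpositive term $-V_k(s^k_1)^2$ gives exactly the two empirical terms in the claimed bound. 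For the martingale remainder, I would invoke an anytime Freedman- or Azuma-type inequality (the paper's \pref{lem:anytime freedman}/\pref{lem:azuma}-style bounds) on the differences $V_k(s^k_{i+1})^2-P^k_i(V_k^2)$, which are bounded by $B^2$ with predictable variance bounded by $B^2\sum_{k,i}\fV(P^k_i,V_k)$; this yields a term of the form $\sqrt{B^2\iota\cdot\sum_{k,i}\fV(P^k_i,V_k)}+B^2\iota$, uniformly over $K'$ (the anytime property handles the union over prefixes, and the $\log C_{K'}$ in $\iota$ absorbs the dependence on the random horizon).

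The main obstacle is the self-bounding structure: the concentration term contains $\sqrt{B^2\iota\,\sum_{k,i}\fV(P^k_i,V_k)}$, i.e.\ the very quantity being bounded appears on the right-hand side. I would resolve this by setting $X:=\sum_{k\le K'}\sum_i\fV(P^k_i,V_k)$ and writing the resulting inequality as $X\lesssim E + \sqrt{B^2\iota\,X}+B^2\iota$, where $E$ is the empirical (telescoped) term, then applying the elementary fact that $X\le a\sqrt X+b$ implies $X\lesssim a^2+b$ (equivalently absorbing $\sqrt{B^2\iota X}$ into $\tfrac12 X$ via AM-GM, $\sqrt{B^2\iota X}\le \tfrac12 X+\tfrac12 B^2\iota$). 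This collapses the quadratic-in-$\sqrt X$ inequality to the stated linear bound, leaving $X\lesssim E+\sum_k V_k(s^k_{I_k+1})^2+B^2\iota$, which is the claim. The only care needed is to confirm the conditional-variance bound on the martingale predictable quadratic variation is itself $\lesssim B^2 X$ so that the self-bounding is valid, and to track the logarithmic factor $\iota=\ln(BC_{K'}/\delta)$ through the anytime union bound over all $K'$.
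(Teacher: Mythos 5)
Your proposal is correct and follows essentially the same route as the paper: the identical three-way decomposition of $\fV(P^k_i,V_k)$ into a martingale part $P^k_i(V_k^2)-V_k(s^k_{i+1})^2$, a telescoping part yielding the boundary term $V_k(s^k_{I_k+1})^2$, and the empirical term $V_k(s^k_i)^2-(P^k_iV_k)^2$, followed by anytime Freedman concentration with the variance-of-square bound $\fV(P^k_i,V_k^2)\lesssim B^2\fV(P^k_i,V_k)$ and resolution of the resulting self-bounding quadratic inequality. The only cosmetic difference is that you group the martingale correction around $V_k(s^k_{i+1})^2$ before telescoping rather than splitting into three sums at the outset, which is algebraically the same argument.
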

\begin{proof}
    We decompose the sum as follows:
    \begin{align*}
        \sumkp\sumi \fV(P^k_i, V_k) = \sumkp\sumi  \rbr{P^k_i(V_k)^2 - V_k(s^k_{i+1})^2}
         + \sumkp\sumi  \rbr{V_k(s^k_{i+1})^2 - V_k(s^k_i)^2} + \sumkp\sumi  \rbr{V_k(s^k_i)^2 - (P^k_iV_k)^2}.
    \end{align*}
    For the first term, by \pref{lem:anytime freedman}, \pref{lem:quad}, and $I_k<\infty$ for any $k\in[K]$ by the skip-round condition, with probability at least $1-\delta$, for all $K'\in[K]$,
    \begin{align*}
        \sumkp\sumi \rbr{P^k_i(V_k)^2 - V_k(s^k_{i+1})^2} &\lesssim \sqrt{\sumkp\sumi \fV(P^k_i, (V_k)^2)\iota} + B^2\iota\\
        &\lesssim B\sqrt{\sumkp\sumi \fV(P^k_i, V_k)\iota} + B^2\iota,
    \end{align*}
    where $\iota = \bigo{\ln(BC_{K'}/\delta)}$.
    The second term is clearly upper bounded by $\sumkp  V_k(s^k_{I_k+1})^2$.
    Putting everything together and solving a quadratic inequality w.r.t.~$\sumkp\sumi \fV(P^k_i,V_k)$ completes the proof. 
\end{proof}


\begin{lemma}
    \label{lem:sum b}
	Let $\calG$ be the goal set such that $\acalS_{L(1+\epsilon)}\subseteq\calG\subseteq\calS$.
    In any trial, with probability at least $1-\delta$, for any $K'\in[K]$, if $\calK_k\subseteq\acalS_{L(1+\epsilon)}$ and $g_k\in\calG\setminus\calK_k$ for any $k\in[K']$, then
    $\sumkp\sumi b^k_i\lesssim \sqrt{\aS_{L(1+\epsilon)} A\sumkp\sumi\fV(P^k_i, V_k)\iota} + L{\aS_{L(1+\epsilon)}}^{1.5}A\iota$, where $\iota=\ln(|\calG|AC_{K'}/\delta)$.
\end{lemma}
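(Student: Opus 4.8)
The plan is to bound the accumulated bonus by splitting the $\max$ in $b^k_i$ into its two arguments and then transferring the \emph{empirical} variance $\fV(\P^k_i, V_k^{(l-1)})$ that appears in the first argument to the \emph{true} variance $\fV(P^k_i, V_k)$ appearing in the target. Throughout I restrict to steps with $s^k_i \in \calK_k$, since $b^k_i = 0$ otherwise. Using $\max\{a,b\} \le a+b$ and the definition of the bonus in \pref{alg:VISGO},
\[
b^k_i \lesssim \sqrt{\frac{\fV(\P^k_i, V_k^{(l-1)})\,\iota^k_i}{\N^k_i}} + \frac{L\,\iota^k_i}{\N^k_i},
\]
where $\iota^k_i = \ln(2|\calK_k| A \N(s^k_i,a^k_i)/\delta) \lesssim \iota$, since $|\calK_k| \le \aS_{L(1+\epsilon)} \le |\calG|$ and $\N(s^k_i,a^k_i) \le C_{K'}$.

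The first key step is the variance transfer. I first replace $V_k^{(l-1)}$ by $V_k = V_k^{(l)}$: since $\|V_k^{(l)} - V_k^{(l-1)}\|_\infty \le \epsilon_{\VI}$ by the termination condition of \VISGO and the standard deviation is a seminorm, $\fV(\P^k_i, V_k^{(l-1)}) \le 2\fV(\P^k_i, V_k) + 2\epsilon_{\VI}^2$. I then relate the empirical to the true variance by writing $\fV(\P,V)-\fV(P,V) = (\P-P)V^2 - (\P V - PV)(\P V + PV)$ and controlling both pieces uniformly over restricted value functions, all counts, and all state-action pairs via \pref{lem:dPV} applied to $V_k$ and to $V_k^2$ (which is also restricted on $\calK_k$ and bounded by $4L^2$), together with $\fV(P,V_k^2) \le 4L^2\fV(P,V_k)$ and AM-GM. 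On the good event of \pref{lem:dPV} (probability $1-\delta$) this yields
\[
\fV(\P^k_i, V_k) \lesssim \fV(P^k_i, V_k) + \frac{L^2 \aS_{L(1+\epsilon)}\,\iota}{\N^k_i},
\]
the $\aS_{L(1+\epsilon)}$ factor being inherited from the linear error term of \pref{lem:dPV}. Substituting and using $\sqrt{x+y}\le\sqrt{x}+\sqrt{y}$ gives, per step,
\[
b^k_i \lesssim \sqrt{\frac{\fV(P^k_i, V_k)\,\iota^k_i}{\N^k_i}} + \frac{L\sqrt{\aS_{L(1+\epsilon)}}\,\iota}{\N^k_i} + \frac{L\,\iota^k_i}{\N^k_i} + \sqrt{\frac{\epsilon_{\VI}^2\,\iota^k_i}{\N^k_i}}.
\]

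It remains to sum. For the leading term, Cauchy-Schwarz together with \pref{lem:sum N} (which gives $\sumkp\sumi \tfrac{1}{\N^k_i} \lesssim \aS_{L(1+\epsilon)} A \log(C_{K'})$ over steps in $\calK$) yields $\sumkp\sumi \sqrt{\fV(P^k_i, V_k)\iota^k_i/\N^k_i} \lesssim \sqrt{\aS_{L(1+\epsilon)} A\,\iota \sumkp\sumi \fV(P^k_i, V_k)}$, which is exactly the first target term. For the second term, \pref{lem:sum N} gives $\sumkp\sumi L\sqrt{\aS_{L(1+\epsilon)}}\iota/\N^k_i \lesssim L{\aS_{L(1+\epsilon)}}^{1.5} A\,\iota$, matching the second target term; the third term sums to $L\aS_{L(1+\epsilon)} A\iota$ and is absorbed, and the $\epsilon_{\VI}$ term is lower order (after Cauchy-Schwarz it is dominated by the leading term up to a $\sqrt{L}$ factor, using $\epsilon_{\VI}\le 1$). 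A single union bound over the event of \pref{lem:dPV} delivers the claimed $1-\delta$ probability.

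I expect the variance-transfer step to be the main obstacle: the inequality $\fV(\P^k_i,V_k) \lesssim \fV(P^k_i,V_k) + L^2\aS_{L(1+\epsilon)}\iota/\N^k_i$ must hold uniformly over the \emph{random} restricted value functions $V_k$ and the random set $\calK_k$, which is precisely why one must route through \pref{lem:dPV} rather than a naive per-$(s,a)$ empirical-Bernstein bound. The $\aS_{L(1+\epsilon)}^{1.5}$ dependence in the statement is entirely an artifact of carrying the $\aS_{L(1+\epsilon)}$ error of \pref{lem:dPV} under the square root and then summing the inverse counts via \pref{lem:sum N}.
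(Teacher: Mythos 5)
Your proof is correct and takes essentially the same route as the paper: split the max in the bonus, transfer the empirical variance $\fV(\P^k_i,V_k)$ to the true variance $\fV(P^k_i,V_k)$ through \pref{lem:dPV} together with \pref{lem:quad} and AM-GM (the paper packages exactly this step as \pref{lem:barPV to PV}, using the identity $\fV(\P_{s,a},V)\le \P_{s,a}(V-P_{s,a}V)^2$ rather than your direct expansion of $\fV(\P,V)-\fV(P,V)$, and retains a $\Gamma_{L(1+\epsilon)}$ rather than $\aS_{L(1+\epsilon)}$ factor in the lower-order term, which is immaterial for the stated bound), and finally sum via Cauchy-Schwarz and \pref{lem:sum N}. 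Your explicit handling of the $V_k^{(l-1)}$-versus-$V_k$ discrepancy through the \VISGO{} stopping condition is a detail the paper's proof silently elides, but it only tightens, not changes, the argument.
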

\begin{proof}
    Note that with probability at least $1-\delta$,
    \begin{align*}
        \sumkp\sumi b^k_i &\lesssim \sumkp\sumi\rbr{\sqrt{\frac{\fV(\P^k_i, V_k)\iota}{\N^k_i}} + \frac{L\iota}{\N^k_i}} \tag{definition of $b^k_i$ and $\max\{a,b\}\leq a + b$}\\
        &\lesssim \sumkp\sumi\rbr{ \sqrt{\frac{\fV(P^k_i, V_k)\iota}{\N^k_i}} + \frac{L\sqrt{\aS_{L(1+\epsilon)}}\iota}{\N^k_i} } \tag{\pref{lem:barPV to PV}}\\
        &\lesssim \sqrt{\aS_{L(1+\epsilon)}A\sumkp\sumi\fV(P^k_i, V_k)\iota} + L{\aS_{L(1+\epsilon)}}^{1.5}A\iota. \tag{Cauchy-Schwarz inequality and \pref{lem:sum N}}
    \end{align*}
    This completes the proof.
\end{proof}

\begin{lemma}
    \label{lem:sum N}
    In any trial, for any $K'\in[K]$, if $\calK_k \subseteq \acalS_{L(1+\epsilon)}$ for any $k\in[K']$, we have $\sumkp\sumi \frac{1}{\N^k_i}\lesssim \aS_{L(1+\epsilon)}A\log_2(C_{K'})$.
\end{lemma}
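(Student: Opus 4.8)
The plan is to prove this bound by a deterministic potential/pigeonhole argument, exploiting the skip-round mechanism of the algorithm to relate the snapshot counter $\N^k_i$ to the running visit count. First I would note that the claim is purely combinatorial (no high-probability event is invoked), and that only steps with $s^k_i\in\calK_k$ contribute to the sum, since $\N^k_i=\infty$ (hence $1/\N^k_i=0$) whenever $s^k_i\notin\calK_k$. Because $\calK_k\subseteq\acalS_{L(1+\epsilon)}$ for all $k\in[K']$ by assumption, every contributing pair $(s^k_i,a^k_i)$ lies in $\acalS_{L(1+\epsilon)}\times\calA$, a set of size at most $\aS_{L(1+\epsilon)}A$. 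I would therefore split the sum over these pairs and bound the per-pair contribution.

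Fix such a pair $(s,a)$ and enumerate chronologically all steps $(k,i)$ within the trial at which $(s^k_i,a^k_i)=(s,a)$ and $s^k_i\in\calK_k$. Recall that $\N^k_i=n^+(s,a)$ is the value of the counter at the \emph{beginning} of the round containing episode $k$ (the moment $V_k$ is computed by \VISGO), and that this snapshot is held fixed throughout the round. The key step is the observation that, inside any round in which $s\in\calK$, the skip condition ``$s^k_i\in\calK$ and $\N(s^k_i,a^k_i)\in\frakN$'' (with $\frakN=\{2^j\}_{j\ge 0}$) forces the round to terminate as soon as the running counter of $(s,a)$ reaches the next power of two. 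Consequently the running count of $(s,a)$ never exceeds twice its start-of-round value $\N^k_i$, so if the step under consideration is the $t$-th global visit to $(s,a)$ (counting all visits, including those made while $s\notin\calK$), the running count $t$ satisfies $t\le 2\N^k_i$, i.e. $\N^k_i\ge t/2$ (the $n^+$ offset makes the base case $t=1$ hold as well). Hence $1/\N^k_i\le 2/t$ along this enumeration.

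Summing then gives $\sum 1/\N^k_i\le\sum_{t=1}^{N_{K'}(s,a)}2/t\lesssim \log_2 N_{K'}(s,a)$, where $N_{K'}(s,a)$ denotes the total number of visits to $(s,a)$ up to episode $K'$. Using $N_{K'}(s,a)\le C_{K'}$ and summing over the at most $\aS_{L(1+\epsilon)}A$ relevant pairs yields $\sumkp\sumi 1/\N^k_i\lesssim \aS_{L(1+\epsilon)}A\log_2 C_{K'}$, which is the desired bound.

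The main obstacle is purely bookkeeping: making the inequality $\N^k_i\ge t/2$ fully rigorous. One must be careful that $\N^k_i$ is a start-of-round snapshot rather than the live counter, that visits accrued while $s\notin\calK$ (which still increment $\N$ but contribute nothing to the sum) do not break the per-round doubling bound, and that the total-count skip trigger $\sum_{s,a}\N(s,a)\in\frakN$ can only terminate rounds earlier and therefore only helps. Once the per-round doubling property is established, the remainder is the standard $\sum_t 1/t=O(\log)$ estimate.
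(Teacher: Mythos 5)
Your proof is correct and takes essentially the same route as the paper's: restrict attention to pairs in $\acalS_{L(1+\epsilon)}\times\calA$ (all other steps contribute $0$ since $\N^k_i=\infty$), use the skip-round trigger $\N(s,a)\in\frakN$ for $s\in\calK$ to argue that the running count of a contributing pair never exceeds twice the start-of-round snapshot $\N^k_i$, and conclude with a logarithmic sum per pair. The only cosmetic difference is that the paper organizes the final count as dyadic bucketing of snapshot values (at most $2^h$ contributing steps per bucket, each worth at most $2^{-h}$, over at most $\log_2 C_{K'}$ buckets) where you use the harmonic sum $\sum_t 2/t$; your write-up is in fact more explicit about the doubling mechanism than the paper's one-line argument.
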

\begin{proof} 
    Note that, for any $i,k$, if $s_i^k \notin \acalS_{L(1+\epsilon)}$ we must have $s_i^k \notin \calK_k$, which implies that the corresponding count $N_i^k$ is $\infty$. Then,
    \begin{align*}
        \sumk\sumi\frac{1}{\N^k_i} 
        &\leq \sum_{s \in \acalS_{L(1+\epsilon)},a\in\calA} ~\sum_{0\leq h \leq \log_2(C_K)} \sumk\sumi \Ind\big[(s_i^k,a_i^k) = (s,a), \N_i^k(s,a)= 2^h\big] \frac{1}{2^h}\\
        &\leq |\acalS_{L(1+\epsilon)}| A \log_2(C_k).
    \end{align*}
\end{proof}


\begin{lemma}
    \label{lem:sum eps}
    In any trial, for any $K'\in[K]$, $\sumkp\sumi\epsilon_k=\bigo{\ln C_{K'}}$.
\end{lemma}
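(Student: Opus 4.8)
The plan is to bound $\sum_{k=1}^{K'}\sum_{i=1}^{I_k}\epsilon_k$ by analyzing the value of $\epsilon_k = \epsilon_{\VI}$ used in each episode. Recall that in all three algorithms, $\epsilon_{\VI}$ is set at the start of each round as $\epsilon_{\VI} \leftarrow 1/\max\{16, \sum_{s,a}\N(s,a)\}$, where $\sum_{s,a}\N(s,a)$ is the total number of samples collected so far. So the key observation is that $\epsilon_k$ is essentially the reciprocal of the cumulative sample count at the beginning of the round containing episode $k$, and as more samples accrue, $\epsilon_k$ shrinks. First, I would note that within a single episode, $I_k$ steps are taken, each incrementing some counter $\N(s,a)$ by one, so the total sample count $\sum_{s,a}\N(s,a)$ at the start of episode $k$ is exactly $\sum_{k' < k} I_{k'} = C_{k-1}$ up to at most a constant/bookkeeping offset coming from the expansion-phase samples (which only increase the count, hence only decrease $\epsilon_k$).

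Concretely, I would argue that $\epsilon_k \leq 1/\max\{16, m_k\}$ where $m_k$ denotes the total number of steps taken strictly before episode $k$, and since each step of each prior episode contributes at least one to $\sum_{s,a}\N(s,a)$, we have $m_k \geq \sum_{k'<k} I_{k'}$. A slight subtlety is that $\epsilon_{\VI}$ is frozen at the \emph{beginning} of the round, not updated every step, but because the count is monotonically nondecreasing, freezing the value at the round start only makes $\epsilon_k$ at most the reciprocal of the count at the round start; this is still bounded above by the reciprocal of the count at the start of the episode, up to the number of steps within the current round, which is absorbed. Thus $\sum_{k=1}^{K'}\sum_{i=1}^{I_k}\epsilon_k \leq \sum_{k=1}^{K'} I_k \cdot \epsilon_k$, and I would bound each term $I_k \epsilon_k$ carefully.

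The cleanest route is a telescoping/harmonic-sum argument. Writing the cumulative step count after episode $k$ as $C_k = \sum_{k'\leq k} I_{k'}$, each of the $I_k$ steps in episode $k$ contributes $\epsilon_k \lesssim 1/C_{k-1}$ (with the $\max\{16,\cdot\}$ handling the initial transient), so $\sum_i \epsilon_k = I_k \epsilon_k \lesssim I_k / \max\{16, C_{k-1}\}$. Summing over $k$, $\sum_{k=1}^{K'} I_k/\max\{16,C_{k-1}\}$ is a standard sum that telescopes against the integral $\int dC/C$, giving $\bigo{\ln C_{K'}}$. Formally I would split off the first few episodes until $C_{k-1} \geq 16$ (contributing a constant), and for the remainder use that $\sum_{k} I_k / C_{k-1} \leq \sum_{k} (C_k - C_{k-1})/C_{k-1} \lesssim \ln(C_{K'}/C_{k_0}) + O(1) = \bigo{\ln C_{K'}}$, where the step $\sum (C_k - C_{k-1})/C_{k-1} \lesssim \log(C_{K'})$ follows from the fact that $C_k$ increases by $I_k$ and each unit increment $j$ from $C_{k-1}+1$ to $C_k$ satisfies $1/C_{k-1} \leq 1/(j-1)$, reducing the whole thing to a partial harmonic sum $\sum_{j=1}^{C_{K'}} 1/j = \bigo{\ln C_{K'}}$.

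The main obstacle, though minor, is handling the precise relationship between the frozen per-round value $\epsilon_{\VI}$ and the within-episode/within-round step counts, and cleanly accounting for the extra samples collected during the expansion phases (which inflate $\sum_{s,a}\N(s,a)$ without being counted in the $I_k$'s). Since those extra samples only \emph{increase} the denominator, they can only make $\epsilon_k$ smaller, so they help rather than hurt; I would simply lower-bound the effective count by $C_{k-1}$ and discard the expansion samples from the lower bound. With that simplification, the argument reduces entirely to the harmonic-sum bound and the claim $\sum_{k=1}^{K'}\sum_{i=1}^{I_k}\epsilon_k = \bigo{\ln C_{K'}}$ follows.
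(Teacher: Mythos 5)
Your overall plan (reduce $\sumkp\sumi\epsilon_k$ to a harmonic sum over the cumulative step count) is the right one, but the execution has a genuine gap at its central step, and the ingredient needed to close it is never invoked. The inequality you use to reduce $\sum_{k} I_k/C_{k-1}$ to a harmonic sum is backwards: for a unit increment $j \in \{C_{k-1}+1,\dots,C_k\}$ one has $1/(j-1) \leq 1/C_{k-1}$, not the reverse, so $\sum_{j=C_{k-1}+1}^{C_k} 1/(j-1)$ is a \emph{lower} bound on $(C_k-C_{k-1})/C_{k-1}$, not an upper bound. Concretely, if an episode starts when the counter reads $16$ and then runs for $I_k=10^6$ steps, your accounting charges $I_k\epsilon_k \approx 6\cdot 10^4$, while $\ln C_{k} \approx 14$; monotonicity of the counter alone cannot rule this out, because $\epsilon_k$ is frozen at the value determined by the count at the start of the round, and nothing you cite prevents an episode or round from being vastly longer than that count. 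Your remark that the steps within the current round are ``absorbed'' is precisely the claim that needs proof.

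The missing ingredient is the skip-round condition (\pref{line:skip.easy} in \pref{alg:LOGSSD}, and its analogues \pref{line:skip.improved} and \pref{line:skip PC}): a round is aborted as soon as $\sum_{s,a}\N(s,a)$ hits a power of two in $\frakN$. Since every policy-evaluation step increments this counter by one, the counter can at most double within a round; hence if $N_0$ is the total count at the round's start and $n$ the total count at any step of that round, then $n \leq 2N_0$, so $\epsilon_k = 1/\max\{16,N_0\} \leq 2/\max\{16,n\}$. Because the total count at the $m$-th policy-evaluation step of the trial is at least $m$ (expansion-phase samples only increase it further), each step pays at most $2/\max\{16,m\}$, and summing over $m \leq C_{K'}$ gives $\bigo{\ln C_{K'}}$. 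So your architecture survives, but only after replacing ``monotonicity absorbs the within-round steps'' by an explicit appeal to the doubling check; as written, the proof does not go through. (For reference, the paper states this lemma without proof, so this doubling mechanism is exactly the step being silently relied upon.)
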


\begin{lemma}
    \label{lem:def Vk}
    In any trial, $1 + \P^k_iV_k - 2b^k_i - \epsilon_k\leq V_k(s^k_i)\leq 1 + \P^k_iV_k + \epsilon_k$ for any $k\in[K], i\in[I_k]$.
\end{lemma}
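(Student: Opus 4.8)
The plan is to unfold $V_k = V^{(l)}$, the output of VISGO (\pref{alg:VISGO}) with $l$ the final iteration index, directly from its Bellman update and its stopping rule, splitting into the two cases $s^k_i \in \calK_k$ and $s^k_i \notin \calK_k$ since the symbols $\P^k_i, \N^k_i, b^k_i$ are defined differently in each. The only inputs I need are (i) the update $Q^{(l)}(s,a) = \max\{0, 1 + \tilP_{s,a}V^{(l-1)} - b^{(l)}(s,a)\}$ with $V^{(l)}(s) = \min_a Q^{(l)}(s,a)$ for $s\in\calX$ and $V^{(l)}(s) = (1+V^{(l-1)}(s_0))\Ind\{s\neq g\}$ for $s\notin\calX$, and (ii) the termination guarantee $\|V^{(l)} - V^{(l-1)}\|_\infty \leq \epsilon_k$.

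First I would treat $s^k_i \in \calK_k = \calX$. Since the policy run in evaluation satisfies $a^k_i = \pi_{\gstar}(s^k_i) = \argmin_a Q^{(l)}(s^k_i,a)$, I get $V_k(s^k_i) = Q^{(l)}(s^k_i, a^k_i) = \max\{0, 1 + \tilP^k_i V^{(l-1)} - b^k_i\}$. The pivotal identity is that $V^{(l-1)}(g)=0$ (as $g\notin\calX$), so the definition of $\tilP$ collapses to $\tilP^k_i V^{(l-1)} = \frac{n}{n+1}\P^k_i V^{(l-1)}$ with $n = n(s^k_i,a^k_i)$. For the upper bound I discard $b^k_i\geq 0$ and the factor $\frac{n}{n+1}\leq 1$, then replace $V^{(l-1)}$ by $V_k + \epsilon_k$ via the stopping rule, giving $V_k(s^k_i)\leq 1 + \P^k_i V_k + \epsilon_k$ (the branch where the outer $\max$ is $0$ is immediate since the right-hand side exceeds $1$). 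For the lower bound I use $V_k(s^k_i)\geq 1 + \tilP^k_i V^{(l-1)} - b^k_i$, write $\frac{n}{n+1}\P^k_i V^{(l-1)} = \P^k_i V^{(l-1)} - \frac{1}{n+1}\P^k_i V^{(l-1)}$, and again pass to $V_k - \epsilon_k$.

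The one quantitative step is absorbing the residual $\frac{1}{n+1}\P^k_i V^{(l-1)}$ into $b^k_i$. Because VISGO returns a finite value function only when every iterate obeys $\|V^{(i)}\|_\infty\leq 2L$ (\pref{line:bound V}), I have $\P^k_i V^{(l-1)}\leq 2L$; combined with $n+1 \geq n^+ = \N^k_i$ and $b^k_i \geq \frac{c_2 L \iota_{s,a}}{\N^k_i}\geq \frac{2L}{\N^k_i}$ (using $c_2 = 512$), the residual is at most $b^k_i$, which yields $V_k(s^k_i)\geq 1 + \P^k_i V_k - 2b^k_i - \epsilon_k$.

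Finally, for $s^k_i\notin\calK_k$ the definitions force $\P^k_i = \Ind_{s_0}$ and $b^k_i=0$, so both bounds reduce to $1 + V_k(s_0) - \epsilon_k \leq V_k(s^k_i)\leq 1 + V_k(s_0)+\epsilon_k$. Here $V_k(s^k_i) = (1+V^{(l-1)}(s_0))\Ind\{s^k_i\neq g\}$, and since any $i\leq I_k$ corresponds to a step actually taken inside the while loop we have $s^k_i\neq\gstar=g$, so $V_k(s^k_i)=1+V^{(l-1)}(s_0)$; the stopping rule $|V^{(l-1)}(s_0)-V_k(s_0)|\leq\epsilon_k$ then closes both inequalities exactly. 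I expect no genuine obstacle here: the claim is essentially a one-line consequence of a single VISGO Bellman backup and its termination condition, so the only care required is keeping the two cases separate and verifying $V^{(l-1)}(g)=0$ to license the $\tilP$-to-$\P$ reduction.
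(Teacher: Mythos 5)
Your proof is correct and takes essentially the same route as the paper's: a case split on whether $s^k_i\in\calK_k$, a single VISGO backup combined with the stopping rule $\|V^{(l)}-V^{(l-1)}\|_\infty\leq\epsilon_k$, the reduction from $\tilP^k_i$ to $\P^k_i$ using that the value is zero at the goal, and absorption of the residual term into $b^k_i$ via $\P^k_iV/(\N^k_i+1)\leq 2L/\N^k_i\leq b^k_i$. The only differences are immaterial ordering (you convert $\tilP$ to $\P$ before passing from $V^{(l-1)}$ to $V_k$, the paper does the reverse) and that you spell out steps the paper leaves implicit, such as the $\max\{0,\cdot\}$ branch and the fact that $s^k_i\neq \gstar$ in the case $s^k_i\notin\calK_k$.
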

\begin{proof}
    When $s^k_i\notin\calK_k$, we have $b^k_i=\frac{1}{\N^k_i}=0$ and $\P^k_iV_k=V_k(s_0)$.
    Thus, the statement holds.
    When $s^k_i\in\calK_k$, by the definition of $V_k$ and the stopping rule of \pref{alg:VISGO}, we have
    \begin{align*}
        V_k(s^k_i) &\geq 1 + \tilP^k_iV_k - b^k_i - \epsilon_k \geq 1 + \P^k_iV_k - b^k_i - \epsilon_k - \frac{\P^k_iV_k}{\N^k_i} \tag{definition of $\tilP^k_i$}\\
        &\geq 1 + \P^k_iV_k - 2b^k_i - \epsilon_k,
    \end{align*}
    where the last step is by $\frac{\P^k_iV_k}{\N^k_i}\leq\frac{2L}{\N^k_i}\leq b^k_i$.
    Moreover, $V_k(s^k_i)\leq 1 + \tilP^k_iV_k + \epsilon_k \leq 1 + \P^k_iV_k + \epsilon_k$.
    This completes the proof.
\end{proof}
\clearpage

\section{Auxiliary Results}\label{app:auxiliary}

\begin{lemma}
    \label{lem:example 2L}
    For any $S \geq 1$, $A \geq 2$, $\frac{3}{2} \leq L \leq \frac{1}{2} + \frac{\ln(S/2)}{2\ln(A)}$, and $0 < \epsilon < \frac{L-1}{L}$, there exists an MDP with $S$ states and $A$ actions (including action $\reset$) such that $\aS_{L(1+\epsilon)}\Gamma_{L(1+\epsilon)}=1$ while $\aS_{2L}\geq A^{2(L-1)}$.
\end{lemma}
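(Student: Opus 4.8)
The plan is to exhibit an explicit MDP built as a shallow ``diamond $+$ tree'' gadget rooted at $s_0$, and then verify the two required properties by direct hitting-time computations. The guiding principle is that a state can be excluded from $\acalS_{L(1+\epsilon)}$ for two distinct reasons: either it is a direct successor of $s_0$ that is expensive to reach under a policy restricted to $\{s_0\}$, or it is not a direct successor of $s_0$ at all (hence unreachable by any policy in $\Pi(\{s_0\})$, which resets everywhere outside $s_0$). I would use the first mechanism to block the immediate children of $s_0$, and the second mechanism to automatically exclude everything deeper, so that $\calT_{L(1+\epsilon)}(\{s_0\}) = \{s_0\}$ and therefore $\acalS_{L(1+\epsilon)} = \{s_0\}$ by \pref{lem:SL.operator}. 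Since the only in-set transition at $s_0$ is the reset self-loop, $\Gamma_{L(1+\epsilon)} = 1$ follows immediately, giving $\aS_{L(1+\epsilon)}\Gamma_{L(1+\epsilon)} = 1$.

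First I would set up the blocking gadget. Let a single action at $s_0$ send the agent uniformly to one of $m$ intermediate states $c_1, \dots, c_m$, with every ``wrong'' landing eventually routed back to $s_0$. A one-line renewal computation then gives the expected hitting time of a \emph{specific} $c_i$ as $2m-1$. I would choose the integer $m$ so that $L(1+\epsilon) < 2m - 1 \le 2L$; the hypothesis $\epsilon < \frac{L-1}{L}$ is equivalent to $L(1+\epsilon) < 2L - 1$, which is exactly the statement that the value $2m-1$ (with $m$ near $L$) can be made to fall strictly above $L(1+\epsilon)$ while staying at most $2L$. This places each $c_i$ in $\acalS_{2L}\setminus\acalS_{L(1+\epsilon)}$: they are too expensive to be reached within $L(1+\epsilon)$ steps but cheap enough to enter the $2L$-controllable set.

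The crux is then to attach \emph{exponentially many} further states that remain within the $2L$ budget. Here the key observation is that although reaching one \emph{specified} child costs $2m-1 \approx 2L$, reaching \emph{some} child costs only one step (the fan-out is a probability distribution summing to one). I would therefore merge the children into a common continuation and grow a tree of depth $\Theta(L)$ below it using the available actions, arranging the transitions so that a node at graph-distance $d$ past the merge has expected hitting time $\approx 1 + d$ once all of its ancestors are controllable. Running the operator $\calT_{2L}$ layer by layer (as in \pref{lem:SL.operator}) then unlocks the children first and the entire subtree afterwards, so $\acalS_{2L}$ contains the whole gadget. Counting its nodes yields the bound $\aS_{2L} \ge A^{2(L-1)}$, and the hypothesis $L \le \frac{1}{2} + \frac{\ln(S/2)}{2\ln A}$ (equivalently $S \ge 2A^{2L-1}$) guarantees the gadget fits inside $S$ states.

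The main obstacle I anticipate is the simultaneous control of the two hitting times: the expensive ``entry'' cost $2m-1$ that blocks the children must be paid \emph{once and shared} rather than re-incurred for every deep state, which forces the merge-and-aggregate structure and a careful combination-lock-style recursion (failures reset to $s_0$, so each expected-time computation is a small fixed point). Getting the deepest retained nodes to have expected cost at most $2L$ while keeping the children strictly above $L(1+\epsilon)$ is tight --- it is precisely what the inequality $\epsilon < (L-1)/L$ buys --- and the final combinatorial bookkeeping that turns the gadget's node count into the clean exponential $A^{2(L-1)}$ (and checks that it is at most $S$) is where the calculation must be done most carefully.
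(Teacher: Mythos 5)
Your construction coincides with the paper's own: a uniform fan-out from $s_0$ to roughly $L$ intermediate states, blocked from $\acalS_{L(1+\epsilon)}$ because the renewal hitting time $2L-1$ of any \emph{specific} child under a policy restricted on $\{s_0\}$ exceeds $L(1+\epsilon)$ (exactly what $\epsilon < \tfrac{L-1}{L}$ provides), followed by a merge state feeding a full $A$-ary tree of depth $2(L-1)$ that $\calT_{2L}$ unlocks layer by layer, with the same state-budget check $S \geq 2A^{2L-1}$. The proposal is correct and takes essentially the same approach as the paper.
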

\begin{proof}
    Consider an MDP with the following structure.
    At $s_0$, taking any action transits to one of $\{s_1,\ldots,s_L\}$ with probability $\frac{1}{L}$.
    At any state in $\{s_1,\ldots,s_L\}$, taking any action transits to state $s^{\star}$.
    States reachable from $s^{\star}$ form a full $A$-ary tree with depth $2(L-1)$.
    The rest of the states are ignored (note that $S\geq 2A^{2L-1}\geq 1 + L + \sum_{i=0}^{2(L-1)}A^i$).
    It is not hard to see that it takes $2L-1$ steps to reach any $s_i$ for $i\in[L]$ by a policy restricted on $\{s_0\}$.
    Therefore, all unignored states are $2L$ incrementally controllable and thus $\aS_{2L} \geq A^{2(L-1)}$ states.
    On the other hand, by $L(1+\epsilon)<2L-1$, $\acalS_{L(1+\epsilon)}=\{s_0\}$ and $\Gamma_{L(1+\epsilon)}=1$ (note that the agent can reach $s_0$ from $s_0$ by taking $\reset$).
\end{proof}

\begin{remark}
    The construction in \pref{lem:example 2L} also have $\aS_{2L} = \lowo{S}$ while $\aS_{L(1+\epsilon)}\Gamma_{L(1+\epsilon)} = \bigo{1}$.
\end{remark}

\begin{lemma}
    \label{lem:init bound}
    For any $\calX\subseteq\calS$ and $g\in\calS$, we have $\norm{\optV_{\calX,g}}_{\infty} \leq 1 + \optV_{\calX,g}(s_0)$.
\end{lemma}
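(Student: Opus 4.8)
The plan is to exploit the $\reset$ action, which deterministically sends any state to $s_0$ at a cost of one step, together with the fact that $\optV_{\calX,g}$ is the value function of a (restricted) stochastic shortest path problem and therefore obeys the Bellman optimality equation. First I would dispose of the trivial case $\optV_{\calX,g}(s_0)=+\infty$, in which the claimed bound holds vacuously, and assume henceforth $\optV_{\calX,g}(s_0)<\infty$. I would also record that $\optV_{\calX,g}\ge 0$ everywhere and $\optV_{\calX,g}(g)=0$, so the inequality is immediate at $s=g$.

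The core of the argument is a case split on whether $s\in\calX$. For $s\notin\calX$ with $s\neq g$, every policy $\pi\in\Pi(\calX)$ is forced to play $\reset$ at $s$, which moves to $s_0$ in one step; hence $V^\pi_g(s)=1+V^\pi_g(s_0)$ for all such $\pi$, and minimizing over $\pi$ gives $\optV_{\calX,g}(s)=1+\optV_{\calX,g}(s_0)$ (in fact with equality). For $s\in\calX$ with $s\neq g$, I would invoke the Bellman optimality equation for the restricted SSP, $\optV_{\calX,g}(s)=\min_{a\in\calA}\{1+P_{s,a}\optV_{\calX,g}\}$, which holds because on $\calX$ all actions are admissible and stationary deterministic policies are optimal for SSP with unit costs and an almost-surely reachable goal. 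Choosing the particular action $a=\reset$, for which $P_{s,\reset}(s_0)=1$ and hence $P_{s,\reset}\optV_{\calX,g}=\optV_{\calX,g}(s_0)$, yields $\optV_{\calX,g}(s)\le 1+\optV_{\calX,g}(s_0)$. Combining the three cases proves the lemma.

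The step I expect to require the most care is justifying the one-step inequality $\optV_{\calX,g}(s)\le 1+P_{s,\reset}\optV_{\calX,g}$ for $s\in\calX$. The naive route, namely taking an $s_0$-optimal policy $\pi^\star$ and overwriting its action at $s$ by $\reset$, fails: after the reset the trajectory may revisit $s$ and reset again, possibly looping forever and inflating the value, so the modified stationary policy need not achieve $1+\optV_{\calX,g}(s_0)$. The clean resolution is to rely on standard SSP theory: under the reset assumption a proper policy always exists, so the optimal stationary value function is finite on the reachable set and satisfies Bellman's optimality equation, from which the $\reset$ action supplies the desired one-step upper bound. Equivalently, one may phrase the bound through the non-stationary strategy ``reset once, then follow the $s_0$-optimal stationary policy'', whose expected hitting time of $g$ is exactly $1+\optV_{\calX,g}(s_0)$, and then appeal to the optimality of stationary policies for SSP to transfer this bound back to $\optV_{\calX,g}(s)$.
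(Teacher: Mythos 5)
Your proof is correct and follows essentially the same route as the paper's: the same three-way case split ($s=g$, $s\notin\calX\cup\{g\}$ where restricted policies are forced to $\reset$, and $s\in\calX\setminus\{g\}$), with the last case handled by the Bellman optimality equation and the availability of the $\reset$ action. Your additional care about the infinite-value case and about why naively overwriting a stationary policy at $s$ fails is a valid elaboration of the step the paper compresses into ``by Bellman optimality and $\reset\in\calA$.''
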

\begin{proof}
    Clearly $\optV_{\calX,g}(g)=0\leq 1 + \optV_{\calX,g}(s_0)$ and $\optV_{\calX,g}(s)=1+\optV_{\calX,g}(s_0)$ for any $s\in\calS\setminus(\calX\cup\{g\})$.
    For any $s\in\calX\setminus\{g\}$, by Bellman optimality and $\reset\in\calA$ we have $\optV_{\calX,g}(s) \leq 1 + \optV_{\calX,g}(s_0)$.
\end{proof}

\begin{lemma}
    \label{lem:barPV to PV}
    Let $n$ be a counter incrementally collecting samples from transition function $P$, and define $\P^n_{s,a}(s'):=\frac{n(s, a, s')}{n^+(s, a)}$.
    Let $\calG$ be the goal set such that $\acalS_{L(1+\epsilon)}\subseteq\calG\subseteq\calS$.
    With probability at least $1-\delta$, for any status of $n$, $(s, a)\in\acalS_{L(1+\epsilon)}\times\calA$, $\calX\subseteq\acalS_{L(1+\epsilon)}$, $g\in\calG\setminus\calX$, and value function $V$ restricted on $\calX\cup\{g\}$ with $\norm{V}_{\infty}\leq B$ for some $B>0$, we have $\fV(\P^n_{s,a}, V) \lesssim \fV(P_{s,a}, V) + \frac{\Gamma_{L(1+\epsilon)}B^2\iota'_{s,a}}{n^+(s, a)}$, where $\iota'_{s,a}=\bigo{\ln \frac{|\calG|An^+(s,a)}{\delta}}$.
\end{lemma}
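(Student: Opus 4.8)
The plan is to reduce this statement about empirical versus true \emph{variances} to the paper's Bernstein-type concentration for the \emph{mean} of a restricted value function. The starting point is the one-sided identity coming from the variational form of the variance: since $\fV(\P^n_{s,a},V)=\min_c \P^n_{s,a}(V-c)^2\le \P^n_{s,a}(V-P_{s,a}V)^2$, writing $W:=(V-P_{s,a}V)^2$ and using $P_{s,a}W=\fV(P_{s,a},V)$ gives
\begin{equation}
\fV(\P^n_{s,a},V)\;\le\;\P^n_{s,a}W\;=\;\fV(P_{s,a},V)+(\P^n_{s,a}-P_{s,a})W .
\end{equation}
Hence it suffices to bound $(\P^n_{s,a}-P_{s,a})W$ by $\fV(P_{s,a},V)+\Gamma_{L(1+\epsilon)}B^2\iota'_{s,a}/n^+(s,a)$, after which the claim follows (the resulting factor $2$ in front of $\fV(P_{s,a},V)$ being absorbed into $\lesssim$).

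First I would record the structural facts that make $W$ amenable to the restricted-function machinery. Since $V$ is restricted on $\calX\cup\{g\}$ with $\|V\|_\infty\le B$, the function $W$ is also restricted on $\calX\cup\{g\}$ (its value on the complement is the constant $(v-P_{s,a}V)^2$), is bounded by $\|W\|_\infty\le 4B^2$, and — crucially for matching the variance term — obeys the \emph{variance-of-squares} bound $\fV(P_{s,a},W)\le 4B^2\,\fV(P_{s,a},V)$, which follows from $\mathrm{Var}(X^2)\le\E[X^4]\le(2B)^2\E[X^2]$ applied to $X=V-P_{s,a}V$ (note $|X|\le 2B$ and $\E[X^2]=\fV(P_{s,a},V)$). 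Moreover, because $\calX\subseteq\acalS_{L(1+\epsilon)}$ and $(s,a)\in\acalS_{L(1+\epsilon)}\times\calA$, the support of $P_{s,a}$ inside $\calX$ has at most $\Gamma^{s,a}_{L(1+\epsilon)}\le\Gamma_{L(1+\epsilon)}$ elements, so $W$ takes only $\Gamma_{L(1+\epsilon)}+2$ relevant values (those on the $\le\Gamma_{L(1+\epsilon)}$ reachable states of $\calX$, the value at $g$, and the constant value off $\calX\cup\{g\}$).

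The core estimate is then a variance-aware bound for the restricted, bounded function $W$, of the form supplied by the paper's restricted-value-function concentration (\pref{lem:dPV}) applied to $W$ with range $4B^2$:
\begin{equation}
(\P^n_{s,a}-P_{s,a})W\;\lesssim\;\sqrt{\frac{\Gamma_{L(1+\epsilon)}\,\fV(P_{s,a},W)\,\iota'_{s,a}}{n^+(s,a)}}+\frac{\Gamma_{L(1+\epsilon)}B^2\iota'_{s,a}}{n^+(s,a)} .
\end{equation}
Its good event is global over counters, over $(s,a)\in\acalS_{L(1+\epsilon)}\times\calA$, and over goals in $\calG$, which is precisely what produces the advertised $\iota'_{s,a}=O(\ln(|\calG|An^+(s,a)/\delta))$ without any union bound over the (exponentially many) choices of the restriction set $\calX$. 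Substituting $\fV(P_{s,a},W)\le 4B^2\fV(P_{s,a},V)$ and applying AM--GM, $2B\sqrt{\Gamma_{L(1+\epsilon)}\fV(P_{s,a},V)\iota'_{s,a}/n^+(s,a)}\le\fV(P_{s,a},V)+\Gamma_{L(1+\epsilon)}B^2\iota'_{s,a}/n^+(s,a)$, collapses the leading term, giving $(\P^n_{s,a}-P_{s,a})W\lesssim\fV(P_{s,a},V)+\Gamma_{L(1+\epsilon)}B^2\iota'_{s,a}/n^+(s,a)$; combining with the first display finishes the proof.

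I expect the main obstacle to be exactly the second display: obtaining the Bernstein (variance-aware) form for $(\P^n-P)W$ while the restriction set $\calX$ — and hence the lumped complement event $\{s'\notin\calX\cup\{g\}\}$ — is arbitrary. A purely elementary, state-wise application of \pref{lem:anytime bernstein} summed by Cauchy--Schwarz only yields a $B^2\sqrt{\Gamma_{L(1+\epsilon)}/n^+}$ error for the contribution of the complement mass, which is \emph{too large} to be absorbed into $\fV(P_{s,a},V)+\Gamma_{L(1+\epsilon)}B^2\iota'/n^+$. Securing the sharp $\sqrt{P_{s,a}(\mathrm{outside})/n^+}$ (rather than $\sqrt{\Gamma_{L(1+\epsilon)}/n^+}$) dependence for this aggregated event, uniformly over all $\calX\subseteq\acalS_{L(1+\epsilon)}$ and at only logarithmic cost, is the genuinely delicate ingredient — the same difficulty handled for a \emph{fixed} $\calX$ via the event on $y=\calS\setminus(\calX\cup\{g\})$ in \pref{lem:bounded error fresh}, and resolved in the random-$\calX$ regime by \pref{lem:dPV}. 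Given that tool, the remaining steps are routine.
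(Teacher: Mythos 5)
Your proof is correct and follows essentially the same route as the paper's: upper-bound the empirical variance by the empirical second moment about the true mean, write the gap as $(\P^n_{s,a}-P_{s,a})W$ for the restricted function $W=(V-P_{s,a}V)^2$, control it with \pref{lem:dPV} combined with the variance-of-squares bound (the paper cites \pref{lem:quad}; you derive it directly via $\E[X^4]\le 4B^2\E[X^2]$), and finish with AM--GM. No gaps.
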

\begin{proof}
    Note that
    \begin{align*}
        \fV(\P_{s,a}, V) &\leq \P_{s,a}(V - P_{s,a}V)^2 \tag{$\frac{\sum_ip_ix_i}{\sum_ip_i}=\argmin_z\sum_ip_i(x_i-z)^2$}\\
        &= \fV(P_{s,a}, V) + (\P_{s,a}-P_{s,a})(V - P_{s,a}V)^2\\
        &\lesssim \fV(P_{s,a}, V) +  B\sqrt{\frac{\Gamma_{L(1+\epsilon)}\fV(P_{s,a}, V)\iota'_{s,a}}{n^+(s, a)}} + \frac{\Gamma_{L(1+\epsilon)}B^2\iota'_{s,a}}{n^+(s, a)} \tag{\pref{lem:dPV} and \pref{lem:quad}}\\
        &\lesssim \fV(P_{s,a}, V) + \frac{\Gamma_{L(1+\epsilon)}B^2\iota'_{s,a}}{n^+(s, a)}. \tag{AM-GM inequality}
    \end{align*}
    This completes the proof.
\end{proof}

\begin{lemma}
    \label{lem:dPV}
    Let $n$ be a counter incrementally collecting samples from transition function $P$, and define $\P^n_{s,a}(s'):=\frac{n(s, a, s')}{n^+(s, a)}$.
    Let $\calG$ be the goal set such that $\acalS_{L(1+\epsilon)}\subseteq\calG\subseteq\calS$.\footnote{In most cases, we apply this lemma with $\calG\in\{\acalS_{L(1+\epsilon)}, \calS\}$.}
    With probability at least $1-\delta$, for any status of $n$, $(s, a)\in\acalS_{L(1+\epsilon)}\times\calA$, $\calX\subseteq\acalS_{L(1+\epsilon)}$, $g\in\calG\setminus\calX$, and value function $V$ restricted on $\calX\cup\{g\}$ with $\norm{V}_{\infty}\leq B$ for some $B>0$, we have 
    $$|(P_{s, a}-\P_{s, a}^n)V| \lesssim \sqrt{\frac{\min\{|\calX|,\Gamma^{s, a}_{L(1+\epsilon)}\}\fV(P_{s, a}, V)\iota_{s,a}'}{n^+(s, a)}} + \frac{B\min\{|\calX|, \Gamma^{s, a}_{L(1+\epsilon)}\}\iota_{s,a}'}{n^+(s, a)},$$
    where $\iota_{s,a}' = \bigo{\ln \frac{\aS_{L(1+\epsilon)}A\Gamma^2_{L(1+\epsilon)}|\calG|n^+(s,a)}{\delta}}$.
\end{lemma}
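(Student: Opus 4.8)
The plan is to prove a Bernstein-type deviation bound for $(P_{s,a}-\P^n_{s,a})V$ that holds uniformly over all counter states $n$, all state-action pairs $(s,a)\in\acalS_{L(1+\epsilon)}\times\calA$, all subsets $\calX\subseteq\acalS_{L(1+\epsilon)}$, all goals $g\in\calG\setminus\calX$, and all value functions $V$ restricted on $\calX\cup\{g\}$ with $\|V\|_\infty\leq B$. The key structural observation, which drives the whole argument, is that a value function restricted on $\calX\cup\{g\}$ takes a \emph{single constant value} on all states outside $\calX\cup\{g\}$. Writing $y=\calS\setminus(\calX\cup\{g\})$ and letting $v$ denote that constant, one can decompose
\begin{align*}
(P_{s,a}-\P^n_{s,a})V
&= \sum_{s'\in\calX}(P_{s,a}(s')-\P^n_{s,a}(s'))V(s')
+ (P_{s,a}(g)-\P^n_{s,a}(g))V(g) \\
&\quad + v\,(P_{s,a}(y)-\P^n_{s,a}(y)).
\end{align*}
Since $P_{s,a}(\calX\cup\{g\}\cup y)=1$ for both $P$ and $\P^n$, the three "mass deficits" across $\calX$, $g$, and $y$ are linearly dependent, and one can re-center $V$ by subtracting the constant $v$: replacing $V$ by $V-v\mathbf{1}$ changes neither $(P-\P^n)V$ nor the variance $\fV(P_{s,a},V)$, and makes $V-v\mathbf{1}$ supported on $\calX\cup\{g\}$. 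This is the trick that lets the effective dimension be $\min\{|\calX|,\Gamma^{s,a}_{L(1+\epsilon)}\}$ rather than $S$.

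The main technical engine is the anytime empirical-Bernstein inequality (\pref{lem:anytime bernstein}), applied componentwise. First I would fix a single $(s,a)\in\acalS_{L(1+\epsilon)}\times\calA$ and fix a finite net of possible values $V$ can take: because $V$ is restricted on $\calX\cup\{g\}$ and we only need $|(P-\P^n)V|$, the relevant degrees of freedom are the coordinates of $V$ on $\calX$ (up to the re-centering above) together with $V(g)$. Applying \pref{lem:anytime bernstein} to each coordinate $s'\in\calX$ (for the event controlling $|P_{s,a}(s')-\P^n_{s,a}(s')|$) and to the aggregate mass on $y$, then union-bounding over the at most $\Gamma^{s,a}_{L(1+\epsilon)}=|\calN^{s,a}_{L(1+\epsilon)}|$ \emph{reachable} coordinates $s'$ with $P_{s,a}(s')>0$, one obtains per-coordinate bounds of the form
\[
|P_{s,a}(s')-\P^n_{s,a}(s')|\lesssim \sqrt{\tfrac{P_{s,a}(s')\iota'_{s,a}}{n^+(s,a)}}+\tfrac{\iota'_{s,a}}{n^+(s,a)}.
\]
Multiplying by $V(s')$ (re-centered), summing over the at most $\min\{|\calX|,\Gamma^{s,a}_{L(1+\epsilon)}\}$ active coordinates, and applying Cauchy–Schwarz on the $\sqrt{\,\cdot\,}$ terms yields $\sqrt{\min\{|\calX|,\Gamma^{s,a}_{L(1+\epsilon)}\}\sum_{s'}P_{s,a}(s')V(s')^2}$, which is bounded by $\sqrt{\min\{|\calX|,\Gamma^{s,a}_{L(1+\epsilon)}\}\,\fV(P_{s,a},V)}$ after re-centering (since $\sum_{s'}P_{s,a}(s')(V(s')-\bar V)^2=\fV(P_{s,a},V)$ with $\bar V=P_{s,a}V$, and one can absorb the shift). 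The lower-order term aggregates to $\tfrac{B\min\{|\calX|,\Gamma^{s,a}_{L(1+\epsilon)}\}\iota'_{s,a}}{n^+(s,a)}$.

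The delicate part — and the step I expect to be the main obstacle — is making the bound hold \emph{uniformly} over all $V$, all $\calX$, and all $n$ simultaneously, rather than for a single fixed $V$. The uniformity over $n$ is handled by the anytime nature of \pref{lem:anytime bernstein} (hence the $\ln n^+(s,a)$ inside $\iota'_{s,a}$). The uniformity over $V$ is the subtle issue: one cannot union-bound over a continuum of value functions directly. The resolution is that, after re-centering and the coordinatewise reduction, the bound depends on $V$ only through the \emph{per-coordinate} deviations $|P_{s,a}(s')-\P^n_{s,a}(s')|$ and the variance $\fV(P_{s,a},V)$, so it suffices to control the coordinate events and the single aggregate-mass-on-$y$ event, both of which are $V$-independent. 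Thus the union bound is taken only over the coordinates $s'\in\acalS_{L(1+\epsilon)}$, the pairs $(s,a)$, and the $\Gamma^2_{L(1+\epsilon)}$-many aggregate events needed across choices of $\calX$ and $g$ — this is precisely what produces the logarithmic factor $\iota'_{s,a}=\bigo{\ln\frac{\aS_{L(1+\epsilon)}A\Gamma^2_{L(1+\epsilon)}|\calG|n^+(s,a)}{\delta}}$. I would finish by taking this union bound, verifying that the resulting event is independent of the specific $\calX,g,V$ chosen at query time (so the lemma's "for any" quantifiers are all satisfied on one good event), and collecting constants.
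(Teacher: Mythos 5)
Your overall architecture matches the paper's proof: per-coordinate Bernstein bounds on the states of $\calX\cap\calN^{s,a}_{L(1+\epsilon)}$ and on $g$, an aggregate Bernstein bound on the mass of the remainder set $y=\calS\setminus(\calX'\cup\{g\})$, a re-centering that exploits the restricted structure of $V$ so that $\sum_{s'}P_{s,a}(s')(V'(s')-P_{s,a}V')^2=\fV(P_{s,a},V)$, and Cauchy--Schwarz to produce the $\sqrt{\min\{|\calX|,\Gamma^{s,a}_{L(1+\epsilon)}\}\,\fV(P_{s,a},V)}$ term. However, there is a genuine gap exactly at the step you flagged as the main obstacle: the union bound for the \emph{aggregate} event. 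You claim that only ``$\Gamma^2_{L(1+\epsilon)}$-many aggregate events'' are needed across choices of $\calX$ and $g$, reading that count off the $\Gamma^2_{L(1+\epsilon)}$ inside $\iota'_{s,a}$. This is wrong: the set $y$ whose mass deviation you must control depends on the pair $(\calX',g)$, and as $\calX'$ ranges over subsets of $\calN^{s,a}_{L(1+\epsilon)}$ and $g$ over $\calG$ there are up to $2^{\Gamma^{s,a}_{L(1+\epsilon)}}\cdot|\calG|$ distinct such sets. These events are not $V$-independent in the way the per-coordinate events are, and no fixed polynomial family of them covers all queries. The paper resolves this by stratifying over the size $i=|\calX'|$ and union-bounding over all $\binom{\Gamma^{s,a}_{L(1+\epsilon)}}{i}$ subsets of each size, paying $\ln\binom{\Gamma^{s,a}_{L(1+\epsilon)}}{i}\lesssim i\ln\Gamma_{L(1+\epsilon)}$ in the confidence parameter; this cost is then absorbed as the factor $i\leq\min\{|\calX|,\Gamma^{s,a}_{L(1+\epsilon)}\}$ multiplying the variance and lower-order terms, and is also the true origin of the $\Gamma^2_{L(1+\epsilon)}$ inside the logarithm (from $i\,[\iota+\ln\Gamma_{L(1+\epsilon)}]\leq i\ln(\cdots\Gamma^2_{L(1+\epsilon)}\cdots)$), not a count of events.

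You cannot sidestep this with the linear-dependence observation you mention (writing the $y$-deficit as minus the sum of the per-coordinate deficits): bounding the aggregate term that way forces you to multiply the per-coordinate deviations by $|\dV(y)|\leq B$, giving a contribution of order $B\sqrt{|\calX'|\iota'_{s,a}/n^+(s,a)}$, which is \emph{not} dominated by $\sqrt{|\calX'|\fV(P_{s,a},V)\iota'_{s,a}/n^+(s,a)}+B|\calX'|\iota'_{s,a}/n^+(s,a)$ when the variance is small and $n^+(s,a)$ is large. The efficient bound on the $y$ term requires pairing the aggregate Bernstein inequality for the specific set $y$ (which yields $\sqrt{P_{s,a}(y)\,\iota/n}$) with the inequality $|\dV(y)|\leq\sqrt{\fV(P_{s,a},V)/P_{s,a}(y)}$, so the exponential union bound over the possible sets $y$ is unavoidable in this argument. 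Your proof becomes correct once you replace the ``$\Gamma^2$-many events'' claim with the size-stratified union bound over subsets; note that this is also why the $\min\{|\calX|,\Gamma^{s,a}_{L(1+\epsilon)}\}$ factor appears inside the square root of the lemma's statement at all — a bound provable from polynomially many events, as in your sketch, would be strictly stronger than what the lemma asserts, which should itself have been a warning sign.
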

\begin{proof}
    By \pref{lem:anytime bernstein} and a union bound, for any $\delta'\in(0,1)$, with probability at least $1-\frac{\delta'}{\aS_{L(1+\epsilon)}A\Gamma_{L(1+\epsilon)}{\Gamma^{s, a}_{L(1+\epsilon)} \choose i}|\calG|}$, for each status of $n$, $(s, a)\in \acalS_{L(1+\epsilon)}\times\calA$, size $i\in[\Gamma^{s,a}_{L(1+\epsilon)}]$, subset $y'\subseteq \calN^{s,a}_{L(1+\epsilon)}$ with $|y'|=i$, and $g\in\calG\setminus y'$,
	\begin{align*}
		|P_{s, a}(y) - \P_{s,a}^n(y)| \leq 2\sqrt{2\frac{P_{s,a}(y)(1-P_{s,a}(y))\ln(2n^+(s,a)/\delta')}{n^+(s, a)}} + \frac{\ln(2n^+(s,a)/\delta')}{n^+(s, a)},
	\end{align*}
    where $y=\calS\setminus(y'\cup\{g\})$. Let $y'=\calX'\triangleq\calX\cap\calN^{s,a}_{L(1+\epsilon)}$ such that $y=\calS\setminus(\calX'\cup\{g\})$. By another application of \pref{lem:anytime bernstein} and a union bound, for any $\delta'\in(0,1)$, with probability at least $1-\frac{\delta'}{|\calG|}$, for all $s'\in\calX'\cup\{g\} \subseteq \calG$,
    \begin{align*}
		|P_{s, a}(s') - \P_{s,a}^n(s')| \leq 2\sqrt{2\frac{P_{s,a}(s')(1-P_{s,a}(s'))\ln(2n^+(s,a)/\delta')}{n^+(s, a)}} + \frac{\ln(2n^+(s,a)/\delta')}{n^+(s, a)}.
	\end{align*}
    Thus, setting $\delta' = \delta / 2\aS_{L(1+\epsilon)}A\Gamma_{L(1+\epsilon)}{\Gamma^{s, a}_{L(1+\epsilon)} \choose i}|\calG|$ and using ${n \choose i}\leq n^{\min\{i, n - i\}}$, the two inequalities above simplify as
    \begin{align}
		|P_{s, a}(y) - \P_{s,a}^n(y)| &\lesssim \sqrt{\frac{i\cdot P_{s,a}(y)(1-P_{s,a}(y))\iota'_{s,a}}{n^+(s, a)}} + \frac{i\iota'_{s,a}}{n^+(s, a)}, \label{eq:ineq1}
        \\ |P_{s, a}(s') - \P_{s,a}^n(s')| &\lesssim \sqrt{\frac{P_{s,a}(s')(1-P_{s,a}(s')) \iota'_{s,a}}{n^+(s, a)}} + \frac{\iota'_{s,a}}{n^+(s, a)}.\label{eq:ineq2}
	\end{align}
	These hold with probability at least $1-\delta$. Now define, for all $s'\in\calS$,
    \begin{align*}
        V'(s')=\begin{cases}
            V(s'),& s'\in\calX'\cup\{g\}\\
            V(\calS\setminus(\calX\cup\{g\})),& \text{otherwise}
        \end{cases}
    \end{align*}
    and $\dV(s')=V'(s')-P_{s,a}V'$ for all $s'$.
	Clearly, $V'$ and $\dV$ are restricted on $\calX'\cup\{g\}$.
    Moreover, $V(s')\neq V'(s')\implies s'\in\calX\setminus y'\implies s'\in\calX\setminus\calN^{s,a}_{L(1+\epsilon)}\implies P_{s,a}(s')=0$ by $\calX\subseteq \acalS_{L(1+\epsilon)}$.
    Thus, $P_{s,a}V=P_{s,a}V'$, and
	\begin{align*}
		&(P_{s, a} - \P_{s, a}^n)V = (P_{s, a} - \P_{s, a}^n)V' = (P_{s,a}-\P^n_{s,a})\dV\\
		&= \sum_{s'\in\calX'}(P_{s, a}(s') - \P_{s, a}^n(s'))\dV(s') + (P_{s, a}(g) - \P_{s, a}^n(g))\dV(g) + (P_{s, a}(y) - \P_{ s, a}^n(y))\dV(y)\\
		&\lesssim \sum_{s'\in\calX'\cup\{g\}}\sqrt{\frac{P_{s, a}(s')\iota'_{s,a} }{n^+(s, a)}}|\dV(s')| + \sqrt{\frac{|\calX'| P_{s, a}(y)\iota'_{s,a}}{n^+(s, a)}}|\dV(y)| + \frac{B|\calX'|\iota'_{s,a}}{n^+(s, a)} \tag{\pref{eq:ineq1} and \pref{eq:ineq2}}\\
		&\lesssim \sqrt{\frac{|\calX'|\fV(P_{s, a}, V)\iota'_{s,a}}{n^+(s, a)}} + \frac{B|\calX'|\iota'_{s,a}}{n^+(s, a)}.
	\end{align*}
    where in the last step we apply Cauchy-Schwarz inequality and
    \begin{align*}
        \sum_{s'}P_{s,a}(s')\dV(s')^2 &= \sum_{s'}P_{s,a}(s')(V'(s') - P_{s,a}V)^2 \tag{$P_{s,a}V=P_{s,a}V'$}\\
        &= \sum_{s'}P_{s,a}(s')(V(s') - P_{s,a}V)^2 \tag{$P_{s,a}(s')=0$ when $V'(s')\neq V(s')$}\\
        &= \fV(P_{s,a},V).
    \end{align*}
    This completes the proof.
\end{proof}

\begin{lemma}
	\label{lem:quad log}
	If $x\leq a\sqrt{x\ln^p(dx)} + b\ln^p(dx) + c$ for some $a, b, c \geq 0$, $d>0$ and some absolute constant $p\geq 1$, then $x=\bigo{(a^2+b)\ln^p((a+b+c)d) + c}$.
\end{lemma}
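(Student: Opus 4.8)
The plan is to prove the self-bounding inequality $x \le a\sqrt{x\ln^p(dx)} + b\ln^p(dx) + c$ implies $x = \bigO{(a^2+b)\ln^p((a+b+c)d)+c}$ by a bootstrapping argument. First I would establish a crude polynomial upper bound on $x$ purely to control the logarithmic factor $\ln^p(dx)$, and then substitute this crude bound back into the original inequality to obtain the sharp leading-order behavior. The key technical device throughout is the elementary fact that for any $\alpha > 0$ one has $\ln(y) \le \frac{y^\alpha}{\alpha}$, so logarithms grow slower than any positive power; choosing $\alpha$ small enough lets me absorb $\ln^p(dx)$ into a factor of $x^{1/4}$ (or any small power) at the cost of constants depending only on $p$.

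For the crude bound, I would argue that since $\ln^p(dx) \lesssim (dx)^{1/2}$ up to a $p$-dependent constant, the hypothesis gives $x \lesssim a\sqrt{x}\cdot (dx)^{1/4} + b(dx)^{1/2} + c$, which after collecting powers of $x$ yields an inequality of the form $x \le C_1 x^{3/4}(\text{poly in } a,b,d) + c$. Solving this (using that $x \le u x^{3/4} + v$ implies $x \lesssim u^4 + v$) produces a bound $x \le \mathrm{poly}(a,b,c,d)$, which crucially is polynomial so that $\ln(dx) \lesssim \ln((a+b+c)d)$ up to absolute constants and an extra $\log\log$ factor that gets absorbed. I would denote $\ell := \ln((a+b+c)d)$ and record that $\ln^p(dx) \lesssim \ell^p$ after this step.

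With $\ln^p(dx) \lesssim \ell^p$ in hand, the final step substitutes back: the original inequality becomes $x \lesssim a\sqrt{x \ell^p} + b\ell^p + c = a\ell^{p/2}\sqrt{x} + b\ell^p + c$. This is now a genuine quadratic inequality in $\sqrt{x}$; applying the standard fact that $x \le \beta\sqrt{x} + \gamma$ implies $x \le \beta^2 + 2\gamma$ (equivalently $x \lesssim \beta^2 + \gamma$) with $\beta = a\ell^{p/2}$ and $\gamma = b\ell^p + c$ gives
\[
x \lesssim a^2 \ell^p + b\ell^p + c = (a^2+b)\ln^p((a+b+c)d) + c,
\]
which is exactly the claimed bound.

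The main obstacle I anticipate is bookkeeping the logarithmic factor carefully during bootstrapping: when I write $\ln(dx) \lesssim \ln((a+b+c)d)$ after establishing the polynomial crude bound $x \le \mathrm{poly}(a,b,c,d)$, I must ensure the polynomial degree and the $p$-dependence do not leak into the final constant in a way that breaks the stated form. Concretely, $\ln(dx) \le \ln(d \cdot \mathrm{poly}(a,b,c,d)) \lesssim \ln((a+b+c)d)$ holds because the logarithm of a polynomial is a constant multiple of the logarithm of its argument, but I need the crude bound to be genuinely polynomial (not, say, exponential) in $a,b,c,d$ — which is guaranteed by the $x^{3/4}$ structure — and I should handle edge cases such as $a,b,c$ being zero or $dx \le 1$ separately so that the logarithms remain well-behaved and nonnegative. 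Everything else is routine application of the two elementary lemmas (power-dominates-log and the quadratic-inequality solution).
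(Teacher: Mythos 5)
Your proposal is correct and follows essentially the same bootstrapping route as the paper: first obtain a crude polynomial bound on $x$ by replacing $\ln^p(dx)$ with a power of $dx$ (the paper writes $\ln^p(dx)\leq (2p)^p\sqrt{dx}$ after an AM--GM step that keeps the inequality quadratic in $\sqrt{x}$, whereas you convert the logarithm directly and solve an $x \le u x^{3/4}+v$ inequality), then use that polynomial bound to replace $\ln^p(dx)$ by $\ln^p((a+b+c)d)$ and solve the resulting quadratic in $\sqrt{x}$. The only divergence is this cosmetic choice of how to get the crude bound, and both versions share the same (harmless) glossing over of edge cases where $(a+b+c)d$ is close to or below $1$.
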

\begin{proof}
	By AM-GM inequality and $\ln x < x$ for $x>0$, we have
	\begin{align*}
		x\leq a\sqrt{x\ln^p(dx)} + b\ln^p(dx) + c \leq \frac{x}{2} + (a^2/2+b)\ln^p(dx) + c \leq \frac{x}{2} + (a^2/2+b)(2p)^p\sqrt{dx} + c.
	\end{align*}
	Solving a quadratic inequality w.r.t.~$x$ gives $x=\bigo{(a^2+b)^2d+c}$.
	Plugging this back to the original inequality gives $x\leq a\sqrt{x\iota} + b\iota + c$, where $\iota=\ln^p((a+b+c)d)$.
	Further solving a quadratic inequality w.r.t~$x$ completes the proof.
\end{proof}

\begin{lemma}\citep[Lemma 40]{chen2022reaching}
    \label{lem:quad}
    For any random variable $X\in[-B,B]$, for some $B>0$, we have $\var[X^2]\leq 4B^2\var[X]$.
\end{lemma}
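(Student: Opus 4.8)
The plan is to reduce the bound on $\var[X^2]$ to a bound on $\var[X]$ by factoring $X^2$ and exploiting the boundedness $|X|\le B$. Write $\mu=\E[X]$. First I would invoke the variational characterization of variance, namely that $\var[Y]=\min_c\E[(Y-c)^2]\le\E[(Y-c)^2]$ for every constant $c$, applied to $Y=X^2$. The crucial choice is $c=\mu^2$ rather than the minimizing value $\E[X^2]$; this choice is still a valid upper bound and has the advantage that the resulting integrand factors. This gives
\[
\var[X^2]\le\E\big[(X^2-\mu^2)^2\big].
\]

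Second, I would factor $X^2-\mu^2=(X-\mu)(X+\mu)$, turning the right-hand side into $\E[(X-\mu)^2(X+\mu)^2]$. Third, I would control the factor $(X+\mu)^2$ uniformly: since $|X|\le B$ almost surely and $|\mu|=|\E[X]|\le\E[|X|]\le B$ (by the triangle/Jensen inequality and boundedness), we have $|X+\mu|\le 2B$, hence $(X+\mu)^2\le 4B^2$ pointwise. Pulling this constant out of the expectation then leaves
\[
\E\big[(X-\mu)^2(X+\mu)^2\big]\le 4B^2\,\E[(X-\mu)^2]=4B^2\var[X],
\]
which is exactly the claimed inequality.

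The only step requiring any thought is the choice $c=\mu^2$ in the first step: the naive substitution $c=\E[X^2]$ would leave $\E[(X^2-\E[X^2])^2]$, which does not factor into a form where the $|X|\le B$ bound can be applied. Once that choice is made, the remaining steps are one-line bounds, so I do not anticipate a genuine obstacle here.
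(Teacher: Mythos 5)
Your proof is correct: the choice $c=\mu^2$ in the variational bound $\var[X^2]\le\E[(X^2-c)^2]$, the factorization $(X^2-\mu^2)^2=(X-\mu)^2(X+\mu)^2$, and the pointwise bound $(X+\mu)^2\le 4B^2$ together give exactly the claimed inequality. The paper itself does not reprove this lemma (it cites \citet{chen2022reaching}, Lemma 40), and your argument is the standard one used there, so there is nothing to add.
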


\begin{lemma}\citep[Lemma C.2]{cai2022near}
    \label{lem:mvp}
    For some $B>0$, let $\Upsilon=\{v\in\fR^{\calS}_{\geq 0}: v(g)=0, \norm{v}_{\infty}\leq B\}$ and $f:\Delta_{\calS}\times\Delta_{\calS}\times\Upsilon\times\fR_+\times\fR_+\rightarrow\fR$ with $f(\tilp,p,v,n,\iota)=\tilp v - \max\cbr{c_1\sqrt{\frac{\fV(p, v)\iota}{n}}, c_2\frac{B\iota}{n}}$ with some constants $c_1\geq 0$ and $c_2\geq 2c_1^2$.
    Then $f$ ensures for all $v$, $n$, $\iota$, and $\tilp$, $p$ s.t.~$\tilp(s)-\frac{1}{2}p(s)\geq0$ for all $s\neq g$,
    \begin{enumerate}
        \item $f(\tilp,p,v,n,\iota)$ is non-decreasing in $v(s)$, that is,
        \begin{align*}
            \forall v, v'\in\Upsilon, v\leq v' \implies f(\tilp,p,v,n,\iota) \leq f(\tilp,p,v',n,\iota);
        \end{align*}
        \item if $\tilp(g)>0$, then $f(\tilp,p,v,n,\iota)$ is $\rho_{\tilp}$-contractive in $v(s)$, with $\rho_{\tilp}=1-\tilp(g)<1$, that is,
        \begin{align*}
            \forall v,v'\in\Upsilon, \abr{f(\tilp,p,v,n,\iota) - f(\tilp,p,v',n,\iota)} \leq \rho_{\tilp}\norm{v-v'}_{\infty}.
        \end{align*}
    \end{enumerate}
\end{lemma}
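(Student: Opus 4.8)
The plan is to fix $\tilp, p, n, \iota$ and study $f$ purely as a function of the vector $v$ on the affine slice $\cbr{v\in\Upsilon : v(g)=0}$, deriving both claims from the coordinatewise partial derivatives $\partial f/\partial v(s)$ for $s\neq g$ (the goal coordinate $v(g)=0$ is never perturbed). Writing $f=\min\cbr{g_1,g_2}$ with $g_1(v)=\tilp v - c_1\sqrt{\fV(p,v)\iota/n}$ and $g_2(v)=\tilp v - c_2 B\iota/n$, I note that $g_2$ is affine and that on the region where $g_1$ is the active (smaller) branch we have $c_1\sqrt{\fV(p,v)\iota/n}\geq c_2 B\iota/n$, so $\fV(p,v)$ is bounded away from $0$ and $g_1$ is smooth there. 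The kink set $\cbr{g_1=g_2}$ sits where $\fV(p,v)>0$, so both branches are individually smooth across it, while the square-root singularity $\cbr{\fV(p,v)=0}$ lies strictly inside the region where the affine branch $g_2$ is active. I would therefore work along line segments $t\mapsto f(v'+t(v-v'))$, which is Lipschitz hence absolutely continuous, so it suffices to control the gradient almost everywhere.

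For part 1 I compute, on the active-$g_1$ region,
\[
    \frac{\partial f}{\partial v(s)} = \tilp(s) - \alpha(s), \qquad \alpha(s) := c_1\,\frac{p(s)\,(v(s)-pv)\sqrt{\iota/n}}{\sqrt{\fV(p,v)}},
\]
while on the active-$g_2$ region $\partial f/\partial v(s)=\tilp(s)\geq 0$. When $v(s)-pv\leq 0$ then $\alpha(s)\leq 0$ and monotonicity is immediate; when $v(s)-pv>0$ I use $v(s)-pv\leq B$ together with the active-branch bound $\sqrt{\fV(p,v)}\geq (c_2 B/c_1)\sqrt{\iota/n}$ to get $\alpha(s)\leq c_1^2 p(s)/c_2 \leq \tfrac12 p(s)$, where the last step is exactly the hypothesis $c_2\geq 2c_1^2$. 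Combined with the assumption $\tilp(s)\geq\tfrac12 p(s)$ for $s\neq g$, this gives $\partial f/\partial v(s)\geq \tilp(s)-\tfrac12 p(s)\geq 0$, which is the desired monotonicity.

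For part 2 the crucial observation is that $\sum_s\alpha(s)=c_1\tfrac{\sqrt{\iota/n}}{\sqrt{\fV(p,v)}}\sum_s p(s)(v(s)-pv)=0$, and that the goal coordinate contributes $\alpha(g)=-c_1\,p(g)\,(pv)\sqrt{\iota/n}/\sqrt{\fV(p,v)}\leq 0$ because $v(g)=0$ and $pv\geq 0$. Using part 1 to drop the absolute values, on the active-$g_1$ region
\[
    \sum_{s\neq g}\abr{\frac{\partial f}{\partial v(s)}} = \sum_{s\neq g}\rbr{\tilp(s)-\alpha(s)} = (1-\tilp(g)) - \rbr{0-\alpha(g)} = \rho_{\tilp}+\alpha(g)\leq \rho_{\tilp},
\]
and on the active-$g_2$ region $\sum_{s\neq g}\abr{\partial f/\partial v(s)}=\sum_{s\neq g}\tilp(s)=\rho_{\tilp}$. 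Integrating this $\ell_1$-gradient bound against $\nbr{v-v'}_\infty$ along the segment yields $\abr{f(v)-f(v')}\leq\rho_{\tilp}\nbr{v-v'}_\infty$, a strict contraction precisely when $\rho_{\tilp}=1-\tilp(g)<1$, i.e.\ when $\tilp(g)>0$.

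The main obstacle I expect is not the algebra but the non-differentiability: making rigorous that the pointwise gradient bounds transfer to a global monotonicity/contraction estimate despite the kink of $\max$ and the square-root singularity. The clean resolution is the structural remark in the first paragraph — the genuine singular set $\cbr{\fV(p,v)=0}$ lies entirely inside the region where the harmless affine branch $g_2$ is active — so one may either pass to one-sided directional derivatives (the directional derivative of a $\min$ is the $\min$ of the directional derivatives of $g_1,g_2$, each nonnegative by the above) or invoke absolute continuity along segments; both routes avoid ever differentiating at the bad points.
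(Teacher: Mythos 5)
The paper never proves this lemma at all: it is imported by citation from \citep[Lemma C.2]{cai2022near}, so there is no internal proof to compare against. Your argument is correct and self-contained. The decomposition $f=\min\{g_1,g_2\}$, the derivative formula $\partial f/\partial v(s)=\tilp(s)-\alpha(s)$ on the active-$g_1$ branch, the bound $\alpha(s)\le (c_1^2/c_2)\,p(s)\le\tfrac12 p(s)$ (valid exactly because the active branch forces $\sqrt{\fV(p,v)}\ge (c_2B/c_1)\sqrt{\iota/n}$), the identities $\sum_s\alpha(s)=0$ and $\alpha(g)\le 0$ for the contraction modulus, and the integration of the a.e.\ gradient bound along segments are all sound; your structural observation that the singular set $\{\fV(p,v)=0\}$ sits strictly inside the region where the affine branch is active is the right way to dispose of the non-smoothness issue. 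Note, finally, that you have not found a genuinely different route: the coordinatewise-derivative argument is essentially the proof given in the cited source (which itself descends from the monotonic-value-propagation lemma of Zhang--Ji--Du), with your treatment of the kink and of absolute continuity along segments being somewhat more careful than is customary in that literature. So the net effect is a correct reconstruction of the external proof, which is exactly what one would want if the citation were to be made self-contained.
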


\begin{lemma}
    \label{lem:V pi mean}
    There exist a function $N_{\dev}(L_0, \epsilon, \delta) =\bigo{ \ln^4\frac{L_0}{\epsilon\delta}/\epsilon^2}$, such that for any $g\in\calS$ and policy $\pi$ with $\norm{V^{\pi}_g}_{\infty}\leq L_0$ for some $L_0>0$, we have with probability at least $1-\delta$, for all $n\geq N_{\dev}(L_0, \epsilon, \delta)$ simultaneously, $|\hattau_n - V^{\pi}_g(s_0)| \leq \norm{V^{\pi}_g}_{\infty}\epsilon$, where $\hattau_n=\frac{1}{n}\sum_{i=1}^n C_i$ and each $C_i$ is a realization of the total cost incurred by following $\pi$ starting from $s_0$ with goal state $g$. 
\end{lemma}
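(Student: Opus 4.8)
The plan is to prove a concentration bound for the empirical average of the i.i.d.\ hitting costs $C_1, C_2, \dots$, each distributed as the total cost to reach $g$ from $s_0$ under $\pi$, whose common mean is exactly $V^\pi_g(s_0)$. Write $B = \norm{V^\pi_g}_\infty$, so that $B \le L_0$ and $V^\pi_g(s_0) \le B$. The two obstacles are that each $C_i$ is \emph{unbounded} (a hitting time with only an exponential tail) and that the bound must hold \emph{simultaneously for all} $n \ge N_{\dev}$. I would address unboundedness by truncation and the anytime requirement by a union bound over $n$ with a geometrically decaying failure budget.

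First I would establish an exponential tail for $C_i$ from the bound $B$ on the expected hitting time from \emph{every} state. By Markov's inequality, from any state the probability of not reaching $g$ within $2B$ steps is at most $1/2$; applying the strong Markov property at the end of each length-$2B$ window gives $\mathbb{P}(C_i > 2Bk) \le 2^{-k}$, hence $\mathbb{P}(C_i > t) \le 2 \cdot 2^{-t/(2B)}$ for all $t \ge 0$. In particular $C_i$ is sub-exponential with scale $\Theta(B)$, and the truncation tail $\mathbb{E}[(C_i - H)_+] = \sum_{t > H} \mathbb{P}(C_i > t) \lesssim B \, 2^{-H/(2B)}$ decays exponentially in $H/B$.

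Second, fixing $n$ and a target accuracy, I would set a horizon $H = \Theta(B \ln(nB/(\epsilon\delta)))$ and truncate $\tilde C_i = \min\{C_i, H\}$. The choice of $H$ guarantees (i) with probability $1 - \delta/\mathrm{poly}(n)$ all of $C_1, \dots, C_n$ fall below $H$, so $\hattau_n$ coincides with the truncated average, and (ii) the truncation bias $|\mathbb{E}[\tilde C_i] - V^\pi_g(s_0)| \lesssim B\,2^{-H/(2B)} \le B\epsilon/4$. Applying Hoeffding's inequality to the bounded variables $\tilde C_i \in [0,H]$ then yields $|\tfrac1n\sum_i \tilde C_i - \mathbb{E}[\tilde C_i]| \lesssim H\sqrt{\ln(1/\delta')/n} \le B\epsilon/4$ once $n \gtrsim H^2 \ln(1/\delta')/(B^2\epsilon^2)$; a triangle inequality then gives $|\hattau_n - V^\pi_g(s_0)| \le B\epsilon$ at that $n$.

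Finally, to upgrade to a statement holding for all $n \ge N_{\dev}$ simultaneously, I would allocate failure probability $\delta/(2n^2)$ to each $n$ (so that $\sum_n \delta/(2n^2) \le \delta$), which replaces $\ln(1/\delta')$ by $\ln(n/\delta)$ and forces $H$ to carry an extra $\ln n$; equivalently one can peel the indices into geometric blocks. The per-$n$ sample requirement then becomes a self-referential inequality $n \gtrsim \ln^{3}(n/(\epsilon\delta))/\epsilon^2$, which (via a log-solving argument in the spirit of \pref{lem:quad log}) resolves to $N_{\dev} = \bigo{\ln^4(L_0/(\epsilon\delta))/\epsilon^2}$ after substituting $B \le L_0$ inside the logarithms. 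The main obstacle I anticipate is coupling the two difficulties cleanly: the truncation horizon $H$ must grow with $n$ to keep all samples bounded under the anytime union bound, yet only \emph{logarithmically}, so that squaring $H$ in the Hoeffding step costs only log factors and does not inflate $N_{\dev}$ beyond $\ln^4/\epsilon^2$; keeping every error term proportional to $B = \norm{V^\pi_g}_\infty$ (rather than $L_0$) is what delivers the claimed \emph{relative} accuracy $\norm{V^\pi_g}_\infty\epsilon$.
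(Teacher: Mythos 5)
Your proposal is correct and follows essentially the same route as the paper: the paper also combines the exponential tail of the hitting time (its \pref{lem:hitting}, which is exactly your Markov-plus-strong-Markov argument), truncation at a horizon growing logarithmically in $n$, Azuma/Hoeffding concentration of the truncated averages with a $\delta/n^2$-type union bound over all $n$, a separate bound on the truncation bias, and a final log-solving step (\pref{lem:quad log}) to resolve the self-referential sample requirement, keeping all error terms proportional to $\norm{V^{\pi}_g}_{\infty}$ and pushing $L_0$ into the logarithms. The only structural difference is that the paper packages the anytime deviation bound as an intermediate lemma (\pref{lem:V pi dev}) valid for all $n\geq 1$ and then solves for $n$, which is cosmetic.
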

\begin{proof}
    By \pref{lem:V pi dev}, with probability at least $1-\delta$, $\abr{\hattau_n - V^{\pi}_g(s_0)}\leq \frac{8\norm{V^{\pi}_g}_{\infty}}{\sqrt{n}}\ln^2\frac{8n^2\norm{V^{\pi}_g}_{\infty}}{\delta}$ for all $n\geq 1$.
    Solving the range of $n$ for the inequality $\frac{8\norm{V^{\pi}_g}_{\infty}}{\sqrt{n}}\ln^2\frac{8n^2L_0}{\delta}\leq \norm{V^{\pi}_g}_{\infty}\epsilon$ (\pref{lem:quad log}) completes the proof.
\end{proof}

\begin{lemma}
    \label{lem:V pi dev}
    For any $g\in\calS$ and policy $\pi$ with $\norm{V^{\pi}_g}_{\infty}\leq L_0$ for some $L_0\geq 1$, we have with probability at least $1-\delta$, for all $n\geq 1$ simultaneously, $|\hattau_n - V^{\pi}_g(s_0)| \leq \frac{8L_0}{\sqrt{n}}\ln^2\frac{8n^2L_0}{\delta}$, where $\hattau_n=\frac{1}{n}\sum_{i=1}^n C_i$ and each $C_i$ is a realization of the total cost incurred by following $\pi$ starting from $s_0$ with goal state $g$. 
\end{lemma}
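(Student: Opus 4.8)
The plan is to prove a time-uniform (anytime) deviation bound for the empirical mean of the i.i.d.\ accumulated costs $C_1,C_2,\dots$ of the fixed policy $\pi$ toward $g$ from $s_0$, where each $C_i$ equals the hitting time of $g$ and has mean $V^\pi_g(s_0)$. The only structural hypothesis is the uniform first-moment control $\norm{V^\pi_g}_\infty\le L_0$, so the first step is to convert this into a tail bound. I would argue that, by Markov's inequality, from \emph{any} state $s$ the probability that $\pi$ fails to reach $g$ within $2L_0$ steps is at most $V^\pi_g(s)/(2L_0)\le 1/2$; applying the strong Markov property repeatedly (the conditional expected remaining time from whatever state is currently occupied is again at most $L_0$) yields the geometric tail $\mathbb{P}(C_i>2L_0m)\le 2^{-m}$ for every integer $m\ge 0$. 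This is the hitting-time argument underlying \citep[Lemma 6]{rosenberg2020adversarial}, and it shows each $C_i$ is sub-exponential with scale $\bigo{L_0}$, so in particular $\mathbb{E}[C_i]=V^\pi_g(s_0)$ and $\V[C_i]=\bigo{L_0^2}$.

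Second, I would fix $n$ and handle the unboundedness of $C_i$ by truncation. Setting $\tau_n=\bigo{L_0\ln(n^2L_0/\delta)}$, the geometric tail gives, via a union bound over $i\le n$, that with probability $1-\bigo{\delta/n^2}$ none of the first $n$ samples exceeds $\tau_n$, so $\hattau_n$ coincides with the truncated mean $\tfrac1n\sum_{i\le n}\min\{C_i,\tau_n\}$. The truncation bias $\mathbb{E}[C_i]-\mathbb{E}[\min\{C_i,\tau_n\}]=\sum_{m\ge\tau_n}\mathbb{P}(C_i>m)$ is controlled by the same tail and is of order $\bigo{L_0\,2^{-\tau_n/(2L_0)}}$, hence negligible for the chosen $\tau_n$. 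A Hoeffding inequality (or Bernstein, exploiting $\V[C_i]=\bigo{L_0^2}$) applied to the bounded variables $\min\{C_i,\tau_n\}\in[0,\tau_n]$ then yields a deviation of order $\tau_n\sqrt{\ln(1/\delta_n)/n}=\bigo{\tfrac{L_0}{\sqrt n}\ln^{3/2}(n^2L_0/\delta)}$ with probability $1-\bigo{\delta/n^2}$.

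Finally, to obtain the ``for all $n\ge 1$ simultaneously'' guarantee I would take a union bound over $n$ with a summable failure budget $\delta_n\propto\delta/n^2$; this is exactly why the factor $n^2$ appears inside the logarithm of the target bound. Collecting the truncation bias and the concentration term, and loosening $\ln^{3/2}\le\ln^2$ (valid since the argument $\tfrac{8n^2L_0}{\delta}$ exceeds $e$, so the logarithm is at least $1$), yields $\abr{\hattau_n-V^\pi_g(s_0)}\le \tfrac{8L_0}{\sqrt n}\ln^2\tfrac{8n^2L_0}{\delta}$ for all $n$ with probability at least $1-\delta$, after tuning the absolute constants.

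The main obstacle is the interaction of two features: the cost $C_i$ is unbounded (only sub-exponential), and the bound must hold \emph{uniformly} over all sample sizes $n$. Neither a plain bounded-difference concentration nor a single-$n$ Bernstein bound suffices on its own. The crux is converting the uniform first-moment control $\norm{V^\pi_g}_\infty\le L_0$ into the geometric tail, choosing the truncation level $\tau_n$ to trade off bias against range, and allocating the confidence across $n$ so that the resulting logarithmic factors remain within the claimed $\ln^2$ envelope.
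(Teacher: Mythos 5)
Your proof is correct and follows essentially the same route as the paper: an exponential tail on the hitting time (the paper cites a known hitting-time lemma, \pref{lem:hitting}, where you re-derive it via Markov's inequality and the strong Markov property), truncation at level $\bigo{L_0\ln(n^2L_0/\delta)}$, concentration of the bounded truncated means (Azuma in the paper, Hoeffding/Bernstein in yours), a summable union bound over $n$, and a negligible truncation-bias term. The remaining differences are only bookkeeping (the paper uses one global event with $i$-dependent truncation thresholds rather than per-$n$ events over the first $n$ samples), so the argument matches the paper's.
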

\begin{proof}
    By \pref{lem:hitting} and a union bound,
    \begin{align*}
        \mathbb{P}\left( \exists i \geq 1 :  C_i > 4L_0\ln\frac{8i^2L_0}{\delta}\right) \leq \sum_{i\geq 1}\mathbb{P}\left(  C_i > 4L_0\ln\frac{8i^2L_0}{\delta}\right) \leq \sum_{i \geq 1} \frac{\delta}{4i^2L_0} \leq \frac{\delta}{2}.
    \end{align*}
    Then, under the complement of the event above (which holds with probability at least $1-\frac{\delta}{2}$), we have $\bar{\tau}_n=\hattau_n$ for all $n\geq 1$, where $\bar{\tau}_n=\frac{1}{n}\sum_{i=1}^n C_i\Ind\{C_i\leq 4L_0\ln\frac{8n^2L_0}{\delta}\}$. Moreover, by \pref{lem:azuma} and a union bound,
    \begin{align*}
        \mathbb{P}\left( \exists n \geq 1 :  |\bar{\tau}_n-\E[\bar{\tau}_n]| > 4L_0\ln\frac{8n^2L_0}{\delta}\sqrt{\frac{2\ln\frac{8n^2}{\delta}}{n}}\right) \leq \sum_{n \geq 1} \frac{\delta}{4n^2} \leq \frac{\delta}{2}.
    \end{align*}
    A union bound on the complement of the two events above yields that, with probability at least $1-\delta$, for all $n\geq 1$ simultaneously,
    \begin{align*}
        \hattau_n - V^{\pi}_g(s_0) = \bar{\tau}_n - V^{\pi}_g(s_0) \leq \bar{\tau}_n - \E[\bar{\tau}_n] \leq 4L_0\ln\frac{8n^2L_0}{\delta}\sqrt{\frac{2\ln\frac{8n^2}{\delta}}{n}},
    \end{align*}
    and by \pref{lem:hitting},
    \begin{align*}
       V^{\pi}_g(s_0) - \hattau_n \leq \E[\bar{\tau}_n] - \bar{\tau}_n + L_0\cdot\frac{1}{2nL_0} \leq 4L_0\ln\frac{8n^2L_0}{\delta}\sqrt{\frac{2\ln\frac{8n^2}{\delta}}{n}} + \frac{1}{2n}.
    \end{align*}
    Combining these two cases gives $\abr{\hattau_n - V^{\pi}_g(s_0)}\leq \frac{8L_0}{\sqrt{n}}\ln^2\frac{8n^2L_0}{\delta}$.
\end{proof}

\begin{lemma}{\citep[Lemma B.5]{cohen2020near}}
	\label{lem:hitting}
	For a given $g\in\calS$, let $\pi$ be a policy such that $\norm{V^{\pi}_g}_{\infty}\leq\tau$.
    Then, for any $n\in\mathbb{N}$, the probability that the cost of $\pi$ to reach the goal state starting from any state is more than $n$, is at most $2e^{-\frac{n}{4\tau}}$.
\end{lemma}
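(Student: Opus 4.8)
The plan is to turn the mean bound $\norm{V^{\pi}_g}_{\infty}\le\tau$ into a uniform per-block failure probability via Markov's inequality, and then amplify it geometrically across consecutive blocks using the Markov property.

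First I would record that, by definition of the hitting time $\omega_g$ and the hypothesis, $\E^\pi[\omega_g\mid s_1=s]=V^{\pi}_g(s)\le\tau$ for \emph{every} $s\in\calS$. Markov's inequality then gives, for any starting state $s$,
\[
\mathbb{P}^\pi(\omega_g>2\tau\mid s_1=s)\le\frac{V^{\pi}_g(s)}{2\tau}\le\frac12 .
\]
The key feature is that this bound is uniform over all $s$, which is exactly what $\norm{V^{\pi}_g}_{\infty}\le\tau$ provides.

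Next I would amplify over consecutive blocks of length $2\tau$. By the Markov property, conditioning on the (arbitrary, possibly worst-case) state occupied at the end of each block and reapplying the uniform one-block bound, the events ``$g$ not reached during block $j$'' compound multiplicatively, so that $\mathbb{P}^\pi(\omega_g>2k\tau\mid s_1=s)\le 2^{-k}$ for every integer $k\ge 0$ and every $s$. Finally, for general $n$ I would take $k=\floor{n/(2\tau)}\ge n/(2\tau)-1$, note that $\{\omega_g>n\}\subseteq\{\omega_g>2k\tau\}$, and bound $2^{-k}\le 2\cdot 2^{-n/(2\tau)}$. Since $\ln 2>1/2$, we have $2^{-n/(2\tau)}=e^{-(\ln 2)n/(2\tau)}\le e^{-n/(4\tau)}$, which yields the claimed $2e^{-n/(4\tau)}$.

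The only delicate point is making the block amplification rigorous: one must invoke the (strong) Markov property to argue that after each block of $2\tau$ steps the process effectively restarts from whatever state it currently occupies, so that the uniform one-block bound applies afresh regardless of that state, and the failure probabilities multiply. The rounding of $2\tau$ to an integer number of steps and the fact that $\omega_g$ is integer-valued are minor bookkeeping matters that do not affect the argument.
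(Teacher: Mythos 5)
Your proof is correct and is essentially the canonical argument for this statement: the paper itself does not prove the lemma but imports it verbatim from \citet[Lemma B.5]{cohen2020near}, whose proof is exactly your Markov-inequality-on-one-block followed by geometric amplification across blocks of length $2\tau$ via the Markov property. The integer-rounding of the block length that you flag is indeed the only bookkeeping point, and it is absorbed into the factor $2$ and the exponent $n/(4\tau)$ just as you describe.
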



\begin{lemma}[Azuma's inequality]
    \label{lem:azuma}
    Let $\{X_t\}_{t=1}^n$ be a martingale difference sequence with $|X_t|\leq B$.
    Then with probability at least $1-\delta$, $|\sum_{t=1}^nX_i|\leq B\sqrt{2n\ln\frac{2}{\delta}}$.
\end{lemma}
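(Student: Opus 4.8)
The plan is to apply the standard Chernoff (exponential-moment) method, which yields the classical Azuma--Hoeffding bound. Write $S_n = \sum_{t=1}^n X_t$ and let $\{\calF_t\}$ be the filtration to which $\{X_t\}$ is adapted, so that $\E[X_t \mid \calF_{t-1}] = 0$ and $|X_t| \leq B$ almost surely. First I would control the conditional moment generating function of each increment: since $X_t$ is mean-zero conditionally on $\calF_{t-1}$ and takes values in $[-B, B]$, Hoeffding's lemma applied to the conditional law gives, for every $s \in \fR$,
\begin{align*}
    \E\sbr{e^{s X_t} \mid \calF_{t-1}} \leq e^{s^2 B^2 / 2}.
\end{align*}

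Next I would chain these bounds across $t$ using the tower property. Conditioning on $\calF_{n-1}$ and pulling out the $\calF_{n-1}$-measurable factor $e^{s S_{n-1}}$,
\begin{align*}
    \E\sbr{e^{s S_n}} = \E\sbr{e^{s S_{n-1}} \E\sbr{e^{s X_n} \mid \calF_{n-1}}} \leq e^{s^2 B^2/2}\,\E\sbr{e^{s S_{n-1}}},
\end{align*}
and iterating down to $t=1$ gives $\E[e^{s S_n}] \leq e^{n s^2 B^2 / 2}$. A Markov bound then shows, for any $\lambda > 0$ and $s > 0$, that $\mathbb{P}(S_n \geq \lambda) \leq e^{-s\lambda}\E[e^{s S_n}] \leq e^{-s\lambda + n s^2 B^2/2}$; optimizing over $s$ by choosing $s = \lambda/(n B^2)$ yields the one-sided tail $\mathbb{P}(S_n \geq \lambda) \leq e^{-\lambda^2/(2 n B^2)}$. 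Applying the identical argument to the martingale difference sequence $\{-X_t\}$ and taking a union bound over the two events gives $\mathbb{P}(|S_n| \geq \lambda) \leq 2 e^{-\lambda^2/(2 n B^2)}$. Finally, setting the right-hand side equal to $\delta$ and solving for $\lambda$ gives $\lambda = B\sqrt{2 n \ln(2/\delta)}$, which is exactly the claimed high-probability bound.

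The only genuinely nontrivial ingredient is Hoeffding's lemma in the first step; everything after it is bookkeeping via the tower rule, a Markov bound, and an inversion. Establishing the lemma amounts to showing that the log-MGF $\psi(s) = \ln \E[e^{s X_t}\mid \calF_{t-1}]$ satisfies $\psi(0) = \psi'(0) = 0$ and $\psi''(s) \leq B^2$ uniformly in $s$, so that a second-order Taylor expansion bounds $\psi(s) \leq s^2 B^2/2$. The derivative computation identifies $\psi''(s)$ with the variance of $X_t$ under the exponentially-tilted conditional measure, which is supported on $[-B,B]$ and hence has variance at most $B^2$ (the variance of a $[-B,B]$-valued random variable is maximized, at $B^2$, by the symmetric two-point law on $\{-B,B\}$). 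This convexity/tilting argument is the main obstacle, though it is entirely standard; alternatively one may simply invoke Hoeffding's lemma as a known result and reduce the proof to the chaining and inversion steps above.
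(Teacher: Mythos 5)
Your proof is correct. Note that the paper itself gives no proof of this lemma: it is stated in the auxiliary-results appendix as the classical Azuma--Hoeffding inequality, to be invoked as a known result. Your derivation is the standard one — conditional Hoeffding's lemma giving $\E[e^{sX_t}\mid\calF_{t-1}]\leq e^{s^2B^2/2}$, chaining via the tower property, a Chernoff--Markov bound optimized at $s=\lambda/(nB^2)$, a union bound over the two tails, and inversion at $\lambda=B\sqrt{2n\ln(2/\delta)}$ — and every step, including the constant accounting (the interval $[-B,B]$ has length $2B$, so Hoeffding's lemma yields exponent $s^2(2B)^2/8=s^2B^2/2$, matching the factor $2$ inside the square root), checks out.
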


\begin{lemma}\citep[Lemma 34]{chen2021implicit}
	\label{lem:anytime bernstein}
	Let $\{X_t\}_t$ be a sequence of i.i.d random variables with mean $\mu$, variance $\sigma^2$, and $0\leq X_t \leq B$.
	Then with probability at least $1-\delta$, the following holds for all $n\geq 1$ simultaneously:
	\begin{align*}
		\abr{\sum_{t=1}^n(X_t-\mu)} &\leq 2\sqrt{2\sigma^2 n\ln\frac{2n}{\delta}} + 2B\ln\frac{2n}{\delta}.\\
		\abr{\sum_{t=1}^n(X_t-\mu)} &\leq 2\sqrt{2\hat{\sigma}^2_nn\ln\frac{2n}{\delta}} + 19B\ln\frac{2n}{\delta}.
	\end{align*}
	where $\hat{\sigma}_n^2=\frac{1}{n}\sum_{t=1}^nX_t^2 - (\frac{1}{n}\sum_{t=1}^nX_t)^2$.
\end{lemma}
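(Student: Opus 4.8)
The plan is to prove \pref{lem:anytime bernstein} by constructing a nonnegative supermartingale whose deviations are controlled by Ville's maximal inequality, which grants time-uniformity essentially for free, and then to pass from the oracle-variance bound to the empirical-variance bound by a self-bounding argument.

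For the first (oracle-variance) inequality, I would fix $\lambda\in[0,3/B)$, set $S_n=\sum_{t=1}^n(X_t-\mu)$, and introduce the process $M_n(\lambda)=\exp\big(\lambda S_n - n\psi(\lambda)\big)$, where $\psi(\lambda)=\frac{\lambda^2\sigma^2}{2(1-\lambda B/3)}$ is the classical Bernstein bound on the cumulant generating function $\log\E[e^{\lambda(X_1-\mu)}]$ of a variable valued in $[0,B]$ with variance $\sigma^2$. Since the $X_t$ are i.i.d.\ and $\abr{X_1-\mu}\le B$, one verifies $\log\E[e^{\lambda(X_1-\mu)}]\le\psi(\lambda)$, so $M_n(\lambda)$ is a nonnegative supermartingale with $M_0(\lambda)=1$. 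Ville's inequality then yields $\mathbb{P}(\exists n\ge1: M_n(\lambda)\ge 1/\delta)\le\delta$, i.e.\ with probability $1-\delta$ we have $\lambda S_n\le n\psi(\lambda)+\ln(1/\delta)$ simultaneously for all $n$; applying the same construction to $-S_n$ controls the lower tail.

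Since the $\lambda$ that is optimal at time $n$ is unknown in advance, I would not keep $\lambda$ fixed but stitch over a geometric grid $\{\lambda_j\}$ (equivalently, peel the time axis into dyadic blocks $[2^k,2^{k+1})$ and apply the fixed-$\lambda_k$ supermartingale on each), splitting the failure budget $\delta$ across cells. Optimizing $\lambda$ within each cell and bounding the per-cell cost crudely recovers a bound of the form $\abr{S_n}\le 2\sqrt{2\sigma^2 n\ln(2n/\delta)}+2B\ln(2n/\delta)$, valid for all $n\ge1$. I expect this stitching step to be the main obstacle: the maximal-inequality trick is precisely what replaces the $\ln(n^2)$ cost of a naive per-$n$ union bound by a single logarithmic factor, and it is the only place where the constants and the exact $\ln(2n/\delta)$ argument must be tracked with care (in fact the same machinery delivers the sharper iterated-logarithm rate, so the stated $\ln(2n/\delta)$ follows with room to spare).

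For the second (empirical-variance) inequality, I would relate $\hat\sigma_n^2=\frac1n\sum_{t=1}^nX_t^2-\big(\frac1n\sum_{t=1}^nX_t\big)^2$ to $\sigma^2=\E[X^2]-\mu^2$. Applying the just-proved oracle bound to the i.i.d.\ sequence $X_t^2\in[0,B^2]$ and to $X_t$ itself shows, uniformly in $n$, that $\abr{\hat\sigma_n^2-\sigma^2}\lesssim B\sqrt{\sigma^2\ln(2n/\delta)/n}+B^2\ln(2n/\delta)/n$. Substituting $\sigma^2\le 2\hat\sigma_n^2+O\big(B^2\ln(2n/\delta)/n\big)$ back into the oracle bound, using $\sqrt{a+b}\le\sqrt a+\sqrt b$, and absorbing the resulting lower-order terms into the additive $B\ln(2n/\delta)$ part produces the second inequality with the inflated constant $19$. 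All of this is routine bookkeeping once the time-uniform oracle bound from the previous step is in hand.
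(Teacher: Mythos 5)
You should flag up front that the paper never proves this statement: it is imported verbatim, constants included, from Lemma~34 of \citet{chen2021implicit}, so there is no in-paper argument to match. The proof behind bounds of exactly this shape is the most elementary one available: apply the fixed-$n$ Bernstein inequality and the fixed-$n$ \emph{empirical} Bernstein inequality (Maurer--Pontil / Audibert et al.) at confidence $\delta/(2n(n+1))$ and union bound over $n\geq 1$. Since $\sum_{n\geq 1}\frac{\delta}{n(n+1)}=\delta$ and $\ln\frac{2n(n+1)}{\delta}\leq 2\ln\frac{2n}{\delta}$ (because $\ln(1/\delta)\geq 0$), this costs only a factor $\sqrt{2}\leq 2$ in the leading coefficient and $\tfrac{2}{3}B\leq 2B$ additively, landing comfortably inside the stated constants. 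Your route -- Bernstein MGF supermartingale, Ville's inequality, stitching over a geometric grid of $\lambda$, then an empirical-variance conversion -- is correct in outline and genuinely different: it is the time-uniform confidence-sequence machinery, and it would in fact deliver the stronger $\ln\ln n$ (LIL-type) rate. But your assessment of where the difficulty lies is inverted. You claim stitching is needed to avoid the ``$\ln(n^2)$ cost of a naive per-$n$ union bound''; the lemma's $\ln(2n/\delta)$ dependence is precisely what makes that naive union bound affordable, so the supermartingale apparatus buys nothing the statement asks for, at the price of considerably more delicate constant tracking (your ``crude'' per-cell accounting with budget $\delta_k\propto\delta/(k+1)^2$ and optimization at the block endpoint gives roughly $\sqrt{6}>2$ on the leading term; one needs the tangent-at-geometric-mean analysis to get under $2$).

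A second, more substantive caveat concerns the empirical-variance step. As sketched, substituting $\sigma^2\leq 2\hat\sigma_n^2+\bigO{B^2\ln(2n/\delta)/n}$ into the oracle bound turns the leading constant into $2\sqrt{2}>2$, which does not prove the inequality as stated. To preserve the coefficient $2$ you must invert the self-bounding inequality linearly, i.e.\ solve the quadratic in $\sigma$ to get $\sigma\leq\hat\sigma_n+cB\sqrt{\ln(2n/\delta)/n}$ and substitute that, pushing all slack into the additive term. Even then, going through $\fV(P,X_t^2)\leq 4B^2\sigma^2$ for the sequence $X_t^2\in[0,B^2]$, plus the $\delta/2$ budget split (which inflates each logarithm by up to a factor $2$, e.g.\ at $n=1$), yields an additive constant well above $19$ by my accounting; matching $19$ requires either a sharper concentration of the sample variance (e.g.\ the Maurer--Pontil self-bounding inequality $\sigma\leq\hat\sigma_n+B\sqrt{2\ln(1/\delta)/(n-1)}$, with $n=1$ handled trivially since $\abr{X_1-\mu}\leq B\leq 19B\ln 2$) or, again, simply union-bounding the off-the-shelf per-$n$ empirical Bernstein inequality. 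So your strategy is sound and can be repaired to give the stated form, but the claim that routine bookkeeping ``produces the second inequality with the inflated constant 19'' is optimistic as written.
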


\begin{lemma}\citep[Lemma 50]{chen2022policy}
	\label{lem:anytime freedman}
	Let $\{X_i\}_{i=1}^{\infty}$ be a martingale difference sequence adapted to the filtration $\{\calF_i\}_{i=0}^{\infty}$ and $|X_i|\leq B$ for some $B>0$.
	Then with probability at least $1-\delta$, for all $n\geq 1$ simultaneously,
	\begin{align*}
		\abr{\sum_{i=1}^nX_i}\leq 3\sqrt{\sum_{i=1}^n\E[X_i^2|\calF_{i-1}]\ln\frac{4B^2n^3}{\delta} } + 2B\ln\frac{4B^2n^3}{\delta}.
	\end{align*}
\end{lemma}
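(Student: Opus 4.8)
The plan is to establish this as a time-uniform (\emph{anytime}) Freedman bound built from an exponential supermartingale followed by a peeling argument over the predictable quadratic variation. Write $S_n=\sum_{i=1}^n X_i$ and $V_n=\sum_{i=1}^n\E[X_i^2\mid\calF_{i-1}]$. The starting point is the standard fact that, for martingale differences with $|X_i|\le B$, the inequality $e^x\le 1+x+x^2$ valid on $x\in[-1,1]$ gives $\E[e^{\lambda X_i}\mid\calF_{i-1}]\le 1+\lambda^2\E[X_i^2\mid\calF_{i-1}]\le \exp(\lambda^2\E[X_i^2\mid\calF_{i-1}])$ for every $\lambda\in[0,1/B]$, using $\E[X_i\mid\calF_{i-1}]=0$. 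Consequently, for each fixed such $\lambda$, the process $W_n(\lambda)=\exp(\lambda S_n-\lambda^2 V_n)$ is a nonnegative supermartingale with $W_0=1$.

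First I would apply Ville's maximal inequality to $W_n(\lambda)$: for a fixed $\lambda$ and a failure budget $\delta'$, with probability at least $1-\delta'$ we have $\lambda S_n-\lambda^2 V_n\le \ln(1/\delta')$ for all $n\ge 1$ simultaneously, i.e.\ $S_n\le \lambda V_n+\ln(1/\delta')/\lambda$. Were the minimizer $\lambda^\star\approx\sqrt{\ln(1/\delta')/V_n}$ admissible, optimizing would immediately yield $S_n\lesssim\sqrt{V_n\ln(1/\delta')}$; the obstruction is that $\lambda$ must be fixed in advance, whereas $V_n$ is random and grows with $n$.

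To remove this obstruction I would union-bound over a geometric grid $\lambda_k=2^{-k}/B$, $k=0,1,\dots$, assigning level $k$ a failure budget chosen so that the total is at most $\delta$. Because $V_n\le nB^2$, the useful range of $\lambda$ is $[1/(Bn),1/B]$, so only $O(\log n)$ grid points matter; for any realization, rounding $\lambda^\star$ to the nearest admissible grid point costs at most a constant factor, and the $\log n$ from the number of levels together with the per-level $\ln(1/\delta_k)$ collapses into a single logarithmic factor of the form $\iota_n=\ln(4B^2 n^3/\delta)$. Carrying out the optimization on each dyadic bucket of $V_n$ and absorbing constants yields $S_n\le 2\sqrt{V_n\,\iota_n}+c'B\,\iota_n$; applying the identical argument to $\{-X_i\}_i$ and taking a final union bound produces the two-sided statement, and careful bookkeeping of the grid spacing and union-bound budget fixes the displayed constants $3$ and $2$.

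The main obstacle is precisely this \emph{stitching} step: converting the fixed-$\lambda$ supermartingale bound, which is already uniform over $n$ but tied to a single variance scale, into a bound that is simultaneously uniform over $n$ \emph{and} adaptive to the random, time-varying $V_n$, while paying only one logarithmic factor. This is the technical heart, most cleanly handled either by the dyadic peeling over $V_n$ sketched above or by invoking a ready-made time-uniform boundary from the line-crossing framework of Howard et al. Since the result is quoted verbatim as Lemma 50 of \citet{chen2022policy}, one may alternatively cite it directly, the argument above being the standard route to its proof.
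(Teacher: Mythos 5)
The paper contains no proof of this lemma---it is imported verbatim as Lemma 50 of \citet{chen2022policy}---so there is no in-paper argument to compare yours against; I can only judge the proposal on its own merits, and on those it is correct and follows the standard route to such anytime Freedman bounds. The supermartingale step is sound: for $\lambda\in[0,1/B]$ and $|X_i|\le B$ one has $\lambda X_i\le 1$, so $e^{x}\le 1+x+x^2$ applies and $W_n(\lambda)=\exp(\lambda S_n-\lambda^2 V_n)$ is a nonnegative supermartingale, after which Ville's maximal inequality gives, for each \emph{fixed} $\lambda$, a bound already uniform in $n$. Your dyadic grid then genuinely closes the adaptivity gap, and the ``careful bookkeeping'' you defer does produce the displayed constants: allocating budget $\delta_k\propto\delta/(k+1)^2$ to $\lambda_k=2^{-k}/B$ and writing $\iota=\ln(1/\delta_k)$, if the unconstrained optimizer $\lambda^\star=\sqrt{\iota/V_n}$ falls in $[\lambda_{k+1},\lambda_k]$ with $\lambda_{k+1}\ge\lambda^\star/2$, then $S_n\le \lambda_{k+1}V_n+\iota/\lambda_{k+1}\le \sqrt{V_n\iota}+2\sqrt{V_n\iota}=3\sqrt{V_n\iota}$, while if $\lambda^\star>1/B$ (i.e.\ $V_n<B^2\iota$) then $\lambda_0=1/B$ gives $S_n\le V_n/B+B\iota\le 2B\iota$. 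Two minor points to tighten, neither a gap in the approach: (i) the grid index used at time $n$ is $O(\log n)$, so $\ln(1/\delta_k)\le\ln(1/\delta)+O(\ln\log n)$, which is of the stated order, but landing exactly under the factor $\ln(4B^2n^3/\delta)$ is the cited source's particular accounting rather than something your budget automatically forces; (ii) since $V_n\le nB^2$, the useful range of $\lambda$ is $[1/(B\sqrt{n}),1/B]$ rather than $[1/(Bn),1/B]$---harmless, as both contain $O(\log n)$ relevant levels.
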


\end{document}